\documentclass{article}

\PassOptionsToPackage{round}{natbib}
\usepackage[preprint]{neurips_2026}

\usepackage[utf8]{inputenc} 
\usepackage[T1]{fontenc}    

\usepackage{times}
\usepackage{mathtools}   
\usepackage{amssymb}
\usepackage{amsthm}
\usepackage{bm}
\usepackage{bbm}
\usepackage{mathrsfs}

\usepackage{graphicx}
\usepackage[svgnames]{xcolor}
\usepackage{booktabs}
\usepackage{multirow}
\usepackage{colortbl}
\usepackage{wrapfig}
\usepackage{threeparttable}
\usepackage{longtable} 
\usepackage{etoc}
\usepackage{makecell}

\usepackage{tabularx}
\usepackage{array}

\usepackage{framed}
\usepackage{ascmac}

\usepackage{nicefrac}
\usepackage{physics} 
\usepackage{algorithm}
\usepackage[linesnumbered,ruled,vlined,algo2e]{algorithm2e}
\SetAlgoNlRelativeSize{-1}

\makeatletter
\@ifclassloaded{beamer}{%
}{%
  \usepackage{enumitem}
  \usepackage[hypertexnames=false]{hyperref}
}
\makeatother
\usepackage[capitalize]{cleveref}

\crefname{equation}{}{} 
\crefname{function}{Function}{Functions}
\Crefname{equation}{Eq.}{Eqs.}
\crefname{figure}{Figure}{Figures}
\usepackage{autonum}
\hypersetup{
  colorlinks,
  linkcolor={violet},
  citecolor={blue},
  urlcolor={brown},
}

\usepackage{comment}

\usepackage{soul} 

\theoremstyle{plain}
\newtheorem{thm}{Theorem}[section]
\newtheorem{prop}[thm]{Proposition}
\newtheorem{lem}[thm]{Lemma}

\theoremstyle{definition}
\newtheorem{dfn}[thm]{Definition}

\theoremstyle{remark}

\newcommand{\fixcrefthm}[1]{%
  \AddToHook{env/#1/begin}{\crefalias{thm}{#1}}%
}

\fixcrefthm{thm}
\fixcrefthm{prop}
\fixcrefthm{lem}
\fixcrefthm{cor}
\fixcrefthm{dfn}
\fixcrefthm{ass}
\fixcrefthm{rem}
\fixcrefthm{exa}
\fixcrefthm{prty}
 
\crefname{equation}{}{} 
\crefname{function}{Function}{Functions}
\Crefname{equation}{Eq.}{Eqs.}
\crefname{thm}{Theorem}{Theorems}
\crefname{prop}{Proposition}{Propositions}
\crefname{lem}{Lemma}{Lemmas}
\crefname{cor}{Corollary}{Corollaries}
\crefname{dfn}{Definition}{Definitions}
\crefname{ass}{Assumption}{Assumptions}
\crefname{rem}{Remark}{Remarks}
\crefname{exa}{Example}{Examples}
\crefname{prty}{Property}{Properties}

\newcommand{\calA}{\mathcal{A}}

\newcommand{\calC}{\mathcal{C}}
\newcommand{\calD}{\mathcal{D}}
\newcommand{\calE}{\mathcal{E}}
\newcommand{\calF}{\mathcal{F}}
\newcommand{\calG}{\mathcal{G}}

\newcommand{\calK}{\mathcal{K}}

\newcommand{\calM}{\mathcal{M}}

\newcommand{\calO}{\mathcal{O}}
\newcommand{\calP}{\mathcal{P}}
\newcommand{\calQ}{\mathcal{Q}}

\newcommand{\calS}{\mathcal{S}}
\newcommand{\calT}{\mathcal{T}}

\newcommand{\calV}{\mathcal{V}}

\renewcommand{\hat}{\widehat}
\renewcommand{\tilde}{\widetilde}
\renewcommand{\epsilon}{\varepsilon}

\newcommand{\hatP}{\hat{P}}
\newcommand{\hatQ}{\hat{Q}}

\newcommand{\tilP}{\tilde{P}}

\newcommand{\tilO}{\tilde{\calO}}

\newcommand{\tils}{\tilde{s}}

\newcommand{\overP}{\overline{P}}
\newcommand{\underP}{\underline{P}}
\newcommand{\overq}{\overline{q}}
\newcommand{\underq}{\underline{q}}

\newcommand{\Var}{\mathsf{Var}}

\renewcommand{\ln}{\log}

\def\E{\mathbb{E}}

\def\I{\mathbb{I}}
\def\R{\mathbb{R}}

\def\Rnn{\mathbb{R}_{\geq 0}} 
\def\N{\mathbb{N}}

\newcommand{\ind}{\mathbbm{1}}

\let\abs\relax
\DeclareMathOperator*{\argmax}{arg\,max}
\DeclareMathOperator*{\argmin}{arg\,min}
\DeclarePairedDelimiter{\brk}{[}{]}
\DeclarePairedDelimiter{\set}{\{}{\}}
\DeclarePairedDelimiter{\prn}{(}{)}
\DeclarePairedDelimiter{\abs}{\lvert}{\rvert}
\DeclarePairedDelimiter{\nrm}{\|}{\|}
\DeclarePairedDelimiter{\inpr}{\langle}{\rangle}  

\newcommand{\relmiddle}[1]{\mathrel{}\middle#1\mathrel{}} 

\newcommand{\sgn}{\mathrm{sgn}}

\newcommand{\one}{\mathbf{1}}
\newcommand{\Reg}{\text{\rm Reg}}

\newcommand{\regterm}{\textnormal{\textbf{reg}}^{\pi}}
\newcommand{\biasterm}{\textnormal{\textbf{bias}}^{\pi}}
\newcommand{\errorterm}{\textnormal{\textbf{error}}^{\pi}}

\newcommand{\term}{\mathrm{\textbf{term}}}

\newcommand{\polylog}{\mathrm{polylog}}

\newcommand{\pio}{\mathring{\pi}}
\newcommand{\pist}{{\pi^\star}}

\newcommand{\Qtrans}{Q_{\mathrm{trans}}}

\newcommand{\cmark}{\ensuremath{\checkmark}}

\allowdisplaybreaks[1]
\usepackage{mdframed}

\usepackage{etoolbox}

\title{Policy Optimization Achieves Data-Dependent Regret Bounds in MDPs with Unknown Transitions}

\author{%
    Mingyi Li\\
    Department of Mathematical Informatics, The University of Tokyo\\
    \texttt{mingyi-mike@g.ecc.u-tokyo.ac.jp}\\
    \And
    Taira Tsuchiya\\
    Department of Mathematical Informatics, The University of Tokyo\\
    Center for Advanced Intelligence Project, RIKEN\\
    \texttt{tsuchiya@mist.i.u-tokyo.ac.jp}\\
    \And
    Kenji Yamanishi\\
    Department of Mathematical Informatics, The University of Tokyo\\
    \texttt{yamanishi@g.ecc.u-tokyo.ac.jp}\\
}

\begin{document}

\maketitle

\begin{abstract}
We study policy optimization for online episodic tabular Markov decision processes with unknown transition kernels, aiming for best-of-both-worlds guarantees together with data-dependent regret bounds.
Recent work \citep{dann2023best,li2026data} has shown that policy optimization can adapt to both adversarial and stochastic losses with first-order, second-order, and path-length bounds, but only under known transitions, leaving open whether such data-dependent guarantees are achievable by policy optimization when the transition kernel is unknown.
We resolve this by developing a new algorithm based on optimistic follow-the-regularized-leader that attains these guarantees under unknown transitions.
The key ingredient is a new design of optimistic $Q$-function estimators together with a data-dependent transition bonus that controls estimator bias through the loss-prediction error.
Our analysis further identifies an unavoidable transition-dependent complexity term that captures the intrinsic cost of estimating the transition kernel. 
As a result, we obtain first-order, second-order, and path-length bounds with the transition-dependent complexity term while simultaneously achieving gap-dependent $\mathrm{polylog}(T)$ regret in the stochastic regime.
\end{abstract}


\section{Introduction}
We study policy optimization for online finite-horizon episodic tabular Markov decision processes (MDPs) with unknown transitions.
In this setting, the learner interacts with an MDP over $T$ episodes. In each episode, the environment selects a loss function while the learner selects a policy. The learner then executes it and receives feedback on the loss function. The goal is to minimize the regret, the gap between the cumulative loss of the learner and that of the best fixed policy in hindsight.
Policy optimization directly updates the policy and is a central paradigm in reinforcement learning (e.g.,~\citealt{schulman2015trust,schulman2017proximal}).
For tabular MDPs, this approach updates a separate action distribution for each state \citep{shani2020optimistic,luo2021policy}, which is often more computationally efficient and practical than global optimization over the occupancy-measure space.

The difficulty of tabular MDPs depends on how the loss sequence is generated. In the adversarial regime, the worst-case regret scales as $\tilO(\sqrt{T})$~\citep{jin2020learning,luo2021policy}. In contrast, in the stochastic regime, one can often achieve much faster gap-dependent  $\polylog(T)$ regret~\citep{simchowitz2019non,chen2025sharp}. This contrast has motivated best-of-both-worlds algorithms, which aim to achieve near-optimal guarantees in both the adversarial and stochastic regimes with a single algorithm~\citep{jin2020simultaneously,jin2021best,dann2023best}.

Beyond adapting to the underlying regime, another natural goal is to obtain regret bounds that scale with the difficulty of the realized loss sequence in the adversarial regime.
Algorithms only with worst-case regret guarantees ignore the structure of losses and are often overly pessimistic in practical environments.
Motivated by this, data-dependent bounds, such as first-order, second-order, and path-length bounds, have been actively studied in online learning, including prediction with expert advice and multi-armed bandits~\citep{cesa1996worst,allenberg2006hannan,neu2015first,wei2018more,bubeck2019improved}. These depend, respectively, on the comparator's cumulative loss, the magnitude of the loss fluctuations, and the temporal variation of the losses.

Such data-dependent bounds have also been studied in episodic MDPs, but existing data-dependent guarantees in episodic MDPs are limited either in the algorithmic approach or in the transition model.
Under unknown transitions, \citet{lee2020bias} obtained a first-order bound, but only through occupancy-measure-based global optimization rather than policy optimization.
For policy optimization, such data-dependent bounds have so far been obtained only under \emph{known transitions}: \citet{dann2023best} established best-of-both-worlds guarantees with a first-order bound, and \citet{li2026data} provided algorithms achieving first-order, second-order, and path-length bounds simultaneously.
Consequently, whether policy optimization can attain any data-dependent bound under unknown transitions, let alone combine first-order, second-order, and path-length adaptivity with best-of-both-worlds guarantees, was left open by \citet{dann2023best,li2026data}.
This raises the following question.

\begin{center}
\emph{Can policy optimization achieve data-dependent bounds such as first-order, second-order, and path-length bounds in episodic MDPs with unknown transitions?}
\end{center}

\begin{table}[t]
\centering
\caption{Comparison of regret upper bounds based on policy optimization. Here, $\Lambda(\pi)=\min\set*{L(\pi),HT-L(\pi),Q_\infty,V_1}$ is the loss-dependent complexity, $\Qtrans^{\pi_{1:T}}(\ell)$ is the transition-dependent complexity satisfying $\E[\Qtrans^{\pi_{1:T}}(\ell)]\leq H(L(\pi)+\Reg_T(\pi))$ for any policy $\pi$, and \textsc{FO}~indicates that only first-order adaptivity is achieved. For the stochastic regime with adversarial corruption, we report only the leading term $U$ in bounds of the form $\calO\prn{U+\sqrt{U\mathcal C}}$.}
\label{tab:comparison}
\resizebox{\textwidth}{!}{
\begin{tabular}{@{}lllccc@{}}
\toprule
Reference & Adversarial regime & \small\makecell{Stochastic regime with\\ adversarial corruption} & \small\makecell{Unknown\\ transition} & \small\makecell{Data-\\dependent} & \small\makecell{Bandit\\feedback} \\
\midrule
\cite{luo2021policy} & $\tilO\prn{\sqrt{H^4S^2AT}}$ & -- & \cmark &  & \cmark \\
\citet[Theorem~4.2]{dann2023best} & $\tilO\prn{\sqrt{H^4S^2AT}}$ & $\frac{H^4S^2A\log^2(T)}{\Delta_{\min}}$ & \cmark &  & \cmark \\
\citet[Theorem~4.3]{dann2023best} & $\tilO\prn{\sqrt{H^2SAL(\pi)}}$ & $\sum_s\sum_{a\neq\pi^\star(s)}\frac{H^2\log^2(T)}{\Delta(s,a)}$ &  & \textsc{FO} & \cmark \\
\citet{li2026data} & $\tilO\prn{\sqrt{H^2SA\Lambda(\pi)}}$ & $\sum_s\sum_{a\neq\pi^\star(s)}\frac{H^2\log^2(T)}{\Delta(s,a)}$ &  & \cmark & \cmark \\
\midrule
\textbf{This work (\cref{thm:main_full})} & $\tilO\prn{\sqrt{H^2SA\Lambda(\pi)}+\sqrt{S^2A\E[\Qtrans^{\pi_{1:T}}(\ell)}}$ & $\frac{H^2S^2A\log^2(T)}{\Delta_{\min}}$ & \cmark & \cmark &  \\
\textbf{This work (\cref{thm:main_bandit})} & $\tilO\prn{\sqrt{H^3S^2A\Lambda(\pi)}+\sqrt{S^2A\E[\Qtrans^{\pi_{1:T}}(\ell)]}}$ & $\frac{H^2S^4A\log^2(T)}{\Delta_{\min}}$ & \cmark & \cmark & \cmark \\
\bottomrule
\end{tabular}
}
\vspace{-10pt}
\end{table}

\paragraph{Contributions of this paper.}
We answer this question affirmatively by developing a policy optimization algorithm for episodic tabular MDPs with unknown transitions under \emph{bandit feedback}, the harder setting in which the learner must estimate both the losses and the transition kernel from the realized trajectories.
In the adversarial regime, its regret adapts to $\min\{L(\pi),HT-L(\pi),Q_\infty,V_1\}$, covering first-order, second-order, and path-length complexities, 
together with a transition-dependent complexity term $\Qtrans^{\pi_{1:T}}(\ell)$ in \cref{def:qtrans_loss} that captures the intrinsic cost of estimating the transition kernel and can be connected to the first-order complexity $L(\pi)$ (see \cref{app:qtrans-discussion} for details). 
The same algorithm also enjoys gap-dependent $\polylog(T)$ regret in the stochastic regime, achieving the best-of-both-worlds guarantee (see \cref{thm:main_bandit}).
To our knowledge, this is the first data-dependent guarantee for policy optimization under unknown transitions.
It is worth noting that this is also the first algorithm for tabular MDPs achieving best-of-both-worlds and data-dependent bounds simultaneously under unknown transitions.
A detailed comparison with prior policy-optimization results is given in \cref{tab:comparison}.

As a by-product, our algorithm carries over to the easier full-information setting, where it retains these data-dependent and best-of-both-worlds guarantees and, in the worst case, recovers the $\tilO(\sqrt{H^2S^2AT})$ rate of occupancy-measure-based methods \citep{rosenberg2019online,jin2021best} (see \cref{thm:main_full}).
This implies that the extra $H$ factor often attributed to $Q$-function estimation is not inherent for policy optimization in the full-information setting under unknown transitions.

Technically, our algorithms build on the optimistic follow-the-regularized-leader framework, with new prediction terms and optimistic $Q$-function estimators designed for unknown transitions tailored for each feedback setting (see \cref{sec:algorithm}).
For the full-information setting, the $Q$-function estimator uses the most optimistic transition kernel in the confidence set.
For the bandit setting, we incorporate a new data-dependent transition bonus defined in \cref{eq:def_new_c,eq:def_C} into the $Q$-function estimator.
This bonus keeps the $Q$-function estimator optimistic while reducing the estimator bias to the loss-prediction error, with the remaining worst-case terms contributing only lower-order terms.

Our analysis identifies the new transition-dependent complexity term $\Qtrans^{\pi_{1:T}}(\ell)$, which captures the cost of estimating the unknown transition kernel.
We show that this term is unavoidable under unknown transitions: even for a time-invariant loss sequence with $Q_\infty=V_1=0$, any algorithm still incurs regret from estimating the unknown transition kernel (see \cref{prop:qtrans}).
This explains why data-dependent regret bounds under unknown transitions cannot be characterized solely by the loss-complexity measures used in the known-transition setting of \citet{li2026data}.
Due to space constraints, we defer a more detailed discussion of related work to \cref{app:add_relate}.
\section{Preliminaries}
\label{sec:pre}
\paragraph{Episodic tabular MDPs.}
We study an episodic tabular Markov decision process (MDP) $\mathcal{M}=(\calS,\calA, P, H, s_0)$, where $\calS$ is a finite state space with $S=\abs{\calS}$, $\calA$ is a finite action space with $A=\abs{\calA}$, $H$ is the horizon length, and $s_0$ is the
initial state. The transition kernel $P$ is fixed but unknown to the learner. For each state-action pair $(s,a)$, $P(s'\mid s,a)$ denotes the probability of moving to state $s'$ after taking action $a$ in state $s$.
We adopt the standard layered MDP assumption \citep{neu2010online,jin2020learning,luo2021policy},
where the state space is partitioned into $H+1$ disjoint layers $\calS_0,\calS_1,\ldots,\calS_H$. Here, $\calS_0=\{s_0\}$ is the initial layer and $\calS_H=\{s_H\}$ is a terminal absorbing layer. For simplicity, we exclude $s_H$ from $\calS$, so that $H\leq S$. Transitions occur only between consecutive layers. Specifically, for any $(s,a)\in\calS_h\times\calA$ with $h\in\{0,\ldots,H-1\}$, the distribution $P(\cdot\mid s,a)$ is supported on $\calS_{h+1}$. We write $h(s)$ for the layer index of state $s$.

The learner interacts with the MDP for $T$ episodes, indexed by $t=1,\ldots,T$. At the beginning of episode $t$, the environment selects a loss function $\ell_t:\calS\times\calA\to[0,1]$. 
The learner then selects a policy $\pi_t$ based on past observations, executes it from $s_0$ under $P$, and incurs the losses along the realized trajectory.
Here, a stochastic policy maps each state $s$ to a
distribution $\pi(\cdot\mid s)$ over actions, and for a deterministic policy
$\pi$, we write $\pi(s)\in\calA$ for its selected action at state $s$.
At the end of the episode, the learner receives feedback on $\ell_t$. In the
\emph{full-information} setting, the feedback is the entire loss function $\ell_t$, whereas in the \emph{bandit} setting, it is limited to the realized losses $\{(s_{t,h},a_{t,h},\ell_t(s_{t,h},a_{t,h}))\}_{h=0}^{H-1}$.
We assume $T\geq\max\{2,S,A\}$ for notational convenience.\footnote{The assumptions $T\geq S$ and $T\geq A$ are not essential. If they fail, the analysis remains valid after replacing $\ln(T)$ with $\ln(SAT)$ or $\ln(AT)$.}

For a policy $\pi$ and a loss function $\ell$, the value functions are defined recursively, with terminal condition $V^\pi(s_H;\ell)=0$. 
Specifically, the value function $V^\pi(s;\ell)$ and the $Q$-function $Q^\pi(s,a;\ell)$ satisfy
$V^\pi(s;\ell) = \E_{a\sim \pi(\cdot\mid s)}\brk{Q^\pi(s,a;\ell)}$
and $Q^\pi(s,a;\ell) = \ell(s,a) + \E_{s'\sim P(\cdot\mid s,a)}\brk{V^\pi(s';\ell)}$.
We may overload notation by allowing any function $g:\calS\times\calA\to\R$ to play the role of the loss function, in which case $V^\pi(s;g)$ and $Q^\pi(s,a;g)$ are defined accordingly.
The goal is to minimize the regret 
$\Reg_T(\pi)\coloneqq\E\brk[\big]{\sum_{t=1}^T V^{\pi_t}(s_0;\ell_t)-\sum_{t=1}^T V^{\pi}(s_0;\ell_t)}$
against any fixed comparator policy $\pi$,
and we denote by $\pio \in \argmin_{\pi}\E\brk[\big]{\sum_{t=1}^T V^{\pi}(s_0;\ell_t)}$ an optimal policy in hindsight.

\paragraph{Additional notation.}
For $N\in\N$, let $[N]\coloneqq\{1,\dots,N\}$, given a vector $x$, let $\norm{x}_p$ be its $\ell_p$-norm for $p\in[1,\infty]$, and for a scalar $z$, let $[z]_+\coloneqq \max\{z,0\}$.
For a set $\calK$, let $\Delta(\calK)$ denote the set of probability distributions over $\calK$.
If $f$ and $g$ are functions with $g(x)>0$, we write $f\lesssim g$ or $f=\calO(g)$ when $f(x)\leq c g(x)$ holds on the relevant domain for some universal constant $c>0$, and use $\tilO(\cdot)$ to hide logarithmic factors.
Let $\{\calF_t\}_{t\geq0}$ be the natural filtration generated by all observations up to the end of episode $t - 1$, and define $\E_t[\cdot] = \E[\cdot\mid \calF_{t}]$.
We use $\ind[\cdot]$ for indicators, which equal $1$ on the event and $0$ otherwise. Let $\I_t(s,a)=\ind\brk*{(s_{t,h},a_{t,h})=(s,a),\exists h\in\{0,\ldots,H-1\}}$ denote whether a state-action pair $(s,a)$ is visited in episode $t$ under policy $\pi_t$ and transition kernel $P$, and let $\I_t(s)=\sum_{a} \I_t(s,a)$.
For each layer $h$, write $\ell_t(h)\in[0,1]^{\calS_h\times\calA}$ for the restriction of $\ell_t$ to $\calS_h\times\calA$, and use the same convention for other functions on $\calS\times\calA$.

\subsection{Confidence sets of the transition}
Following \citet{jin2020learning}, we construct confidence sets for the unknown transition kernel. Let $n_t(s,a,s')$ be the number of observed transitions from $(s,a)$ to $s'$ up to episode $t - 1$, and let $n_t(s,a)$ be the corresponding visitation count. 
Whenever $n_t(s,a)=0$, we set it to $1$ in any denominator.
The empirical transition estimate is then defined by $\hatP_t(s'\mid s,a)=\frac{n_t(s,a,s')}{n_t(s,a)}$ for all $h$ and $(s,a,s')\in\calS_h\times\calA\times\calS_{h+1}$.
The confidence set of the true transition at episode $t$ is then defined as
\begin{align}
    \calP_t = \set[\Bigg]{\tilP:&\abs*{\tilP(s'\mid s,a) - \hatP_t(s'\mid s, a)} \leq \min\set[\bigg]{2\sqrt{\frac{\hatP_t(s'\mid s, a)\iota}{n_t(s,a)}} + \frac{14\iota}{3n_t(s,a)}, 1},\\[-1mm]
    & \forall (s,a,s') \in \calS_h\times\calA\times\calS_{h+1},\ \forall h = 0, \dots, H - 1,\  \tilP(\cdot\mid s, a) \in \Delta(\calS_{h+1}).}, \label{def:calP_t}
\end{align}
where $\iota = \ln(SAT/\delta)$. Throughout the paper, we set $\delta = 1/T^2$.

By \citet[Lemma 2]{jin2020learning}, the true transition kernel $P$ belongs to $\calP_t$ for all episodes $t=1,\ldots,T$ with probability at least $1-4\delta$. We note that the set $\calP_t$ is rectangular, since its
constraints are imposed independently for each state-action pair $(s,a)$.

For a transition kernel $\tilP$ and a policy $\pi$, let $q^{\tilP,\pi}(s,a)$ be the probability of visiting $(s,a)$ within an episode. We also use $q^{\tilP,\pi}(s'\mid s,a)$ and $q^{\tilP,\pi}(s',a'\mid s,a)$ for the corresponding conditional occupancy measures given that $(s,a)$ has been visited and set them to zero whenever $h(s')<h(s)$. For each state $s$, define $q^{\tilP,\pi}(s)\coloneqq \sum_a q^{\tilP,\pi}(s,a)$, so that $q^{\tilP,\pi}(s,a)=q^{\tilP,\pi}(s)\pi(a\mid s)$. When $\tilP=P$, we simply write $q^\pi$.
We also define the upper and lower occupancies $\overq_t^\pi(s)$ and $\underq_t^\pi(s)$ with respect to the confidence set $\calP_t$, and for the learner's policy $\pi_t$, the smoothed occupancy $q_t(s)$ and relative occupancy width $\rho_t(s)$ using an exploration rate $\gamma_t>0$:
\begin{align}
    \overq_t^\pi(s)\!\coloneqq\!\max_{\tilP\in\calP_t} q^{\tilP,\pi}(s),\hspace{2mm}
    \underq_t^\pi(s)\!\coloneqq\!\min_{\tilP\in\calP_t} q^{\tilP,\pi}(s),\hspace{2mm}
    q_t(s)\!\coloneqq\! \overq^{\pi_t}_t(s) + \gamma_t,\hspace{2mm}
    \rho_t(s) \!\coloneqq\! \frac{q_t(s) - \underq_t^{\pi_t}(s)}{q_t(s)}.
\end{align}
These quantities can be computed efficiently by the dynamic-programming procedure of \citet[Algorithm~3]{jin2020learning}. 
Similarly, for any transition kernel $\tilP$, policy $\pi$, and function $g:\calS\times\calA\to\mathbb{R}$, we write $V^{\tilP,\pi}(s;g)$ and $Q^{\tilP,\pi}(s,a;g)$ for the value and $Q$-functions computed under $\tilP$ and $\pi$. When $\tilP=P$, we again omit the transition kernel and write $V^\pi(s;g)$ and $Q^\pi(s,a;g)$. 

\subsection{Regime of environments}
\label{subsec:regimes}

We consider three regimes for the loss sequence $\ell_1,\ldots,\ell_T$.
In the adversarial regime, we impose no assumption on the losses. At the beginning of each episode $t$, the environment selects an arbitrary loss function $\ell_t\in[0,1]^{S\times A}$, which may depend on the learner's algorithm, as well as on past trajectories and observed losses, but not on the learner's internal randomness.

In the stochastic regime, the loss functions $\ell_1,\ldots,\ell_T$ are sampled i.i.d.~from a fixed but unknown distribution $\calD$.
The stochastic regime with adversarial corruption interpolates between these two settings. In this regime, there is an underlying (non-corrupted) loss sequence $\ell'_1,\ldots,\ell'_T$ sampled i.i.d.~from an unknown distribution $\calD$, while the learner observes a possibly corrupted loss sequence $\ell_1,\ldots,\ell_T$.
The corruptions may depend arbitrarily on the past trajectories and observed losses, and their total magnitude is measured by
$\calC\coloneqq \E\brk[\big]{\sum_{t=1}^T\sum_{h=0}^{H-1}\|\ell'_t(h)-\ell_t(h)\|_\infty} \in [0, HT]$.
When $\calC=0$, this regime reduces to the stochastic regime; when $\calC$ is arbitrary, it coincides with the adversarial regime.
For the stochastic and corrupted stochastic regimes, let
$\mu(s,a)\coloneqq \E_{\ell'\sim\calD}\brk*{\ell'(s,a)}$
be the mean uncorrupted loss function. Assume that the optimal policy for $\mu$ is unique, and let it be denoted by $\pist$. We define the suboptimality gap of a state-action pair $(s,a)$ by
$\Delta\colon\calS\times\calA\to[0,H]$ as $\Delta(s,a)\coloneqq Q^{\pi^\star}(s,a;\mu)-\min_{a'\in\calA}Q^{\pi^\star}(s,a';\mu)$ and let $\Delta_{\min}\coloneqq \min_{s, a\neq \pist(s)}\Delta(s,a) > 0$.

\subsection{Complexity measures in online MDPs}\label{sec:data_dependent_measures}

We use the following data-dependent complexity measures to state refined regret bounds. 
The first-order quantity introduced by \citet{lee2020bias}
is $L(\pi)\coloneqq\E\brk[\big]{\sum_{t=1}^T V^\pi(s_0;\ell_t)}$, which is the cumulative loss of a fixed comparator policy $\pi$.
The second-order quantity
$Q_\infty\coloneqq\min_{\ell^\star\in[0,1]^{S\times A}}\E\brk[\big]{\sum_{t=1}^T\sum_{h=0}^{H-1}\norm{\ell_t(h)-\ell^\star(h)}_\infty^2}$, which is small when the losses stay close to a fixed baseline loss, and the path-length quantity
$V_1\coloneqq\E\brk[\big]{\sum_{t=1}^{T-1}\nrm{\ell_{t+1}-\ell_t}_1}$, which is small when the losses vary slowly over time, are defined as in \citet{li2026data}.

Under unknown transitions, our regret bounds contain a transition-dependent complexity term that reflects the difficulty of estimating the transition kernel.
For a loss sequence $\ell=\{\ell_t\}_{t=1}^T$ and a policy sequence
$\pi_{1:T}=\{\pi_t\}_{t=1}^T$, define
\begin{align}
    \Qtrans^{\pi_{1:T}}(\ell)
    \coloneqq
    \sup_{\forall t,s,a:\ |\phi_t(s,a)|=\ell_t(s,a)}
    \sum_{t=1}^T\sum_{s,a}
    q^{\pi_t}(s,a)
    \Var_{s'\sim P(\cdot\mid s,a)}
    \brk*{V^{\pi_t}(s';\phi_t)} \in [0,H^2T].
    \label{def:qtrans_loss}
\end{align}
This quantity measures the variance of future cumulative losses across next states. Thus, transition uncertainty affects regret only when possible next states have different values $V^{\pi_t}(s';\phi_t)$.
This mirrors transition-variance terms in stochastic MDPs, such as $\Var_{s'\sim P(\cdot\mid s,a)}\brk*{V^{\pi^\star}(s'; \mu)}$ \citep{zanette2019tighter,simchowitz2019non}.
Our definition replaces the optimal stochastic value $V^{\pi^\star}(\,\cdot\,;\mu)$ with learner-induced value functions so that the same idea applies to arbitrary loss sequences.
As discussed later, this transition-dependent term is unavoidable under unknown transitions, since learning the transition kernel can itself cause regret even when $Q_\infty = V_1 = 0$.
Furthermore, this term recovers a first-order regret bound through a self-bounding argument.  In particular, the relation $\E[\Qtrans^{\pi_{1:T}}(\ell)]\leq H(L(\pi)+\Reg_T(\pi))$, together with a regret bound of the form $\Reg_T(\pi)\lesssim \sqrt{\E[\Qtrans^{\pi_{1:T}}(\ell)]}+J$, implies $\Reg_T(\pi)\lesssim \sqrt{HL(\pi)}+J$.  Thus, the transition-dependent complexity is at least as adaptive as the first-order complexity, while it can be sharper because it only counts transition uncertainty that affects future cumulative losses.
We defer a detailed explanation and a simple illustrative example to \cref{app:qtrans-discussion}.

\section{Algorithm and technical components}
\label{sec:algorithm}
We are now ready to present our policy optimization algorithm for both the full-information and bandit settings.
The common template is given in
\cref{alg:OFTRL_unknown}, with the feedback-dependent definitions specified in
\cref{fig:def_hatQ_eta_b}.
Both algorithms follow the same policy optimization template and use new $Q$-function estimators and prediction terms to handle unknown transitions and adversarial losses. We describe the key components below.

\subsection{Common policy optimization template}
\paragraph{Optimistic follow-the-regularized-leader.}
Our algorithms rely on the optimistic follow-the-regularized-leader (OFTRL) framework \citep{chiang2012online,rakhlin2013online,steinhardt2014adaptivity}, which incorporates a prediction of the next loss vector into the standard FTRL.
Our policy update is based on the OFTRL policy-optimization scheme of \citet{li2026data}, which was developed for known transitions, and we modify it to handle unknown transitions.

Policy optimization reduces the MDP problem to a separate multi-armed bandit problem at each state.
By the performance-difference lemma \citep{kakade2002approximately}, the regret can be written as
$\Reg_T(\pi)=\E\brk[\big]{\sum_{s} q^{\pi}(s)\sum_{t}\inpr{\pi_t(\cdot\mid s) - \pi(\cdot\mid s), Q^{\pi_t}(s,\,\cdot\,;\ell_t)}}$.
Thus, the $Q$-function plays the role of the loss vector for updating the action distribution at each state. 
Under unknown transitions, we first form the loss prediction $m_t\in[0,1]^{S\times A}$ and then
compute its $Q$-function using an optimistic transition kernel from the
confidence set. The OFTRL update with a convex regularizer $\psi_t$ is defined, for each state $s \in \calS$, by
\begin{align}
    \pi_t(\cdot\mid s)\! =\! \argmin_{\pi(\cdot\mid s)\in\Delta(\calA)} \set[\bigg]{\!\inpr[\bigg]{\pi(\cdot\mid s), \sum_{\tau=1}^{t-1}\prn*{\hat{Q}_\tau(s, \cdot) - B_\tau(s, \cdot)} + Q^{\underP^m_t, \pi_t}(s, \cdot;m_t)} + \psi_t(\pi(\cdot\mid s))\!}.
    \label{eq:opt_pi_OFTRL}
\end{align}
Here, $\hat Q_\tau$ is a $Q$-function estimator, $B_\tau$ is a dilated exploration bonus, and
$Q^{\underP^m_t,\pi_t}(s,\cdot;m_t)$ is the optimistic prediction of $Q^{\pi_t}(s,\cdot;\ell_t)$. 
The transition kernel $\underP^m_t\in\calP_t$ is chosen to minimize $Q^{\tilP,\pi_t}(s,a;m_t)$ over $\tilP \in\calP_t$ for each state-action pair $(s,a)$. Both $\underP^m_t$ and $Q^{\underP^m_t,\pi_t}(s,\cdot;m_t)$ are computable by backward dynamic programming over the layers, since for $s\in\calS_h$, computing this $Q$-function only requires later-layer values and policies.
In our algorithms, we use the log-barrier regularizer
$\psi_t(\pi(\cdot\mid s))=\sum_{a\in\calA}\frac{1}{\eta_t(s,a)}\ln\prn[\big]{\frac{1}{\pi(a\mid s)}}$ ,
where $\eta_t(s,a)>0$ are state-action-wise time-varying, data-dependent learning rates.
In both the full-information and bandit settings, the algorithm uses the same update rule \eqref{eq:opt_pi_OFTRL}, and differs only in the construction of $\hat Q_t$, $B_t$, and $\eta_t(s,a)$.

The optimistic prediction $Q^{\underP^m_t,\pi_t}(s,\cdot;m_t)$ is obtained from a loss prediction sequence $m_t$.
We construct $m_t$ from the realized trajectory, starting with $m_1(s,a)=1/2$ for all $(s,a)$ and setting
\begin{align}
    m_{t+1}(s,a)=\begin{cases}
        (1-\xi)m_t(s,a)+\xi\ell_t(s,a), & \text{if } \I_t(s,a)=1,\\
        m_t(s,a), & \text{if } \I_t(s,a)=0,
    \end{cases}
    \label{def:predictor_sequence}
\end{align}
where $\xi\in(0,1/2)$ is a step size. Since the transition kernel is unknown, we use this predictor even in the full-information setting, where it is useful for deriving path-length bounds.

\paragraph{Dilated exploration bonus.}
In policy optimization, a naive state-wise update of the action distributions can lead to insufficient exploration. To address this, \citet{luo2021policy} introduced a dilated
exploration bonus $B_t(s,a)$, which is constructed in the same form as a $Q$-function,
    \begin{align}
    &B_t(s,a) = b_t(s) + \prn*{1 + \frac{1}{H}}\max_{\tilP\in\calP_t}\E_{s'\sim \tilP(\cdot\mid s,a), a'\sim \pi_t(\cdot\mid s')} \brk*{B_t(s',a')}. \label{def:dilated_bonus}
    \end{align}
Since $B_t$ is subtracted from the $Q$-function estimator in \cref{eq:opt_pi_OFTRL}, a state-action pair $(s,a)$ with a larger dilated exploration bonus is evaluated more optimistically and is therefore more likely to be visited.
This leads to the following lemma.
\begin{lem}[{\citealt[Lemma B.2]{luo2021policy}}]
\label{lem:dilated_bonus_main}
    Suppose that $b_t(s)$ is a nonnegative function.
    Suppose also that, for a comparator policy $\pi$, there exists $J^\pi\geq 0$ such that
    \begin{align}
        &\E\brk*{\sum_s q^{\pi}(s)\sum_{t=1}^T\sum_a \prn*{\pi_t(a\mid s) - \pi(a\mid s) } \prn*{Q^{\pi_t}(s,a;\ell_t) -  B_t(s,a)}}\\
        &\leq J^{\pi} + \E\brk*{\sum_{t = 1}^T\sum_s q^{\pi}(s) b_t(s)} + \E\brk*{\frac{1}{H} \sum_{t = 1}^T\sum_s\sum_a q^{\pi}(s) \pi_t(a\mid s)B_t(s,a)}.
        \label{eq:key_lemma_ineq_main}
    \end{align}
    Then, it holds that
        $\Reg_T(\pi) \leq J^{\pi} + 3 \, \E\brk[\Big]{\sum_{t = 1}^T V^{\overP^B_t, \pi_t}(s_0;b_t)} + \E\brk[\big]{HT\ind [P\notin\calP_t, \exists t\in[T]]}$,
    where $\overP^B_t \in \calP_t$ simultaneously attains the maxima in \cref{def:dilated_bonus} for all $(s,a)$.
\end{lem}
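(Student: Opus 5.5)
Work on the good event $\calE=\{P\in\calP_t\ \forall t\in[T]\}$ and charge its complement to the last term. On $\calE^c$ the regret of every episode is at most $H$, which gives $\E[HT\ind[P\notin\calP_t,\exists t]]$. So it suffices to prove that, on $\calE$ (conditionally in expectation),
\begin{align}
\sum_t V^{\pi_t}(s_0;\ell_t)-V^{\pi}(s_0;\ell_t)\leq J^\pi+3\sum_t V^{\overP^B_t,\pi_t}(s_0;b_t),
\end{align}
once the hypothesis \eqref{eq:key_lemma_ineq_main} is combined with the performance-difference lemma.

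First I rewrite the left-hand side of the hypothesis. By the performance-difference lemma, $\sum_s q^\pi(s)\sum_a(\pi_t-\pi)(a\mid s)Q^{\pi_t}(s,a;\ell_t)=V^{\pi_t}(s_0;\ell_t)-V^\pi(s_0;\ell_t)$. Hence the hypothesis gives
\begin{align}
\Reg_T(\pi)\leq J^\pi+\E\Big[\sum_t\sum_s q^\pi(s)\Big(b_t(s)+\big(1+\tfrac1H\big)\sum_a\pi_t(a\mid s)B_t(s,a)-\sum_a\pi(a\mid s)B_t(s,a)\Big)\Big].
\end{align}

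Next I telescope the bonus along $\pi$'s occupancy. Write $\hat B_t(s)=\sum_a\pi_t(a\mid s)B_t(s,a)$. By the definition \eqref{def:dilated_bonus}, and using $P\in\calP_t$ on $\calE$, we get $B_t(s,a)\geq b_t(s)+(1+\frac1H)\E_{s'\sim P(\cdot\mid s,a)}[\hat B_t(s')]$. Therefore
\begin{align}
\sum_s q^\pi(s)\sum_a\pi(a\mid s)B_t(s,a)\geq \sum_s q^\pi(s)b_t(s)+\big(1+\tfrac1H\big)\sum_{s\neq s_0}q^\pi(s)\hat B_t(s),
\end{align}
since $\sum_{s,a}q^\pi(s,a)P(s'\mid s,a)=q^\pi(s')$ for $s'$ in later layers (and $\hat B_t(s_H)=0$). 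Substituting this, the $b_t$ terms cancel and the $\hat B_t$ terms cancel for all $s\neq s_0$. What remains is $(1+\frac1H)\hat B_t(s_0)$.

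Finally I bound $\hat B_t(s_0)$ by unrolling the recursion under $\overP^B_t$. This gives $\hat B_t(s_0)=\sum_s q^{\overP^B_t,\pi_t}(s)(1+\frac1H)^{h(s)}b_t(s)\leq e\,V^{\overP^B_t,\pi_t}(s_0;b_t)$, because $h(s)\leq H$. Combined with the factor $(1+\frac1H)\leq 2$ this yields a constant of order $2e$. Getting exactly $3$ is the delicate step.

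To sharpen the constant, I track exponents more carefully. The recursion gives $b_t$ at layer $h$ a weight $(1+\frac1H)^h$ with $h\leq H-1$, and multiplying by the extra $(1+\frac1H)$ gives $(1+\frac1H)^{H}\leq e<3$. So the overall constant is at most $e\leq 3$, as claimed. This bookkeeping of the exponent (that layer indices run only to $H-1$ for states carrying bonus) is the one point needing care; the rest is the routine telescoping above.
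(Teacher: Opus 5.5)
Your good-event computation is the standard argument behind \citet[Lemma B.2]{luo2021policy}, which the paper cites rather than reproves. It consists of the performance-difference rewrite, the lower bound $B_t(s,a)\ge b_t(s)+(1+\frac1H)\E_{s'\sim P(\cdot\mid s,a)}[\hat B_t(s')]$ when $P\in\calP_t$, the telescoping along $q^\pi$ that leaves only $(1+\frac1H)\hat B_t(s_0)$, and the unrolling under $\overP^B_t$. Since bonus-carrying layers satisfy $h\le H-1$, this gives the factor $(1+\frac1H)^H\le e\le 3$. That part is correct.

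The gap is the reduction to $\calE$ in your first paragraph. The hypothesis is an inequality between unconditional expectations, so it cannot be restricted to $\calE$. What it actually yields is the bound you write next, $\Reg_T(\pi)\le J^\pi+\E[\sum_t Y_t]$ with $Y_t=\sum_s q^\pi(s)\bigl(b_t(s)+(1+\frac1H)\hat B_t(s)-\sum_a\pi(a\mid s)B_t(s,a)\bigr)$, and your telescoping controls $Y_t$ only on $\calE$. On $\calE^c$ the quantity you must charge is $Y_t$, not the per-episode regret. There you only have $Y_t\le(1+\frac1H)\sum_s q^\pi(s)\hat B_t(s)\le 2H\max_{s,a}B_t(s,a)$, which is not $\calO(H)$; in the bandit instantiation $B_t$ is only bounded by $7\sqrt{HS^3A}/\gamma_t$. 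So the remark that each episode's regret is at most $H$ on $\calE^c$ does not produce the term $\E[HT\ind[P\notin\calP_t,\exists t\in[T]]]$.

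For a fully arbitrary nonnegative $b_t$ this cannot be repaired. Take a data-measurable bonus that is huge only on realizations where $\calP_t$ excludes $P$, placed at a state that $P$ reaches with constant probability but every kernel in $\calP_t$ reaches with tiny probability. Then $\E[\sum_t Y_t\ind_{\calE^c}]$ exceeds $3\E[\sum_t V^{\overP^B_t,\pi_t}(s_0;b_t)]+HT\Pr(\calE^c)$ by an arbitrary amount. Calibrating the bonus size gives instances that satisfy the hypothesis with $J^\pi=0$ but violate the conclusion.

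You need one more ingredient, and either of the following works.
\begin{itemize}
\item Take the hypothesis with the expectation restricted to $\calE$. Your argument then goes through as written, using $b_t\ge 0$.
\item Use a pathwise bound on $B_t$. This replaces $HT\Pr(\calE^c)$ by $2H\,\E[\ind_{\calE^c}\sum_t\max_{s,a}B_t(s,a)]$. In the paper's instantiations this is $\calO(HT\delta)$ for full information ($B_t\le 2$ by \cref{lem:bound_B_full}) and $\calO(H^{3/2}S^{3/2}A^{1/2}T^2\delta)$ for bandit feedback (\cref{lem:bound_eta_pi_B} with $\gamma_t=1/t$). Both are lower order for $\delta=1/T^2$, so the main theorems are unaffected.
\end{itemize}
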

\vspace{-2pt}
From this lemma, once the condition \cref{eq:key_lemma_ineq_main} is verified, it remains only to evaluate the value function $V^{\overP^B_t, \pi_t}(s_0;b_t)$ of the bonus under the learner's own policies $\pi_t$, rather than a comparator policy $\pi$. For further intuition on the dilated bonus, we refer the reader to \citet{luo2021policy,dann2023best}.

\begin{figure*}[t]
\begin{algorithm}[H]
\caption{Data-Dependent Policy Optimization with Unknown Transitions} 
\label{alg:OFTRL_unknown}
\nl \textbf{Input:} $\gamma_t = \frac{1}{t}$, $\eta_1 = \frac{1}{12H^3}$, $m_1 = \frac{1}{2}$, $\xi = \frac{1}{4}$, $\psi_t(\pi(\cdot\mid s))=\sum_{a\in\calA}\frac{1}{\eta_t(s,a)}\ln\prn[\big]{\frac{1}{\pi(a\mid s)}}$\\
\nl \For{$t=1,2,\ldots$} {
    \nl Update $\calP_t$ by \cref{def:calP_t}. \\
    \nl For each $s\in \calS$, compute $\pi_t$ by the OFTRL update \cref{eq:opt_pi_OFTRL}. \nllabel{line:optimize_policy}\\
    \nl Set $Y_t\gets 1$ in the full-information setting, and $Y_t\gets \ind\brk[\big]{\max_{s,a}\frac{\eta_t(s,a)}{q_t(s)}\le \frac{1}{50\sqrt{H^3S^3A}}}$ in the bandit setting. \nllabel{line:virtual_episode}\\
    \nl If $Y_t=0$, insert a virtual episode and shift the indices of real episodes.\\
    \nl If $Y_t=1$, execute $\pi_t$, observe $\{(s_{t,h},a_{t,h})\}_{h=0}^{H-1}$, and receive the corresponding loss feedback.\\
    \nl Compute $\hat Q_t$, $\eta_{t+1}$, and $b_t$ as in \cref{fig:def_hatQ_eta_b}, and compute $B_t$ by \cref{def:dilated_bonus}.\\
    \nl Compute loss prediction $m_{t+1}(s,a)$ by \cref{def:predictor_sequence}.
}
\end{algorithm}
\vspace{-20pt}
\end{figure*}

\subsection{Feedback-dependent $Q$-function estimator}
\begin{figure}[tb]
{\relscale{0.85}
\begin{framed}
\vspace{-3pt}
\textbf{Full-information.}\\
\emph{$Q$-function estimator.}
\begin{align}
    \hat{Q}_t(s,a) = Q^{\underP^\ell_t, \pi_t}(s,a;\ell_t) \qquad \text{with}\quad \underP^\ell_t \in \argmin_{\tilP \in \calP_t}Q^{\tilP, \pi_t}(s,a;\ell_t) \text{ for all } (s,a). \label{eq:Q-estimate_full}
\end{align}
\vspace{-5pt}
\emph{Learning rate and exploration bonus.}
\vspace{2pt}
\begin{align}
    \frac{1}{\eta_{t+1}(s,a)} &= \frac{1}{\eta_{t}(s,a)} + \frac{\eta_t(s,a)\zeta_t(s,a)}{\ln(T)}, \label{eq:learning_rate_full}\\
    b_t(s) &=  7\sum_{a}\prn*{\frac{1}{\eta_{t+1}(s,a)} - \frac{1}{\eta_t(s,a)}}\ln (T), \label{eq:bonus_term_full}
\end{align}
\vspace{-8pt}
\begin{align}
    \!\!\!\!\text{where}\quad
    \zeta_t(s, a) = \pi_t(a\mid s)(1 - \pi_t(a \mid s))\sum_{b}\pi_t(b \mid s)\prn*{Q^{\underP^\ell_t, \pi_t}(s, b;\ell_t) - Q^{\underP^m_t, \pi_t}(s, b;m_t)}^2. \label{eq:zeta_full}
\end{align}
\vspace{-12pt}
\end{framed}
\begin{framed}
\vspace{-3pt}
\textbf{Bandit feedback.}\\
\emph{$Q$-function estimator.}
Let $L_{t,h}=\sum_{h'=h}^{H-1}\ell_t(s_{t,h'},a_{t,h'})$, $M_{t,h}=\sum_{h'=h}^{H-1}m_t(s_{t,h'},a_{t,h'})$, and $D_{t,h}=\sum_{h'=h}^{H-1}\abs{\ell_t(s_{t,h'},a_{t,h'})-m_t(s_{t,h'},a_{t,h'})}$, and define
\begin{align}
    \hat Q_t(s,a) = Q^{\underP^m_t,\pi_t}(s,a;m_t)+\frac{\I_t(s,a)\prn*{L_{t,h(s)}-M_{t,h(s)}}}{q_t(s)\pi_t(a\mid s)}Y_t-C_t(s,a)Y_t, \label{eq:Q-estimate_bandit}
\end{align}
\vspace{-7pt}
\begin{align}
    \text{where}\quad
    c_t(s,a) &= \rho_t(s)\frac{\I_t(s,a)D_{t,h(s)}}{q_t(s)\pi_t(a\mid s)}+\rho_t(s)^2H, \label{eq:def_new_c}\\
    C_t(s,a) &= c_t(s,a)+\max_{\tilP\in\calP_t}\E_{s'\sim\tilP(\cdot\mid s,a),\,a'\sim\pi_t(\cdot\mid s')}\brk*{c_t(s',a')+C_t(s',a')}. \label{eq:def_C}
\end{align}
\emph{Learning rate and exploration bonus.}
Let $(s_t^\dagger,a_t^\dagger)$ be any maximizer of $\eta_t(s,a)/q_t(s)$, and define
\begin{align}
    \frac{1}{\eta_{t+1}(s,a)} &=
    \begin{cases}
    \frac{1}{\eta_t(s,a)}+\frac{\eta_t(s,a)\zeta_t(s,a)}{q_t(s)^2\ln(T)}, & \text{if } t \text{ is a real episode}, \\
    \frac{1}{\eta_t(s,a)}\prn[\Big]{1+\frac{\ind\{(s_t^\dagger,a_t^\dagger)=(s,a)\}}{324H\ln(T)}}, & \text{if } t \text{ is a virtual episode},
    \end{cases} \label{eq:eta_update_policy}\\
    b_t(s) &= 6\sum_a\prn*{\frac{1}{\eta_{t+1}(s,a)}-\frac{1}{\eta_t(s,a)}}\ln(T)+3\sum_a\eta_t(s,a)\pi_t(a\mid s)^2C_t(s,a)^2Y_t, \label{eq:bonus_term_bandit}
\end{align}
\vspace{-8pt}
\begin{align}
    \text{where}\quad
    \zeta_t(s,a) &= \prn*{\I_t(s,a)-\pi_t(a\mid s)\I_t(s)}^2\prn*{L_{t,h(s)}-M_{t,h(s)}}^2. \label{eq:zeta_bandit}
\end{align}
\vspace{-12pt}
\end{framed}
}
\vspace{-5pt}
\caption{Definitions of the $Q$-function estimator $\hat Q_t$, the learning-rate update $\eta_{t+1}$, and the local exploration bonus $b_t$ used in \cref{alg:OFTRL_unknown}. }
\label{fig:def_hatQ_eta_b}
\vspace{-10pt}
\end{figure}
The main new component is the $Q$-function estimator.
The estimator differs between the full-information and bandit settings, and this choice also affects the learning-rate update and the local exploration bonus $b_t$. We collect the definitions in \cref{fig:def_hatQ_eta_b} and explain the idea below. 
Note that the quantities $\underP_t^\ell$, $\underP_t^m$, $B_t$, and $C_t$ in \cref{fig:def_hatQ_eta_b} are efficiently computable by backward dynamic programming with a state-action-wise greedy procedure over the confidence set, as in \citet{jin2020learning,luo2021policy}.

\paragraph{Full-information feedback.}
In the full-information setting, the learner observes the entire loss function $\ell_t$ after each episode, so the $Q$-function estimator can be formed directly from the entire loss $\ell_t$. 
We choose $\underP_t^\ell\in\calP_t$ so that it minimizes
$Q^{\tilP,\pi_t}(s,a;\ell_t)$ over $\tilP\in\calP_t$ for all state-action pairs
$(s,a)$, and design the new optimistic $Q$-function estimator by \cref{eq:Q-estimate_full}.
On the event that $P\in\calP_t$ for all $t$, this estimator is optimistic, and
the resulting bias is one-sided. This is the only additional bias caused by unknown transitions, and
it is controlled by the transition-dependent complexity $\Qtrans^{\pi_{1:T}}(\ell)$ in the regret analysis.

The corresponding learning-rate update and local exploration bonus are given in
\cref{eq:learning_rate_full,eq:bonus_term_full}.
The update of the learning rate is designed so that the penalty term
$(1/\eta_{t+1}(s,a) - 1/\eta_t(s,a))\log(T)$ matches the stability term
$\eta_t(s,a)\zeta_t(s,a)$, where $\zeta_t$ is defined in \cref{eq:zeta_full}.
Thus, both terms are controlled through the same data-dependent quantity $\zeta_t$.
The local exploration bonus $b_t$ is directly induced by this learning-rate update, with transition-estimation error included through $\zeta_t$, so no separate transition bonus is needed in the full-information setting.
Since the estimator uses the observed loss directly, the stability condition typically needed when analyzing OFTRL is
easy to satisfy without introducing virtual episodes. Accordingly,
Algorithm~\ref{alg:OFTRL_unknown} sets $Y_t=1$ throughout the full-information setting.

\paragraph{Bandit feedback.}
In the bandit setting, the learner observes losses only along the realized trajectory.
To deal with this, our new $Q$-function estimator \cref{eq:Q-estimate_bandit} therefore combines three ingredients: the optimistic prediction $Q^{\underP^m_t,\pi_t}(s,a;m_t)$, estimator of the prediction-error $Q^{\pi_t}(s,a;\ell_t-m_t)$, and the transition bonus $C_t(s,a)$. 
The main challenge is designing the transition bonus.
It has to account for both loss-estimation error from bandit feedback and transition-estimation error from the unknown transition kernel.
More precisely, the local transition bonus $c_t(s,a)$ must upper bound the estimator bias, while $c_t(s,a)$ itself remains controllable by data-dependent quantities.
In particular, it should satisfy a relation of the form $\E_t[c_t(s,a)] \geq \rho_t(s)\abs{Q^{\pi_t}(s,a;\ell_t - m_t)}$ for all $t,s,a$.
The simpler local transition bonus $c_t(s)=\rho_t(s)H$ used in \citet{dann2023best} controls the estimator bias only through a worst-case $\tilO(\sqrt T)$ bound and thus does not yield the desired data-dependent bound.

To tackle this challenge, we instead define the data-dependent local transition bonus $c_t(s,a)$ in \cref{eq:def_new_c} and its recursive version $C_t(s,a)$ in \cref{eq:def_C}, both constructed from the bandit feedback. The key point in our estimator is that, in the local transition bonus $c_t(s,a)$, the part that does not depend on the prediction error scales as $\rho_t(s)^2$ rather than $\rho_t(s)$.
This makes the worst-case contribution of the estimator bias only $\polylog(T)$, and the other terms can be bounded by data-dependent quantities. 
In addition, we use the prediction term $Q^{\underP^m_t,\pi_t}(s,a;m_t)$, which is not automatically a lower estimate of $Q^{\pi_t}(s,a;\ell_t)$, as also noted by \citet{li2026data}. We therefore define $C_t$ with an extra margin, so that the estimator remains optimistic without losing data-dependent control.

The learning-rate update and local exploration bonus are given in \cref{eq:eta_update_policy,eq:bonus_term_bandit}. As in the full-information
setting, the penalty term of regret $(1/\eta_{t+1}(s,a)-1/\eta_t(s,a))\log(T)$ is chosen to match the log-barrier stability term $\eta_t(s,a)\zeta_t(s,a)/q_t(s)^2$, where $\zeta_t$ is defined in
\cref{eq:zeta_bandit}. The local exploration bonus $b_t$ contains the term induced by this learning-rate update, together with an additional term involving $C_t(s,a)^2$, which accounts for transition-estimation error.

\paragraph{Virtual episodes.}
In the bandit setting, $B_t(s,a)$ can scale as
$1/q_t(s)^2$ through \cref{eq:bonus_term_bandit}, so the resulting bonus-induced stability term cannot be controlled by the adaptive learning-rate update alone when $q_t(s)$ is small. Therefore, following \citet{dann2023best}, we insert virtual episodes to control the ratio $\eta_t(s,a)/q_t(s)$. 
If $\max_{s,a}\eta_t(s,a)/q_t(s)>1/(50\sqrt{H^3S^3A})$ at the beginning of episode $t$, we set $Y_t=0$, make episode $t$ virtual, and shift the indices of all subsequent real episodes. The algorithm then uses the zero loss function and only updates the pair $(s_t^\dagger,a_t^\dagger)\in\argmax_{s,a}\eta_t(s,a)/q_t(s)$, multiplying $1/\eta_t(s_t^\dagger,a_t^\dagger)$ by $1+\frac{1}{324H\ln(T)}$.
It still computes $\hat Q_t$, $b_t$, and $B_t$, but keeps the prediction $m_t$ and the confidence set $\calP_t$ unchanged. 
Since the number of virtual episodes is $\calO(HSA\ln^2(T))$, their effect is lower-order, and we still use $T$ to denote the total number of episodes.\footnote{The data-dependent complexity measures in the main statements are defined only over real episodes.}
\section{Main results}
\label{sec:main_result}
\paragraph{Regret upper bounds.} We now state the guarantees of \cref{alg:OFTRL_unknown}. The first theorem is for the full-information setting, and the second is for the bandit setting. In both cases, the algorithm achieves first-order, second-order, and path-length bounds in the adversarial regime, and at the same time achieves a gap-dependent $\polylog(T)$ bound in the stochastic regime.
The transition-dependent complexity $\Qtrans^{\pi_{1:T}}(\ell)$ captures transition uncertainty that affects future cumulative losses and can imply first-order dependence on $L(\pi)$ through self-bounding.
\begin{thm}\label{thm:main_full}
In the full-information setting, for any comparator policy $\pi$, \cref{alg:OFTRL_unknown} guarantees
\begin{align}
    \Reg_T(\pi)
    &\lesssim \sqrt{H^2SA  \ln^2(T) \min\set*{L(\pi),HT-L(\pi),Q_\infty,V_1}}\\
    &\qquad + \sqrt{S^2A\ln^2(T)\E\brk*{\Qtrans^{\pi_{1:T}}(\ell)}} + HS^3A^{\frac32}\ln^2(T).
\end{align}
Under the stochastic regime with adversarial corruption, it simultaneously ensures $\Reg_T(\pi) \lesssim U + \sqrt{U\calC} + HS^3A^{\frac32}\ln^2(T)$,
where $U = \frac{H^2S^2A\ln^2(T)}{\Delta_{\min}}$.
\end{thm}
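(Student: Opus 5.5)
We plan to reduce everything to \cref{lem:dilated_bonus_main}. On the event $\calE=\{P\in\calP_t\ \forall t\}$, which fails with probability at most $4\delta=4/T^2$ and so contributes $\calO(H)$ through the last term of the lemma, we decompose the left-hand side of \cref{eq:key_lemma_ineq_main} as follows. We add and subtract $\hat Q_t=Q^{\underP^\ell_t,\pi_t}(\cdot;\ell_t)$, which gives
\begin{align}
\sum_s q^\pi(s)\sum_t\inpr{\pi_t-\pi,\hat Q_t-B_t}
+\sum_s q^\pi(s)\sum_t\inpr{\pi_t-\pi,Q^{\pi_t}(s,\cdot;\ell_t)-\hat Q_t}.
\end{align}
The first sum is the OFTRL regret with the log-barrier, prediction $Q^{\underP^m_t,\pi_t}(\cdot;m_t)$, and per-pair learning rates. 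The standard OFTRL analysis bounds it, per state, by a penalty $\sum_a \ln(T)/\eta_{T+1}(s,a)$ plus a stability term $\sum_t\sum_a\eta_t(s,a)\zeta_t(s,a)$, together with the stability induced by $B_t$. We will control that last contribution by $\frac1H\sum q^\pi(s)\pi_t B_t$ using $\eta_1=1/(12H^3)$ and the fact that $B_t\lesssim H\cdot b$ remains small relative to $1/\eta_t$ in full information. Since $1/\eta_{t+1}-1/\eta_t=\eta_t\zeta_t/\ln T$, both the penalty and the stability terms are absorbed by $b_t$ in \cref{eq:bonus_term_full}, with the constant $7$ leaving room. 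The comparator-dependent part $-\ln(T)/\eta$ for $\pi$ is handled by placing the telescoped penalty into $\sum_t q^\pi(s)b_t(s)$.

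For the bias term, optimism of $\underP^\ell_t$ on $\calE$ gives $\hat Q_t\le Q^{\pi_t}(\cdot;\ell_t)$. The $-\pi$ part therefore has a favorable sign once we write $\inpr{\pi,\hat Q_t-Q^{\pi_t}}\le0$, and the $\pi_t$ part equals $\sum_s q^\pi(s)(V^{\pi_t}(s;\ell_t)-\hat V_t(s))$. A simulation-lemma argument, together with the Bernstein-type confidence width of $\calP_t$, bounds this by terms of the form $\sum q\sqrt{\Var(V)\iota/n_t}+S\iota/n_t$. Rather than charge these against $q^\pi$, we bound them through the bonus: the dilated structure lets us replace $q^\pi$ by the learner's occupancy. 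After applying \cref{lem:dilated_bonus_main}, it therefore remains to bound $\E\sum_t V^{\overP^B_t,\pi_t}(s_0;b_t)$ together with a transition term $\E\sum_t\sum_{s,a}q^{\pi_t}(s,a)\big(\sqrt{S\Var_{s'}[V^{\pi_t}(s';\ell_t)]\iota/n_t(s,a)}+HS\iota/n_t(s,a)\big)$. By Cauchy--Schwarz and $\sum_t q^{\pi_t}(s,a)/n_t(s,a)\lesssim\ln T$, this becomes $\sqrt{S^2A\ln^2T\,\E[\Qtrans^{\pi_{1:T}}(\ell)]}$ plus lower-order $\mathrm{poly}(H,S,A)\ln^2T$. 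The sign choice $|\phi_t|=\ell_t$ in \cref{def:qtrans_loss} accommodates differences such as $\ell_t-m_t$ if needed.

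For the bonus value, we again use $V^{\overP^B_t,\pi_t}\le V^{\pi_t}+{}$(transition error of lower order). We then bound $\sum_t\sum_s q^{\pi_t}(s)\sum_a(1/\eta_{T+1}-1/\eta_1)\ln T$ using the self-tuning recursion, which gives $1/\eta_{T+1}(s,a)\approx\sqrt{\sum_t\zeta_t(s,a)/\ln T}$. Summing with Cauchy--Schwarz over $SA$ pairs yields $\sqrt{SA\ln^2T\sum_{t,s}q^{\pi_t}(s)\sum_a\pi_t(a|s)(1-\pi_t(a|s))\cdot(\hat Q_t-\hat Q^m_t)^2}$. The inner term splits into $(\hat Q_t-Q^{\pi_t}(\ell_t))^2$, which is again transition error and contributes to the $\Qtrans$ term, and $(Q^{\pi_t}(\ell_t)-Q^{\pi_t}(m_t))^2$ plus the transition error for $m_t$. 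Following \citet{li2026data}, the $m_t$ term is bounded by $H^2$ times each of $Q_\infty$, $V_1$, $L(\pi)+\Reg$ (with $m_t$ close to small losses), and $HT-L(\pi)$ (symmetric). A self-bounding step, solving $\Reg\le\sqrt{X(\Reg+L)}+Y$, then gives the $\Lambda(\pi)$ term. Note that the factor $(1-\pi_t(a|s))$ matters: it is what lets $\sum_a\pi_t(1-\pi_t)\lesssim\sum_{a\ne\pist(s)}\pi_t(a|s)$.

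For the corrupted stochastic regime, we reuse the intermediate bound $\Reg\lesssim\sqrt{SA\ln^2T\,\E\sum_{t,s,a}q^{\pi_t}(s)\pi_t(a|s)(1-\pi_t(a|s))H^2}+\sqrt{S^2A\ln^2T\,\E\Qtrans}+\dots$. Both radicands are at most $H^2\sum_t\sum_s q^{\pi_t}(s)\sum_{a\ne\pist(s)}\pi_t(a|s)$, using that $\Var\le H^2\cdot$(probability of deviating from $\pist$ later), which requires an argument that transition variance under near-$\pist$ policies concentrates. Combining this with the self-bounding lower bound $\Reg\ge\E\sum_{t,s,a}q^{\pist}\ldots\Delta-2\calC$ (via performance difference with gaps $\Delta(s,a)\ge\Delta_{\min}$) gives $U+\sqrt{U\calC}$. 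We expect the main obstacle to be this last step, together with showing that the transition-bias term is bounded by a quantity that vanishes under $\pist$ rather than by a constant per visit. The $\Qtrans$ piece does not vanish pointwise under $\pist$, so we would first try bounding it by $\Delta_{\min}$-weighted suboptimal occupancy plus $\sum_{(s,a)}\ln T/n$-type terms that sum to $\polylog$. This is the reason the gap bound degrades to $S^2A/\Delta_{\min}$ instead of a per-pair sum.
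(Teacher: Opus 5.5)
\textbf{The proposal has a genuine gap in the transition-bias term, and it breaks both regimes.} You discard the $-\pi$ part of $\biasterm(s)$ using $\hat Q_t\le Q^{\pi_t}(\cdot;\ell_t)$. That throws away exactly the cancellation that makes this term controllable.

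What remains is $\sum_s q^\pi(s)\sum_t\prn{V^{\pi_t}(s;\ell_t)-V^{\underP^\ell_t,\pi_t}(s;\ell_t)}$. This is a sum over all layers of downstream estimation errors, and the errors at $(u,v)$ are weighted by $\sum_s q^\pi(s)q^{\pi_t}(u,v\mid s)$, not by $q^{\pi_t}(u,v)$. Suppose $\pi$ routes with probability one through a state that $\pi_t$ rarely reaches. Then the confidence widths of the pairs $\pi_t$ plays afterwards stay large while their weight stays of order one, and the counting bound $\sum_t q^{\pi_t}(s,a)/n_t(s,a)\lesssim\ln T$ gives you nothing there.

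Your remedy is to move this onto learner occupancy ``through the bonus''. That is not available for this algorithm. \cref{lem:dilated_bonus_main} only absorbs $\sum_t\sum_s q^\pi(s)b_t(s)$ for the same $b_t$ that defines $B_t$, and the full-information $b_t$ in \cref{eq:bonus_term_full} has no transition-bonus component.

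The missing idea is \cref{lem:pdl_with_L}. Apply it with $L=Q^{\pi_t}(\cdot;\ell_t)-Q^{\underP^\ell_t,\pi_t}(\cdot;\ell_t)$ to rewrite the entire bias term as $\sum_t\sum_{s,a}\prn{q^{\pi_t}(s,a)-q^\pi(s,a)}z_t(s,a)$, where the one-step error is $z_t(s,a)=\sum_{s'}\prn{P(s'\mid s,a)-\underP^\ell_t(s'\mid s,a)}V^{\underP^\ell_t,\pi_t}(s';\ell_t)$. On $\calE$ this is nonnegative, because $\underP^\ell_t$ minimizes state-action-wise over the rectangular confidence set. Dropping $-q^\pi z_t$ then leaves $\sum_t\prn{V^{\pi_t}(s_0;\ell_t)-V^{\underP^\ell_t,\pi_t}(s_0;\ell_t)}$. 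This is weighted purely by the learner, and \cref{lem:complicated_lemma_var} turns it into $\sqrt{S^2A\iota^2\Qtrans^{\pi_{1:T}}(\ell)}+HS^3A\iota^2$.

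\textbf{Corrupted stochastic regime.} The same representation is what rescues this case, where your plan does not close. As you suspect, $\Qtrans^{\pi_{1:T}}(\ell)$ does not shrink when $\pi_t=\pist$: the next-state variance of $V^{\pist}(\cdot;\phi_t)$ is generically nonzero at every pair $\pist$ visits, so $\Qtrans$ grows linearly in $T$. Hence no bound of $\Qtrans$ by suboptimal occupancy can hold.

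The paper instead keeps the comparator term with $\pi=\pio$. Since $z_t\ge 0$, the bias is at most $\sum_t\sum_{s,a}\brk{q^{\pi_t}(s,a)-q^{\pio}(s,a)}_+z_t(s,a)$. It then bounds $z_t$ pointwise by the Bernstein width $\sqrt{|\calS_{h+1}|H^2\iota/n_t}+|\calS_{h+1}|H\iota/n_t$ and applies Cauchy--Schwarz layer by layer, using $[\cdot]_+\le q^{\pi_t}$ inside the $1/n_t$ factor. This gives $\sum_h\sqrt{H^2|\calS_h||\calS_{h+1}|A\iota^2\,\E\sum_t\sum_{\calS_h\times\calA}[q^{\pi_t}-q^{\pio}]_+}$, which \cref{lem:self_bound_layer_positive} self-bounds. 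The $H^2S^2A/\Delta_{\min}$ rate comes from $(\sum_h\sqrt{|\calS_h||\calS_{h+1}|})^2\le S^2$, not from a $\Qtrans$ term.

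\textbf{Remaining steps.} The rest of your outline broadly matches the paper: OFTRL with a centering shift, $B_t\le 2$, the $q^{\overP^B_t,\pi_t}\to q^{\pi_t}$ conversion, the prediction-error bound via \citet{li2026data}, and the self-bounding step for $L(\pi)$. One slip: with $t$-dependent weights $q^{\pi_t}(s)$ you cannot telescope the bonus to $1/\eta_{T+1}$. You need $\sum_t q^{\pi_t}(s)\eta_t\zeta_t$ together with $\eta_t\lesssim\sqrt{\ln T/\sum_{\tau\le t}\zeta_\tau}$ and Cauchy--Schwarz, as in \cref{lem:Vbt_full}.
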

\vspace{-2pt}
The above bound recovers the $\tilO(\sqrt{H^2S^2AT})$ worst-case rate of occupancy-measure-based algorithms~\citep{rosenberg2019online,jin2021best}. To our knowledge, this is the first policy-optimization result to remove the extra $H$ factor \citep{luo2021policy,dann2023best}.
In the stochastic regime, the leading gap-dependent term also has sharper dependence on $H$, $S$, and $A$ than that of \citet{jin2021best}, up to logarithmic factors.

\begin{thm}\label{thm:main_bandit}
In the bandit setting, for any comparator policy $\pi$, \cref{alg:OFTRL_unknown} guarantees
\begin{align}
    \Reg_T(\pi)
    &\lesssim \sqrt{H^3S^2A\ln^2(T) \min\set*{L(\pi), HT - L(\pi), Q_{\infty}, V_1}}\\
    &\qquad+\sqrt{S^2A\ln^2(T)\E\brk*{\Qtrans^{\pi_{1:T}}(\ell)}} + H^{\frac32}S^{\frac72}A^{\frac32}\ln^2(T).
\end{align}
Under the stochastic regime with adversarial corruption, it simultaneously ensures $\Reg_T(\pi) \lesssim U + \sqrt{U\calC} + H^{\frac32}S^{\frac72}A^{\frac32}\ln^2(T)$,
where $U = \frac{H^2S^4A\ln^2(T)}{\Delta_{\min}}$.    
\end{thm}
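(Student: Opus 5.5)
The proof follows the same template as the known-transition analysis of \citet{li2026data}, routed through \cref{lem:dilated_bonus_main}. On the event $P\in\calP_t$ for all $t$ (which fails with probability $\calO(1/T^2)$, costing $\calO(H)$), we decompose the left-hand side of \cref{eq:key_lemma_ineq_main} into three parts:
\begin{align}
\sum_s q^\pi(s)\sum_t\inpr{\pi_t(\cdot\mid s)-\pi(\cdot\mid s),\, Q^{\pi_t}(s,\cdot;\ell_t)-B_t(s,\cdot)}
= \regterm + \biasterm + \errorterm .
\end{align}
Here $\regterm$ is the OFTRL regret with losses $\hat Q_t-B_t$ and prediction $Q^{\underP^m_t,\pi_t}(s,\cdot;m_t)$. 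The term $\biasterm$ collects $\inpr{\pi_t,\E_t[Q^{\pi_t}-\hat Q_t]}$. The term $\errorterm$ collects $\inpr{\pi,\E_t[\hat Q_t]-Q^{\pi_t}}$. Optimism is the key point for the last term. In the bandit case, $C_t$ is built so that $\E_t[c_t(s,a)]\ge\rho_t(s)\abs{Q^{\pi_t}(s,a;\ell_t-m_t)}$ plus the margin for $Q^{\underP^m_t}$ versus $Q^{P}$; in the full-information case, $\underP^\ell_t$ is the minimizer over $\calP_t$. Either way $\E_t[\hat Q_t]\le Q^{\pi_t}$, so $\errorterm\le 0$.

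For $\regterm$, we use the standard log-barrier OFTRL bound with time-varying rates. The penalty is $\sum_a(1/\eta_{T+1}(s,a))\ln T$. The stability term is $\sum_t\sum_a\eta_t(s,a)\zeta_t(s,a)$ in the full-information case, and $\eta_t\zeta_t/q_t(s)^2$ plus a $\pi_t^2C_t^2$ and bonus contribution in the bandit case. For the bandit case, the stability condition $\eta_t\abs{\hat Q_t-B_t}\pi_t\lesssim 1$ is ensured by the virtual-episode rule, whose cost is bounded by counting virtual episodes: $\calO(HSA\ln^2 T)$ of them, each costing at most $H$. By the learning-rate design, stability matches penalty, so both parts are absorbed into $b_t$ through \cref{eq:bonus_term_full} or \cref{eq:bonus_term_bandit}. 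The dilated-bonus term $\frac1H\sum q^\pi\pi_tB_t$ is handled exactly as in \citet{luo2021policy}. \Cref{lem:dilated_bonus_main} then gives
\begin{align}
\Reg_T(\pi)\lesssim \E\brk*{\biasterm}+\E\brk*{\textstyle\sum_t V^{\overP^B_t,\pi_t}(s_0;b_t)}+\text{lower order}.
\end{align}

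\textbf{Bounding the bonus value.} We write $1/\eta_{T+1}(s,a)\approx\sqrt{\sum_t\zeta_t(s,a)/\ln T}$ (divided by $q_t^2$ in the bandit case), so the learning-rate part of $b_t$ contributes $\sqrt{\ln T\sum_t\zeta_t}$ per pair. Summing with occupancy weights and applying Cauchy--Schwarz over $(s,a)$ gives $\sqrt{SA\ln^2T\cdot\E[\sum_t\sum_s q^{\pi_t}(s)\sum_a\pi_t(a\mid s)(Q\text{-error})^2]}$. The squared error splits in two:
\begin{itemize}
\item The loss-prediction part $(Q^{\pi_t}(\ell_t)-Q^{\pi_t}(m_t))^2$. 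Using the recursion of $m_t$, this is bounded by $H^2\min\{L(\pi),HT-L(\pi),Q_\infty,V_1\}$ up to self-bounding terms, as in \citet{li2026data}. In the bandit case it carries an extra $HS$ factor, coming from importance weights $1/q_t$ and $\underq\le q$.
\item The transition part $(Q^{\underP_t,\pi_t}-Q^{P,\pi_t})^2$. We expand via the value-difference lemma into $\sum q^{\pi_t}(s,a)(\tilP-P)(\cdot\mid s,a)V^{\pi_t}$. Bernstein-type confidence widths give $\sqrt{S\Var_{P}[V^{\pi_t}]\iota/n_t}$ plus $S\iota/n_t$ terms. A signed $\phi_t$ with $\abs{\phi_t}=\ell_t$ handles the difference of $\ell_t$- and $m_t$-values, which yields $\Qtrans^{\pi_{1:T}}(\ell)$ in the statement. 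Summing $\sum_t q^{\pi_t}/n_t\lesssim SA\ln T$ gives the $\sqrt{S^2A\ln^2T\,\E[\Qtrans]}$ term and lower-order $\mathrm{poly}(H,S,A)\ln^2T$ terms.
\end{itemize}
The $C_t^2$ part of $b_t$ and $\E[\biasterm]\lesssim\E[\sum_t V^{\pi_t}(s_0;c_t)]$ are bounded the same way. Here $\rho_t\cdot D$ contributes via Cauchy--Schwarz with $\sum_t q^{\pi_t}\rho_t^2\lesssim HS^2A\ln^2T$ times prediction error, and $\rho_t^2H$ sums to a polylog quantity. This step is the main obstacle, since every cross term must be data-dependent or polylog rather than $\sqrt T$.

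\textbf{Corrupted stochastic regime.} We reuse the same per-state bound, but keep $\pi_t(a\mid s)(1-\pi_t(a\mid s))$ in $\zeta_t$. This bounds the regret by
\begin{align}
\sqrt{\textstyle\ln^2T\cdot U'\sum_t\sum_s q^{\pi_t}(s)\sum_{a\ne\pist(s)}\pi_t(a\mid s)}
\end{align}
plus lower-order terms, which is the standard self-bounding constraint. We combine it with the lower bound $\Reg_T(\pist)\ge\E[\sum_t\sum_{s,a\ne\pist(s)}q^{\pi_t}(s,a)\Delta(s,a)]-2\calC$ and optimize $\Reg=(1+\lambda)\Reg-\lambda\Reg$. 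This gives $U+\sqrt{U\calC}$ with $U=H^2S^2A\ln^2T/\Delta_{\min}$ in the full-information case and $U=H^2S^4A\ln^2T/\Delta_{\min}$ in the bandit case.
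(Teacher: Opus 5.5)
There is a genuine gap where you split the bias. Writing $\inpr{\pi_t-\pi,\,Q^{\pi_t}-\hat Q_t}$ as a learner part plus a comparator part, and dropping the comparator part by optimism, is a valid inequality. But what remains is weighted by the \emph{comparator's} occupancy:
\[
\E\brk*{\sum_{t}\sum_s q^{\pi}(s)\sum_a\pi_t(a\mid s)\,\E_t\brk*{Q^{\pi_t}(s,a;\ell_t)-\hat Q_t(s,a)}}.
\]
In real episodes the inner term equals $Q^{\pi_t}(s,a;m_t)-Q^{\underP^m_t,\pi_t}(s,a;m_t)+\prn{1-q^{\pi_t}(s)/q_t(s)}Q^{\pi_t}(s,a;\ell_t-m_t)+\E_t[C_t(s,a)]$.

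Luo et al.\ and Dann et al.\ can afford this split only because their $b_t$ contains an upper bound on this bias, so that \cref{lem:dilated_bonus_main} trades $q^\pi$ for $q^{\overP^B_t,\pi_t}$. Here $b_t$ in \cref{eq:bonus_term_bandit} contains only the learning-rate increment and the $\eta_t\pi_t^2C_t^2$ term. Nothing absorbs the bias, so your claim $\E[\text{bias}]\lesssim\E[\sum_t V^{\pi_t}(s_0;c_t)]$ is unjustified, and it does not hold in general. Indeed $\E_t[C_t(s,a)]\ge\rho_t(s)^2H$, while \cref{lem:bound_rho2} controls only the $q^{\pi_t}$-weighted sum. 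The sum $\sum_t\sum_s q^\pi(s)\rho_t(s)^2$ can be of order $T$ on states that $\pi$ visits but $\pi_t$ avoids.

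The missing idea is to keep $\pi_t-\pi$ together and apply \cref{lem:pdl_with_L} with $L$ equal to the conditional bias. This rewrites the $q^\pi$-weighted sum as $\sum_{s,a}(q^{\pi_t}(s,a)-q^\pi(s,a))z_t(s,a)$. The extra $c_t(s',a')$ inside the recursion \cref{eq:def_C} is exactly what makes $z_t\ge 0$: it dominates the next-state term $-\prn{1-q^{\pi_t}(s')/q_t(s')}Q^{\pi_t}(s',a';\ell_t-m_t)$. The sum is then at most $\sum_{s,a}[q^{\pi_t}(s,a)-q^\pi(s,a)]_+z_t(s,a)$, which is learner-weighted and can be handled with \cref{lem:complicated_lemma,lem:bound_rho2} and the confidence widths.

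The same omission misplaces the $\Qtrans$ term. In the bandit algorithm, $\zeta_t$ in \cref{eq:zeta_bandit} is built from the realized $L_{t,h}-M_{t,h}$ and has no transition component. The bonus therefore yields only $\sqrt{H^2SA\ln^2(T)\,\E[\sum_t V^{\pi_t}(s_0;(\ell_t-m_t)^2)]}$ plus lower-order terms; there is no ``transition part of the squared error'' to bound. In the paper, $\Qtrans$ comes from the prediction's transition error $Q^{\pi_t}(s,a;m_t)-Q^{\underP^m_t,\pi_t}(s,a;m_t)$, again paired with $\pi_t-\pi$. The chain is:
\begin{itemize}
\item \cref{lem:pdl_with_L} and the minimality of $\underP^m_t$ give a nonnegative $z_t$ and the bound $\sum_t\prn{V^{\pi_t}(s_0;m_t)-V^{\underP^m_t,\pi_t}(s_0;m_t)}$;
\item \cref{lem:complicated_lemma_var} turns this into $\sqrt{S^2A\iota^2\Qtrans^{\pi_{1:T}}(m)}$;
\item \cref{lem:qtrans_m_to_ell} reduces it to $\Qtrans^{\pi_{1:T}}(\ell)$ plus prediction error.
\end{itemize}
In your split this term sits partly in the discarded comparator part and partly in the uncontrolled $q^\pi$-weighted part.

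Your split also breaks the corrupted stochastic regime. Once the comparator part is discarded, the remaining transition bias does not vanish when $\pi_t=\pist$. So it cannot be self-bounded by $\sum_{a\neq\pist(s)}q^{\pi_t}(s,a)$. The polylogarithmic bound needs the $[q^{\pi_t}(s,a)-q^{\pio}(s,a)]_+$ weights that only the PDL step produces (\cref{lem:self_bound_layer_positive,lem:self_bound_occ_positive}). In particular, the layerwise term $\sum_h\sqrt{H^2S^2|\calS_h||\calS_{h+1}|A\iota^2\,\E[\sum_t\sum_{(s,a)\in\calS_h\times\calA}[q^{\pi_t}(s,a)-q^{\pio}(s,a)]_+]}$ in \cref{lem:biasterm_bandit} is the source of the $S^4$ in $U$. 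Your sketch asserts that $S^4$ without deriving it.
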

\vspace{-2pt}
In the worst case, the bound for the adversarial regime recovers the worst-case rate of \citet{dann2023best}. 
For the first-order bound, the $\Qtrans^{\pi_{1:T}}(\ell)$ term is absorbed by the first-order term through the self-bounding argument. Thus, the final bound does not contain an explicit transition-dependent term.
In the stochastic regime, the gap-dependent term incurs an additional factor $S^2/H^2$ compared with \citet{dann2023best} due to the transition bonus, and whether this dependence can be improved remains open.
However, the main contribution of \cref{thm:main_full,thm:main_bandit} is their data-dependent regret bounds under unknown transitions.
Together, these theorems give the first policy-optimization guarantees under unknown transitions with first-order, second-order, and path-length adaptivity.

\paragraph{Lower bounds.}
We show that the data-dependent terms in our adversarial guarantees cannot be avoided in general.
First, by running a standard hard instance for adversarial MDPs with unknown transitions \citep{jin2018q,domingues2021episodic} for the first $T'\le T$ episodes and setting all later losses to zero, the complexities satisfy $L(\pi)=\calO(HT')$, $Q_\infty=\calO(HT')$, $V_1=\calO(SAT')$, and $\E[\Qtrans^{\pi_{1:T}}(\ell)]=\calO(H^2T')$, while $\max_{\pi}\Reg_T(\pi)\ge\Omega(\sqrt{H^2SAT'})$. Thus, one can obtain $\Omega(\sqrt{HSA L(\pi)})$, $\Omega(\sqrt{HSAQ_\infty})$, $\Omega(\sqrt{H^2 V_1})$ and $\Omega(\sqrt{SA \E\brk*{\Qtrans^{\pi_{1:T}}(\ell)}})$ for the adversarial regime. Closing the remaining gaps is not specific to our data-dependent setting, and remains open even for worst-case policy optimization \citep{luo2021policy}.

The following proposition further shows that a transition-dependent term is necessary under unknown transitions. 
This follows from the lower-bound constructions of \citet[Theorem 3]{jin2018q} and \citet[Theorem 9]{domingues2021episodic} use time-invariant loss sequences, so the regret arises solely from transition uncertainty.
\begin{prop}\label{prop:qtrans}
There exists an episodic MDP with unknown transitions and a time-invariant loss sequence such that $Q_\infty=V_1=0$, while any algorithm suffers $\max_{\pi}\Reg_T(\pi)=\Omega(\sqrt{H^2SAT})$. This lower bound holds even under full-information feedback.
\end{prop}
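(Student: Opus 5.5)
The plan is to reduce \cref{prop:qtrans} to an existing minimax lower bound whose hard instance already uses a fixed loss function, and then to check the two complexity measures directly. Concretely, I would take the construction of \citet[Theorem 9]{domingues2021episodic} (equivalently \citet[Theorem 3]{jin2018q}), adapted to the layered, loss-minimization convention of \cref{sec:pre}. In that construction, an initial tree of $\Theta(\log_A S)$ layers routes the learner to one of $\Theta(S)$ leaf states. At each leaf, every action moves to a ``good'' absorbing chain with probability $1/2$, except for one unknown special leaf--action pair, for which this probability is $1/2+\epsilon$. A ``bad'' chain incurs loss $1$ at every subsequent step, and the good chain incurs loss $0$. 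To respect the layered assumption, I would duplicate the chains across layers; this changes $S$ only by a constant factor once $S\gtrsim H$, and we may assume this regime. Because the loss function $\ell(s,a)$ is fixed, we can set $\ell_t=\ell$ for every $t$.

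The complexity measures follow immediately. Time invariance gives $V_1=\E[\sum_t\|\ell_{t+1}-\ell_t\|_1]=0$. Choosing the baseline $\ell^\star=\ell$ in the definition of $Q_\infty$ gives $Q_\infty=0$. Because the losses are deterministic and identical in every episode, feedback reveals nothing beyond $\ell$ itself. Hence full-information feedback is no more informative than knowing $\ell$ in advance, and the lower bound applies to that setting as well.

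The regret argument is the standard change-of-measure one. Under instance $(s^\star,a^\star)$, each visit to a leaf with a non-special action costs $\Theta(\epsilon H)$ more in expectation than the optimal policy, which goes to the special pair and then collects about $H/2$ steps of zero loss. Let $N_{s,a}$ be the number of episodes in which the learner plays $a$ at leaf $s$. By the KL-divergence bound between the instance and a reference instance with no special pair, $\mathrm{KL}\lesssim \epsilon^2\,\E_0[N_{s,a}]$. Averaging over the $\Theta(SA)$ instances and applying Pinsker's inequality shows that the learner misses the special pair in a constant fraction of episodes unless $T\gtrsim SA/\epsilon^2$. Taking $\epsilon\asymp\sqrt{SA/T}$ yields $\max_\pi\Reg_T(\pi)\gtrsim T\epsilon H=\Omega(\sqrt{H^2SAT})$ for some instance.

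The main obstacle is bookkeeping rather than new ideas. First, I must verify that the cited bound's $H$-dependence survives the conversion to our layered, non-stage-dependent-transition model with losses in $[0,1]$. Second, I must confirm that the regret definition here, which compares expected values against a fixed comparator, matches theirs. For the first check, I would rescale the horizon, using the first $H/3$ layers for routing and the last $H/3$ layers for the chains, so that the per-episode gap is $\Theta(\epsilon H)$. This leaves all constants universal.
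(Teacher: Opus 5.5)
Your proposal is correct and follows the paper's route. The paper likewise just invokes the hard instances of \citet[Theorem 3]{jin2018q} and \citet[Theorem 9]{domingues2021episodic}, notes that their loss function is fixed over time and identical across the instance family (so full-information feedback reveals nothing about the unknown transitions), and reads off $Q_\infty=V_1=0$. Your added bookkeeping is consistent with this citation-level argument: the layered conversion with $S\ge H$, the regime conditions $T\gtrsim SA$ and $\log_A S\lesssim H$ inherited from those constructions, and the change-of-measure sketch.
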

\vspace{-3pt}

\section{Proof overview}
\label{sec:proof}
We sketch the proof for the bandit setting, which contains the main
ideas also used in the full-information setting. The complete proofs are deferred to \cref{app:full_proof} for the full-information setting and \cref{app:bandit_proofs} for the bandit setting.
We use \cref{lem:dilated_bonus_main} as the main reduction for policy optimization.
Recall that once the inequality in \cref{eq:key_lemma_ineq_main} is verified, the lemma reduces the regret analysis to bounding $\sum_{t=1}^T V^{\overP^B_t,\pi_t}(s_0;b_t)$. We prove
\cref{eq:key_lemma_ineq_main} by decomposing its left-hand side into
$\E\brk[\big]{\sum_s q^\pi(s)(\regterm(s)+\biasterm(s)+\errorterm(s))}$, where
\begin{align}
\\[-10pt]
    \regterm(s) &\!=\! \sum_{t,a}  \prn{\pi_t(a\mid s)\!-\!\pi(a\mid s)} \prn*{\!\hat{Q}_t(s,a) \!-\!  B_t(s,a)\!},\\[-2pt]
    \biasterm(s) &\!=\! \sum_{t,a} \prn{\pi_t(a\mid s)\!-\!\pi(a\mid s)} \prn*{\!Q^{\pi_t}(s,a;\ell_t) \!-\!  \hat{Q}_t(s,a) \!+\! Q^{\underP^m_t, \pi_t}(s, a; m_t) \!-\! Q^{\pi_t}(s, a; m_t)\!},\\[-2pt]
    \errorterm(s) &\!=\! \sum_{t,a} \prn{\pi_t(a\mid s)\!-\!\pi(a\mid s)} \prn*{\!Q^{\pi_t}(s, a; m_t) \!-\! Q^{\underP^m_t, \pi_t}(s, a; m_t)\!}.
\end{align}
Here, $\biasterm(s)$ is the estimator bias, while $\errorterm(s)$ comes from evaluating the prediction using $\underP_t^m$ instead of the true transition kernel.
The standard analysis in \cref{lem:regterm_policy} bounds $\regterm(s)$ as
\begin{align}
\E[\regterm(s)] &\leq \calO\prn*{HS^2A\ln(T)} + \mathbb{E}\brk*{\sum_{t=1}^Tb_t(s)} + \mathbb{E}\brk*{\frac{1}{H}\sum_{t=1}^T\sum_a \pi_t(a \mid s) B_t(s,a)}.
\end{align}
The remaining terms are controlled in \cref{lem:biasterm_bandit,lem:errorterm_bandit} by
\begin{align}
\\[-10pt]
    \E\brk*{\!\sum_s q^\pi(s)\biasterm(s)\!} &\lesssim \tilO\prn*{\!\sqrt{\!H^2S^2A \iota^2 \E\brk*{\sum_{t:Y_t=1} \sum_{s,a}q^{\pi_t}(s,a)\max_{\tilP\in\calP_t} Q^{\tilP, \pi_t}(s,a;\prn{\ell_t - m_t}^2)}\!}},\\[-2pt]
    \E\brk*{\!\sum_s q^\pi(s)\errorterm(s)\!} &\lesssim \tilO\prn*{\sqrt{S^2A\iota^2\E\brk*{\Qtrans^{\pi_{1:T}}(m)}}}.
\end{align}
The key point is that $\biasterm(s)$ is reduced to the prediction error $\ell_t-m_t$. This follows from the design of $c_t(s,a)$, where the part independent of the prediction error scales with $\rho_t(s)^2$ rather than $\rho_t(s)$, and $\sum_{t:Y_t=1}\sum_s q^{\pi_t}(s)\rho_t(s)^2$ is only polylogarithmic in $T$ (see \cref{lem:bound_rho2}).
The error term is captured by $\Qtrans^{\pi_{1:T}}(m)$, which can be reduced to $\Qtrans^{\pi_{1:T}}(\ell)$ through
$\Qtrans^{\pi_{1:T}}(m) \leq 2\Qtrans^{\pi_{1:T}}(\ell) + 2H\sum_{t:Y_t=1} V^{\pi_t}(s_0;(\ell_t-m_t)^2)$
in \cref{lem:qtrans_m_to_ell}.
Thus, all remaining terms reduce to the prediction error $\ell_t-m_t$ and $\Qtrans^{\pi_{1:T}}(\ell)$, yielding the desired first-order, second-order, and path-length bounds.

For the stochastic regime with adversarial corruption, we use the same decomposition as above.
We show in \cref{lem:biasterm_bandit,lem:errorterm_bandit} that the individual terms can be bounded by factors of the form $\sum_s\sum_{a\neq\pi^\star(s)}\sqrt{\E\brk[\big]{\sum_{t:Y_t=1} q^{\pi_t}(s,a)}}$ or $\sqrt{\E\brk[\big]{\sum_{t:Y_t=1}\sum_{s,a}[q^{\pi_t}(s,a)-q^{\pio}(s,a)]_+}}$.
These quantities are then controlled by a self-bounding argument following \citet{dann2023best}, yielding the desired gap-dependent bound under adversarial corruption.
\section{Conclusion}
\label{sec:conclusion}
In this work, we resolved the open problem raised by \citet{dann2023best} on data-dependent regret bounds for policy optimization in MDPs with unknown transitions. Our algorithms achieve refined best-of-both-worlds guarantees, adapting to $L(\pi)$, $Q_\infty$, and $V_1$ in the adversarial regime while attaining a gap-dependent $\polylog(T)$ bound in the stochastic regime with adversarial corruption.
A natural future direction is to obtain variance-aware gap-dependent bounds as in \citet{chen2025sharp}, and to investigate whether the refined data-dependent guarantees can be achieved under aggregate feedback.

\section*{Acknowledgements}
TT is supported by JSPS KAKENHI Grant Number JP26K21297 and
KY is partially supported by JSPS KAKENHI Grant Number JP24H00703.

\bibliographystyle{plainnat}
\bibliography{reference}


\crefalias{section}{appendix}
\crefalias{subsection}{appendix}
\crefalias{subsubsection}{appendix}

\newpage
\appendix
\tableofcontents
\section{Summary of notation}
For the reader's convenience, \cref{tab:notation-summary-main} collects the main notation used throughout the paper.

\begin{table}[H]
\centering
\caption{Summary of notation.}
\label{tab:notation-summary-main}
\resizebox{\textwidth}{!}{
\begin{tabular*}{\textwidth}{ll}
\toprule
\textbf{Symbol} & \textbf{Meaning} \\
\midrule
\textbf{Tabular MDPs} & \\
$\mathcal{M}=(\mathcal{S},\mathcal{A},P,H,s_0)$ & Episodic tabular MDP \\
$\mathcal{S}, S$ & State space and its size $S=|\mathcal{S}|$ \\
$\mathcal{A}, A$ & Action space and its size $A=|\mathcal{A}|$ \\
$P$ & Transition kernel \\
$H$ & Horizon length \\
$T$ & Number of episodes \\
$h(s)$ & Layer index of state $s$ \\
$s_{t,h},\,a_{t,h}$ & State / action at step $h$ in episode $t$ \\
$\ell_t(s,a)$ & Loss assigned to $(s,a)$ in episode $t$ \\
$\pi_t$ & Policy in episode $t$ \\
$\Reg_T(\pi)$ & Regret over $T$ episodes with comparator $\pi$ \\
$\I_t(s,a)$ & Visitation indicator of $(s,a)$ in episode $t$ \\
$n_t(s,a)$ & Number of visits to $(s,a)$ up to episode $t-1$ \\
$V^{\pi}(s;\ell)$ & Value function under policy $\pi$ from state $s$ with loss $\ell$ \\
$Q^{\pi}(s,a;\ell)$ & $Q$-function under policy $\pi$ from $(s,a)$ with loss $\ell$ \\
$q^{\pi}(s),\ q^{\pi}(s,a)$ & Occupancy measure under policy $\pi$ \\
$q^{\pi}(s',a'\mid s,a)$ & Conditional occupancy from $(s,a)$ to $(s',a')$ under $\pi$ \\
$\pio$ & Optimal policy for the regret \\
$\ell'_t(s,a)$ & Uncorrupted i.i.d.~loss  \\
$\calC$ & Corruption budget \\
$\mu(s,a)$ & Mean of $\ell'_t(s,a)$ \\
$\pist$ & Optimal deterministic policy under $\mu$\\
$\Delta(s,a), \Delta_{\min}$ & Suboptimality gap at $(s,a)$, and minimum positive gap \\
\midrule
\textbf{Data-dependent complexity measures} & \\
$L(\pi)$ & First-order complexity \\
$Q_\infty$ & Second-order complexity \\
$V_1$ & Path-length complexity \\
$\Qtrans^{\pi_{1:T}}(\ell)$ & Transition-dependent complexity in \cref{def:qtrans_loss}\\
\midrule
\textbf{Notation for \cref{alg:OFTRL_unknown}} & \\
$\psi_t$ & Regularizer in episode $t$\\
$\eta_t(s,a)$ & Learning rate for $(s,a)$ in episode $t$  \\
$m_t(s,a)$ & Loss prediction for $(s,a)$ \\
$\zeta_t(s,a)$ & Data-dependent term for updating $\eta_t(s,a)$ \\
$\hat Q_t(s,a)$ & $Q$-function estimator \\
$b_t(s), B_t(s,a)$ & Exploaration bonus and its recursive $Q$-function version \\
$c_t(s,a), C_t(s,a)$ & Transition bonus and its recursive $Q$-function version \\
$Y_t\in\{0,1\}$ & Episode indicator ($Y_t=1$ real, $Y_t=0$ virtual)  \\
$\mathcal{T}_r,\ \mathcal{T}_v$ & Sets of real and virtual episodes \\
$\overq^{\pi}_t(s), \underq^{\pi}_t(s)$ & Upper and lower occupancy measures \\
$\gamma_t, q_t(s)\coloneqq\overq^{\pi_t}_t(s)+\gamma_t$ & Exploration rate and smoothed state occupancy  \\
$\rho_t(s)\coloneqq (q_t(s) - \underq_t^{\pi_t}(s))/q_t(s)$ & relative occupancy width  \\
$L_{t,h(s)}, M_{t,h(s)}$ &  Realized / predicted suffix loss from layer $h(s)$  \\
$D_{t,h(s)}$ & Suffix prediction error from layer $h(s)$\\
$\calP_t$ & Confidence set at episode $t$ \\
$\underP^\ell_t$ & Transition in $\calP_t$ minimizing $Q(s,a;\ell_t)$ for all $(s,a)$ \\
$\underP^m_t$ & Transition in $\calP_t$ minimizing $Q(s,a;m_t)$ for all $(s,a)$ \\
$\overP^B_t$ & Transition in $\calP_t$ maximizing $B_t(s,a)$ for all $(s,a)$ \\
$\overP^C_t$ & Transition in $\calP_t$ maximizing $C_t(s,a)$ for all $(s,a)$ \\
\bottomrule
\end{tabular*}
}
\end{table}

We formalize the conditional occupancy measure $q^{\pi_t}(s',a'\mid s,a)$ as follows:
\begin{align}
\begin{cases}
0 & \text{if } h(s') < h(s),\\
0 & \text{if } h(s') = h(s) \text{ and } (s',a') \neq (s,a),\\
1 & \text{if } (s',a') = (s,a),\\
\Pr\set*{(s_{t,h(s')},a_{t,h(s')})=(s',a') \mid (s_{t,h(s)},a_{t,h(s)})=(s,a)},
& \text{if } h(s') > h(s).
\end{cases}
\end{align}

For $\underP^\ell_t$, $\underP^m_t$, $\overP^B_t$, and $\overP^C_t$, we choose kernels in $\calP_t$ that attain the required state-action-wise minimum or maximum for all $(s,a)$ simultaneously.
This is possible since $\calP_t$ is rectangular, and these kernels are efficiently computable by backward dynamic programming with a state-action-wise greedy procedure over the confidence set, as in \citet[Algorithm~3]{jin2020learning} and \citet[Algorithms~4--6]{luo2021policy}.
\section{Additional related work and discussion}
\subsection{Additional related work}
\label{app:add_relate}
\paragraph{Tabular MDPs.}
Adversarial MDPs have been studied since the works of \citet{even2009online,yu2009Markov}, and the episodic formulation was later introduced by \citet{zimin2013online}.
Algorithms for online tabular MDPs can be broadly divided into two approaches: occupancy-measure based \emph{global optimization} and \emph{policy optimization}. 
Global optimization solves an optimization problem over the set of occupancy measures, and has led to strong regret guarantees in several settings~\citep{zimin2013online,jin2020learning}.
However, global optimization can be computationally demanding because it optimizes over the occupancy measure space of the MDP. Policy optimization takes a more local approach by optimizing an action distribution at each state, which is often practical and computationally efficient~\citep{shani2020optimistic,luo2021policy}.

Under known transitions and bandit feedback, global optimization achieves the minimax-optimal regret rate $\tilO(\sqrt{HSAT})$ \citep{zimin2013online}.
In contrast, the best known bound for policy optimization is $\tilO(\sqrt{H^3SAT})$, achieved via dilated exploration \citep{luo2021policy}.
Compared with the global optimization rate, this bound incurs an extra factor of $H$, often attributed to $Q$-function estimation.

Unknown transitions introduce an additional difficulty, since the learner must also estimate the transition kernel.
Among global optimization, the best known regret bounds were obtained by \citet{rosenberg2019online} for full-information feedback and \citet{jin2020learning} for bandit feedback, both of order $\tilO(\sqrt{H^2S^2AT})$.
For policy optimization under bandit feedback, the best known guarantee was established by \citet{luo2021policy}, with regret $\tilO(\sqrt{H^4S^2AT})$.
On the other hand, minimax lower bounds of order $\Omega(\sqrt{H^2SAT})$ were established by \citet{jin2018q,domingues2021episodic}, leaving gaps in the dependence on the horizon and the number of states.
A line of work has sought to close these gaps, and \citet{tiapkin2025narrowing} proposed a policy-optimization algorithm under full-information feedback with regret $\tilO(\sqrt{H^6SAT})$.
More recent work has studied policy optimization under more challenging feedback models, including aggregate feedback \citep{lancewicki2025near,ito2025adapting}.

In the stochastic setting, near-optimal regret guarantees have been established by both model-based and value-based methods \citep{jaksch2010near,azar2017minimax,jin2018q,zanette2019tighter}. Beyond such gap-independent guarantees, a line of work has studied gap-dependent $\polylog(T)$ regret bounds for episodic tabular MDPs \citep{simchowitz2019non,dann2021beyond,xu2021fine}.

\paragraph{Best-of-both-worlds algorithms.}
The best-of-both-worlds guarantee aims to obtain near-optimal regret in both adversarial and stochastic regimes using a single algorithm.
This idea was first studied for multi-armed bandits by \citet{bubeck2012best}, and was later developed in several directions~\citep{seldin2014one,auer2016algorithm,seldin2017improved}.
A common approach is to use follow-the-regularized-leader with carefully chosen regularizers, which allows the algorithm to remain robust in the adversarial regime while improving automatically in the stochastic regime \citep{wei2018more,zimmert2021tsallis,ito2021parameter}.
The stochastic-regime analysis is often based on a self-bounding technique \citep{zimmert2021tsallis,masoudian2021improved}, and the same idea is useful for handling stochastic losses with adversarial corruption.

Best-of-both-worlds guarantees have also been studied in tabular MDPs.
For global optimization, \citet{jin2020simultaneously,jin2021best} first developed best-of-both-worlds algorithms, with subsequent extensions to adversarial transitions and aggregate feedback \citep{jin2023no,ito2025adapting}.
In policy optimization, \citet{dann2023best} established best-of-both-worlds guarantees under bandit feedback for several regularizers, including Tsallis entropy, Shannon entropy, and log-barrier regularization, while \citet{li2026data} later obtained refined data-dependent guarantees using the OFTRL framework under known transitions.

\paragraph{Data-dependent bounds.}
Data-dependent regret bounds adapt to the realized loss sequence instead of depending only on worst-case parameters.
They have been widely studied in online learning, including learning with expert advice \citep{littlestone1994weighted}, multi-armed bandits \citep{auer2002nonstochastic}, and online convex optimization \citep{zinkevich2003online}.
Typical examples in adversarial regimes include first-order bounds based on the comparator's cumulative loss~\citep{cesa1996worst,allenberg2006hannan,neu2015first}, second-order bounds based on loss fluctuations~\citep{hazan2011better,wei2018more,ito2021parameter}, and path-length bounds based on temporal variation~\citep{wei2018more,bubeck2019improved,ito2021parameter}.

In tabular MDPs, it is also important to obtain regret bounds that adapt to data-dependent complexity measures.
The first such result was obtained by \citet{lee2020bias}, who derived a first-order bound for MDPs with unknown transitions via global optimization.
For policy optimization, \citet{dann2023best} established best-of-both-worlds guarantees under unknown transitions and first-order data-dependent guarantees under known transitions, leaving open whether data-dependent guarantees are possible under unknown transitions.
More recently, \citet{li2026data} showed that, under known transitions, OFTRL-based policy optimization can adapt to multiple complexity measures, including first-order, second-order, and path-length measures.

A related line of work has studied gap-dependent bounds in stochastic regimes. For multi-armed bandits, \citet{audibert2007tuning} introduced variance-aware gap-dependent bounds, and related ideas have been incorporated into best-of-both-worlds analyses (\textit{e.g.,}~\citealt{ito2022adversarially}).
For tabular MDPs, recent works have also studied variance-aware gap-dependent $\polylog(T)$ regret bounds~\citep{simchowitz2019non,dann2021beyond,chen2025sharp,li2026data}.

These studies left open the case of data-dependent policy optimization under unknown transitions, especially when combined with best-of-both-worlds guarantees.
Our work addresses these gaps by showing that policy optimization can achieve refined data-dependent and best-of-both-worlds guarantees under unknown transitions, up to an additional transition-dependent complexity term.

\subsection{Discussion of the transition-dependent complexity}
\label{app:qtrans-discussion}
In this part, we discuss the role of the transition-dependent complexity $\Qtrans^{\pi_{1:T}}(\ell)$, which captures the difficulty caused by unknown transitions. 
By \cref{lem:Qtrans_bound}, we have
\begin{align}
    \Qtrans^{\pi_{1:T}}(\ell)
    \leq
    H\sum_{t=1}^T\sum_{s,a}q^{\pi_t}(s,a)\ell_t(s,a)^2. \label{eq:qtrans_example}   
\end{align}
Since $\ell_t(s,a)\in[0,1]$, taking expectations gives
\begin{align}
\E\brk*{\Qtrans^{\pi_{1:T}}(\ell)}
&\leq
H\E\brk*{\sum_{t=1}^T\sum_{s,a}q^{\pi_t}(s,a)\ell_t(s,a)}
=
H\prn*{L_T(\pi)+\Reg_T(\pi)}
\label{eq:qtrans_first_order_relation}
\end{align}
for each policy $\pi$.
From this inequality, we can show that the transition-dependent bound recovers a regret bound depending on the first-order complexity.
Suppose that for the any comparator policy $\pi$, there exists an absolute constant $c>0$ such that
\begin{align}
    \Reg_T(\pi)
    \leq
    c\sqrt{S^2A\ln^2(T)\E\brk*{\Qtrans^{\pi_{1:T}}(\ell)}}+J
\end{align}
for some $J > 0$, as the regret bounds in \cref{thm:main_full,thm:main_bandit}.
Then, using the inequality \cref{eq:qtrans_first_order_relation}, the regret is upper bounded as
\begin{align}
    \Reg_T(\pi)
    &\leq
    c\sqrt{HS^2A\ln^2(T)\prn*{L(\pi)+\Reg_T(\pi)}}+J\\
    &\leq c\sqrt{HS^2A\ln^2(T)L(\pi)}+c\sqrt{HS^2A\ln^2(T)\Reg_T(\pi)}+J\\
    &\leq
    c\sqrt{HS^2A\ln^2(T)L(\pi)}+\frac{1}{2}\Reg_T(\pi)+\frac{1}{2}c^2HS^2A\ln^2(T)
    +J
    \label{eq:adv_bandit_first_order_with_qtrans}
\end{align}
where the last inequality follows from the AM--GM inequality.
Thus, we obtain
\begin{align}
    \Reg_T(\pi)
    &\lesssim \sqrt{HS^2A\ln^2(T)L(\pi)}+
    HS^2A\ln^2(T)+J.
\end{align}
This implies that $\Qtrans^{\pi_{1:T}}(\ell)$ is at least as adaptive as the first-order complexity.  However, the comparison in \cref{eq:qtrans_example} can be loose.  Transition uncertainty contributes to $\Qtrans^{\pi_{1:T}}(\ell)$ only when it creates variation in the future cumulative loss. 

The following example shows that $\Qtrans^{\pi_{1:T}}(\ell)$ can be small even when the cumulative loss is large.
Consider an $H=3$ layered MDP with layers $\mathcal{S}_0=\{s_0\}$, $\mathcal{S}_1=\{x_1,x_2\}$, and $\mathcal{S}_2=\{y\}$, followed by the terminal state. 
For each action $a$, the transition $P(\cdot\mid s_0,a)$ may be stochastic and unknown over $\{x_1,x_2\}$. 
For every action, the transitions from $x_1$ and $x_2$ go deterministically to $y$, and the transition from $y$ goes deterministically to the terminal state.

Let the losses at $s_0$, $x_1$ and $x_2$ be zero, and let the loss at $y$ be one:
\begin{align}
    \ell_t(s_0,a)=0, 
    \qquad
    \ell_t(x_i,a)=0 \quad (i\in\{1,2\}), 
    \qquad 
    \ell_t(y,a)=1,
    \qquad \forall a\in\mathcal{A},\ \forall t \in [T].
\end{align}
Since every trajectory passes through $y$ and the loss at $y$ is one, the cumulative loss is linear in $T$, and in particular
\begin{align}
    \sum_{t=1}^T\sum_{s,a}q^{\pi_t}(s,a)\ell_t(s,a)^2 = T.
\end{align}
However, for any sequence $\{\phi_t\}_{t=1}^T$ satisfying $|\phi_t(s,a)|=\ell_t(s,a)$, we have
\begin{align}
    V^{\pi_t}(x_1;\phi_t)=V^{\pi_t}(x_2;\phi_t).
\end{align}
Hence, for every action $a$,
\begin{align}
    \Var_{s'\sim P(\cdot\mid s_0,a)}
    \left[
        V^{\pi_t}(s';\phi_t)
    \right]
    =0.
\end{align}
The other transitions are deterministic, so no transition contributes to $\Qtrans^{\pi_{1:T}}(\ell)$, and thus $\Qtrans^{\pi_{1:T}}(\ell)=0$.

In this example, the right-hand side of \cref{eq:qtrans_example} is of order $HT$, whereas $\Qtrans^{\pi_{1:T}}(\ell)=0$.
By adding deterministic layers after $y$, the same construction gives an $\calO(H^2T)$ term on the right-hand side while keeping $\Qtrans^{\pi_{1:T}}(\ell)=0$. 
This shows that $\Qtrans^{\pi_{1:T}}(\ell)$ can be much smaller than the upper bound in \cref{eq:qtrans_example,eq:qtrans_first_order_relation}.
\section{High probability event}
We define the following good event $\calE=\calE_1 \cap \calE_2$, which holds with high probability as shown in \cref{dfn:good_event}.

\begin{align}
\calE_1 &= \set*{P\in\calP_t,\ \forall t=1,\dots,T}, \\
\calE_2 &= \set*{\sum_{t=1}^{T}\sum_{(s,a)\in\calS_h\times\calA}
\frac{q^{\pi_t}(s,a)}{n_t(s,a)}
\lesssim \abs{\calS_h}A\ln T+\ln(H/\delta),\ \forall h = 0, \dots, H-1}. \\
\end{align}

\begin{lem}[{Event $\calE_1$, \citealt[Lemma 2]{jin2020learning}}]
    With probability at least $1 - 4\delta$, we have $P \in \calP_t$ for all $t = 1, \dots, T$.
\end{lem}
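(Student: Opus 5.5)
The plan is to reproduce the empirical-Bernstein concentration argument of \citet[Lemma 2]{jin2020learning}, since the confidence set \eqref{def:calP_t} is exactly theirs. We fix a layer $h$, a triple $(s,a,s')\in\calS_h\times\calA\times\calS_{h+1}$, and a visitation count $n\in[T]$. The first $n$ observed transitions out of $(s,a)$ are i.i.d.\ draws from $P(\cdot\mid s,a)$: the next state is sampled afresh from $P(\cdot\mid s,a)$ each time $(s,a)$ is visited, regardless of the learner's adaptive choices. So we may couple them with a fixed i.i.d.\ sequence, which avoids any martingale issues coming from the random visitation times. Writing $\hat p_n$ for the empirical frequency of $s'$ among the first $n$ draws and $p=P(s'\mid s,a)$, we then apply the empirical Bernstein inequality of Maurer and Pontil. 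For Bernoulli variables the sample variance is at most $\hat p_n(1-\hat p_n)\le \hat p_n$, so with probability at least $1-\delta'$,
\begin{align}
\abs{p-\hat p_n}\le \sqrt{\frac{2\hat p_n\ln(2/\delta')}{n}}+\frac{7\ln(2/\delta')}{3(n-1)}.
\end{align}

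Next we choose $\delta'$ and absorb constants so that the deviation fits inside the radius $2\sqrt{\hat p_n\iota/n}+14\iota/(3n)$. We take $\delta'=\delta/(S^2AT)$ and replace $n-1$ by $n/2$ for $n\ge2$. Since $T\ge S$, we have $\ln(2S^2AT/\delta)\lesssim\iota=\ln(SAT/\delta)$, and the constants $2$ and $14/3$ are there to absorb this. For $n\le 1$ the radius is at least $\min\{\cdot,1\}=1$, because $14\iota/3\ge1$, so the constraint holds trivially; this also covers the convention $n_t(s,a)=0\mapsto1$. The bound is needed only at the values $n=n_t(s,a)$, and each of these is one of the at most $T$ possible counts, so it suffices to control all $n\in[T]$.

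Finally, a union bound over all triples $(s,a,s')$ with consecutive layers (at most $S^2A$ of them, and since layers partition $\calS$ the count is $\sum_h |\calS_h||\calS_{h+1}|A\le S^2A$) and over $n\in[T]$ gives a total failure probability of at most $\delta$ per side. Allowing a factor $2$ for the two-sided deviation and a factor $2$ for the coupling slack in the constants yields the stated $4\delta$. Note also that $P(\cdot\mid s,a)\in\Delta(\calS_{h+1})$ by the layered assumption, so the simplex constraint in $\calP_t$ holds automatically.

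The main obstacle is the bookkeeping of constants and logarithms. We must verify that the Maurer--Pontil term, with $\ln(2S^2AT/\delta)$ and the $n-1$ denominator, is dominated by the paper's radius with $\iota=\ln(SAT/\delta)$. We would handle this by using $\ln(2S^2AT/\delta)\le 2\iota$ (valid since $T\ge S$ and $\delta<1$) and checking the inequality separately for $n=1$ and $n\ge2$. If the constants do not line up exactly, the fallback is to cite \citet[Lemma 2]{jin2020learning} directly, since the statement is verbatim theirs.
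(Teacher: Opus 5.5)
Your approach (coupling the transitions out of each $(s,a)$ with a fixed i.i.d.\ sequence, applying the Maurer--Pontil empirical Bernstein bound, and union-bounding over triples and counts) is the standard argument behind the cited result. The paper itself gives no proof beyond citing \citet[Lemma 2]{jin2020learning}.

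The gap is in the last step, the one you flagged as the obstacle. As written it does not close, because you spend the single factor-$2$ slack twice. That slack is the gap between the paper's constants $(2,\,14/3)$ and Maurer--Pontil's $(\sqrt2,\,7/3)$.
\begin{itemize}
\item Using both $\ln(2S^2AT/\delta)\le 2\iota$ and $n-1\ge n/2$ only bounds $\frac{7\ln(2/\delta')}{3(n-1)}$ by $\frac{28\iota}{3n}$. That is twice the paper's $\frac{14\iota}{3n}$.
\item If you reserve the slack for $n-1\ge n/2$, you need $\ln(2/\delta')\le\iota$, i.e.\ $\delta'\ge 2\delta/(SAT)$. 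The union bound over up to $S^2AT$ events then gives only an $O(S\delta)$ failure bound, not $4\delta$.
\item The fallback does not rescue this. In \citet{jin2020learning} the radius has $\max\{1,N-1\}$ in the denominators, which is exactly where Maurer--Pontil's $n-1$ is absorbed. Here the denominator is $n_t(s,a)$, so the statement is not literally theirs.
\end{itemize}
A smaller point: the Maurer--Pontil sample variance of Bernoulli draws is $\frac{n}{n-1}\hat p_n(1-\hat p_n)$, which is not at most $\hat p_n(1-\hat p_n)$. The bound $V_n\le\hat p_n$ that you actually use is still true: $V_n=\hat p_n\frac{n-k}{n-1}$ with $k=n\hat p_n\ge 1$ unless $\hat p_n=0$. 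So your display survives, but its justification needs fixing.

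The missing idea is to use the truncation at $1$ for all small $n$, not just $n\le 1$.
\begin{itemize}
\item The constraint is vacuous whenever $14\iota/(3n)\ge 1$.
\item With $\delta=T^{-2}$ and $T\ge 2$ we have $\iota\ge\ln 8$, so only $n\ge 10$ matters. There $\frac{1}{n-1}\le\frac{10}{9n}$.
\item Maurer--Pontil then gives $|p-\hat p_n|\le\sqrt{2\hat p_n L/n}+\frac{70L}{27n}$ with $L=\ln(2S^2AT/\delta)$.
\item This lies inside the paper's radius as soon as $L\le\frac95\iota$. That condition is equivalent to $2S\le(SAT^3)^{4/5}$, which holds for $S\le T$ and $T\ge 2$.
\item Taking $\delta'=\delta/(S^2AT)$ per triple, count and side, the union bound gives failure probability at most $2\delta\le 4\delta$.
\end{itemize}
Alternatively, Bernstein's inequality with the true variance $p(1-p)$, inverted for $p$, avoids the $n-1$ altogether, and its constants also fit after the same union bound.
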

\begin{lem}[{\citealt[Corollary D.3.4]{jin2021best}}]\label{lem:tildeP-P} 
If $P\in \calP_t$, then for all $\tilP \in \calP_t$
\begin{align}
\abs*{\tilP(s'\mid s, a) - P(s'\mid s,a)} \leq \min\set*{4\sqrt{\frac{P(s'\mid s, a)\iota}{n_t(s,a)}} + \frac{40\iota}{3n_t(s,a)}, 1}
\end{align}
\end{lem}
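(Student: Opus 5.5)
The plan is to apply the triangle inequality through the empirical kernel and then replace $\hatP_t$ by $P$ inside the square root using the fact that $P$ itself lies in $\calP_t$. Fix $(s,a,s')\in\calS_h\times\calA\times\calS_{h+1}$ and write $n=n_t(s,a)$, $\hat p=\hatP_t(s'\mid s,a)$, $p=P(s'\mid s,a)$ and $\tilde p=\tilP(s'\mid s,a)$. The bound by $1$ in the minimum is immediate, because $\tilde p,p\in[0,1]$. So it remains to prove the nontrivial branch.

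\textbf{Step 1 (triangle inequality).} Both $\tilP$ and $P$ belong to $\calP_t$, so each is within $2\sqrt{\hat p\iota/n}+\frac{14\iota}{3n}$ of $\hat p$. Hence
\begin{align}
\abs{\tilde p-p}\le\abs{\tilde p-\hat p}+\abs{\hat p-p}\le 4\sqrt{\frac{\hat p\,\iota}{n}}+\frac{28\iota}{3n}.
\end{align}

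\textbf{Step 2 (from $\hat p$ to $p$).} Using $P\in\calP_t$ once more gives $\hat p\le p+2\sqrt{\hat p\iota/n}+\frac{14\iota}{3n}$. Set $x=\sqrt{\hat p}$. This is a quadratic inequality,
\begin{align}
x^2-2\sqrt{\iota/n}\,x-\prn*{p+\tfrac{14\iota}{3n}}\le 0,
\end{align}
so $x\le\sqrt{\iota/n}+\sqrt{p+\tfrac{17\iota}{3n}}\le\sqrt p+c\sqrt{\iota/n}$ for an absolute constant $c$. Substituting this into Step 1 yields
\begin{align}
\abs{\tilde p-p}\le 4\sqrt{\frac{p\,\iota}{n}}+c'\frac{\iota}{n}
\end{align}
for an absolute constant $c'$.

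\textbf{Step 3 (constants).} The only delicate point is obtaining exactly the constant $\frac{40}{3}$. The crude chain above gives a somewhat larger constant in front of $\iota/n$. To tighten it, I would bound $\abs{\sqrt{\hat p}-\sqrt p}$ more carefully:
\begin{align}
\abs{\sqrt{\hat p}-\sqrt p}\le\frac{\abs{\hat p-p}}{\sqrt{\hat p}+\sqrt p}.
\end{align}
I would then split into two cases. If $\hat p\le 4p$, the conversion costs only a lower-order term. Otherwise $p$ is small relative to $\iota/n$, and the $\sqrt{\hat p\iota/n}$ term is absorbed directly into $O(\iota/n)$. If matching the stated constant proves fiddly, the fallback is to follow the argument of \citet[Corollary D.3.4]{jin2021best}, which this lemma cites. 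The downstream analysis uses the bound only up to absolute constants, so the $\lesssim$ form from Step 2 already suffices for all later uses.
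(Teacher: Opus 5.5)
Your Steps 1 and 2 are correct, and they already prove the lemma up to an absolute constant. With $b=\iota/n_t(s,a)$ they give $\abs{\tilde p-p}\le 4\sqrt{pb}+(\tfrac{40}{3}+4\sqrt{17/3})\,b$. The paper has no argument of its own here; it only cites \citet[Corollary D.3.4]{jin2021best}. Every time it invokes this lemma, the bound is used inside a $\lesssim$, so your Step 2 bound is exactly what the later analysis needs.

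\textbf{Step 3 cannot work, and neither can the fallback.} Under the sole hypothesis $P\in\calP_t$, the constant $\tfrac{40}{3}$ is false. Fix one $(s,a)$ with two next states and take $b\le 1/20$. On the first next state set $p=b$, $\hat p=9b$, and $\tilde p=\hat p+2\sqrt{\hat p b}+\tfrac{14}{3}b=\tfrac{59}{3}b$. Put the remaining mass on the second next state; its constraints are then also satisfied. Rounding $\hat p$ to a multiple of $1/n_t(s,a)$ is absorbed by the slack. Since $\abs{\hat p-p}=8b\le\tfrac{32}{3}b$, we have $P\in\calP_t$, and $\tilP\in\calP_t$ by construction. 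Yet $\abs{\tilde p-p}=\tfrac{56}{3}b>\tfrac{52}{3}b=4\sqrt{pb}+\tfrac{40}{3}b$, and the cap at $1$ is inactive.

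In fact your Step 2 constant is sharp. Let $p\downarrow 0$, set $\sqrt{\hat p}=(1+\sqrt{17/3})\sqrt b$ (the largest value compatible with $P\in\calP_t$), and put $\tilde p$ at the top of its interval. Then $\abs{\tilde p-p}\to(\tfrac{40}{3}+4\sqrt{17/3})\,b$. Your own case split reflects this: when $\hat p>4p$ you only learn $\hat p=O(b)$, and the constant you absorb necessarily exceeds $\tfrac{40}{3}$.

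The value $\tfrac{40}{3}=\tfrac{28}{3}+4$ is what Step 1 gives if you also know $\sqrt{\hat p}\le\sqrt p+\sqrt b$, i.e.\ $\hat p\le p+2\sqrt{pb}+b$. That is a Bernstein-type bound with the true $p$ under the root. It must come from an additional high-probability event, not from $P\in\calP_t$, whose radius is written in terms of $\hat p$. So either state the lemma with an unspecified absolute constant and stop after Step 2, or add such a Bernstein event to $\calE$ if you really want $\tfrac{40}{3}$.
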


\begin{lem}[{Event $\calE_2$, \citealt[Lemma 10]{jin2020learning}}]\label{lem:occup_bound}
We have with probability at least $1 - \delta$
\begin{align}
    \sum_{t=1}^{T}\sum_{(s,a) \in \calS_h \times \calA} \frac{q^{\pi_t}(s,a)}{n_t(s,a)} &\lesssim \abs{\calS_h}A\ln T + \ln(H/\delta)
\end{align}
for all layers $h$.
\end{lem}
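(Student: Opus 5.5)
We prove the bound layer by layer. The idea is to compare the expected quantity $\sum_t\sum_{(s,a)\in\calS_h\times\calA} q^{\pi_t}(s,a)/n_t(s,a)$ with its realized counterpart, in which $q^{\pi_t}(s,a)$ is replaced by the visitation indicator $\I_t(s,a)$. Throughout, $n_t(s,a)$ means $\max\{1,n_t(s,a)\}$, as the paper stipulates for denominators.

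Fix a layer $h$ and define
\begin{align}
X_t\coloneqq\sum_{(s,a)\in\calS_h\times\calA}\frac{\I_t(s,a)}{n_t(s,a)}.
\end{align}
The proof then has three steps.

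\textbf{Step 1: the realized sum is deterministically small.} The MDP is layered, so each episode visits exactly one pair in $\calS_h\times\calA$. Hence $n_{t+1}(s,a)=n_t(s,a)+\I_t(s,a)$, and in particular $X_t\in[0,1]$. For a fixed $(s,a)$, the nonzero terms of $\sum_t \I_t(s,a)/n_t(s,a)$ are the visits, and the $k$-th visit contributes $1/\max\{1,k-1\}$. The sum over visits is therefore at most $1+\sum_{k=1}^{T}1/k\le 2+\ln T$. Summing over the $\abs{\calS_h}A$ pairs gives
\begin{align}
\sum_t X_t\le \abs{\calS_h}A(2+\ln T).
\end{align}
If virtual episodes are present, they involve no interaction and leave the counts unchanged, so we simply sum over real episodes.

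\textbf{Step 2: relate expected and realized sums.} The counts $n_t$ and the policy $\pi_t$ are $\calF_t$-measurable, and $\E_t[\I_t(s,a)]=q^{\pi_t}(s,a)$. It follows that
\begin{align}
\E_t[X_t]=\sum_{(s,a)\in\calS_h\times\calA}\frac{q^{\pi_t}(s,a)}{n_t(s,a)},
\end{align}
which is exactly the summand we want to control. We use a multiplicative concentration inequality for adapted variables in $[0,1]$: with probability at least $1-\delta'$,
\begin{align}
\sum_t\E_t[X_t]\le 2\sum_t X_t+4\ln(1/\delta').
\end{align}
To prove it, take $\lambda=1/2$ and use $e^{-\lambda x}\le 1-\lambda x+\lambda^2x^2\le 1-\lambda(1-\lambda)x$ for $x\in[0,1]$. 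Combined with $1-y\le e^{-y}$, this shows that
\begin{align}
\exp\Bigl(\textstyle\sum_{\tau\le t}\bigl(\tfrac14\E_\tau[X_\tau]-\tfrac12X_\tau\bigr)\Bigr)
\end{align}
is a supermartingale. Markov's inequality then gives the claim.

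\textbf{Step 3: combine and take a union bound.} Setting $\delta'=\delta/H$ and combining Steps 1 and 2 yields
\begin{align}
\sum_{t=1}^T\sum_{(s,a)\in\calS_h\times\calA}\frac{q^{\pi_t}(s,a)}{n_t(s,a)}\lesssim\abs{\calS_h}A\ln T+\ln(H/\delta).
\end{align}
A union bound over the $H$ layers makes this hold simultaneously for all $h$ with probability at least $1-\delta$.

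The main obstacle is the concentration step. An additive Azuma bound would only give an $\calO(\sqrt{T\ln(1/\delta)})$ deviation, which is far too large. The multiplicative, Freedman-type form is what keeps the deviation at $\ln(1/\delta)$. It applies only because $X_t\le 1$, which in turn relies on the layered structure: at most one pair per layer is visited in each episode.
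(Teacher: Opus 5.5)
Your proof is correct and takes essentially the same route as the argument the paper cites, \citet[Lemma 10]{jin2020learning}. That argument has three parts: the realized sum with $\I_t(s,a)$ in place of $q^{\pi_t}(s,a)$ is bounded by a harmonic sum; the martingale deviation is controlled by a Freedman-type inequality whose variance term is dominated by the conditional mean, because exactly one pair per layer is visited; and a union bound is taken over the $H$ layers. Your self-contained supermartingale derivation stands in for the appeal to Freedman's inequality, and your direct harmonic bound in Step 1 is the natural adaptation to this paper's per-episode count updates (the original argument is written for epoch-wise counts).
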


\begin{dfn}\label{dfn:good_event}
    Define $\calE$ to be the event that $P \in \calP_t$ for all $t$ and the bound in \cref{lem:occup_bound} holds. In this case, $\Pr(\calE) \geq 1 - 5\delta$.
\end{dfn}

\section{Auxiliary lemmas}
\subsection{Technical lemmas}
\begin{lem}[Performance difference lemma]
\label{lem:pdl}
    For any policies $\pi_1$ and $\pi_2$, and any loss function $\ell:S\times A \to \R$,
    \begin{align}
        V^{\pi_1}(s_0; \ell) - V^{\pi_2}(s_0; \ell)
        = \sum_s q^{\pi_2}(s)\sum_a \prn*{\pi_1(a \mid s) - \pi_2(a \mid s)}Q^{\pi_1}(s,a;\ell).
    \end{align}
\end{lem}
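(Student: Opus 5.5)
We prove \cref{lem:pdl} by telescoping along the layered structure, using only the Bellman equations for $V^{\pi_1}$ and $Q^{\pi_1}$ and the flow of occupancy measures under $\pi_2$. The idea is to compare, layer by layer, the value of $\pi_1$ with the value obtained by following $\pi_2$ for one more step and then switching to $\pi_1$.

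First, for each state $s$, write
\begin{align}
V^{\pi_1}(s;\ell)=\sum_a \pi_2(a\mid s)Q^{\pi_1}(s,a;\ell)+\sum_a\prn*{\pi_1(a\mid s)-\pi_2(a\mid s)}Q^{\pi_1}(s,a;\ell).
\end{align}
This is just the identity $V^{\pi_1}(s;\ell)=\sum_a\pi_1(a\mid s)Q^{\pi_1}(s,a;\ell)$ rewritten. Substituting $Q^{\pi_1}(s,a;\ell)=\ell(s,a)+\E_{s'\sim P(\cdot\mid s,a)}[V^{\pi_1}(s';\ell)]$ into the first sum gives
\begin{align}
V^{\pi_1}(s;\ell)=\sum_a\pi_2(a\mid s)\ell(s,a)+\sum_a\pi_2(a\mid s)\sum_{s'}P(s'\mid s,a)V^{\pi_1}(s';\ell)+g(s),
\end{align}
where $g(s)\coloneqq\sum_a\prn*{\pi_1(a\mid s)-\pi_2(a\mid s)}Q^{\pi_1}(s,a;\ell)$.

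Next, multiply this identity by $q^{\pi_2}(s)$ and sum over $s\in\calS_h$. The flow identity $q^{\pi_2}(s')=\sum_{s\in\calS_h,a}q^{\pi_2}(s)\pi_2(a\mid s)P(s'\mid s,a)$ holds for $s'\in\calS_{h+1}$, since transitions occur only between consecutive layers. Using it, we obtain
\begin{align}
\sum_{s\in\calS_h}q^{\pi_2}(s)V^{\pi_1}(s;\ell)=\sum_{s\in\calS_h,a}q^{\pi_2}(s,a)\ell(s,a)+\sum_{s'\in\calS_{h+1}}q^{\pi_2}(s')V^{\pi_1}(s';\ell)+\sum_{s\in\calS_h}q^{\pi_2}(s)g(s).
\end{align}

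Finally, sum this over $h=0,\dots,H-1$. The value terms telescope, leaving $q^{\pi_2}(s_0)V^{\pi_1}(s_0;\ell)=V^{\pi_1}(s_0;\ell)$ at the start and $V^{\pi_1}(s_H;\ell)=0$ at the end. The loss terms sum to $\sum_{s,a}q^{\pi_2}(s,a)\ell(s,a)=V^{\pi_2}(s_0;\ell)$, and the remaining terms give $\sum_s q^{\pi_2}(s)g(s)$. Rearranging yields the claim. Nothing here requires $\ell\ge0$, so the identity holds for any real-valued $\ell$.

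There is no real obstacle. The one point needing care is the flow identity for occupancy measures, which follows directly from the layered assumption and the definition of $q^{\pi_2}$.
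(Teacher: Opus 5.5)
Your proof is correct. The layer-wise telescoping via the Bellman equation for $Q^{\pi_1}$ and the flow identity for $q^{\pi_2}$ is the standard argument, and the paper itself states \cref{lem:pdl} without proof, treating it as the classical performance difference lemma of \citet{kakade2002approximately}. The one fact you use without derivation, $V^{\pi_2}(s_0;\ell)=\sum_{s,a}q^{\pi_2}(s,a)\ell(s,a)$, is your own telescoping identity in the special case $\pi_1=\pi_2$ (where $g\equiv 0$), so your argument is fully self-contained.
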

\begin{lem}[{\citealt[Lemma C.2]{dann2023best}}]
\label{lem:pdl_with_L}
    For any policies $\pi_1$ and $\pi_2$, and any function $L:S\times A \to \R$, define
    \begin{align}
        \ell(s,a)
        = L(s,a) - \E_{s'\sim P(\cdot \mid s, a),\, a'\sim \pi_1(\cdot \mid s')}\brk*{L(s',a')}.
    \end{align}
    Then, for all $(s,a)$, $Q^{\pi_1}(s,a;\ell) = L(s,a)$.
    Consequently,
    \begin{align}
        \sum_s q^{\pi_2}(s)\sum_a \prn*{\pi_1(a \mid s) - \pi_2(a \mid s)}L(s,a)
        = V^{\pi_1}(s_0; \ell) - V^{\pi_2}(s_0; \ell).
    \end{align}
\end{lem}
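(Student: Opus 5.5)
The statement to prove is \cref{lem:pdl_with_L}: define $\ell$ from $L$ by subtracting the one-step expected continuation of $L$ under $\pi_1$, show that $Q^{\pi_1}(\cdot,\cdot;\ell)=L$, and then obtain the identity from the performance difference lemma (\cref{lem:pdl}).

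The first claim follows by backward induction over the layers $h=H-1,\dots,0$. We work with the convention that $L(s_H,\cdot)=0$ at the terminal layer, which is consistent with $V^{\pi_1}(s_H;\ell)=0$. Implicitly, the expectation in the definition of $\ell$ vanishes for $(s,a)\in\calS_{H-1}\times\calA$.

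\emph{Base case.} For $(s,a)\in\calS_{H-1}\times\calA$, the next state is $s_H$. Hence $Q^{\pi_1}(s,a;\ell)=\ell(s,a)=L(s,a)$.

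\emph{Inductive step.} Suppose $Q^{\pi_1}(s',a';\ell)=L(s',a')$ for all $(s',a')\in\calS_{h+1}\times\calA$. Then
\[
V^{\pi_1}(s';\ell)=\E_{a'\sim\pi_1(\cdot\mid s')}\brk*{L(s',a')}
\]
for every $s'\in\calS_{h+1}$. Now take $(s,a)\in\calS_h\times\calA$ and apply the Bellman equation $Q^{\pi_1}(s,a;\ell)=\ell(s,a)+\E_{s'\sim P(\cdot\mid s,a)}[V^{\pi_1}(s';\ell)]$. Substituting the definition of $\ell(s,a)$ gives
\[
Q^{\pi_1}(s,a;\ell)=L(s,a)-\E_{s',a'}\brk*{L(s',a')}+\E_{s',a'}\brk*{L(s',a')}=L(s,a),
\]
where both expectations are over $s'\sim P(\cdot\mid s,a)$ and $a'\sim\pi_1(\cdot\mid s')$. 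This completes the induction. The layered structure ensures that $Q$ at layer $h$ depends only on layer-$(h+1)$ quantities, so the induction is well-founded.

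For the consequence, apply \cref{lem:pdl} with this loss $\ell$. It gives
\[
V^{\pi_1}(s_0;\ell)-V^{\pi_2}(s_0;\ell)=\sum_s q^{\pi_2}(s)\sum_a\prn*{\pi_1(a\mid s)-\pi_2(a\mid s)}Q^{\pi_1}(s,a;\ell).
\]
Replacing $Q^{\pi_1}(s,a;\ell)$ by $L(s,a)$ via the first part yields the stated identity. Since $\ell$ may take negative values, we use that \cref{lem:pdl} holds for arbitrary real-valued loss functions, as it is stated.

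The only point requiring care is the terminal-layer convention in the base case; the rest is direct substitution.
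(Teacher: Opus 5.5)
Your proof is correct: backward induction over the layers, with the convention $L(s_H,\cdot)=0$, gives $Q^{\pi_1}(\cdot,\cdot;\ell)=L$, and substituting this into the performance difference lemma (\cref{lem:pdl}) yields the identity. The paper states this lemma without proof, citing \citet{dann2023best}, but the statement is itself structured as ``first $Q^{\pi_1}(s,a;\ell)=L(s,a)$, consequently the identity,'' which is the route you take.
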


\begin{lem}[{Occupancy measure difference, \citealp[Lemma D.3.1]{jin2021best}}]\label{lem:occ_diff_lem} 
For any transition functions $P_1,P_2$ and any policy $\pi$,
\begin{align}
q^{P_1,\pi}(s) - q^{P_2,\pi}(s) &= \sum_{h=0}^{h(s)-1}\sum_{u\in \calS_h}\sum_{v\in\calA}\sum_{w\in \calS_{h+1}}q^{P_1,\pi}(u,v)\prn*{P_1(w\mid u,v)-P_2(w\mid u,v) } q^{P_2,\pi}(s\mid w) \\
& = \sum_{h=0}^{h(s)-1}\sum_{u\in \calS_h}\sum_{v\in\calA}\sum_{w\in \calS_{h+1}}q^{P_2,\pi}(u,v)\prn*{P_1(w\mid u,v)-P_2(w\mid u,v) } q^{P_1,\pi}(s\mid w).
\end{align}
\end{lem}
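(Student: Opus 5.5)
The first identity follows by induction on the layer index $h(s)$, and the second follows from the first by exchanging the roles of $P_1$ and $P_2$. Throughout, fix $\pi$ and abbreviate $q_i\coloneqq q^{P_i,\pi}$. The base case $h(s)=0$ is immediate: then $s=s_0$, both occupancies equal $1$, and the right-hand side is an empty sum.

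For the inductive step, take $s\in\calS_k$ with $k\ge1$. We use the layered forward recursion
\begin{align}
q_i(s)=\sum_{u\in\calS_{k-1}}\sum_{v}q_i(u)\pi(v\mid u)P_i(s\mid u,v).
\end{align}
Adding and subtracting $q_1(u,v)P_2(s\mid u,v)$ splits the difference into two sums. The first is
\begin{align}
\sum_{u\in\calS_{k-1},v}q_1(u,v)\prn*{P_1(s\mid u,v)-P_2(s\mid u,v)},
\end{align}
which is exactly the $h=k-1$ summand of the claimed formula, because $q_2(s\mid w)=\ind[w=s]$ for $w\in\calS_k$. The second is
\begin{align}
\sum_{u\in\calS_{k-1},v}\prn*{q_1(u)-q_2(u)}\pi(v\mid u)P_2(s\mid u,v).
\end{align}
For this sum, apply the induction hypothesis to each $q_1(u)-q_2(u)$ with $u\in\calS_{k-1}$. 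This produces terms over layers $h=0,\dots,k-2$, each carrying the factor $q_2(u\mid w)$. Swapping the order of summation, we then collapse
\begin{align}
\sum_{u\in\calS_{k-1},v}q_2(u\mid w)\pi(v\mid u)P_2(s\mid u,v)=q_2(s\mid w),
\end{align}
which is the forward recursion for conditional occupancies under $P_2$ started from $w$. Combining the two sums yields the first identity, now covering layers $h=0,\dots,k-1$.

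For the second identity, apply the first identity with $P_1$ and $P_2$ interchanged. This gives
\begin{align}
q_2(s)-q_1(s)=\sum_{h,u,v,w} q_2(u,v)\prn*{P_2(w\mid u,v)-P_1(w\mid u,v)}q_1(s\mid w).
\end{align}
Multiplying both sides by $-1$ gives exactly the second identity.

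The only step requiring care is the index bookkeeping in the collapse of the second sum. One must check that the conditional occupancy convention (defined to be zero for $h(s')<h(s)$ and to be $\ind[(s',a')=(s,a)]$ within the same layer) makes the $w\in\calS_k$ and $w\in\calS_{h+1}$, $h<k-1$, cases fit one formula. The paper's formal definition of $q^{\pi}(s'\mid s,a)$ guarantees this.
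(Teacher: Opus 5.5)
Your induction is correct and complete: the add-and-subtract split, the identification of the first sum with the $h=k-1$ summand via $q_2(s\mid w)=\ind[w=s]$ for $w\in\calS_k$, the collapse $\sum_{u\in\calS_{k-1},v}q_2(u\mid w)\pi(v\mid u)P_2(s\mid u,v)=q_2(s\mid w)$ (including the boundary case $w\in\calS_{k-1}$), and the symmetry argument for the second identity all check out. The paper does not prove this lemma and imports it from \citet{jin2021best}; your layer-by-layer unrolling of the forward occupancy recursion is the standard argument. Equivalently, the identity is the value-difference lemma applied to the loss $(s',a')\mapsto\ind[s'=s]$, whose value-to-go from $w$ under $P_2$ is exactly $q^{P_2,\pi}(s\mid w)$.
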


\begin{lem}\label{lem:concentration_var}
Suppose that the event $\calE$ holds. Let $F : \calS_{h+1} \to [-\overline{F},\overline{F}]$. Then,
\begin{align}
    \abs*{\sum_{s' \in S_{h + 1}}\prn*{\tilP_t(s'\mid s, a) - P(s' \mid s, a)} F(s')} 
    &\lesssim \sqrt{\frac{\abs*{\calS_{h+1}}\Var_{s' \sim P(\cdot \mid s, a)}\brk*{F(s')}\iota}{n_t(s,a)}}  +\frac{\abs{\calS_{h+1}}\overline{F}\iota}{n_t(s,a)} 
\end{align}    
for all state-action pairs $(s,a) \in \calS_h \times \calA$.
\end{lem}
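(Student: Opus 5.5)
We prove \cref{lem:concentration_var} by centering $F$ and then applying \cref{lem:tildeP-P} coordinatewise. Since both $\tilP_t(\cdot\mid s,a)$ and $P(\cdot\mid s,a)$ are probability distributions on $\calS_{h+1}$, we have $\sum_{s'}(\tilP_t(s'\mid s,a)-P(s'\mid s,a))=0$. We may therefore replace $F$ by $G(s')\coloneqq F(s')-\mu$ with $\mu\coloneqq\E_{s'\sim P(\cdot\mid s,a)}[F(s')]$ without changing the left-hand side. Note that $|G(s')|\le 2\overline F$ and $\sum_{s'}P(s'\mid s,a)G(s')^2=\Var_{s'\sim P(\cdot\mid s,a)}[F(s')]$.

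On $\calE$ we have $P\in\calP_t$ and $\tilP_t\in\calP_t$, so \cref{lem:tildeP-P} gives, for every $s'$,
\[
|\tilP_t(s'\mid s,a)-P(s'\mid s,a)|\le 4\sqrt{P(s'\mid s,a)\iota/n_t(s,a)}+40\iota/(3n_t(s,a)).
\]
Multiplying by $|G(s')|$ and summing over $s'$ splits the bound into two pieces. The second piece is at most $|\calS_{h+1}|\cdot 2\overline F\cdot 40\iota/(3n_t(s,a))\lesssim |\calS_{h+1}|\overline F\iota/n_t(s,a)$. For the first piece, the Cauchy--Schwarz inequality over $s'\in\calS_{h+1}$ gives
\[
\sum_{s'}\sqrt{P(s'\mid s,a)}\,|G(s')|\le\sqrt{|\calS_{h+1}|\sum_{s'}P(s'\mid s,a)G(s')^2}=\sqrt{|\calS_{h+1}|\Var_{s'\sim P(\cdot\mid s,a)}[F(s')]}.
\]
Multiplying by $4\sqrt{\iota/n_t(s,a)}$ yields the first term of the claim.

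Combining the two pieces gives the stated bound for every $(s,a)\in\calS_h\times\calA$. The convention $n_t(s,a)\ge1$ in denominators, together with the $\min\{\cdot,1\}$ in \cref{lem:tildeP-P}, is harmless: we only use the first branch of the minimum, which is still a valid upper bound.

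The only point needing care is the centering step, since that is where the variance appears rather than the raw second moment. This step relies on $\tilP_t(\cdot\mid s,a)$ being a genuine distribution on $\calS_{h+1}$, which holds by the definition \cref{def:calP_t} of $\calP_t$. There is no real obstacle beyond this.
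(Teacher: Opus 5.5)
Your proposal is correct and follows the paper's proof essentially step for step. Both arguments center $F$ at its $P(\cdot\mid s,a)$-mean using $\sum_{s'}(\tilP_t(s'\mid s,a)-P(s'\mid s,a))=0$, apply \cref{lem:tildeP-P} coordinatewise, and then finish with two bounds: Cauchy--Schwarz over $\calS_{h+1}$ for the square-root (variance) term, and $|F(s')-c|\le 2\overline{F}$ for the lower-order term.
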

\begin{proof}
Since $\sum_{s' \in S_{h + 1}}\prn*{\tilP_t(s'\mid s, a) - P(s' \mid s, a)} = 0$, we may subtract any constant from $F$. Let
\begin{align}
    c = \E_{s' \sim P(\cdot \mid s, a)}\brk*{F(s')}.
\end{align}
Then, 
    \begin{align}
        \sum_{s' \in S_{h + 1}}\prn*{\tilP_t(s'\mid s, a) - P(s' \mid s, a)} F(s') 
        = \sum_{s' \in S_{h + 1}}\prn*{\tilP_t(s'\mid s, a) - P(s' \mid s, a)} \prn*{F(s') - c}
    \end{align}
Therefore, 
    \begin{align}
        &\abs*{\sum_{s' \in S_{h + 1}}\prn*{\tilP_t(s'\mid s, a) - P(s' \mid s, a)} F(s')} \\
        &\leq \sum_{s' \in S_{h + 1}}\abs*{\tilP_t(s'\mid s, a) - P(s' \mid s, a)} \abs*{F(s') - c}\\
        &\lesssim \sum_{s' \in S_{h + 1}}\sqrt{\frac{P(s'\mid s, a)\iota}{n_t(s,a)}} \abs*{F(s') - c} 
        + \sum_{s' \in S_{h + 1}}\frac{\iota}{n_t(s,a)} \abs*{F(s') - c}\tag{by \cref{lem:tildeP-P} and the assumption that $\calE$ holds}\\
        &\leq 
        \sqrt{\frac{\abs*{\calS_{h+1}}\sum_{s' \in S_{h + 1}} P(s'\mid s, a)\prn*{F(s') - c}^2 \iota}{n_t(s,a)}} +\frac{2\abs{\calS_{h+1}}\overline{F}\iota}{n_t(s,a)} \tag{by the Cauchy--Schwarz inequality and $\abs*{F(s')-c} \le 2\overline{F}$}\\
        &\leq 
        \sqrt{\frac{\abs*{\calS_{h+1}}\Var_{s' \sim P(\cdot \mid s, a)}\brk*{F(s')}\iota}{n_t(s,a)}}  +\frac{2\abs{\calS_{h+1}}\overline{F}\iota}{n_t(s,a)}.
    \end{align} 
\end{proof}

\begin{lem}
\label{lem:complicated_lemma_var}
Suppose that the event $\calE$ holds. Let $\tilP_t$ be a transition kernel in $\calP_t$, and let $g_t(s,a)\in[0,G]$. Then, it holds that
\begin{align}
        \sum_{t=1}^T \sum_{s,a} \abs*{ q^{\pi_t}(s) -  q^{\tilP_t, \pi_t}(s)} \pi_t(a\mid s) g_t(s,a) \lesssim \sqrt{S^2A\iota^2\Qtrans^{\pi_{1:T}}(g)} + HS^3AG\iota^2, 
\end{align}
where 
\begin{align}
    \Qtrans^{\pi_{1:T}}(g) \coloneq \sup_{\forall t,\forall s,a\; |\phi_t(s,a)|=g_t(s,a)} \sum_{t=1}^T\sum_{s,a} q^{\pi_t}(s,a)\Var_{s'\sim P(\cdot\mid s,a)}\brk*{V^{\pi_t}(s';\phi_t)}.
\end{align}
\end{lem}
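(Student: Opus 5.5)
The first step is to linearize the absolute value with signs. For each $t$ and $s$, set $\sigma_t(s)=\sgn\prn{q^{\pi_t}(s)-q^{\tilP_t,\pi_t}(s)}\in\{\pm1\}$ and define $\phi_t(s,a)=\sigma_t(s)g_t(s,a)$, so that $\abs{\phi_t(s,a)}=g_t(s,a)$. The left-hand side then equals
\[
\sum_{t}\sum_{s,a}\prn*{q^{\pi_t}(s)-q^{\tilP_t,\pi_t}(s)}\pi_t(a\mid s)\phi_t(s,a)=\sum_t\prn*{V^{\pi_t}(s_0;\phi_t)-V^{\tilP_t,\pi_t}(s_0;\phi_t)},
\]
which is a difference of value functions under two transition kernels for a signed loss bounded by $G$. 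I would apply the occupancy difference \cref{lem:occ_diff_lem}, with $P_1=P$ and $P_2=\tilP_t$ in the form that keeps $q^{\tilP_t,\pi}(u,v)$ outside and $q^{P}(s\mid w)$ inside. Summing against $\pi_t\phi_t$ gives a simulation-lemma identity:
\[
\sum_t\sum_{u,v}q^{\tilP_t,\pi_t}(u,v)\sum_{w}\prn*{P(w\mid u,v)-\tilP_t(w\mid u,v)}V^{\pi_t}(w;\phi_t).
\]
Here the value $V^{\pi_t}(\cdot;\phi_t)$ is taken under the true $P$, which is exactly the object appearing inside $\Qtrans$.

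Next I would apply \cref{lem:concentration_var} with $F=V^{\pi_t}(\cdot;\phi_t)$ and $\overline F=HG$. Each $(u,v)$ term is then bounded by
\[
\sqrt{\abs{\calS_{h+1}}\Var_{w\sim P(\cdot\mid u,v)}[V^{\pi_t}(w;\phi_t)]\,\iota/n_t(u,v)}+\abs{\calS_{h+1}}HG\iota/n_t(u,v).
\]
The catch is that these terms are weighted by $q^{\tilP_t,\pi_t}(u,v)$ rather than by $q^{\pi_t}(u,v)$. I would write $q^{\tilP_t,\pi_t}=q^{\pi_t}+(q^{\tilP_t,\pi_t}-q^{\pi_t})$.

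For the main part, weighted by $q^{\pi_t}$, I would apply Cauchy--Schwarz over $(t,u,v)$ and then sum over layers:
\[
\sqrt{\sum_{t,u,v}q^{\pi_t}(u,v)\Var[\cdot]}\cdot\sqrt{\sum_h\abs{\calS_{h+1}}\iota\sum_{t,(u,v)\in\calS_h\times\calA}\frac{q^{\pi_t}(u,v)}{n_t(u,v)}}.
\]
The first factor is at most $\sqrt{\Qtrans^{\pi_{1:T}}(g)}$, because $\phi_t$ is admissible for the supremum. Under $\calE_2$ the second factor is $\lesssim\sqrt{\iota\sum_h\abs{\calS_{h+1}}(\abs{\calS_h}A\ln T+\iota)}\lesssim\sqrt{S^2A\iota^2}$. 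The linear term weighted by $q^{\pi_t}$ is at most $\sum_h\abs{\calS_{h+1}}HG\iota(\abs{\calS_h}A\ln T+\iota)\lesssim HS^2AG\iota^2$, which is lower order.

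I expect the main obstacle to be the cross term weighted by $\abs{q^{\tilP_t,\pi_t}(u,v)-q^{\pi_t}(u,v)}$. I would avoid recursion here and bound each $(u,v)$ term crudely by $\abs{\calS_{h+1}}HG\iota$-type quantities divided by $\sqrt{n_t}$ and $n_t$. I would then expand $\abs{q^{\tilP_t}-q^{\pi_t}}(u,v)$ once more with \cref{lem:occ_diff_lem} and \cref{lem:tildeP-P}. This produces products of the form $\sqrt{P\iota/n_t(x,y)}\sqrt{\iota/n_t(u,v)}$ weighted by $q^{\pi_t}(x,y)\,q^{\pi_t}(u\mid w)$. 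These are handled by AM--GM, splitting into $q^{\pi_t}(x,y)/n_t(x,y)$ and $q^{\pi_t}(u,v)/n_t(u,v)$ pieces as in the proof of \citet[Lemma 4]{jin2020learning}; the occupancy inside the $(u,v)$ piece is the true one. Each piece is summed with \cref{lem:occup_bound}, and the $S$ factors accumulate from the sum over next states $w$ and intermediate layers. This yields the claimed $HS^3AG\iota^2$ term, with $\ln(H/\delta)\lesssim\iota$ absorbed.
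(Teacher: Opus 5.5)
There is a gap, and it is in the step you flagged yourself: the cross term. Your sign linearization, your main term, and your Cauchy--Schwarz plus $\calE_2$ step are fine and coincide with the paper's $\term_1$. The trouble comes from the orientation you chose for \cref{lem:occ_diff_lem}.

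Because you keep $q^{\tilP_t,\pi_t}(u,v)$ outside, your cross term is $\sum_{u,v}\abs{q^{\tilP_t,\pi_t}(u,v)-q^{\pi_t}(u,v)}\,X_t(u,v)$, where $X_t(u,v)=\abs{\sum_w(P-\tilP_t)(w\mid u,v)V^{\pi_t}(w;\phi_t)}$. The factor $X_t(u,v)$ carries $\sqrt{\iota/n_t(u,v)}$ and $\iota/n_t(u,v)$. A single application of \cref{lem:occ_diff_lem} to $q^{\tilP_t,\pi_t}(u)-q^{\pi_t}(u)$ never produces two true occupancies. You get either
\[
q^{\pi_t}(x,y)\,\abs{P-\tilP_t}(z\mid x,y)\,q^{\tilP_t,\pi_t}(u\mid z)
\quad\text{or}\quad
q^{\tilP_t,\pi_t}(x,y)\,\abs{P-\tilP_t}(z\mid x,y)\,q^{\pi_t}(u\mid z).
\]
In the first form, after AM--GM the weight on $\iota/n_t(u,v)$ is $\sum_{x,y,z}q^{\pi_t}(x,y)P(z\mid x,y)\,q^{\tilP_t,\pi_t}(u,v\mid z)$. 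This is a hybrid occupancy, not $q^{\pi_t}(u,v)$, so \cref{lem:occup_bound} cannot be applied. In the second form the same defect lands on $\iota/n_t(x,y)$. So your claim that ``the occupancy inside the $(u,v)$ piece is the true one'' is false as stated.

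If you repair this by splitting $q^{\tilP_t,\pi_t}(u\mid z)$ once more, you are left with a third-order remainder. The natural way to close it uses $X_t\le 2HG$ together with $\sum_u q^{\tilP_t,\pi_t}(u\mid z')\le H$. That gives $H^2S^3AG\iota^2$, an extra factor of $H$ over the claimed lower-order term.

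The fix is to use the other orientation, as the paper does. Write the difference as $\sum q^{\pi_t}(u,v)(P-\tilP_t)(w\mid u,v)\,q^{\tilP_t,\pi_t}(s\mid w)$ and split only the inner factor $q^{\tilP_t,\pi_t}(s\mid w)$. The main term is identical to yours. In the cross term you then expand
\[
\abs{q^{\tilP_t,\pi_t}(s\mid w)-q^{\pi_t}(s\mid w)}\le\sum_{x,y,z}q^{\pi_t}(x,y\mid w)\,\abs{P-\tilP_t}(z\mid x,y)\,q^{\tilP_t,\pi_t}(s\mid z).
\]
Now the only $\tilP_t$-dependent occupancy is innermost. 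It multiplies $\pi_t(a\mid s)\abs{\phi_t(s,a)}\le G$, with no $n_t$-dependence attached, so $\sum_s q^{\tilP_t,\pi_t}(s\mid z)\le H$ removes it. Both remaining weights, $q^{\pi_t}(u,v)$ and $q^{\pi_t}(x,y\mid w)$, are true occupancies. The Cauchy--Schwarz and AM--GM split then gives $HS^2G\sum_{u,v}q^{\pi_t}(u,v)\iota/n_t(u,v)$ per episode, and $HS^3AG\iota^2$ in total.

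Before absolute values are taken, your cross term equals $\sum q^{\pi_t}(u,v)(P-\tilP_t)(w\mid u,v)\bigl(V^{\tilP_t,\pi_t}(w;\phi_t)-V^{\pi_t}(w;\phi_t)\bigr)$. The point is to take absolute values in this arrangement, not in yours.
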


\begin{proof} 
For each layer $ h \in \{0, \dots, H-1\} $, let $ W_h \coloneqq \mathcal{S}_h \times \mathcal{A} \times \mathcal{S}_{h+1} $. Throughout the proof, whenever summation indices are omitted, expressions such as $ \sum_{u,v,w} $ or $ \sum_{x,y,z} $ stand for $ \sum_{h=0}^{H-1} \sum_{(s,a,s') \in W_h} $.

Let $\phi_t(s,a) = \sgn\prn*{q^{\pi_t}(s) - q^{\tilP_t,\pi_t}(s)}g_t(s,a)$. According to \cref{lem:occ_diff_lem}, we have
\begin{align}
&\sum_{s,a}\abs*{ q^{\pi_t}(s) - q^{\tilP_t,\pi_t}(s)}\pi_t(a\mid s)g_t(s,a) \\
&= \sum_{s,a}\prn*{q^{\pi_t}(s) - q^{\tilP_t,\pi_t}(s)}\pi_t(a\mid s)\phi_t(s,a)\\
&=  \sum_{s,a}\sum_{u,v,w} q^{\pi_t}(u,v)\prn*{ P(w\mid u,v) - \tilP_t(w\mid u,v) } q^{\tilP_t,\pi_t}(s\mid w)\pi_t(a\mid s)\phi_t(s,a) \\
&=  \underbrace{\sum_{s,a}\sum_{u,v,w} q^{\pi_t}(u,v)\prn*{ P(w\mid u,v) - \tilP_t(w\mid u,v) } q^{\pi_t}(s\mid w)\pi_t(a\mid s)\phi_t(s,a)}_{\term_1} \\
&\qquad + \underbrace{\sum_{s}\sum_a\sum_{u,v,w} q^{\pi_t}(u,v)\prn*{ P(w\mid u,v) - \tilP_t(w\mid u,v) } \prn*{q^{\tilP_t, \pi_t}(s \mid w) - q^{\pi_t}(s\mid w)}\pi_t(a\mid s)\phi_t(s,a)}_{\term_2}.
\end{align}
We first bound $\term_1$.
\begin{align}
\term_1 
&\leq \sum_{u,v,w} q^{\pi_t}(u,v)\prn*{ P(w\mid u,v) - \tilP_t(w\mid u,v) } \sum_{s,a} q^{\pi_t}(s,a\mid w)\phi_t(s,a)\\
&= \sum_{u,v,w} q^{\pi_t}(u,v)\prn*{ P(w\mid u,v) - \tilP_t(w\mid u,v) } V^{\pi_t}(w; \phi_t)\\
&\leq \sum_{u,v} q^{\pi_t}(u,v)\abs*{\sum_w\prn*{ P(w\mid u,v) - \tilP_t(w\mid u,v) } V^{\pi_t}(w; \phi_t)}\\
&\leq \sum_{h=0}^{H-1}\sum_{(u,v) \in \calS_h \times \calA} q^{\pi_t}(u,v)\prn*{\sqrt{\frac{\abs*{\calS_{h+1}}\Var_{w \sim P(\cdot \mid u, v)}\brk*{V^{\pi_t}(w; \phi_t)}\iota}{n_t(u, v)}}  +\frac{\abs{\calS_{h+1}}HG\iota}{n_t(u, v)} } \tag{by \cref{lem:concentration_var}}\\
&= \underbrace{\sum_{h=0}^{H-1}\sum_{(u,v) \in \calS_h \times \calA} q^{\pi_t}(u,v)\prn*{\sqrt{\frac{\abs*{\calS_{h+1}}\Var_{w \sim P(\cdot \mid u, v)}\brk*{V^{\pi_t}(w; \phi_t)}\iota}{n_t(u, v)}}}}_{\term_{1a}}\\
&\qquad +\underbrace{\sum_{h=0}^{H-1}\abs{\calS_{h+1}}HG\iota\sum_{(u,v) \in \calS_h \times \calA}\frac{q^{\pi_t}(u,v)}{n_t(u, v)}}_{\term_{1b}}\\
\end{align}
For $\term_{1a}$,
\begin{align}
    \sum_{t = 1}^T\term_{1a} &\leq \sum_{t = 1}^T\sum_{h=0}^{H-1}\sum_{(u,v) \in \calS_h \times \calA} q^{\pi_t}(u,v)\prn*{\sqrt{\frac{\abs*{\calS_{h+1}}\Var_{w \sim P(\cdot \mid u, v)}\brk*{V^{\pi_t}(w; \phi_t)}\iota}{n_t(u, v)}}}\\
    &\leq \sqrt{\sum_{t = 1}^T\sum_{h=0}^{H-1}\sum_{(u,v) \in \calS_h \times \calA}\abs*{\calS_{h+1}}\frac{q^{\pi_t}(u,v)\iota}{n_t(u,v)}}\\
    &\qquad \times \sqrt{\sum_{t = 1}^T\sum_{h=0}^{H-1}\sum_{(u,v) \in \calS_h \times \calA}q^{\pi_t}(u,v)\Var_{w \sim P(\cdot \mid u,v)}\brk*{V^{\pi_t}(w; \phi_t)}}
    \tag{by the Cauchy--Schwarz inequality}\\
    &\lesssim \sqrt{\prn*{\sum_{h=0}^{H-1}\abs*{\calS_{h+1}}\abs{\calS_h}A\iota^2}\prn*{\sum_{t = 1}^T\sum_{u,v} q^{\pi_t}(u,v)\Var_{w \sim P(\cdot \mid u,v)}\brk*{V^{\pi_t}(w; \phi_t)}}} \tag{by \cref{lem:occup_bound} and the assumption that $\calE$ holds}\\
    &\leq \sqrt{S^2A\iota^2\prn*{\sum_{t = 1}^T\sum_{u,v} q^{\pi_t}(u,v)\Var_{w \sim P(\cdot \mid u,v)}\brk*{V^{\pi_t}(w; \phi_t)}}}, \label{eq:var_term1a} 
\end{align}
where the last inequality uses $\sum_{h=0}^{H-1}\abs*{\calS_{h+1}}\abs{\calS_h} \leq S^2$.

For $\term_{1b}$,
\begin{align}
    \sum_{t = 1}^T\term_{1b} 
    &\leq \sum_{t = 1}^T\sum_{h=0}^{H-1}\abs{\calS_{h+1}}HG\iota\sum_{(u,v) \in \calS_h \times \calA}\frac{q^{\pi_t}(u,v)}{n_t(u, v)}\\
    &\lesssim \sum_{t = 1}^T\sum_{h=0}^{H-1}\abs{\calS_h}\abs{\calS_{h+1}}HAG\iota^2 \tag{by \cref{lem:occup_bound} and the assumption that $\calE$ holds}\\ 
    &\leq HS^2AG\iota^2. \label{eq:var_term1b} 
\end{align}
Thus, combining \cref{eq:var_term1a,eq:var_term1b}, we obtain
\begin{align}
    \sum_{t = 1}^T\term_{1} \lesssim \sqrt{S^2A\iota^2\prn*{\sum_{t = 1}^T\sum_{u,v} q^{\pi_t}(u,v)\Var_{w \sim P(\cdot \mid u,v)}\brk*{V^{\pi_t}(w; \phi_t)}}} + HS^2AG\iota^2.
\end{align}

We next bound $\term_2$.
{\small
\begin{align}
&\term_2\\
&\leq \sum_s\sum_{u,v,w} q^{\pi_t}(u,v)\abs*{P(w\mid u,v) - \tilP_t(w\mid u,v) } \abs*{q^{\tilP_t, \pi_t}(s \mid w) - q^{\pi_t}(s\mid w)}\sum_a \pi_t(a\mid s)\abs{\phi_t(s,a)} \\
&\leq G\sum_s\sum_{u,v,w} q^{\pi_t}(u,v)\abs*{ P(w\mid u,v) - \tilP_t(w\mid u,v) }  \sum_{x,y,z} q^{\pi_t}(x,y\mid w)\abs*{ P(z\mid x,y) - \tilP_t(z\mid x,y) }q^{\tilP_t,\pi_t} (s\mid z) \tag{by \cref{lem:occ_diff_lem}}\\
&\leq HG\sum_s\sum_{u,v,w} q^{\pi_t}(u,v)\abs*{ P(w\mid u,v) - \tilP_t(w\mid u,v) }  \sum_{x,y,z} q^{\pi_t}(x,y\mid w)\abs*{ P(z\mid x,y) - \tilP_t(z\mid x,y)}\\
&\lesssim HG\sum_{u,v,w} \sum_{x,y,z} q^{\pi_t}(u,v)\prn*{\sqrt{\frac{P(w\mid u,v)\iota}{n_t(u,v)}} + \frac{\iota}{n_t(u,v)}}q^{\pi_t}(x,y\mid w)\min\set*{\sqrt{\frac{P(z\mid x,y)\iota}{n_t(x,y)}} + \frac{\iota}{n_t(x,y)},  1}   \tag{by \cref{lem:tildeP-P} and the assumption that $\calE$ holds}\\
&\leq HG\underbrace{\sum_{u,v,w}\sum_{x,y,z} q^{\pi_t}(u,v)\sqrt{\frac{P(w\mid u,v)\iota}{n_t(u,v)}}q^{\pi_t}(x,y\mid w)\sqrt{\frac{P(z\mid x,y)\iota}{n_t(x,y)}} }_{\term_{2a}}\\
&\qquad + HG\underbrace{\sum_{u,v,w}\sum_{x,y,z}q^{\pi_t}(u,v) \sqrt{\frac{P(w\mid u,v)\iota}{n_t(u,v)}} q^{\pi_t}(x,y\mid w)\min\set*{\frac{\iota}{n_t(x,y)}, \ 1}}_{\term_{2b}}\\
&\qquad + HG\underbrace{\sum_{u,v,w}\sum_{x,y,z} q^{\pi_t}(u,v)\frac{\iota}{n_t(u,v)}q^{\pi_t}(x,y\mid w)}_{\term_{2c}}.
\end{align}
}
For $\term_{2a}$,
\begin{align}
    &\term_{2a}\\
    & =\sum_{u,v,w}\sum_{x,y,z} \sqrt{\frac{q^{\pi_t}(u,v) P(z\mid x,y) q^{\pi_t}(x,y\mid w) \iota}{n_t(u,v)}}\sqrt{\frac{q^{\pi_t}(u,v) P(w\mid u,v) q^{\pi_t}(x,y\mid w) \iota}{n_t(x,y)}} \\
    &\leq \sqrt{\sum_{u,v,w}\sum_{x,y,z}\frac{q^{\pi_t}(u,v) P(z\mid x,y) q^{\pi_t}(x,y\mid w) \iota}{n_t(u,v)}}\sqrt{\sum_{u,v,w}\sum_{x,y,z}\frac{q^{\pi_t}(u,v) P(w\mid u,v) q^{\pi_t}(x,y\mid w) \iota}{n_t(x,y)}} \tag{by the Cauchy--Schwarz inequality} \\
    &\leq \sqrt{H\sum_{u,v,w}\frac{q^{\pi_t}(u,v)  \iota}{n_t(u,v)}}\sqrt{H\sum_{x,y,z}\frac{q^{\pi_t}(x,y)\iota}{n_t(x,y)}}  \\
    &\leq HS \sum_{u,v} \frac{q^{\pi_t}(u,v)\iota}{n_t(u,v)},\label{eq:var_term2a} 
\end{align}
where the third line follows from the facts that 
$ \sum_{(x,y,z) \in W_h} q^{\pi_t}(x,y \mid w) P(z \mid x,y) \le 1 $
and 
$ \sum_{(u,v,w) \in W_h} q^{\pi_t}(u,v) P(w \mid u,v) q^{\pi_t}(x,y \mid w) \le q^{P,\pi_t}(x,y) $
for any layer $ h \in \{0, \dots, H-1\}$.

For $\term_{2b}$,
\begin{align}
    &\term_{2b} \\
    &\leq \sum_{u,v,w}\sum_{x,y,z} q^{\pi_t}(u,v) \prn*{P(w\mid u,v) + \frac{\iota}{n_t(u,v)}}q^{P,\pi_t}(x,y\mid w)  \min\left\{\frac{\iota}{n_t(x,y)}, \ 1\right\} \\
    &\leq \sum_{u,v,w}\sum_{x,y,z} q^{\pi_t}(u,v)P(w\mid u,v)q^{\pi_t}(x,y\mid w)  \frac{\iota}{n_t(x,y)} + \sum_{u,v,w}\sum_{x,y,z} q^{\pi_t}(u,v)\frac{\iota}{n_t(u,v)}q^{\pi_t}(x,y\mid w) \\
    &\leq H\sum_{x,y,z}q^{\pi_t}(x,y)\frac{\iota}{n_t(x,y)} + S\sum_{u,v,w}q^{\pi_t}(u,v)\frac{\iota}{n_t(u,v)} \\
    &\lesssim  S^2\sum_{u,v}\frac{q^{\pi_t}(u,v)\iota}{n_t(u,v)}, \label{eq:var_term2b} 
\end{align}
where the third line follows from 
$ \sum_{(u,v,w) \in W_h} q^{\pi_t}(u,v) P(w \mid u,v) q^{\pi_t}(x,y \mid w) \le q^{\pi_t}(x,y) $
and 
$ \sum_{(x,y,z) \in W_h} q^{\pi_t}(x,y \mid w) \leq |\calS_{h + 1}| $.

Similarly,
\begin{align}
    \term_{2c} \leq S\sum_{u,v,w}q^{\pi_t}(u,v)\frac{\iota}{n_t(u,v)} \leq S^2\sum_{u,v}\frac{q^{\pi_t}(u,v)\iota}{n_t(u,v)}. \label{eq:var_term2c} 
\end{align}
Combining \cref{eq:var_term2a,eq:var_term2b,eq:var_term2c}, we obtain
\begin{align}
    \term_2
    &\lesssim HS^2G\sum_{u,v}\frac{q^{\pi_t}(u,v)\iota}{n_t(u,v)}.
\end{align}
Therefore, by \cref{lem:occup_bound}, 
\begin{align}
    \sum_{t=1}^T \term_2 \lesssim \sum_{h = 0}^{H - 1} HS^2G\abs{\calS_h}A\iota^2 \leq HS^3AG\iota^2.
\end{align}
Combining the bounds on $\term_1$ and $\term_2$, we conclude that
\begin{align}
    &\sum_{t=1}^T \sum_{s,a} \abs*{ q^{\pi_t}(s) -  q^{\tilP_t, \pi_t}(s)} \pi_t(a\mid s) g_t(s,a)\\
    &\lesssim \sqrt{S^2A\iota^2\prn*{\sum_{t = 1}^T\sum_{u,v} q^{\pi_t}(u,v)\Var_{w \sim P(\cdot \mid u,v)}\brk*{V^{\pi_t}(w; \phi_t)}}} + HS^3AG\iota^2.
\end{align}
Here, we define
\begin{align}
    \Qtrans^{\pi_{1:T}}(g) \coloneq \sup_{\forall t,\forall s,a\; |\phi_t(s,a)|=g_t(s,a)} \sum_{t = 1}^T\sum_{u,v} q^{\pi_t}(u,v)\Var_{w \sim P(\cdot \mid u,v)}\brk*{V^{\pi_t}(w; \phi_t)},
\end{align}
which completes the proof.
\end{proof}

\begin{lem}\label{lem:Qtrans_bound}
For any nonnegative functions $g_t : \calS\times\calA \to \Rnn$,
\begin{align}
    \Qtrans^{\pi_{1:T}}(g) \coloneq \sup_{\forall t,\forall s,a\; |\phi_t(s,a)|=g_t(s,a)} \sum_{t=1}^T\sum_{s,a} q^{\pi_t}(s,a)\Var_{s'\sim P(\cdot\mid s,a)}\brk*{V^{\pi_t}(s';\phi_t)}
\end{align}
satisfies
\begin{align}
    \Qtrans^{\pi_{1:T}}(g) \leq H\sum_{t=1}^T \sum_{s,a} q^{\pi_t}(s,a)g_t(s,a)^2.
\end{align}
\end{lem}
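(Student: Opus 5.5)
The plan is to prove the bound pointwise for each fixed $t$ and each admissible sign pattern $\phi_t$ (with $|\phi_t(s,a)|=g_t(s,a)$), and then take the supremum. The tool is the standard law of total variance for a Markov chain, which here is the layered MDP under $P$ and $\pi_t$. Fix $t$ and $\phi=\phi_t$, and let $(s_h,a_h)_{h=0}^{H-1}$ be a trajectory generated by $\pi_t$ under $P$. Define the realized return $G=\sum_{h=0}^{H-1}\phi(s_h,a_h)$ and the martingale $M_h=\E[G\mid s_0,a_0,\ldots,s_h]$, so that $M_0=V^{\pi_t}(s_0;\phi)$ and $M_H=G$.

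First, I will expand $\Var[G]$ as the sum of conditional variances of the martingale increments. The increments alternate between action-step and transition-step parts. The transition-step increment from $(s_h,a_h)$ to $s_{h+1}$ equals $V^{\pi_t}(s_{h+1};\phi)-\E_{s'\sim P(\cdot\mid s_h,a_h)}[V^{\pi_t}(s';\phi)]$. Its conditional variance is $\Var_{s'\sim P(\cdot\mid s_h,a_h)}[V^{\pi_t}(s';\phi)]$.

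By orthogonality of martingale increments, and since all other variance contributions are nonnegative, this gives
\begin{align}
\sum_{s,a}q^{\pi_t}(s,a)\Var_{s'\sim P(\cdot\mid s,a)}\brk*{V^{\pi_t}(s';\phi)} \le \Var[G]\le \E[G^2].
\end{align}
Here I use that $\E\brk*{\sum_h f(s_h,a_h)}=\sum_{s,a}q^{\pi_t}(s,a)f(s,a)$ in a layered MDP, where each layer is visited exactly once. The action-step increments (the variance over $a_h$ of $Q^{\pi_t}(s_h,\cdot;\phi)$) are dropped as nonnegative terms.

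Next, I apply Cauchy--Schwarz over the $H$ steps: $G^2\le H\sum_{h=0}^{H-1}\phi(s_h,a_h)^2=H\sum_h g_t(s_h,a_h)^2$. Taking expectations yields $\E[G^2]\le H\sum_{s,a}q^{\pi_t}(s,a)g_t(s,a)^2$. This right-hand side does not depend on the choice of signs in $\phi_t$. So summing over $t$ and taking the supremum over admissible $\phi_t$ gives the claim.

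The main obstacle is the bookkeeping in the total-variance decomposition. The difficulty is checking that the cross terms vanish, which follows from the martingale property under the layered structure, and that the variance at each transition step is exactly $\Var_{s'\sim P(\cdot\mid s,a)}[V^{\pi_t}(s';\phi)]$ weighted by the occupancy. The terminal state contributes $V=0$, which causes no issue. Since only an inequality is needed, I will take the shortcut of telescoping $\E[M_H^2]-M_0^2=\sum\E[(M_{k+1}-M_k)^2]$ and then discarding the action-step terms.
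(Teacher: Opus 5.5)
Your proposal is correct and is essentially the paper's proof. The paper's backward recursion for $\calM_{t,h}(x)-\calV_{t,h}(x)^2$ (second moment minus squared mean of the return-to-go) is your law-of-total-variance telescoping written in moment form, with Jensen's inequality discarding the action-step variance, and it likewise concludes $\sum_{s,a}q^{\pi_t}(s,a)\Var_{s'\sim P(\cdot\mid s,a)}[V^{\pi_t}(s';\phi_t)]\le\calM_{t,0}(s_0)\le H\sum_{s,a}q^{\pi_t}(s,a)\phi_t(s,a)^2$ via Cauchy--Schwarz. The only bookkeeping fix is that your martingale must use the finer filtration that also reveals $a_h$ before $s_{h+1}$, which your description of alternating action-step and transition-step increments already assumes, rather than conditioning only on $s_0,a_0,\ldots,s_h$.
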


\begin{proof}
Since $|\phi_t(s,a)| = g_t(s,a)$ for all $s,a$, it suffices to prove that for each $t$ and each such function $\phi_t$,
\begin{equation}
\sum_{s,a} q^{\pi_t}(s,a)\Var_{s'\sim P(\cdot\mid s,a)}\brk*{V^{\pi_t}(s';\phi_t)} \leq
H\sum_{s,a} q^{\pi_t}(s,a)\phi_t(s,a)^2 .
\end{equation}

Let
$(X_{t,0},A_{t,0},X_{t,1},A_{t,1},\dots,X_{t,H})$
be the random trajectory generated by $(P,\pi_t)$.
For $h=0,1,\dots,H$ and $x\in \mathcal S_h$, define $\calV_{t,H}= 0$ and $\calM_{t,H} = 0$ and
\begin{equation}
\calG_{t,h} \coloneqq \sum_{h'=h}^{H-1}\phi_t(X_{t,h'}, A_{t,h'}),\quad
\calV_{t,h}(x) \coloneqq \E_{\pi_t}[\calG_{t,h}\mid X_{t,h}=x],\quad
\calM_{t,h}(x) \coloneqq \E_{\pi_t}[\calG_{t,h}^2\mid X_{t,h}=x].
\end{equation}

Since $\calG_{t,h} = \phi_t(X_{t,h},A_{t,h}) + \calG_{t,h+1}$, we have for any state $x$
\begin{align}
\calV_{t,h}(x) = \sum_a \pi_t(a\mid x)\prn*{\phi_t(x,a) + \sum_{x'} P(x'\mid x,a)\calV_{t, h+1}(x')},
\end{align}
and
\begin{align}
\calM_{t,h}(x) &=
\sum_a \pi_t(a\mid x)\prn*{\phi_t(x,a)^2+2\phi_t(x,a)\sum_{x'} P(x'\mid x,a)\calV_{t,h+1}(x')}\\
&\qquad +\sum_a \pi_t(a\mid x)\sum_{x'} P(x'\mid x,a)\calM_{t,h+1}(x').
\end{align}
Then, we have
\begin{align}
&\calM_{t,h}(x) - \calV_{t,h}(x)^2\\
&= \sum_a \pi_t(a\mid x)\prn*{\phi_t(x,a)^2 + 2\phi_t(x,a)\sum_{x'} P(x'\mid x,a)\calV_{t,h+1}(x') + \sum_{x'} P(x'\mid x,a)\calM_{t,h+1}(x')}\\
&\qquad - \prn*{\sum_a \pi_t(a\mid x)\prn*{\phi_t(x,a) + \sum_{x'} P(x'\mid x,a)\calV_{t,h+1}(x')}}^2 \\
&= \sum_a \pi_t(a\mid x)\sum_{x'} P(x'\mid x,a)\prn*{\calM_{t,h+1}(x') - \calV_{t,h+1}(x')^2} \\
&\qquad + \sum_a \pi_t(a\mid x)\prn*{\sum_{x'} P(x'\mid x,a)\calV_{t,h+1}(x')^2 - \prn*{\sum_{x'} P(x'\mid x,a)\calV_{t,h+1}(x')}^2} \\
&\qquad + \sum_a \pi_t(a\mid x)\prn*{\phi_t(x,a) + \sum_{x'} P(x'\mid x,a)\calV_{t,h+1}(x')}^2\\
&\qquad - \prn*{\sum_a \pi_t(a\mid x)\prn*{\phi_t(x,a) + \sum_{x'} P(x'\mid x,a)\calV_{t,h+1}(x')}}^2\\
&\geq \sum_a \pi_t(a\mid x)\sum_{x'} P(x'\mid x,a)\prn*{\calM_{t,h+1}(x') - \calV_{t,h+1}(x')^2} \\
&\qquad + \sum_a \pi_t(a\mid x)\Var_{x' \sim P(\cdot \mid x,a)}\brk*{\calV_{t,h+1}(x')},
\end{align}
where the last inequality follows from Jensen's inequality
\begin{align}
    &\prn*{\sum_a \pi_t(a\mid x)\prn*{\phi_t(x,a) + \sum_{x'} P(x'\mid x,a)\calV_{t,h+1}(x')}}^2 \\
    &\leq \sum_a \pi_t(a\mid x)\prn*{\phi_t(x,a) + \sum_{x'} P(x'\mid x,a)\calV_{t,h+1}(x')}^2.
\end{align}
Thus, we have
\begin{align}
\sum_a \pi_t(a\mid x)\Var_{x' \sim P(\cdot \mid x,a)}\brk*{\calV_{t,h+1}(x')} 
&\leq \prn*{\calM_{t,h}(x) - \calV_{t,h}(x)^2}\\
&\qquad - \sum_a \pi_t(a\mid x)\sum_{x'} P(x'\mid x,a)\prn*{\calM_{t,h+1}(x') - \calV_{t,h+1}(x')^2}. \\
\end{align}

By multiplying $q^{\pi_t}(x)$ and sum over $x\in \mathcal S_h$, we obtain
\begin{align}
\sum_{(x,a)\in \mathcal S_h\times\mathcal A}
q^{\pi_t}(x,a)\Var_{x' \sim P(\cdot \mid x,a)}\brk*{\calV_{t,h+1}(x')} 
&\leq \sum_{x\in \mathcal S_h} q^{\pi_t}(x)\prn*{\calM_{t,h}(x) - \calV_{t,h}(x)^2}\\
&\qquad -\sum_{x'\in \mathcal S_{h+1}} q^{\pi_t}(x')\prn*{\calM_{t,h+1}(x') - \calV_{t,h+1}(x')^2}.
\end{align}
Summing over $h=0,1,\dots,H-1$ and $\calV_{t,h+1}(x') = V^{\pi_t}(x'; \phi_t)$, we obtain
\begin{align}
\sum_{s,a} q^{\pi_t}(s,a)\Var_{s'\sim P(\cdot\mid s,a)}\brk*{V^{\pi_t}(s';\phi_t)}
\leq \calM_{t,0}(s_0) - \calV_{t,0}(s_0)^2
\leq \calM_{t,0}(s_0). \label{eq:Var_leq_M}
\end{align}

Finally,
\begin{align}
\calM_{t,0}(s_0)
&= \E_{\pi_t}\brk*{\prn*{\sum_{h=0}^{H-1}\phi_t(X_{t,h}, A_{t,h})}^2}\\
&\leq H\E_{\pi_t}\brk*{\sum_{h=0}^{H-1}\phi_t(X_{t,h}, A_{t,h})^2}
= H\sum_{s,a} q^{\pi_t}(s,a)\phi_t(s,a)^2, \label{eq:M_leq_phi}
\end{align}
where the inequality follows from the Cauchy–Schwarz inequality.
Combining \cref{eq:Var_leq_M,eq:M_leq_phi} completes the proof.
\end{proof}

By following the same argument as for $\term_2$ in the proof of \cref{lem:complicated_lemma_var}, we obtain the following bound, whose lower-order term improves that of \cite{dann2023best} by a factor of $S$.
\begin{lem}
\label{lem:complicated_lemma}
Suppose that the event $\calE$ holds. Let $\tilP_t^s$ be a transition kernel in $\calP_t$ which may depend on $s$, and let $g_t(s)\in[0,G]$. Then
\begin{align}
        \sum_{t=1}^T \sum_s \abs*{ q^{\pi_t}(s) -  q^{\tilP_t^s, \pi_t}(s)} g_t(s) \lesssim \sqrt{HS^2A  \iota^2 \sum_{t=1}^T \sum_s q^{\pi_t}(s)g_t(s)^2} + HS^3AG\iota^2. 
\end{align}
\end{lem}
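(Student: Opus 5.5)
We follow the structure of the proof of \cref{lem:complicated_lemma_var}, with two changes: the signed loss is now the state-dependent $g_t(s)$, and the kernel $\tilP_t^s$ may vary with the target state $s$. Fix $t$ and $s$. By \cref{lem:occ_diff_lem} applied with $P_1=P$ and $P_2=\tilP^s_t$, we write
\begin{align}
q^{\pi_t}(s)-q^{\tilP^s_t,\pi_t}(s)=\sum_{u,v,w} q^{\pi_t}(u,v)\prn*{P(w\mid u,v)-\tilP^s_t(w\mid u,v)}q^{\tilP^s_t,\pi_t}(s\mid w).
\end{align}
We then split $q^{\tilP^s_t,\pi_t}(s\mid w)=q^{\pi_t}(s\mid w)+\prn{q^{\tilP^s_t,\pi_t}(s\mid w)-q^{\pi_t}(s\mid w)}$. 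This gives a first-order term $\term_1$ and a second-order term $\term_2$. Because $\tilP^s_t$ depends on $s$, we cannot aggregate over $s$ into a value function as was done for $\term_1$ in \cref{lem:complicated_lemma_var}. We therefore bound every term in absolute value, state by state.

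For $\term_1$, we use \cref{lem:tildeP-P}, which bounds $\abs{P(w\mid u,v)-\tilP^s_t(w\mid u,v)}$ by $\sqrt{P(w\mid u,v)\iota/n_t(u,v)}+\iota/n_t(u,v)$ uniformly over $\tilP^s_t\in\calP_t$. Hence
\begin{align}
\sum_s g_t(s)\abs{\term_1}\lesssim \sum_s\sum_{u,v,w} q^{\pi_t}(u,v)\prn*{\sqrt{\tfrac{P(w\mid u,v)\iota}{n_t(u,v)}}+\tfrac{\iota}{n_t(u,v)}}q^{\pi_t}(s\mid w)g_t(s).
\end{align}
We handle the square-root part by Cauchy--Schwarz over $(u,v,w,s)$ with the weight split
\begin{align}
\sqrt{\frac{q^{\pi_t}(u,v)q^{\pi_t}(s\mid w)\iota}{n_t(u,v)}}\cdot\sqrt{q^{\pi_t}(u,v)P(w\mid u,v)q^{\pi_t}(s\mid w)g_t(s)^2}.
\end{align}
For the first factor, summing over $w$ and $s$ gives at most $\sum_{w}\sum_s q^{\pi_t}(s\mid w)\le SH$, since $\sum_s q^{\pi_t}(s\mid w)\le H$ per $w$ and there are at most $S$ choices of $w$ per layer. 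Summing over $t$ and applying \cref{lem:occup_bound} then yields $\tilO(HS\cdot SA\iota^2)$. For the second factor, $\sum_{u,v,w}q^{\pi_t}(u,v)P(w\mid u,v)q^{\pi_t}(s\mid w)\le H q^{\pi_t}(s)$ after summing over the $H$ upstream layers, which gives $H\sum_s q^{\pi_t}(s)g_t(s)^2$. The product is of order $\sqrt{HS^2A\iota^2\cdot H\sum q g^2}$, which has one factor of $H$ too many.

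To remove the extra $H$ we refine the split: we put $P(w\mid u,v)$ into the first factor and let the layer sum appear only once. Equivalently, for each layer pair we use $\sum_{h(u)<h(s)}\sum_{u,v,w}\cdots$ and bound $\sum_{h}\sum_{w\in\calS_{h+1}}\sum_s q^{\pi_t}(s\mid w)$ by $S$ per target layer rather than by $SH$. This is the delicate bookkeeping step, and I expect it to be the main obstacle. The goal is to arrange the Cauchy--Schwarz so that exactly one factor of $H$ survives, as in \citet{dann2023best}. I would first try putting $\abs{\calS_{h+1}}$ into the $1/n_t$ factor via \cref{lem:occup_bound} layerwise, as in the bound on $\term_{1a}$. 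The linear $\iota/n_t$ part of $\term_1$ gives at most $G\cdot SH\sum q/n_t$, which sums over $t$ to $HS^2AG\iota^2$.

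For $\term_2$, we bound $\abs{q^{\tilP^s_t,\pi_t}(s\mid w)-q^{\pi_t}(s\mid w)}$ by applying \cref{lem:occ_diff_lem} once more with the same kernel $\tilP^s_t$. The dependence on $s$ is harmless here, because \cref{lem:tildeP-P} is uniform over $\calP_t$. Using $g_t(s)\le G$ and $\sum_s q^{\tilP^s_t,\pi_t}(s\mid z)\le H$, the argument bounding $\term_{2a},\term_{2b},\term_{2c}$ in \cref{lem:complicated_lemma_var} applies verbatim and gives $HS^2G\sum_{u,v}q^{\pi_t}(u,v)\iota/n_t(u,v)$. By \cref{lem:occup_bound} this is at most $HS^3AG\iota^2$ after summing over $t$. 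Combining the bounds on $\term_1$ and $\term_2$ yields the claim.
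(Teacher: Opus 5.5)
Your decomposition is the one the paper has in mind: a first-order part with the true conditional occupancy, plus a second-order remainder treated as $\term_2$ in \cref{lem:complicated_lemma_var}. Your diagnosis is also right: a single Cauchy--Schwarz over all $(u,v,w,s)$ only yields $\sqrt{H^2S^2A\iota^2\sum_t\sum_s q^{\pi_t}(s)g_t(s)^2}$. But you leave exactly this step open, and none of the remedies you list removes the extra $H$. Moving $P(w\mid u,v)$ into the count factor breaks the collapse $\sum_{u,v,w}q^{\pi_t}(u,v)P(w\mid u,v)q^{\pi_t}(s\mid w)\le h(s)\,q^{\pi_t}(s)$ on the $g^2$ side. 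It leaves $\sum_{u,v,w}q^{\pi_t}(u,v)q^{\pi_t}(s\mid w)$, which is not comparable to $q^{\pi_t}(s)$. Grouping by target layer still leaves $h(s)$ copies of $q^{\pi_t}(s)g_t(s)^2$ on that side. The layerwise use of \cref{lem:occup_bound} inside one global Cauchy--Schwarz, as in $\term_{1a}$, is precisely the computation that loses the factor; in $\term_{1a}$ there is no downstream sum over $s$, so the problem never appears there. As written, your argument proves the lemma only with $H^2$ in place of $H$ under the square root.

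The missing idea is to apply Cauchy--Schwarz (equivalently AM--GM with a layer-dependent $\alpha_h$) separately for each source layer $h$ of $(u,v)$, and only afterwards sum over $h$. For fixed $h$, $\sum_{(u,v,w)\in\calS_h\times\calA\times\calS_{h+1}}q^{\pi_t}(u,v)P(w\mid u,v)q^{\pi_t}(s\mid w)\le q^{\pi_t}(s)$. So the $g^2$ factor is $\sum_t\sum_s q^{\pi_t}(s)g_t(s)^2$ with no $H$. The count factor is at most $H\abs{\calS_{h+1}}\sum_t\sum_{(u,v)\in\calS_h\times\calA}q^{\pi_t}(u,v)\iota/n_t(u,v)\lesssim H\abs{\calS_h}\abs{\calS_{h+1}}A\iota^2$. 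Summing the per-layer square roots and using $\sum_h\sqrt{\abs{\calS_h}\abs{\calS_{h+1}}}\le S$ gives the claimed leading term. The global version instead pays $\sqrt{H\sum_h\abs{\calS_h}\abs{\calS_{h+1}}}$ in place of $\sum_h\sqrt{\abs{\calS_h}\abs{\calS_{h+1}}}$, which is of order $\sqrt{H}S$ when two adjacent layers hold most states. This is exactly the computation written out in the proof of \cref{lem:complicated_lemma_cond}, minus the extra $H^2$ coming from the conditioning there.

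A secondary point on your $\term_2$. The uniformity of \cref{lem:tildeP-P} handles the factors $\abs{P-\tilP^s_t}$, but it does not give $\sum_s q^{\tilP^s_t,\pi_t}(s\mid z)\le H$. With small counts, each $s$ in a layer can be reached with probability one under its own kernel, so only $\le S$ is automatic, costing a factor $S/H$ in the lower-order term. The paper's terse argument (cf.\ the corresponding step in \cref{lem:complicated_lemma_cond}) passes over the same point, so that step needs its own justification if you want $HS^3AG\iota^2$.
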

\begin{lem}
\label{lem:complicated_lemma_cond}
Suppose that the event $\calE$ holds. Let $\tilP^{s,\tils}_t$ be a transition kernel in $\calP_t$ which may depend on $s,\tils$, and let $g_t(s)\in[0,G]$. Then
\begin{align}
        &\sum_{t=1}^T \sum_{\tils}q^{\pi_t}(\tils)\sum_{s} \abs*{ q^{\pi_t}(s\mid \tils) -  q^{\tilP^{s,\tils}_t, \pi_t}(s\mid \tils)} g_t(s) \\
        &\lesssim \sqrt{H^3 S^2 A  \iota^2 \sum_{t=1}^T \sum_{s} q^{\pi_t}(s)g_t(s)^2} + HS^3AG\iota^2. 
\end{align}
\end{lem}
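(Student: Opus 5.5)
We fix a state $\tils$ and treat the conditional occupancy $q^{\pi_t}(\cdot\mid\tils)$ as the occupancy of an MDP started at $\tils$. Then we apply the occupancy-difference decomposition of \cref{lem:occ_diff_lem} to the pair $(P,\tilP^{s,\tils}_t)$:
\begin{align}
q^{\pi_t}(s\mid\tils)-q^{\tilP^{s,\tils}_t,\pi_t}(s\mid\tils)
=\sum_{u,v,w} q^{\pi_t}(u,v\mid\tils)\prn*{P(w\mid u,v)-\tilP^{s,\tils}_t(w\mid u,v)}q^{\tilP^{s,\tils}_t,\pi_t}(s\mid w).
\end{align}
As in the proof of \cref{lem:complicated_lemma_var}, we replace $q^{\tilP,\pi_t}(s\mid w)$ by $q^{\pi_t}(s\mid w)$. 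This leaves a first-order term plus a second-order remainder, and the remainder is the analogue of $\term_2$.

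For the first-order term we take absolute values, since $\tilP$ depends on $s$. We then use \cref{lem:tildeP-P} to bound $\abs{P-\tilP}\lesssim\sqrt{P(w\mid u,v)\iota/n_t(u,v)}+\iota/n_t(u,v)$. Multiplying by $q^{\pi_t}(\tils)$ and summing over $\tils$ gives the weight $\sum_{\tils}q^{\pi_t}(\tils)q^{\pi_t}(u,v\mid\tils)\le H\,q^{\pi_t}(u,v)$, because at most one $\tils$ per layer precedes $(u,v)$ along a trajectory. The square-root part becomes
\begin{align}
H\sum_{u,v,w}q^{\pi_t}(u,v)\sqrt{\frac{P(w\mid u,v)\iota}{n_t(u,v)}}\sum_s q^{\pi_t}(s\mid w)g_t(s).
\end{align}
We apply Cauchy--Schwarz, splitting this as
\[
\sqrt{\sum_{u,v,w} q^{\pi_t}(u,v)\,\iota/n_t(u,v)}\times\sqrt{\sum_{u,v,w} q^{\pi_t}(u,v)P(w\mid u,v)\prn*{\sum_s q^{\pi_t}(s\mid w)g_t(s)}^2}.
\]
For the second factor, Cauchy--Schwarz over layers gives $(\sum_s q^{\pi_t}(s\mid w)g_t(s))^2\le H\sum_s q^{\pi_t}(s\mid w)g_t(s)^2$. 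Summing over $(u,v,w)$ layer by layer then yields at most $H\cdot H\sum_s q^{\pi_t}(s)g_t(s)^2$. For the first factor we sum over $t$ and use the event $\calE_2$ via \cref{lem:occup_bound}, which gives $\sum_h\abs{\calS_{h+1}}\abs{\calS_h}A\iota^2\le S^2A\iota^2$. Altogether this gives the leading term $H\sqrt{S^2A\iota^2\cdot H\sum_{t,s}q^{\pi_t}(s)g_t(s)^2}=\sqrt{H^3S^2A\iota^2\sum q g^2}$. The $\iota/n_t$ part of the first-order term is bounded by $G\cdot H\cdot S\sum q^{\pi_t}(u,v)\iota/n_t(u,v)$, which is absorbed into the lower-order term.

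The second-order remainder is handled exactly as $\term_2$ in \cref{lem:complicated_lemma_var}, with $q^{\pi_t}(u,v\mid\tils)$ in place of $q^{\pi_t}(u,v)$. After summing over $\tils$ with weight $q^{\pi_t}(\tils)$, the same $\term_{2a},\term_{2b},\term_{2c}$ splitting applies. The main obstacle is keeping the lower-order term at $HS^3AG\iota^2$ instead of gaining an extra $H$ from the sum over $\tils$.

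To avoid that extra factor of $H$, we first bound $\sum_s\abs{q^{\tilP}(s\mid w)-q(s\mid w)}g_t(s)\le G\sum_{x,y,z}q^{\pi_t}(x,y\mid w)\abs{P-\tilP}(z\mid x,y)$, summing over all layers $s$ as in $\term_2$; this costs a factor $H$ from $\sum_s q^{\tilP}(s\mid z)\le H$. We then do not pay a second $H$: the factor $\sum_{\tils}q^{\pi_t}(\tils)q^{\pi_t}(u,v\mid\tils)\le Hq^{\pi_t}(u,v)$ replaces it, because the only constraint on $s$ is that it lies after $z$. If this accounting fails, we accept an extra $H$ in the lower-order term.
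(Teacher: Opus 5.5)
\paragraph{Gap in the leading term.} Your architecture follows the paper's proof: fix $\tils$, expand with \cref{lem:occ_diff_lem}, split off the first-order term with $q^{\pi_t}(s\mid w)$, and sum over $\tils$ via $\sum_{\tils}q^{\pi_t}(\tils)q^{\pi_t}(u,v\mid\tils)\le Hq^{\pi_t}(u,v)$. The gap is in the leading term.

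By your own estimate, the second Cauchy--Schwarz factor is at most $H\cdot H\sum_{t,s}q^{\pi_t}(s)g_t(s)^2$:
\begin{itemize}
\item one $H$ comes from $(\sum_s q^{\pi_t}(s\mid w)g_t(s))^2\le H\sum_s q^{\pi_t}(s\mid w)g_t(s)^2$;
\item the other comes from summing over the layers of $(u,v)$.
\end{itemize}
With the prefactor $H$ you therefore get $H\sqrt{S^2A\iota^2\cdot H^2\sum qg^2}=\sqrt{H^4S^2A\iota^2\sum qg^2}$. The final equality in your sketch silently drops an $H$.

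This is not just bookkeeping. A single global Cauchy--Schwarz costs $\sqrt{H\sum_h\abs{\calS_h}\abs{\calS_{h+1}}}$. This exceeds $\sum_h\sqrt{\abs{\calS_h}\abs{\calS_{h+1}}}\le S$ by a factor $\sqrt H$ when two adjacent layers hold most of the states.

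The missing step is to work one layer $h$ of $(u,v)$ at a time; the paper does this via AM--GM with a separately tuned $\alpha$ per layer.
\begin{itemize}
\item For fixed $h$, $\sum_{(u,v,w)\in\calS_h\times\calA\times\calS_{h+1}}q^{\pi_t}(u,v)P(w\mid u,v)q^{\pi_t}(s\mid w)\le q^{\pi_t}(s)$ carries no factor $H$.
\item The count side is $\lesssim\abs{\calS_h}\abs{\calS_{h+1}}A\iota^2$ by \cref{lem:occup_bound}.
\item Hence layer $h$ contributes $\lesssim\sqrt{H^3\abs{\calS_h}\abs{\calS_{h+1}}A\iota^2\sum_{t,s}q^{\pi_t}(s)g_t(s)^2}$.
\item Summing over $h$ with $\sqrt{\abs{\calS_h}\abs{\calS_{h+1}}}\le(\abs{\calS_h}+\abs{\calS_{h+1}})/2$ gives the stated $\sqrt{H^3S^2A\iota^2\sum qg^2}$.
\end{itemize}

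\paragraph{Lower-order term.} Your argument for avoiding a second $H$ in the remainder also fails. The identity $\sum_{\tils}q^{\pi_t}(\tils)q^{\pi_t}(u,v\mid\tils)=(h(u)+1)q^{\pi_t}(u,v)$ counts layers before $u$, whereas $\sum_s q^{\tilP,\pi_t}(s\mid z)$ counts layers after $z$. These ranges are independent, so for $(u,v)$ in the middle of the horizon their product is of order $H^2$. Your fallback of an extra $H$ is what this route actually yields.

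The paper's write-up does not avoid this either. Its own bounds give $H\term_2\lesssim H^2S$ and $H\term_3,H\term_4\lesssim HS^2$ in front of $\sum_{u,v}q^{\pi}(u,v\mid\tils)\iota/n_t(u,v)$. That is the coefficient $HS^2$ announced at the start of its proof, but only $S^2$ is carried into \eqref{eq:cond_occup_diff}. Tracked consistently, this route gives $H^2S^3AG\iota^2$, not $HS^3AG\iota^2$.

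Finally, both you and the paper use $\sum_s q^{\tilP,\pi_t}(s\mid z)\le H$. This holds for a single kernel, but not when $\tilP=\tilP^{s,\tils}_t$ varies with $s$. In that case each summand is only bounded by $1$, so the sum is bounded only by the number of states after $z$.
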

\begin{proof}
For each layer $ h \in \{0, \dots, H-1\} $, let $ W_h \coloneqq \mathcal{S}_h \times \mathcal{A} \times \mathcal{S}_{h+1} $. Throughout the proof, whenever summation indices are omitted, expressions such as $ \sum_{u,v,w} $ or $ \sum_{x,y,z} $ stand for $ \sum_{h=0}^{H-1} \sum_{(s,a,s') \in W_h} $.

First, we prove that for any $\tils$, and policy $\pi$,
\begin{align}
    \sum_s\abs*{ q^{\pi}(s\mid \tils) - q^{\tilP^{s,\tils}_t,\pi}(s\mid \tils)}
    &\lesssim \sum_s\sum_{u,v,w} q^{\pi}(u,v\mid \tils) \sqrt{\frac{P(w\mid u,v)\iota}{n_t(u,v)}}q^{\pi}(s\mid w) + HS^2\sum_{u,v}\frac{q^{\pi}(u,v\mid \tils)\iota}{n_t(u,v)}.
\end{align}

By \cref{lem:occ_diff_lem}, we have
{\small
\begin{align}
&\sum_s\abs*{ q^{\pi}(s\mid \tils) - q^{\tilP^{s,\tils}_t,\pi}(s\mid \tils)} \\
& \leq  \sum_s\sum_{u,v,w} q^{\pi}(u,v\mid \tils)\abs*{ P(w\mid u,v) - \tilP^{s,\tils}_t(w\mid u,v) } q^{\tilP^{s,\tils}_t,\pi}(s\mid w) \tag{by \cref{lem:occ_diff_lem}}\\
& \leq  \sum_s \sum_{u,v,w} q^{P,\pi}(u,v\mid \tils)\abs*{ P(w\mid u,v) - \tilP^{s,\tils}_t(w\mid u,v) } q^{\pi}(s\mid w) \\
& \quad +  \sum_s\sum_{u,v,w} q^{\pi}(u,v\mid \tils)\abs*{ P(w\mid u,v) - \tilP^{s,\tils}_t(w\mid u,v) } \prn*{q^{\tilP^{s,\tils}_t,\pi}(s\mid w) - q^{\pi}(s\mid w)} \\
& \leq  \sum_s\sum_{u,v,w} q^{\pi}(u,v\mid \tils)\abs*{ P(w\mid u,v) - \tilP^{s,\tils}_t(w\mid u,v) } q^{\pi}(s\mid w) \\
& \quad + \sum_s\sum_{u,v,w} q^{\pi}(u,v\mid \tils)\abs*{ P(w\mid u,v) - \tilP^{s,\tils}_t(w\mid u,v) }  \sum_{x,y,z} q^{\pi}(x,y\mid w)\abs*{ P(z\mid x,y) - \tilP^{s,\tils}_t(z\mid x,y)}q^{\tilP^{s,\tils}_t,\pi}(s\mid z)\tag{by \cref{lem:occ_diff_lem}}\\
& \leq  \sum_s\sum_{u,v,w} q^{\pi}(u,v\mid \tils)\abs*{ P(w\mid u,v) - \tilP^{s,\tils}_t(w\mid u,v) } q^{\pi}(s\mid w) \\
& \quad + H\sum_{u,v,w} q^{\pi}(u,v\mid \tils)\abs*{ P(w\mid u,v) - \tilP^{s,\tils}_t(w\mid u,v) }  \sum_{x,y,z} q^{\pi}(x,y\mid w)\abs*{ P(z\mid x,y) - \tilP^{s,\tils}_t(z\mid x,y)} \\
&\lesssim  \sum_s\sum_{u,v,w} q^{\pi}(u,v\mid \tils)\prn*{\sqrt{\frac{P(w\mid u,v)\iota}{n_t(u,v)}} + \frac{\iota}{n_t(u,v)}}q^{\pi}(s\mid w) \\
&\qquad + H\sum_{u,v,w} \sum_{x,y,z} q^{\pi}(u,v\mid \tils)\prn*{\sqrt{\frac{P(w\mid u,v)\iota}{n_t(u,v)}} + \frac{\iota}{n_t(u,v)}}q^{\pi}(x,y\mid w)\min\set*{\sqrt{\frac{P(z\mid x,y)\iota}{n_t(x,y)}} + \frac{\iota}{n_t(x,y)},  1}   \tag{by \cref{lem:tildeP-P} and the assumption that $\calE$ holds}\\
&\leq  \sum_s\sum_{u,v,w} q^{\pi}(u,v\mid \tils) \sqrt{\frac{P(w\mid u,v)\iota}{n_t(u,v)}}q^{\pi}(s\mid w)\\
&\qquad + \underbrace{ \sum_s\sum_{u,v,w} q^{\pi}(u,v\mid \tils)\frac{\iota}{n_t(u,v)}q^{\pi}(s\mid w)}_{\term_1} \\
&\qquad + H\underbrace{\sum_{u,v,w}\sum_{x,y,z} q^{\pi}(u,v\mid \tils)\sqrt{\frac{P(w\mid u,v)\iota}{n_t(u,v)}}q^{\pi}(x,y\mid w)\sqrt{\frac{P(z\mid x,y)\iota}{n_t(x,y)}} }_{\term_2}\\
&\qquad  + H\underbrace{\sum_{u,v,w}\sum_{x,y,z}q^{\pi}(u,v\mid \tils) \sqrt{\frac{P(w\mid u,v)\iota}{n_t(u,v)}} q^{\pi}(x,y\mid w)\min\set*{\frac{\iota}{n_t(x,y)}, \ 1}}_{\term_3}\\
&\qquad + H\underbrace{\sum_{u,v,w}\sum_{x,y,z} q^{\pi}(u,v\mid \tils)\frac{\iota}{n_t(u,v)}q^{\pi}(x,y\mid w)}_{\term_4}.
\end{align}
}
We bound $\term_1$ to $\term_4$ separately.
\begin{align}
    \term_1 \leq \sum_s\sum_{u,v,w} \frac{q^{\pi}(u,v\mid \tils)\iota}{n_t(u,v)}\leq S^2\sum_{u,v} \frac{q^{\pi}(u,v\mid \tils)\iota}{n_t(u,v)}. \label{eq:conditional_term1} 
\end{align}
For $\term_2$, we have
\begin{align}
    &\term_2\\
    & =\sum_{u,v,w}\sum_{x,y,z} \sqrt{\frac{q^{\pi}(u,v\mid \tils) P(z\mid x,y) q^{\pi}(x,y\mid w) \iota}{n_t(u,v)}}\sqrt{\frac{q^{\pi}(u,v\mid \tils) P(w\mid u,v) q^{\pi}(x,y\mid w) \iota}{n_t(x,y)}} \\
    &\leq \sqrt{\sum_{u,v,w}\sum_{x,y,z}\frac{q^{\pi}(u,v\mid \tils) P(z\mid x,y) q^{\pi}(x,y\mid w) \iota}{n_t(u,v)}}\sqrt{\sum_{u,v,w}\sum_{x,y,z}\frac{q^{\pi}(u,v\mid \tils) P(w\mid u,v) q^{\pi}(x,y\mid w) \iota}{n_t(x,y)}} \tag{by the Cauchy--Schwarz inequality} \\
    &\leq \sqrt{H\sum_{u,v,w}\frac{q^{\pi}(u,v\mid \tils)  \iota}{n_t(u,v)}}\sqrt{H\sum_{x,y,z}\frac{q^{\pi}(x,y\mid \tils)\iota}{n_t(x,y)}}  \\
    &\leq HS \sum_{u,v} \frac{q^{\pi}(u,v\mid \tils)\iota}{n_t(u,v)},\label{eq:conditional_term2} 
\end{align}
where the third line follows from the facts that 
$ \sum_{(x,y,z) \in W_h} q^{\pi}(x,y \mid w) P(z \mid x,y) \leq 1 $
and 
$ \sum_{(u,v,w) \in W_h} q^{\pi}(u,v \mid \tils) P(w \mid u,v) q^{\pi}(x,y \mid w) \le q^{\pi}(x,y \mid \tils)$
for any layer $ h \in \{0, \dots, H-1\}$.

For $\term_3$, we obtain
\begin{align}
    \term_3 
    &\leq \sum_{u,v,w}\sum_{x,y,z} q^{\pi}(u,v\mid  \tils) \prn*{P(w\mid u,v) + \frac{\iota}{n_t(u,v)}}q^{\pi}(x,y\mid w)  \min\set*{\frac{\iota}{n_t(x,y)}, 1} \\
    &\leq \sum_{u,v,w}\sum_{x,y,z} q^{\pi}(u,v\mid  \tils)P(w\mid u,v)q^{\pi}(x,y\mid w)  \frac{\iota}{n_t(x,y)}\\
    &\qquad + \sum_{u,v,w}\sum_{x,y,z} q^{\pi}(u,v\mid  \tils)\frac{\iota}{n_t(u,v)}q^{\pi}(x,y\mid w) \\
    &\leq H\sum_{x,y,z}q^{\pi}(x,y \mid \tils)\frac{\iota}{n_t(x,y)} + S\sum_{u,v,w}q^{\pi}(u,v \mid \tils)\frac{\iota}{n_t(u,v)} \\
    &\lesssim  S^2\sum_{u,v}\frac{q^{\pi}(u,v \mid \tils)\iota}{n_t(u,v)}, \label{eq:conditional_term3} 
\end{align}
where the third line follows from 
$ \sum_{(u,v,w) \in W_h} q^{\pi}(u,v \mid \tils) P(w \mid u,v) q^{\pi}(x,y \mid w) \leq q^{\pi}(x,y \mid \tils)$
and 
$ \sum_{(x,y,z) \in W_h} q^{\pi}(x,y \mid w) \leq \abs{\calS_{h+1}} $, 
and the last line uses 
$ \sum_{(u,v,w)\in W_h} q^{P,\pi}(u,v \mid \tils) \le |\calS_{h+1}| \sum_{(u,v)\in \calS_h \times \calA} q^{\pi}(u,v \mid \tils) $.

Similarly, we have
\begin{align}
    \term_4 \leq S\sum_{u,v,w}q^{\pi}(u,v\mid \tils)\frac{\iota}{n_t(u,v)} \leq S^2\sum_{u,v}\frac{q^{\pi}(u,v\mid \tils)\iota}{n_t(u,v)}. \label{eq:conditional_term4} 
\end{align}
Combining with \cref{eq:conditional_term1,eq:conditional_term2,eq:conditional_term3,eq:conditional_term4}, we obtain
\begin{align}
    \sum_s \abs*{ q^{\pi}(s\mid \tils) - q^{\tilP^{s,\tils}_t,\pi}(s\mid \tils)}
    &\lesssim \sum_s \sum_{u,v,w} q^{\pi}(u,v\mid \tils) \sqrt{\frac{P(w\mid u,v)\iota}{n_t(u,v)}}q^{\pi}(s\mid w) + S^2\sum_{u,v}\frac{q^{\pi}(u,v\mid \tils)\iota}{n_t(u,v)}. \label{eq:cond_occup_diff}
\end{align}

We now apply \cref{eq:cond_occup_diff}. Then
\begin{align}
        &\sum_{t=1}^T \sum_{\tils}q^{\pi_t}(\tils)\sum_s\abs*{ q^{\pi_t}(s\mid \tils) - q^{\tilP^{s,\tils}_t,\pi_t}(s\mid \tils)}g_t(s) \\
        &\lesssim \sum_{t=1}^T \sum_{\tils}q^{\pi_t}(\tils) \prn*{ \sum_s\sum_{u,v,w} q^{\pi_t}(u,v\mid \tils) \sqrt{\frac{P(w\mid u,v)\iota}{n_t(u,v)}}q^{\pi_t}(s\mid w) + S^2\sum_{u,v}\frac{q^{\pi_t}(u,v\mid \tils)\iota}{n_t(u,v)} } g_t(s) \\
        &\leq H\sum_{t=1}^T\prn*{ \sum_s\sum_{u,v,w} q^{\pi_t} (u,v)\sqrt{\frac{P(w\mid u,v)\iota}{n_t(u,v)}}q^{\pi_t}(s\mid w) + S^2\sum_{u,v}\frac{q^{\pi_t}(u,v)\iota}{n_t(u,v)} } g_t(s) \\
        &\leq H\sum_{t=1}^T \prn*{\sum_s\sum_{u,v,w} q^{\pi_t}(u,v) \sqrt{\frac{P(w\mid u,v)\iota}{n_t(u,v)}}q^{\pi_t}(s\mid w)}g_t(s) + HS^2G \sum_{t=1}^T \sum_{u,v} \frac{q^{\pi_t}(u,v)\iota}{n_t(u,v)}. \label{eq:conditional_decomp}
    \end{align}
We first bound the first term in \cref{eq:conditional_decomp}. Fix $h$. Then
    \begin{align}
        &H\sum_{t=1}^T \sum_s \prn*{\sum_{(u,v,w) \in W_h} q^{\pi_t}(u,v) \sqrt{\frac{P(w\mid u,v)\iota}{n_t(u,v)}}q^{\pi_t}(s\mid w)}g_t(s) \\
        &\leq H\sum_{t=1}^T \sum_s \prn*{ \sum_{(u,v,w) \in \calS_h\times\calA\times\calS_{h+1}} q^{\pi_t}(u,v)\prn*{\alpha P(w\mid u,v) g_t(s)^2 + \frac{\iota}{\alpha n_t(u,v)}}q^{\pi_t}(s\mid w)} \tag{holds for any $\alpha>0$ by the AM-GM inequality}\\
        &= \alpha H \sum_{t=1}^T\sum_s\sum_{(u,v,w) \in \calS_h\times\calA\times\calS_{h+1}} q^{\pi_t}(u,v) P(w\mid u,v) q^{\pi_t}(s\mid w)g_t(s)^2\\
        &\qquad + \frac{H}{\alpha}\sum_{t=1}^T \sum_s \sum_{(u,v,w) \in \calS_h\times\calA\times\calS_{h+1}} \frac{q^{\pi_t}(u,v)\iota}{n_t(u,v)}q^{\pi_t}(s\mid w)  \\
        &\leq \alpha H \sum_{t=1}^T\sum_s  q^{\pi_t}(s)g_t(s)^2 + \frac{H^2\abs{\calS_{h+1}}}{\alpha}\sum_{t=1}^T \sum_{(u,v) \in \calS_h \times \calA} \frac{q^{\pi_t}(u,v)\iota}{n_t(u,v)} \\
        &\lesssim \alpha H \sum_{t=1}^T\sum_s  q^{\pi_t}(s)g_t(s)^2 + \frac{H^2\abs{\calS_{h+1}}\abs{\calS_h}A\iota^2}{\alpha} \tag{by \cref{lem:occup_bound} and the assumption that $\calE$ holds}\\
        &\lesssim \sqrt{H^3 \abs{\calS_h}\abs{\calS_{h+1}} A  \iota^2 \sum_{t=1}^T \sum_s q^{P, \pi_t}(s)g_t(s)^2}   \tag{by picking the optimal $\alpha$} \\
        &\leq (\abs{\calS_h} + \abs{\calS_{h+1}})\sqrt{H^3 A  \iota^2 \sum_{t=1}^T \sum_s q^{P, \pi_t}(s)g_t(s)^2}.\tag{by the AM--GM inequality} \\
    \end{align}
Summing over $h$ gives
\begin{align}
    \sum_{h=0}^{H-1}(\abs{\calS_h} + \abs{\calS_{h+1}})\sqrt{H^3 A \iota^2 \sum_{t=1}^T \sum_s q^{\pi_t}(s)g_t(s)^2}
     \lesssim \sqrt{H^3 S^2A  \iota^2 \sum_{t=1}^T \sum_s q^{\pi_t}(s)g_t(s)^2}. \label{eq:conditional_decomp1}
\end{align}
It remains to bound the second term in \cref{eq:conditional_decomp}. By \cref{lem:occup_bound} and the event $\calE$,
    \begin{align}
        HS^2G \sum_{t=1}^T \sum_{u,v} \frac{q^{\pi_t}(u,v)\iota}{n_t(u,v)} 
        &\lesssim HS^3AG\iota^2. \label{eq:conditional_decomp2}
    \end{align}
Combining \cref{eq:conditional_decomp1,eq:conditional_decomp2}, we complete the proof.
\end{proof}

\subsection{Data-dependent and self-bounding lemmas}
\begin{lem}\label{lem:max_to_value}
It holds that 
\begin{align}   
    \E\brk*{\sum_{t=1}^T\sum_{s,a} q^{\pi_t}(s,a)\max_{\tilP\in\calP_t} Q^{\tilP, \pi_t}(s, a;\prn*{\ell_t - m_t}^2)} \lesssim H\E\brk*{\sum_{t=1}^TV^{\pi_t}(s_0;\prn*{\ell_t - m_t}^2)} + HS^3A\iota^2.
\end{align}  
\end{lem}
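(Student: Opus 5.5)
Write $g_t \coloneqq (\ell_t-m_t)^2 \in[0,1]$, a nonnegative function. The plan is to rewrite the left-hand side as a sum of occupancies times future losses under an optimistic kernel, and then swap that optimistic occupancy for the true one using \cref{lem:complicated_lemma_cond}. On the complement of $\calE$ the left-hand side is at most $H^2T$ per trajectory, so that event contributes $\calO(H^2T\cdot\delta)=\calO(1)$ after taking expectations with $\delta=1/T^2$. We may therefore work on $\calE$ throughout.

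\textbf{Step 1 (rewriting the max).} For each $(\tils,\tilde a)$, let $\tilP_t^{\tils,\tilde a}\in\calP_t$ attain $\max_{\tilP\in\calP_t}Q^{\tilP,\pi_t}(\tils,\tilde a;g_t)$. Then
\begin{align}
\max_{\tilP\in\calP_t} Q^{\tilP,\pi_t}(\tils,\tilde a;g_t)=\sum_{s}q^{\tilP_t^{\tils,\tilde a},\pi_t}(s\mid \tils,\tilde a)\,\bar g_t(s),
\qquad
\bar g_t(s)\coloneqq\sum_a\pi_t(a\mid s)g_t(s,a)\in[0,1].
\end{align}
Here the conditional occupancy is taken with the first action fixed to $\tilde a$. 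Adding and subtracting the true conditional occupancy splits the left-hand side into two parts.

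\textbf{Step 2 (main term).} The main term is
\begin{align}
\sum_{t}\sum_{\tils,\tilde a}q^{\pi_t}(\tils,\tilde a)\sum_s q^{\pi_t}(s\mid\tils,\tilde a)\bar g_t(s)
=\sum_t\sum_{\tils}q^{\pi_t}(\tils)\sum_s q^{\pi_t}(s\mid\tils)\bar g_t(s).
\end{align}
For each layer of $\tils$, the sum over $\tils$ of $q^{\pi_t}(\tils)q^{\pi_t}(s\mid\tils)$ is at most $q^{\pi_t}(s)$. There are $H$ layers, so this term is at most $H\sum_s q^{\pi_t}(s)\bar g_t(s)=H\,V^{\pi_t}(s_0;g_t)$.

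\textbf{Step 3 (error term).} The error term is
\begin{align}
\sum_t\sum_{\tils,\tilde a}q^{\pi_t}(\tils,\tilde a)\sum_s\abs*{q^{\tilP_t^{\tils,\tilde a},\pi_t}(s\mid\tils,\tilde a)-q^{\pi_t}(s\mid\tils,\tilde a)}\bar g_t(s).
\end{align}
This is the situation of \cref{lem:complicated_lemma_cond}, except that we condition on the pair $(\tils,\tilde a)$ rather than on $\tils$ alone. The proof of that lemma (the expansion \cref{eq:cond_occup_diff} followed by AM--GM per layer) goes through verbatim with $q^{\pi_t}(u,v\mid\tils,\tilde a)$ in place of $q^{\pi_t}(u,v\mid\tils)$, since the kernel is allowed to depend on the conditioning pair. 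With $G=1$ it yields
\begin{align}
\sqrt{H^3S^2A\iota^2\sum_t\sum_s q^{\pi_t}(s)\bar g_t(s)^2}+HS^3A\iota^2 .
\end{align}
Because $\bar g_t\le1$, we have $\bar g_t^2\le\bar g_t$, so the sum inside the root is at most $\sum_t V^{\pi_t}(s_0;g_t)$. AM--GM then bounds the square root by
\begin{align}
H\sum_t V^{\pi_t}(s_0;g_t)+HS^2A\iota^2 .
\end{align}
Here $HS^2A\iota^2$ is absorbed into $HS^3A\iota^2$.

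Combining Steps 2 and 3 and taking expectations gives the claim. The main obstacle is Step 3, namely checking that \cref{lem:complicated_lemma_cond} really transfers to action-conditioned occupancies with a maximizer that depends on $(\tils,\tilde a)$. If that transfer is awkward, the fallback is to bound the error via the state-conditioned version after noting that the maximizer for $(\tils,\tilde a)$ differs from the state-conditioned case only at the first transition. That first transition can be handled directly with \cref{lem:tildeP-P} and \cref{lem:occup_bound}, giving a term of the same order.
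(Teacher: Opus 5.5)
Your proposal is correct and is essentially the paper's proof. It uses the same split into a true-kernel term bounded by $H\sum_t V^{\pi_t}(s_0;(\ell_t-m_t)^2)$ and a kernel-difference term controlled by \cref{lem:complicated_lemma_cond} with $G=1$, followed by AM--GM. There are two harmless slips: your Step~1 identity should carry $g_t(\tils,\tilde a)$ rather than $\bar g_t(\tils)$ at the first state, which averages out under the weights $q^{\pi_t}(\tils,\tilde a)$; and the AM--GM leftover is $H^2S^2A\iota^2$, still absorbed into $HS^3A\iota^2$ since $H\le S$.

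The obstacle you flag does not arise in the paper. Because $\calP_t$ is rectangular, a single kernel $\overP^Q_t$ attains the maximum for every $(s,a)$ at once, so $\sum_a\pi_t(a\mid s)\,q^{\overP^Q_t,\pi_t}(s'\mid s,a)=q^{\overP^Q_t,\pi_t}(s'\mid s)$ puts the error term directly in the state-conditioned form of \cref{lem:complicated_lemma_cond}. Your per-pair transfer, running that lemma's argument on the policy that plays $\tilde a$ at $\tils$, is also valid, just unnecessary.
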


\begin{proof}
On the event$\bar{\calE}$, $\sum_{t=1}^T\sum_{s,a} q^{\pi_t}(s,a)\max_{\tilP\in\calP_t} Q^{\tilP,\pi_t}(s,a;(\ell_t-m_t)^2)=\calO(H^2T)$ gives a contribution $\calO(H^2T\delta)$, which is absorbed by taking $\delta=\calO(1/T^2)$. 

Hence, it suffices to prove the bound on $\calE$.
For each $t$, let $\overP^Q_t\in\calP_t$ be such that
\begin{align}
    Q^{\overP^Q_t,\pi_t}(s,a;\prn*{\ell_t-m_t}^2)
    = \max_{\tilP\in\calP_t}Q^{\tilP,\pi_t}(s,a;\prn*{\ell_t-m_t}^2)
\end{align}
for all $(s,a)$.
Then,
   \begin{align}
        &\sum_{t=1}^T \sum_{s,a} q^{\pi_t}(s, a) Q^{\overP^Q_t, \pi_t}(s, a;\prn*{\ell_t - m_t}^2)\\
        &\leq \sum_{t=1}^T \sum_{s,a} q^{\pi_t}(s, a) Q^{\pi_t}(s, a;\prn*{\ell_t - m_t}^2\\
        &\qquad + \sum_{t=1}^T \sum_{s,a} q^{\pi_t}(s, a)\prn*{Q^{\overP^Q_t, \pi_t}(s, a;\prn*{\ell_t - m_t}^2) -Q^{\pi_t}(s, a;\prn*{\ell_t - m_t}^2)}. \label{eq:qmax_decomp}
    \end{align}
For the first term, we have
\begin{align}
    &\sum_{t=1}^T \sum_{s,a} q^{\pi_t}(s,a)Q^{\pi_t}(s, a;\prn*{\ell_t - m_t}^2)\\
    &= \sum_{t=1}^T \sum_{s,a} q^{\pi_t}(s,a) \sum_{s',a'}q^{\pi_t}(s',a'\mid s, a)\prn*{\ell_t(s',a') - m_t(s',a')}^2 \\
    &\leq H\sum_{t=1}^T \sum_{s',a'}q^{\pi_t}(s',a')\prn*{\ell_t(s',a') - m_t(s',a')}^2 \\
    &= H \sum_{t=1}^TV^{\pi_t}(s_0; \prn*{\ell_t - m_t}^2) \label{eq:qmax_first}
\end{align}
For the second term, we obtain
\begin{align}
    &\sum_{t=1}^T \sum_{s,a} q^{\pi_t}(s, a)\prn*{Q^{\overP^Q_t, \pi_t}(s, a;\prn*{\ell_t - m_t}^2) -Q^{\pi_t}(s, a;\prn*{\ell_t - m_t}^2)} \\
    &= \sum_{t=1}^T \sum_{s,a} q^{\pi_t}(s, a)\sum_{s'}\prn*{q^{\overP^Q_t, \pi_t}(s'\mid s,a) - q^{\pi_t}(s'\mid s,a)}\sum_{a'}\pi_t(a'\mid s')\prn*{\ell_t(s',a') - m_t(s',a')}^2 \\
    &= \sum_{t=1}^T \sum_{s} q^{\pi_t}(s)\sum_{s'}\prn*{q^{\overP^Q_t, \pi_t}(s'\mid s) - q^{\pi_t}(s'\mid s)}\sum_{a'}\pi_t(a'\mid s')\prn*{\ell_t(s',a') - m_t(s',a')}^2 \\
    &\leq \sqrt{H^3S^2A  \iota^2 \sum_{t=1}^T \sum_{s',a'} q^{\pi_t}(s',a')\prn*{\ell_t(s',a') - m_t(s',a')}^2} + HS^3A\iota^2 \tag{by \cref{lem:complicated_lemma_cond}}\\
    &= \sqrt{H^3S^2A  \iota^2 \sum_{t=1}^T V^{\pi_t}(s_0; \prn*{\ell_t - m_t}^2)} + HS^3A\iota^2.\label{eq:qmax_second}
\end{align}

Combining \cref{eq:qmax_decomp,eq:qmax_first,eq:qmax_second}, we obtain
\begin{align}
&\sum_{t=1}^T \sum_{s,a} q^{\pi_t}(s, a) Q^{\overP^Q_t, \pi_t}(s, a;\prn*{\ell_t - m_t}^2)\\
&\leq H\sum_{t=1}^TV^{\pi_t}(s_0;\prn*{\ell_t - m_t}^2) + \sqrt{H^3S^2A  \iota^2 \sum_{t=1}^T V^{\pi_t}(s_0; \prn*{\ell_t - m_t}^2)} + HS^3A\iota^2\\
&\lesssim H\sum_{t=1}^TV^{\pi_t}(s_0;\prn*{\ell_t - m_t}^2) + HS^3A\iota^2,
\end{align}
where the last line follows from the AM--GM inequality since 
$ax + \sqrt{bx} \leq ax + ax + \frac{b}{2a}$ for $a,b,x \geq 0$.
Taking expectations completes the proof.
\end{proof}

\begin{lem}[{\citealt[Lemma E.9]{li2026data}}]
\label{lem:L-M_to_l-m}
It holds that
\begin{align}
    \E\brk*{\sum_{t=1}^T\sum_{s,a}\I_t(s,a)(L_{t,h(s)} - M_{t,h(s)})^2} \leq H^2\E\brk*{\sum_{t=1}^T\sum_{s,a}\I_t(s,a)(\ell_t(s,a) - m_t(s,a))^2}.
\end{align}
\end{lem}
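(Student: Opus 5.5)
This is a pathwise inequality, so I will prove it for every realization of episode $t$ and then sum over $t$ and take expectations. Fix $t$ and the realized trajectory $(s_{t,0},a_{t,0}),\ldots,(s_{t,H-1},a_{t,H-1})$. Because the MDP is layered, each layer is visited exactly once, so $\I_t(s,a)=1$ holds exactly for the pairs $(s,a)=(s_{t,h},a_{t,h})$, $h=0,\ldots,H-1$. Writing $e_{h'}\coloneqq \ell_t(s_{t,h'},a_{t,h'})-m_t(s_{t,h'},a_{t,h'})$, the left-hand summand for episode $t$ becomes
\begin{align}
\sum_{s,a}\I_t(s,a)\prn*{L_{t,h(s)}-M_{t,h(s)}}^2=\sum_{h=0}^{H-1}\prn*{\sum_{h'=h}^{H-1}e_{h'}}^2 .
\end{align}

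Next I apply the Cauchy--Schwarz inequality to each suffix sum. This gives $\prn{\sum_{h'=h}^{H-1}e_{h'}}^2\le (H-h)\sum_{h'=h}^{H-1}e_{h'}^2\le H\sum_{h'=0}^{H-1}e_{h'}^2$. Summing over the $H$ values of $h$ then yields at most $H^2\sum_{h'=0}^{H-1}e_{h'}^2$, and the last sum equals $H^2\sum_{s,a}\I_t(s,a)(\ell_t(s,a)-m_t(s,a))^2$.

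Finally, summing over $t$ and taking expectations gives the claim. There are three points to check. Virtual episodes have no realized trajectory, so they contribute zero on both sides, or can be excluded. The sum over $(s,a)$ runs over all layers $h=0,\ldots,H-1$, which matches the range of $h$ above. The layered structure guarantees that $h(s_{t,h})=h$, so the suffix sums line up with the definitions of $L_{t,h}$ and $M_{t,h}$.

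There is no real obstacle here; the one step needing care is the identification of $\I_t(s,a)$ with the single visited pair in each layer. If a sharper constant were wanted, one could keep the factor $\sum_h (H-h)\le H^2$ exactly, but the stated bound with $H^2$ already suffices.
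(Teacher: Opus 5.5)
Your proof is correct. The layered structure means exactly one pair per layer has $\I_t(s,a)=1$, so the episode-$t$ summand equals $\sum_{h}\bigl(\sum_{h'\ge h}e_{h'}\bigr)^2\le \frac{H(H+1)}{2}\sum_{h'}e_{h'}^2$ pathwise by Cauchy--Schwarz; summing over $t$ and taking expectations gives the claim, and virtual episodes either contribute nothing or obey the same pathwise bound. The paper gives no proof of its own and simply imports the result from \citet[Lemma E.9]{li2026data}; your argument is the standard one and is complete.
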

\begin{lem}[{\citealt[Lemma F.12]{li2026data}}]\label{lem:general_predict_result}
Suppose $m_t$ is defined in \cref{def:predictor_sequence} with some constant $\xi \in (0, \frac{1}{2})$. Then, for any comparator policy $\pi$, it holds that
\begin{align}
    \E\brk*{\sum_{t=1}^TV^{\pi_t}(s_0;\prn*{\ell_t-m_t}^2)}
    &= \E\brk*{\sum_{t=1}^T\sum_{s,a}\I_t(s,a)(\ell_t(s,a) - m_t(s,a))^2}\\
    &\lesssim \min\set*{L(\pi)  + \Reg_T(\pi), HT - L(\pi)  - \Reg_T(\pi), Q_{\infty}, V_1} + SA.
\end{align}
\end{lem}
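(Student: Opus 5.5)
The plan is to reduce everything to a per-state-action analysis of the exponential-averaging predictor along the visit times of that pair, and then sum over $(s,a)$.

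\textbf{Step 1 (the equality).} The loss $\ell_t$ and the prediction $m_t$ are $\calF_t$-measurable. The loss is chosen before episode $t$ and does not depend on the learner's internal randomness in that episode. Moreover, $\E_t[\I_t(s,a)]=q^{\pi_t}(s,a)$. By the tower rule,
\[
\E\Big[\sum_{t}\sum_{s,a}\I_t(s,a)(\ell_t(s,a)-m_t(s,a))^2\Big]=\E\Big[\sum_t V^{\pi_t}(s_0;(\ell_t-m_t)^2)\Big].
\]
Virtual episodes leave $m_t$ unchanged and involve no visits, so they contribute nothing and can be ignored.

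\textbf{Step 2 (per-pair recursion).} Fix $(s,a)$. Let $\tau_1<\tau_2<\cdots$ be the episodes in which $\I_t(s,a)=1$, and write $x_k=\ell_{\tau_k}(s,a)$ and $y_k=m_{\tau_k}(s,a)$. By \cref{def:predictor_sequence}, $y_{k+1}=(1-\xi)y_k+\xi x_k$, and the left-hand side restricted to $(s,a)$ equals $\sum_k (x_k-y_k)^2$.

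\textbf{Step 3 (second-order and first-order bounds).} For any fixed constant $c\in[0,1]$, apply the exact identity $(1-\xi)\alpha^2+\xi\beta^2-((1-\xi)\alpha+\xi\beta)^2=\xi(1-\xi)(\alpha-\beta)^2$ with $\alpha=y_k-c$ and $\beta=x_k-c$. This gives
\[
\xi(1-\xi)(x_k-y_k)^2\le (y_k-c)^2-(y_{k+1}-c)^2+\xi(x_k-c)^2 .
\]
Telescoping over $k$ yields $\sum_k(x_k-y_k)^2\lesssim 1+\sum_k(x_k-c)^2$, since $\xi$ is a constant.

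Now choose $c=\ell^\star(s,a)$ and sum over $(s,a)$. Each layer is visited at most once per episode, so $\sum_{s,a}\I_t(s,a)(\ell_t-\ell^\star)^2\le\sum_h\|\ell_t(h)-\ell^\star(h)\|_\infty^2$. Taking expectations and minimizing over $\ell^\star$ gives $Q_\infty+SA$.

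Choosing $c=0$ and using $\ell^2\le\ell$, the bound becomes $\E[\sum_t V^{\pi_t}(s_0;\ell_t)]+SA=L(\pi)+\Reg_T(\pi)+SA$. Choosing $c=1$ and using $(1-\ell)^2\le 1-\ell$ together with $\sum_{s,a}\I_t(s,a)=H$ gives $HT-L(\pi)-\Reg_T(\pi)+SA$.

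\textbf{Step 4 (path length).} Let $d_k=|x_k-y_k|\le1$. Since $x_{k-1}-y_k=(1-\xi)(x_{k-1}-y_{k-1})$, the triangle inequality gives $d_k\le|x_k-x_{k-1}|+(1-\xi)d_{k-1}$. Unrolling this recursion and summing the geometric series,
\[
\sum_k d_k^2\le\sum_k d_k\le\xi^{-1}\Big(1+\sum_k|x_k-x_{k-1}|\Big).
\]
Each increment satisfies $|x_k-x_{k-1}|\le\sum_{t=\tau_{k-1}}^{\tau_k-1}|\ell_{t+1}(s,a)-\ell_t(s,a)|$. Summing over $k$ and then over $(s,a)$ gives at most $\sum_t\|\ell_{t+1}-\ell_t\|_1$ pathwise, so taking expectations yields $V_1+SA$.

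\textbf{Main obstacle.} The main subtlety is making Step 1 rigorous under the adaptive adversary. It relies on $\ell_t$ being independent of the episode-$t$ trajectory given $\calF_t$, and on the fact that virtual episodes do not move $m$ or the visit counts. The remaining steps are deterministic and pathwise.
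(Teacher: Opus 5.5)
Your proof is correct. It is essentially the standard argument for this exponential-averaging predictor; the paper itself gives no proof and imports the lemma from \citet[Lemma F.12]{li2026data}. Concretely, you pass to each pair's visit subsequence, telescope the identity against a fixed $c\in\{0,1,\ell^\star(s,a)\}$ for the first- and second-order bounds, and unroll the $(1-\xi)$-contraction of $x_{k-1}-y_k$ for the path-length bound.

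Two small precision fixes are needed. In Step~1, condition on $\sigma(\calF_t,\ell_t)$ rather than asserting that $\ell_t$ is $\calF_t$-measurable, which fails in general, for example in the stochastic regime. Also, the first expression must be summed over real episodes only: in a virtual episode $\ell_t=0$ makes $V^{\pi_t}(s_0;m_t^2)>0$ while $\I_t\equiv 0$, so the equality breaks if virtual episodes are included. Summing over $\calT_r$ is exactly how the lemma is applied in the bandit analysis.
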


\begin{lem}\label{lem:qtrans_m_to_ell}
For any sequence of policies $\{\pi_t\}_{t=1}^T$ and any prediction sequence
$\{m_t\}_{t=1}^T$, it holds that
\begin{align}
    \Qtrans^{\pi_{1:T}}(m)
    &\leq
    2\Qtrans^{\pi_{1:T}}(\ell)
    +
    2H\sum_{t=1}^T V^{\pi_t}(s_0;(\ell_t-m_t)^2).
    \label{eq:qtrans_m_to_ell}
\end{align}
\end{lem}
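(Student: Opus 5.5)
Fix an arbitrary admissible sign pattern for $m$: functions $\phi_t$ with $|\phi_t(s,a)|=m_t(s,a)$ for all $t,s,a$. It suffices to bound
\[
\sum_t\sum_{s,a}q^{\pi_t}(s,a)\Var_{s'\sim P(\cdot\mid s,a)}\brk*{V^{\pi_t}(s';\phi_t)}
\]
by the right-hand side of \cref{eq:qtrans_m_to_ell}, and then take the supremum over such $\phi_t$. Write $\phi_t=\sigma_t m_t$ with $\sigma_t(s,a)\in\{-1,+1\}$; at points where $m_t(s,a)=0$ we are free to choose the sign. Define $\phi'_t\coloneqq\sigma_t\ell_t$. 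Then $|\phi'_t(s,a)|=\ell_t(s,a)$, so $\phi'_t$ is admissible in the supremum defining $\Qtrans^{\pi_{1:T}}(\ell)$. Moreover $\phi_t-\phi'_t=\sigma_t(m_t-\ell_t)$, hence $|\phi_t-\phi'_t|=|\ell_t-m_t|$ pointwise.

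Next, use linearity of the value function in the loss: $V^{\pi_t}(s';\phi_t)=V^{\pi_t}(s';\phi'_t)+V^{\pi_t}(s';\psi_t)$, where $\psi_t\coloneqq\phi_t-\phi'_t$. Since $\Var[X+Y]\le 2\Var[X]+2\Var[Y]$ for random variables on the same probability space (here $s'\sim P(\cdot\mid s,a)$),
\[
\Var\brk*{V^{\pi_t}(s';\phi_t)}\le 2\Var\brk*{V^{\pi_t}(s';\phi'_t)}+2\Var\brk*{V^{\pi_t}(s';\psi_t)}.
\]
Multiply by $q^{\pi_t}(s,a)$ and sum over $t,s,a$. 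The first term is at most $2\Qtrans^{\pi_{1:T}}(\ell)$, because $\phi'_t$ is admissible.

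For the second term, note that $\psi_t$ satisfies $|\psi_t|=|\ell_t-m_t|=:g_t\ge 0$, so it is admissible in the supremum defining $\Qtrans^{\pi_{1:T}}(g)$. By \cref{lem:Qtrans_bound},
\[
\sum_t\sum_{s,a}q^{\pi_t}(s,a)\Var\brk*{V^{\pi_t}(s';\psi_t)}\le H\sum_t\sum_{s,a}q^{\pi_t}(s,a)(\ell_t-m_t)^2(s,a)=H\sum_tV^{\pi_t}(s_0;(\ell_t-m_t)^2).
\]
Together with the factor $2$, this gives the claimed term $2H\sum_tV^{\pi_t}(s_0;(\ell_t-m_t)^2)$, and taking the supremum over $\phi_t$ finishes the argument.

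The only delicate point is building an admissible $\ell$-pattern from an arbitrary $m$-pattern, which we do by reusing the signs $\sigma_t$. The rest is the elementary variance inequality combined with \cref{lem:Qtrans_bound}. No obstacle is expected; we just need to note that \cref{lem:Qtrans_bound} is stated for arbitrary sign patterns, which it is.
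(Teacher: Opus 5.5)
Your proof is correct and matches the paper's argument essentially line for line. Both split an admissible $m$-pattern $\phi_t$ into the sign-matched $\ell$-pattern plus a remainder of magnitude $|\ell_t-m_t|$, then use linearity of $V^{\pi_t}$ and $\Var[X+Y]\le 2\Var[X]+2\Var[Y]$, and finish with \cref{lem:Qtrans_bound}. Your explicit choice of sign where $m_t(s,a)=0$ is a small but genuine improvement over the paper's use of $\sgn(\phi_t(s,a))$, which would break admissibility at such points; this is harmless for the algorithm, where $m_t>0$ always, but needed for the lemma as stated for arbitrary $m_t$.
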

\begin{proof}
Fix any sequence $\phi_t$ such that $\abs*{\phi_t(s,a)}=m_t(s,a)$ for all $t,s,a$. Define
\begin{align}
    \phi_t^\ell(s,a)\coloneqq \sgn(\phi_t(s,a))\ell_t(s,a),\qquad
    \phi_t^\Delta(s,a)\coloneqq \phi_t(s,a)-\phi_t^\ell(s,a).
\end{align}
Then
\begin{align}
    \abs*{\phi_t^\ell(s,a)}=\ell_t(s,a),\qquad \abs*{\phi_t^\Delta(s,a)}=\abs*{\ell_t(s,a)-m_t(s,a)}.
\end{align}
Moreover, for any state $s$, 
\begin{align}
    V^{\pi_t}(s;\phi_t)=V^{\pi_t}(s;\phi_t^\ell)+V^{\pi_t}(s;\phi_t^\Delta).
\end{align}
By using $\Var(X+Y)\leq 2\Var(X)+2\Var(Y)$, we have
\begin{align}
    &\sum_{t=1}^T\sum_{s,a}q^{\pi_t}(s,a)\Var_{s'\sim P(\cdot\mid s,a)}\brk*{V^{\pi_t}(s';\phi_t)}\\
    &\leq 2\sum_{t=1}^T\sum_{s,a}q^{\pi_t}(s,a)\Var_{s'\sim P(\cdot\mid s,a)}\brk*{V^{\pi_t}(s';\phi_t^\ell)}+2\sum_{t=1}^T\sum_{s,a}q^{\pi_t}(s,a)\Var_{s'\sim P(\cdot\mid s,a)}\brk*{V^{\pi_t}(s';\phi_t^\Delta)}\\
    &\leq 2\Qtrans^{\pi_{1:T}}(\ell)+2\Qtrans^{\pi_{1:T}}(\abs{\ell-m}).
\end{align}
Since this holds for any sequence $\phi_t$ with $\abs*{\phi_t(s,a)}=m_t(s,a)$, taking the supremum yields
\begin{align}
    \Qtrans^{\pi_{1:T}}(m)\leq 2\Qtrans^{\pi_{1:T}}(\ell)+2\Qtrans^{\pi_{1:T}}(\abs{\ell-m}).
\end{align}
Finally, \cref{lem:Qtrans_bound} with $g_t=\abs{\ell_t-m_t}$ gives
\begin{align}
    \Qtrans^{\pi_{1:T}}(\abs{\ell-m})\leq H\sum_{t=1}^T V^{\pi_t}(s_0;(\ell_t-m_t)^2).
\end{align}
Therefore, 
\begin{align}
    \Qtrans^{\pi_{1:T}}(m)\leq 2\Qtrans^{\pi_{1:T}}(\ell)+2H\sum_{t=1}^T V^{\pi_t}(s_0;(\ell_t-m_t)^2).
\end{align}
\end{proof}

\begin{lem}[{\citet[Lemma C.5]{dann2023best}}] \label{lem:policy_diff_bound}
    For any two policies $\pi_1$ and $\pi_2$, it holds that
    \begin{align}
        \sum_{s,a}\abs*{q^{\pi_1}(s,a) - q^{\pi_2}(s,a)} \leq H \sum_{s,a}q^{\pi_1}(s)\abs*{\pi_1(a \mid s)- \pi_2(a\mid s)}.
    \end{align}
\end{lem}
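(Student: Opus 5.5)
We prove \cref{lem:policy_diff_bound} with the performance difference lemma (\cref{lem:pdl}), applied to one indicator loss per state-action pair. Fix $(s,a)$ and let $\ell^{s,a}\coloneqq\ind[(\cdot,\cdot)=(s,a)]$. Then $V^{\pi}(s_0;\ell^{s,a})=q^{\pi}(s,a)$ for every policy $\pi$, so \cref{lem:pdl} with the roles $\pi_1\to\pi_2$ (value) and $\pi_1$ (occupancy weights) gives
\begin{align}
q^{\pi_2}(s,a)-q^{\pi_1}(s,a)=\sum_{x}q^{\pi_1}(x)\sum_{y}\prn*{\pi_2(y\mid x)-\pi_1(y\mid x)}Q^{\pi_2}(x,y;\ell^{s,a}).
\end{align}
The key identity is $Q^{\pi_2}(x,y;\ell^{s,a})=q^{\pi_2}(s,a\mid x,y)$, the conditional occupancy of $(s,a)$ given that $(x,y)$ is visited. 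This quantity lies in $[0,1]$ and vanishes unless $h(x)\le h(s)$.

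Next we take absolute values and apply the triangle inequality:
\begin{align}
\abs*{q^{\pi_1}(s,a)-q^{\pi_2}(s,a)}\le\sum_{x,y}q^{\pi_1}(x)\abs*{\pi_1(y\mid x)-\pi_2(y\mid x)}\,q^{\pi_2}(s,a\mid x,y).
\end{align}
Summing over $(s,a)$ and exchanging the order of summation, it suffices to show $\sum_{s,a}q^{\pi_2}(s,a\mid x,y)\le H$ for every fixed $(x,y)$.

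For the bound, the layered structure means that, conditioned on $(x,y)$, exactly one state-action pair is visited in each layer $h\ge h(x)$, and none in earlier layers. Summing the conditional occupancy over layers therefore gives
\begin{align}
\sum_{s,a}q^{\pi_2}(s,a\mid x,y)=H-h(x)\le H.
\end{align}
Substituting this into the exchanged sum yields exactly $H\sum_{x,y}q^{\pi_1}(x)\abs{\pi_1(y\mid x)-\pi_2(y\mid x)}$, which is the claim.

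The only delicate point is the bookkeeping in \cref{lem:pdl}: we must weight by $q^{\pi_1}(x)$ and use the $Q$-function of $\pi_2$, not the reverse, since the weight must be the occupancy of $\pi_1$ as in the statement. With \cref{lem:pdl} written as $V^{\pi_1}-V^{\pi_2}=\sum q^{\pi_2}(\pi_1-\pi_2)Q^{\pi_1}$, we get this by swapping the names of the two policies and negating both sides. The absolute value makes the sign irrelevant.
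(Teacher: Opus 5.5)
Your proof is correct. Swapping the policy names in \cref{lem:pdl} gives exactly your displayed identity, and $Q^{\pi_2}(x,y;\ell^{s,a})\ge 0$. By linearity of $Q^{\pi_2}(x,y;\cdot)$ in the loss, $\sum_{s,a}Q^{\pi_2}(x,y;\ell^{s,a})=Q^{\pi_2}(x,y;\one)=H-h(x)$. Stating the layer count this way also avoids conditioning on $(x,y)$, which can be a null event under $\pi_2$ even when $q^{\pi_1}(x)>0$; $Q^{\pi_2}(x,y;\cdot)$ is defined for every pair regardless.

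The paper does not prove this lemma; it imports it from \citet{dann2023best}, so there is no in-paper argument to compare against. Two equally short routes are available from the paper's own toolkit.

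The first is your argument with a single signed loss $\phi(s,a)=\sgn\prn{q^{\pi_1}(s,a)-q^{\pi_2}(s,a)}$ in place of the $SA$ indicator losses. This is the same sign trick used in the proof of \cref{lem:complicated_lemma_var}. Then $\sum_{s,a}\abs{q^{\pi_1}(s,a)-q^{\pi_2}(s,a)}=V^{\pi_1}(s_0;\phi)-V^{\pi_2}(s_0;\phi)$, and \cref{lem:pdl} together with $\abs{Q^{\pi_2}(x,y;\phi)}\le H-h(x)$ finishes in one line. Your per-pair version is the same computation with the sum over $(s,a)$ taken outside the triangle inequality. It costs nothing extra and gives an exact formula for each coordinate of the difference.

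The second route is the layer recursion from the proof of \cref{lem:policy_diff_layer}, which avoids value functions entirely. Let $D_h$, $d_h$, $e_h$ denote the layer-$h$ sums of $\abs{q^{\pi_1}(s,a)-q^{\pi_2}(s,a)}$, $\abs{q^{\pi_1}(s)-q^{\pi_2}(s)}$ and $q^{\pi_1}(s)\nrm{\pi_1(\cdot\mid s)-\pi_2(\cdot\mid s)}_1$. Then $D_h\le e_h+d_h$, $d_{h+1}\le D_h$ and $d_0=0$, so $\sum_h D_h\le\sum_h (H-h)e_h$. This makes explicit that the factor $H$ counts the later layers to which a policy discrepancy at layer $h$ can propagate.

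All three arguments in fact give the slightly sharper weight $H-h(x)$ in place of $H$.
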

\begin{lem}\label{lem:policy_diff_layer}
    For any policy $\pi$ and layer $h$, it holds that
    \begin{align}
        \sum_{(s,a) \in \calS_h \times \calA}\abs*{q^{\pi}(s,a) - q^{\pist}(s,a)} \leq 2\sum_{h'=0}^{h } \sum_{s \in \calS_{h'}}\sum_{a\neq \pist(s)}q^{\pi}(s,a).
    \end{align}
\end{lem}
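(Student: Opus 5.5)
The plan is to work layer by layer, using the layered structure and the fact that $\pist$ is deterministic. Fix $h$. For each $(s,a)\in\calS_h\times\calA$ write $q^{\pi}(s,a)=q^{\pi}(s)\pi(a\mid s)$ and $q^{\pist}(s,a)=q^{\pist}(s)\ind[a=\pist(s)]$.

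For $a\neq\pist(s)$, the difference is just $q^{\pi}(s,a)$. Summing these over $\calS_h$ gives a term already appearing on the right-hand side with $h'=h$.

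For $a=\pist(s)$, we have
\begin{align}
\abs{q^{\pi}(s,\pist(s))-q^{\pist}(s)}\le \abs{q^{\pi}(s)-q^{\pist}(s)}+q^{\pi}(s)\prn*{1-\pi(\pist(s)\mid s)}.
\end{align}
The second term equals $\sum_{a\neq\pist(s)}q^{\pi}(s,a)$. This yields the factor $2$ for layer $h$, together with a state-occupancy discrepancy term
\begin{align}
\sum_{s\in\calS_h}\abs{q^{\pi}(s)-q^{\pist}(s)}.
\end{align}

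It remains to show that this state-occupancy discrepancy is at most $2\sum_{h'<h}\sum_{s\in\calS_{h'}}\sum_{a\neq\pist(s)}q^{\pi}(s,a)$. A cleaner route is to bound it directly by a total-variation coupling.

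Couple the trajectories of $\pi$ and $\pist$ under the same transition randomness. Let $E$ be the event that the $\pi$-trajectory takes some action $a\neq\pist(s)$ at a layer $h'<h$. On the complement of $E$, the two trajectories coincide through layer $h$. Hence
\begin{align}
\sum_{s\in\calS_h}\abs{q^{\pi}(s)-q^{\pist}(s)}\le 2\Pr(E)\le 2\sum_{h'<h}\sum_{s\in\calS_{h'}}\sum_{a\neq\pist(s)}q^{\pi}(s,a),
\end{align}
where the last step is a union bound over layers. This could also be done by induction on $h$ using the recursion $q(s')=\sum_{s,a}q(s,a)P(s'\mid s,a)$, which gives $\sum_{s'\in\calS_{h+1}}\abs{\cdot}\le\sum_{(s,a)\in\calS_h\times\calA}\abs{q^{\pi}(s,a)-q^{\pist}(s,a)}$.

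The main obstacle is the constant. Adding the pieces naively gives a factor $2$ on layers $h'<h$ plus the layer-$h$ terms, so the bound would be $2\sum_{h'<h}(\cdot)+2\sum_{s\in\calS_h}(\cdot)$, which fits $2\sum_{h'\le h}$ exactly.

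To make this rigorous I would apply the coupling argument directly to the state-action pairs at layer $h$ rather than just the states. Let $E'$ be the event that $\pi$ deviates from $\pist$ at some layer $h'\le h$. Off $E'$, the pairs $(s_h,a_h)$ coincide under the coupling, so
\begin{align}
\sum_{(s,a)\in\calS_h\times\calA}\abs{q^{\pi}(s,a)-q^{\pist}(s,a)}\le 2\Pr(E').
\end{align}
Finally, $\Pr(E')\le\sum_{h'=0}^{h}\sum_{s\in\calS_{h'}}\sum_{a\neq\pist(s)}q^{\pi}(s,a)$ by a union bound over the layer at which the deviation occurs. This gives the claim in one step.
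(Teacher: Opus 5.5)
Your argument is correct, but its final step takes a different route from the paper. The paper does exactly what your first half and your parenthetical induction describe. It first splits the layer-$h$ sum into $2\sum_{s\in\calS_h}\sum_{a\neq\pist(s)}q^{\pi}(s,a)$ plus $Y_h\coloneqq\sum_{s\in\calS_h}\abs{q^{\pi}(s)-q^{\pist}(s)}$. It then proves algebraically the recursion $Y_{h+1}\le Y_h+2\sum_{s\in\calS_h}\sum_{a\neq\pist(s)}q^{\pi}(s,a)$, by expanding both state occupancies through $P$ and using $\nrm{P(\cdot\mid s,a)-P(\cdot\mid s,\pist(s))}_1\le 2$, and unrolls it from $Y_0=0$. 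So the constant is not an obstacle: your ``naive'' combination is already a complete proof, and it is the paper's proof.

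Your final argument replaces this recursion with a synchronous coupling. You apply the inequality $\sum_x\abs{\Pr(X=x)-\Pr(Y=x)}\le 2\Pr(X\neq Y)$ directly to the layer-$h$ state-action pairs and finish with a union bound. This is valid once the coupling is written out. For example, take $s_{h+1}=F(s_h,a_h,U_h)$ with common i.i.d.\ $U_h$, independent of the learner's action randomness. Then both marginals are correct, and by induction the trajectories agree through layer $h$ off $E'$.

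The coupling gives a one-step proof with a transparent meaning: the discrepancy is at most twice the probability that $\pi$ leaves $\pist$'s path by layer $h$. It also leaves some slack. If you split $E'$ by the first deviation layer, where the two trajectories still share the state, you even get the variant weighted by $q^{\pist}(s)\prn{1-\pi(\pist(s)\mid s)}$. The paper's recursion, in exchange, avoids building a joint probability space and stays within the deterministic occupancy-measure calculus used throughout the appendix.
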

\begin{proof}
    Since $\pist$ is deterministic, we have
    \begin{align}
        &\sum_{(s,a) \in \calS_h \times \calA}\abs*{q^{\pi}(s,a) - q^{\pist}(s,a)} \\
        &= \sum_{(s,a) \in \calS_h \times \calA}\abs*{q^{\pi}(s)\pi(a\mid s) - q^{\pist}(s)\ind\brk*{{a = \pist(s)}}} \\
        &\leq \sum_{(s,a) \in \calS_h \times \calA}\abs*{q^{\pi}(s)\pi(a\mid s) - q^{\pi}(s)\ind\brk*{{a = \pist(s)}}}  +  \sum_{s\in \calS_h}\abs*{q^{\pi}(s) - q^{\pist}(s)} \tag{by the triangle inequality}\\
        &= \sum_{s \in \calS_h}\sum_{a \neq \pist(s)}q^{\pi}(s,a) + \sum_{s\in \calS_h}q^{\pi}(s)(1 - \pi(\pist(s)\mid s)) + \sum_{s\in \calS_h}\abs*{q^{\pi}(s) - q^{\pist}(s)} \\
        &= 2\sum_{s \in \calS_h}\sum_{a \neq \pist(s)}q^{\pi}(s,a) + \sum_{s\in \calS_h}\abs*{q^{\pi}(s) - q^{\pist}(s)}. \label{eq:occupancy_action_l1}
    \end{align}
    Next, by the triangle inequality,
    \begin{align}
        &\sum_{s'\in \calS_{h+1}}\abs*{q^{\pi}(s') - q^{\pist}(s')} \\
        &=  \sum_{s'\in \calS_{h+1}}\abs*{\sum_{s \in \calS_h}\sum_{a}\prn*{q^{\pi}(s)\pi(a\mid s) P(s'\mid s, a) - q^{\pist}(s)\ind\brk*{a = \pist(s)}P(s'\mid s, \pist(s))}} \\
        &\leq\sum_{s'\in \calS_{h+1}}\abs*{\sum_{s \in \calS_h}\prn*{q^{\pi}(s)P(s'\mid s, \pist(s)) - q^{\pist}(s)P(s'\mid s, \pist(s))}} \\
        &\qquad + \sum_{s'\in \calS_{h+1}}\abs*{\sum_{s \in \calS_h}\sum_{a\neq \pist(s)}q^{\pi}(s)\pi(a\mid s) P(s'\mid s, a) + \sum_{s \in \calS_h}q^{\pi}(s)(\pi(\pist(s)\mid s) - 1)P(s'\mid s, \pist(s))} \\
        &\leq \sum_{s'\in \calS_{h+1}}\sum_{s \in \calS_h}\abs*{q^{\pi}(s) - q^{\pist}(s)}P(s'\mid s, \pist(s))  \\
        &\qquad + \sum_{s'\in \calS_{h+1}}\sum_{s \in \calS_h}\sum_{a\neq \pist(s)}q^{\pi}(s,a) \abs*{P(s'\mid s, a) -P(s'\mid s, \pist(s))} \\
        &\leq \sum_{s \in \calS_h}\abs*{q^{\pi}(s) - q^{\pist}(s)} + 2\sum_{s \in \calS_h}\sum_{a\neq \pist(s)}q^{\pi}(s,a),\label{eq:state_l1_recursion}
    \end{align}
    where the last inequality follows from $\sum_{s'\in \calS_{h+1}}\abs*{P(s'\mid s, a) -P(s'\mid s, \pist(s))} \leq 2$ for all state-action pairs $(s,a)$.
    
    Applying \eqref{eq:state_l1_recursion} recursively and using $q^\pi(s_0)=q^{\pist}(s_0)$, we obtain
    \begin{align}
        \sum_{s \in \calS_h}\abs*{q^{\pi}(s) - q^{\pist}(s)} \leq 2\sum_{h'=0}^{h - 1} \sum_{s \in \calS_{h'}}\sum_{a\neq \pist(s)}q^{\pi}(s,a). \label{eq:state_l1_bound}
    \end{align}
    Combining \cref{eq:occupancy_action_l1,eq:state_l1_bound} yields
    \begin{align}
        \sum_{(s,a) \in \calS_h \times \calA}\abs*{q^{\pi}(s,a) - q^{\pist}(s,a)} \leq 2\sum_{h'=0}^{h } \sum_{s \in \calS_{h'}}\sum_{a\neq \pist(s)}q^{\pi}(s,a).
    \end{align} 
    This completes the proof.
\end{proof}

\begin{lem}[{\citealt[Section 2.1]{jin2020learning}}]
\label{lem:corruption_self-bounding}
Under the stochastic regime with adversarial corruption, for any sequence of policies $\{\pi_t\}_{t=1}^T$, the regret satisfies the following $(\Delta,2\calC,T)$ self-bounding constraint:
\begin{equation}
    \Reg_T(\pio)\geq \E\brk*{\sum_{t=1}^T \sum_{s}\sum_{a\neq \pist(s)} q^{\pi_t}(s,a)\Delta(s,a)} - 2 \calC. \label{eq:adversarial_regime_with_self-bounding_constraints}
\end{equation}
\end{lem}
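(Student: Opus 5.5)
The plan is to compare the learner against the fixed policy $\pist$ instead of $\pio$, strip off the corruption at a cost of $2\calC$, and then apply the performance difference lemma to the mean loss $\mu$.

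First, since $\pio$ minimizes $\E\brk{\sum_t V^{\pi}(s_0;\ell_t)}$ over all policies, we have $\E\brk{\sum_t V^{\pio}(s_0;\ell_t)}\le \E\brk{\sum_t V^{\pist}(s_0;\ell_t)}$. Hence
\begin{align}
\Reg_T(\pio)\ge \E\brk*{\sum_{t=1}^T \prn*{V^{\pi_t}(s_0;\ell_t)-V^{\pist}(s_0;\ell_t)}}.
\end{align}

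Next, I would remove the corruption. In a layered MDP every trajectory visits exactly one state-action pair per layer. So for any policy $\pi$,
\begin{align}
\abs*{V^{\pi}(s_0;\ell_t)-V^{\pi}(s_0;\ell'_t)}=\abs*{\sum_{s,a}q^\pi(s,a)\prn*{\ell_t(s,a)-\ell'_t(s,a)}}\le \sum_{h=0}^{H-1}\nrm{\ell_t(h)-\ell'_t(h)}_\infty.
\end{align}
Applying this to both $\pi_t$ and $\pist$ and summing over $t$ replaces $\ell_t$ by $\ell'_t$ at a total cost of at most $2\calC$. The resulting terms $V^{\pi_t}(s_0;\ell'_t)$ are linear in $\ell'_t$. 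The policy $\pi_t$ is $\calF_t$-measurable, while $\ell'_t$ is drawn i.i.d.\ from $\calD$, independently of $\calF_t$. Therefore $\E_t\brk{V^{\pi_t}(s_0;\ell'_t)}=V^{\pi_t}(s_0;\mu)$, and similarly $\E\brk{V^{\pist}(s_0;\ell'_t)}=V^{\pist}(s_0;\mu)$. This gives
\begin{align}
\Reg_T(\pio)\ge \E\brk*{\sum_{t=1}^T\prn*{V^{\pi_t}(s_0;\mu)-V^{\pist}(s_0;\mu)}}-2\calC.
\end{align}

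Finally, I would apply \cref{lem:pdl} with $\pi_1=\pist$ and $\pi_2=\pi_t$, then negate. This yields
\begin{align}
V^{\pi_t}(s_0;\mu)-V^{\pist}(s_0;\mu)=\sum_s q^{\pi_t}(s)\sum_a\pi_t(a\mid s)\prn*{Q^{\pist}(s,a;\mu)-Q^{\pist}(s,\pist(s);\mu)}.
\end{align}
Because $\pist$ is the optimal policy for $\mu$, at every state it selects an action attaining $\min_{a'}Q^{\pist}(s,a';\mu)$. The bracket is therefore exactly $\Delta(s,a)$, which vanishes for $a=\pist(s)$. The sum becomes $\sum_s\sum_{a\ne\pist(s)}q^{\pi_t}(s,a)\Delta(s,a)$, and substituting this completes the bound.

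The only delicate point is the measurability and independence in the second step. The corruption may depend on the history, but the uncorrupted loss $\ell'_t$ is independent of $\calF_t$, and the learner commits to $\pi_t$ before $\ell'_t$ is revealed, so the conditional expectation is justified. A second point to check is that optimality of $\pist$ gives greedy actions at all states, including those $\pist$ rarely reaches; this is the standard convention for the optimal policy and is implicit in the definition of $\Delta$.
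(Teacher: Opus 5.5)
Your proof is correct. The paper does not prove this lemma (it only cites it), but your argument is the standard one and uses the same steps the paper itself performs in the proof of \cref{lem:self_bound_layer_positive}: optimality of $\pio$ under the corrupted losses to swap in $\pist$, a cost of $\calC$ per policy to pass from $\ell_t$ to $\ell'_t$, $\E_t[\ell'_t]=\mu$ by independence, and the performance difference lemma. The greedy property of $\pist$ you flag is guaranteed by the paper's assumptions: if $\pist(s)$ did not minimize $Q^{\pist}(s,\cdot;\mu)$ at some $s$, some $a\neq\pist(s)$ would satisfy $\Delta(s,a)=0$, contradicting $\Delta_{\min}>0$.
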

\begin{lem}
\label{lem:general_self_bounding}
Let $G:S\times A\to\mathbb{R}_{\geq 0}$ be any nonnegative function. 
Under the stochastic regime with adversarial corruption, for any $\alpha>0$, it holds that
    \begin{align}
        \sum_{s}\sum_{a\neq\pi^\star(s)}G(s,a)\sqrt{\E\brk*{\sum_{t=1}^T q^{\pi_t}(s,a)}} &\leq \alpha(\Reg_T(\pio) + 2\calC) + \sum_{s}\sum_{a\neq\pi^\star(s)} \frac{G(s,a)^2}{4\alpha\Delta(s,a)}. \label{eq:selfbound_pit}
    \end{align}
\end{lem}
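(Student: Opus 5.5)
The inequality follows by applying the weighted AM--GM inequality term by term and then summing with the self-bounding constraint of \cref{lem:corruption_self-bounding}. Fix $(s,a)$ with $a\neq\pist(s)$ and write $X(s,a)\coloneqq\E\brk[\big]{\sum_{t=1}^T q^{\pi_t}(s,a)}\geq 0$. Since $\Delta(s,a)\geq\Delta_{\min}>0$ for every suboptimal pair, the quantity $\alpha\Delta(s,a)$ is strictly positive. The inequality $\sqrt{uv}\leq u+\frac{v}{4}$ with $u=\alpha\Delta(s,a)X(s,a)$ and $v=G(s,a)^2/(\alpha\Delta(s,a))$ gives
\begin{align}
    G(s,a)\sqrt{X(s,a)}\leq \alpha\Delta(s,a)X(s,a)+\frac{G(s,a)^2}{4\alpha\Delta(s,a)}.
\end{align}
Equivalently, this is $xy\leq \alpha\Delta x^2+y^2/(4\alpha\Delta)$ with $x=\sqrt{X}$ and $y=G$.

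Summing over $s$ and $a\neq\pist(s)$, the second terms produce exactly $\sum_s\sum_{a\neq\pist(s)}\frac{G(s,a)^2}{4\alpha\Delta(s,a)}$. The first terms are handled by linearity of expectation:
\begin{align}
    \alpha\sum_s\sum_{a\neq\pist(s)}\Delta(s,a)X(s,a)=\alpha\,\E\brk*{\sum_{t=1}^T\sum_s\sum_{a\neq\pist(s)}q^{\pi_t}(s,a)\Delta(s,a)}.
\end{align}
By \cref{lem:corruption_self-bounding}, this is at most $\alpha(\Reg_T(\pio)+2\calC)$. Here we use $\alpha>0$, so multiplying the constraint by $\alpha$ preserves the direction of the inequality. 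Combining the two contributions yields \cref{eq:selfbound_pit}.

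The only point needing care is that $\Delta(s,a)>0$ on the summation range, which is guaranteed by the uniqueness of $\pist$ and the definition of $\Delta_{\min}$. No other obstacle arises, and the bound holds for any nonnegative $G$ and any $\alpha>0$.
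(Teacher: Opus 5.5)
Your proof is correct and follows the paper's argument: termwise AM--GM with weight $\alpha\Delta(s,a)$, then summing over $s$ and $a\neq\pist(s)$ and bounding the resulting gap-weighted occupancy sum by $\Reg_T(\pio)+2\calC$ via \cref{lem:corruption_self-bounding}.
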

\begin{proof}
For each $(s,a)$ with $a\neq \pi^\star(s)$, the AM--GM inequality and \cref{lem:corruption_self-bounding} imply
\begin{align}
    &\sum_{s}\sum_{a\neq\pi^\star(s)}G(s,a)\sqrt{\E\brk*{\sum_{t=1}^T q^{\pi_t}(s,a)}} \\
    &\leq \alpha\E\brk*{\sum_{t=1}^T\sum_{s}\sum_{a\neq\pi^\star(s)} q^{\pi_t}(s,a)\Delta(s,a)} + \sum_{s}\sum_{a\neq\pi^\star(s)} \frac{G(s,a)^2}{4\alpha\Delta(s,a)}\tag{by the AM--GM inequality with $\alpha > 0$}\\
    &\leq \alpha(\Reg_T(\pio) + 2\calC) + \sum_{s}\sum_{a\neq\pi^\star(s)} \frac{G(s,a)^2}{4\alpha\Delta(s,a)}. \tag{by \cref{lem:corruption_self-bounding}}
\end{align}
\end{proof}

\begin{lem}
\label{lem:self_bound_layer_positive}
Let $G(h)$ be any nonnegative function defined on layer $h$. Under the stochastic regime with adversarial corruption, for any $\alpha>0$, it holds that
\begin{align}
    &\sum_{h = 0}^{H-1}\sqrt{G(h)\E\brk*{\sum_{t=1}^T\sum_{(s,a) \in \calS_h \times \calA}\brk*{q^{\pi_t}(s,a) - q^{\pio}(s,a)}_{+}}}\\
    &\qquad\leq  \alpha(\Reg_T(\pio) + 4\calC) + \frac{\prn*{\sum_{h = 0}^{H-1}\sqrt{G(h)}}^2}{4\alpha\Delta_{\min}}.
\end{align}
\end{lem}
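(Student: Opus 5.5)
The approach is to reduce the layer-wise positive part of the occupancy difference to the suboptimal-action occupancies of \cref{lem:policy_diff_layer}, and then apply a weighted AM--GM step together with the self-bounding constraint of \cref{lem:corruption_self-bounding}. The one subtlety is that the comparator in the statement is $\pio$, while \cref{lem:policy_diff_layer} is stated relative to $\pist$. So the first step is to relate $q^{\pio}$ to $q^{\pist}$. Using $[x-y]_+\le [x-z]_+ + |z-y|$, we get
\begin{align}
\sum_{(s,a)\in\calS_h\times\calA}\brk*{q^{\pi_t}(s,a)-q^{\pio}(s,a)}_+ \le \sum_{(s,a)\in\calS_h\times\calA}\abs*{q^{\pi_t}(s,a)-q^{\pist}(s,a)} + \sum_{(s,a)\in\calS_h\times\calA}\abs*{q^{\pist}(s,a)-q^{\pio}(s,a)}.
\end{align}
By \cref{lem:policy_diff_layer}, each of the two terms is at most $2\sum_{h'\le h}\sum_{s\in\calS_{h'}}\sum_{a\neq\pist(s)}q^{\cdot}(s,a)$, where the second uses $\pio$ in place of $\pi_t$. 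For the first term I would bound this by $2X_t$, where $X_t\coloneqq\sum_{s}\sum_{a\neq\pist(s)}q^{\pi_t}(s,a)$ is the sum over all layers.

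The second term is where the extra $2\calC$ in the constant $4\calC$ should come from. Apply \cref{lem:corruption_self-bounding} with the constant policy sequence $\pi_t\equiv\pio$, which is legitimate because that lemma holds for any policy sequence. Since $\Reg_T(\pio)$ of the sequence $\pio$ against itself is $0$, this gives $T\sum_s\sum_{a\neq\pist(s)}q^{\pio}(s,a)\Delta(s,a)\le 2\calC$, so $T\sum_s\sum_{a\ne\pist(s)}q^{\pio}(s,a)\le 2\calC/\Delta_{\min}$. Summing over $t$ and taking expectations then yields
\begin{align}
\E\brk*{\sum_{t=1}^T\sum_{(s,a)\in\calS_h\times\calA}\brk*{q^{\pi_t}(s,a)-q^{\pio}(s,a)}_+}\le 2\E\brk*{\sum_t X_t}+\frac{4\calC}{\Delta_{\min}}\eqqcolon Z,
\end{align}
and $Z$ is uniform in $h$. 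Combining this with \cref{lem:corruption_self-bounding} for the actual learner sequence, $\E[\sum_t X_t]\le(\Reg_T(\pio)+2\calC)/\Delta_{\min}$, gives $Z\le 2(\Reg_T(\pio)+4\calC)/\Delta_{\min}$.

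The final step is AM--GM. The left-hand side is at most $\sum_h\sqrt{G(h)}\sqrt{Z}=\sqrt{Z}\,\Sigma$ with $\Sigma=\sum_h\sqrt{G(h)}$. Writing $\sqrt{Z}\Sigma\le \beta Z+\Sigma^2/(4\beta)$ and choosing $\beta=\alpha\Delta_{\min}/2$ gives $\alpha(\Reg_T(\pio)+4\calC)+\Sigma^2/(2\alpha\Delta_{\min})$. This is within a factor $2$ of the stated bound; to match the exact constants I would tighten the layer reduction, for instance by measuring only the positive part against $\pist$ and avoiding the factor $2$ in \cref{lem:policy_diff_layer}. The main obstacle I expect is exactly this: the $\pio$ versus $\pist$ mismatch and the bookkeeping of constants (the $4\calC$ and the $1/4$), rather than anything conceptual. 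If exact constants prove awkward, I would accept a universal-constant version, which suffices for all downstream uses under $\lesssim$.
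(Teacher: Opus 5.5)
Your argument follows the paper's proof step for step: triangle inequality through $\pist$, \cref{lem:policy_diff_layer}, a $2\calC$ bound on the comparator's gap-weighted occupancy, the self-bounding constraint, and AM--GM with a bound on the inner expectation that is uniform in $h$. As written, however, it only yields the second term $\Sigma^2/(2\alpha\Delta_{\min})$, so the stated inequality is not established. No rescaling of $\alpha$ fixes this, because your bound $Z\le 2(\Reg_T(\pio)+4\calC)/\Delta_{\min}$ is itself twice too large.

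The factor $2$ is lost in your first step, where you bound $\brk*{q^{\pi_t}(s,a)-q^{\pio}(s,a)}_+$ by full absolute differences. The missing observation is that in a layered MDP every policy satisfies $\sum_{(s,a)\in\calS_h\times\calA}q^{\pi}(s,a)=1$ for each layer $h\le H-1$. Hence on each layer the positive and negative parts have equal mass:
\begin{align}
\sum_{(s,a)\in\calS_h\times\calA}\brk*{q^{\pi_t}(s,a)-q^{\pio}(s,a)}_+=\frac{1}{2}\sum_{(s,a)\in\calS_h\times\calA}\abs*{q^{\pi_t}(s,a)-q^{\pio}(s,a)}.
\end{align}
Apply the triangle inequality through $\pist$ and \cref{lem:policy_diff_layer} after this halving. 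This gives the bound $\sum_{h'\le h}\sum_{s\in\calS_{h'}}\sum_{a\neq\pist(s)}\prn*{q^{\pi_t}(s,a)+q^{\pio}(s,a)}$. The factor $2$ in \cref{lem:policy_diff_layer} is not avoided but cancelled.

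Consequently $Z\le(\Reg_T(\pio)+4\calC)/\Delta_{\min}$, and $\Sigma\sqrt{Z}\le\alpha\Delta_{\min}Z+\Sigma^2/(4\alpha\Delta_{\min})$ gives exactly the claimed bound. Your alternative idea of measuring only positive parts against $\pist$ can also work. To make it work, run a one-sided recursion directly on $\sum_{s\in\calS_h}\brk*{q^{\pi_t}(s)-q^{\pist}(s)}_+$ and on $\sum_{s\in\calS_h}\brk*{q^{\pist}(s)-q^{\pio}(s)}_+$, instead of invoking \cref{lem:policy_diff_layer}; then each off-$\pist$ action mass is counted once.

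Your comparator step is sound and matches the paper in substance. The paper obtains $\E\brk*{\sum_t\sum_s\sum_{a\neq\pist(s)}q^{\pio}(s,a)\Delta(s,a)}\le 2\calC$ by telescoping $V^{\pio}(s_0;\mu)-V^{\pist}(s_0;\mu)$ through $\ell_t$ and using the optimality of $\pio$. Invoking \cref{lem:corruption_self-bounding} with the constant sequence $\pio$ is the same computation. It is valid provided you read it as evaluating $\pio$ against the same realized losses, not as re-running the adaptively corrupted environment against a different learner.
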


\begin{proof}
Applying \cref{lem:policy_diff_layer}, we obtain
\begin{align}
    &\E\brk*{\sum_{t=1}^T\sum_{(s,a) \in \calS_h \times \calA}\brk*{q^{\pi_t}(s,a) - q^{\pio}(s,a)}_{+}}\\
    &= \E\brk*{\frac{1}{2}\sum_{t=1}^T\sum_{(s,a) \in \calS_h \times \calA}\abs*{q^{\pi_t}(s,a)-q^{\pio}(s,a)}}\\
    &\leq \E\brk*{\frac{1}{2}\sum_{t=1}^T\sum_{(s,a) \in \calS_h \times \calA}\prn*{\abs*{q^{\pi_t}(s,a)-q^{\pist}(s,a)} + \abs*{q^{\pio}(s,a)-q^{\pist}(s,a)}}}\\
    &\leq \E\brk*{\sum_{t=1}^T\sum_{h'=0}^{h } \sum_{s \in \calS_{h'}}\sum_{a\neq \pist(s)}\prn*{q^{\pi_t}(s,a) + q^{\pio}(s,a)}}.
\end{align}
We next bound the comparator contribution. Using the uncorrupted mean loss function $\mu$, we have
\begin{align}
    &\E\brk*{\sum_{t=1}^T\sum_s\sum_{a\neq\pist(s)}q^{\pio}(s,a)\Delta(s,a)}\\
    &=\E\brk*{\sum_{t=1}^T\prn*{V^{\pio}(s_0;\mu)-V^{\pist}(s_0;\mu)}}\\
    &=\E\brk*{\sum_{t=1}^T\prn*{V^{\pio}(s_0;\mu)-V^{\pio}(s_0;\ell_t)}}+\E\brk*{\sum_{t=1}^T\prn*{V^{\pio}(s_0;\ell_t)-V^{\pist}(s_0;\ell_t)}}\\
    &\qquad+\E\brk*{\sum_{t=1}^T\prn*{V^{\pist}(s_0;\ell_t)-V^{\pist}(s_0;\mu)}}.
\end{align}
The middle term is nonpositive by the definition of $\pio$. Moreover, since $\E[\ell'_t]=\mu$, for any policy $\pi$,
\begin{align}
    \E\brk*{\sum_{t=1}^T\abs*{V^\pi(s_0;\ell_t)-V^\pi(s_0;\mu)}}
    &\leq\E\brk*{\sum_{t=1}^T\abs*{V^\pi(s_0;\ell_t)-V^\pi(s_0;\ell'_t)}} \tag{by Jensen's inequality}\\
    &\leq\E\brk*{\sum_{t=1}^T\sum_{h=0}^{H-1}\nrm*{\ell_t(h)-\ell'_t(h)}_\infty}
    =\calC.
\end{align}
Therefore,
\begin{align}
    \E\brk*{\sum_{t=1}^T\sum_s\sum_{a\neq\pist(s)}q^{\pio}(s,a)\Delta(s,a)}
    &\leq 2\calC.
    \label{eq:pio_delta_self}
\end{align}
Combining \cref{eq:pio_delta_self} with \cref{lem:corruption_self-bounding}, we get
\begin{align}
    \E\brk*{\sum_{t=1}^T\sum_s\sum_{a\neq\pist(s)}\prn*{q^{\pi_t}(s,a)+q^{\pio}(s,a)}\Delta(s,a)}
    &\leq \Reg_T(\pio)+4\calC.
    \label{eq:self_bound_pi_t_pio}
\end{align}
Then, for any $\alpha > 0$, we have
\begin{align}
    &\sum_{h = 0}^{H-1}\sqrt{G(h)\E\brk*{\sum_{t=1}^T\sum_{(s,a) \in \calS_h \times \calA}\brk*{q^{\pi_t}(s,a) - q^{\pio}(s,a)}_{+}}}\\
    &\leq \sum_{h = 0}^{H-1}\sqrt{G(h)\E\brk*{\sum_{t=1}^T\sum_{s}\sum_{a\neq \pist(s)}\prn*{q^{\pi_t}(s,a) + q^{\pio}(s,a)}}}\\
    &\leq \sum_{h = 0}^{H-1}\sqrt{G(h)}\sqrt{\frac{1}{\Delta_{\min}}\E\brk*{\sum_{t=1}^T\sum_{s}\sum_{a\neq \pist(s)}\prn*{q^{\pi_t}(s,a) + q^{\pio}(s,a)}\Delta(s,a)}}\\
    &\leq \sum_{h = 0}^{H-1}\sqrt{G(h)}\sqrt{\frac{\Reg_T(\pio) + 4\calC}{\Delta_{\min}}}\\
    &\leq \alpha(\Reg_T(\pio) + 4\calC) + \frac{\prn*{\sum_{h = 0}^{H-1}\sqrt{G(h)}}^2}{4\alpha\Delta_{\min}},
\end{align}
where we use the AM--GM inequality.
\end{proof}

\begin{lem}
\label{lem:self_bound_occ_positive}
Let $G>0$. Under the stochastic regime with adversarial corruption, for any $\alpha>0$, it holds that
\begin{align}
\sqrt{G \E\brk*{\sum_{t=1}^T\sum_{s,a}\brk*{q^{\pi_t}(s,a)-q^{\pio}(s,a)}_+}}&\leq \alpha\prn*{\Reg_T(\pio) + 4\calC} + \frac{HG}{2\alpha\Delta_{\min}}.
\end{align}
\end{lem}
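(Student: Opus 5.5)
The plan is to reduce the statement to the layer-wise argument of \cref{lem:self_bound_layer_positive}, this time summing over all layers inside a single square root instead of summing square roots.

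\textbf{Step 1 (from positive parts to off-policy occupancy).} Since $\sum_{s,a}q^{\pi_t}(s,a)=\sum_{s,a}q^{\pio}(s,a)=H$, the positive and negative parts of $q^{\pi_t}-q^{\pio}$ have equal total mass. Hence
\begin{align}
\sum_{s,a}\brk*{q^{\pi_t}(s,a)-q^{\pio}(s,a)}_+=\frac12\sum_{s,a}\abs*{q^{\pi_t}(s,a)-q^{\pio}(s,a)}.
\end{align}
Apply the triangle inequality through $q^{\pist}$, and then \cref{lem:policy_diff_layer} to each layer $h$ for both $\pi_t$ and $\pio$. Summing over $h=0,\dots,H-1$, each term $\sum_{s\in\calS_{h'}}\sum_{a\neq\pist(s)}$ is counted at most $H$ times. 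This gives
\begin{align}
\sum_{s,a}\brk*{q^{\pi_t}(s,a)-q^{\pio}(s,a)}_+\leq H\sum_s\sum_{a\neq\pist(s)}\prn*{q^{\pi_t}(s,a)+q^{\pio}(s,a)}.
\end{align}

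\textbf{Step 2 (self-bounding).} Take expectations and sum over $t$. Using $\Delta(s,a)\geq\Delta_{\min}$ for $a\neq\pist(s)$, we can insert $\Delta(s,a)/\Delta_{\min}$ into the sum. We then apply \cref{eq:self_bound_pi_t_pio} from the proof of \cref{lem:self_bound_layer_positive}. That inequality combines \cref{lem:corruption_self-bounding} with the comparator bound \cref{eq:pio_delta_self}, and it does not depend on layers. The result is
\begin{align}
\E\brk*{\sum_{t=1}^T\sum_{s,a}\brk*{q^{\pi_t}(s,a)-q^{\pio}(s,a)}_+}\leq \frac{H\prn*{\Reg_T(\pio)+4\calC}}{\Delta_{\min}}.
\end{align}

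\textbf{Step 3 (AM--GM).} It follows that
\begin{align}
\sqrt{G\cdot\frac{H(\Reg_T(\pio)+4\calC)}{\Delta_{\min}}}\leq \alpha\prn*{\Reg_T(\pio)+4\calC}+\frac{HG}{4\alpha\Delta_{\min}},
\end{align}
which is at least as strong as the claimed $\frac{HG}{2\alpha\Delta_{\min}}$.

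\textbf{Main obstacle.} The only delicate point is the counting in Step 1. Each layer's bound from \cref{lem:policy_diff_layer} includes all earlier layers, and this is exactly where the factor $H$ comes from. If $\Reg_T(\pio)+4\calC$ might be negative, one should note that the left-hand side is nonnegative, so the displayed bound is only used when it is meaningful. A constant-factor slack, for example using $2H$ instead of $H$, is still covered by the stated $\frac{HG}{2\alpha\Delta_{\min}}$.
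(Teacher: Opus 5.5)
Your proof is correct and has the same skeleton as the paper's: a triangle inequality through $q^{\pist}$, a factor-$H$ reduction to off-$\pist$ occupancy, then \cref{eq:self_bound_pi_t_pio} and AM--GM. There are two differences.
\begin{itemize}
\item The paper obtains $\sum_{s,a}\abs{q^{\pi}(s,a)-q^{\pist}(s,a)}\le 2H\sum_s\sum_{a\neq\pist(s)}q^{\pi}(s,a)$ directly from \cref{lem:policy_diff_bound} and then uses the cruder $[x]_+\le\abs{x}$. You instead sum \cref{lem:policy_diff_layer} over layers and use the identity $\sum_{s,a}[q^{\pi_t}(s,a)-q^{\pio}(s,a)]_+=\frac12\sum_{s,a}\abs{q^{\pi_t}(s,a)-q^{\pio}(s,a)}$, valid because both occupancies have total mass $H$. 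This is why you reach the sharper $\frac{HG}{4\alpha\Delta_{\min}}$.
\item Your caveat about $\Reg_T(\pio)+4\calC$ possibly being negative is unnecessary. \cref{eq:self_bound_pi_t_pio} already bounds it below by a nonnegative expectation.
\end{itemize}
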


\begin{proof}
Applying \cref{lem:policy_diff_bound} to compare any policy $\pi$ with $\pist$, we get
\begin{align}
    \sum_{t=1}^T\sum_{s,a}\abs*{q^{\pi}(s,a)-q^{\pist}(s,a)}
    &\leq H \sum_{t=1}^T\sum_{s,a}q^{\pi}(s)\abs*{\pi(a \mid s)- \pist(a\mid s)}\\
    &\leq H \sum_{t=1}^T\sum_{s}\sum_{a \neq \pist(s)}q^{\pi}(s)\pi(a \mid s)\\
    &\qquad + H \sum_{t=1}^T\sum_{s}q^{\pi}(s)\prn*{1 - \pi(\pist(s) \mid s)}\\
    &\leq 2H \sum_{t=1}^T\sum_{s}\sum_{a \neq \pist(s)}q^{\pi}(s)\pi(a \mid s).
\end{align}
Using this bound with $\pi=\pi_t$ and $\pi=\pio$, for any $\alpha>0$ we obtain
\begin{align}
    &\sqrt{G\E\brk*{\sum_{t=1}^T\sum_{s,a}\brk*{q^{\pi_t}(s,a)-q^{\pio}(s,a)}_+}}\\
    &\leq \sqrt{G\E\brk*{\sum_{t=1}^T\sum_{s,a}\abs*{q^{\pi_t}(s,a)-q^{\pio}(s,a)}}}\\
    &\leq \sqrt{G\E\brk*{\sum_{t=1}^T\sum_{s,a}\prn*{\abs*{q^{\pi_t}(s,a)-q^{\pist}(s,a)} + \abs*{q^{\pio}(s,a)-q^{\pist}(s,a)}}}}\\
    &\leq \sqrt{2HG\E\brk*{\sum_{t=1}^T\sum_{s}\sum_{a \neq \pist(s)}\prn*{q^{\pi_t}(s,a) + q^{\pio}(s,a}}}\\
    &\leq \sqrt{2HG\frac{1}{\Delta_{\min}}\E\brk*{\sum_{t=1}^T\sum_{s}\sum_{a \neq \pist(s)}\prn*{q^{\pi_t}(s,a) + q^{\pio}(s,a)}\Delta(s,a)}}\\
    &\leq \sqrt{2HG\frac{\Reg_T(\pio) + 4\calC}{\Delta_{\min}}} \tag{by \cref{lem:corruption_self-bounding} and \cref{eq:pio_delta_self}}\\
    &\leq \alpha(\Reg_T(\pio) + 4\calC) + \frac{HG}{2\alpha\Delta_{\min}}, \label{eq:self_bounding_min} 
\end{align}
where the last inequality follows from the AM--GM inequality.
\end{proof}

\section{General lemma of optimistic follow-the-regularized-leader}
In this section, we provide a regret analysis of optimistic follow-the-regularized-leader (OFTRL) for the MDP setting.

\begin{lem}[{OFTRL with policy optimization, \citealt[Lemma C.4]{li2026data}}]
\label{lem:OFTRL_log_barrier_policy}  
Suppose that a sequence of probability vectors
$p_1, \dots, p_T \in \triangle(\calA)$
is given by OFTRL in 
\begin{align}
    p_t = \argmin_{\Delta(\calA)}\set*{\inpr*{p, \sum_{\tau = 1}^{t - 1}\ell_\tau + m_t} + \psi_t(p)}, \qquad \psi_t(p) = \sum_a \frac{1}{\eta_t(a)}\ln\prn*{\frac{1}{p(a)}}
\end{align} 
with $\eta_1(a)=\eta_1$ for all $a$, and let losses $\set{\ell_t}_{t=1}^T$, loss predictions $\set{m_t}_{t=1}^{T+1}$ and $\set{x_t}_{t=1}^T$ be such that
\begin{align}
    \eta_t(a) p_t(a)(\ell_t(a) - m_t(a) + x_t) &\geq -\frac{1}{2} \label{eq:OFTRL_constraint_policy}
\end{align}
for all $t,a$.
Then for any $u\in\triangle(\mathcal{A})$, the OFTRL algorithm achieves
\begin{align}
    \sum_{t=1}^T \inpr*{ p_t-u, \ell_t} 
    &\leq  \frac{A\ln(AT^2)}{\eta_1} + \sum_{t=1}^T \sum_a \prn*{\frac{1}{\eta_{t+1}(a)} - \frac{1}{\eta_t(a)}}\ln(AT^2)\\
    &\qquad +  \sum_{t=1}^T  \sum_a  \eta_t(a)p_t(a)^2(\ell_t(a)-m_t(a) + x_t)^2\\
    &\qquad + \frac{1}{T^2}\sum_{t=1}^T \inpr*{ -u + \frac{1}{A}\one, \ell_t } + 2\nrm*{m_{T+1}}_{\infty}. 
\end{align}
\end{lem}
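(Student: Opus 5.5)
The plan is the standard optimistic FTRL analysis with time-varying regularizers, run against a smoothed comparator so that the log-barrier stays finite.

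\emph{Smoothed comparator.} Fix $u\in\triangle(\calA)$ and set $u'=(1-\frac{1}{T^2})u+\frac{1}{AT^2}\one$. Then
\begin{align}
\sum_t\inpr{p_t-u,\ell_t}=\sum_t\inpr{p_t-u',\ell_t}+\frac{1}{T^2}\sum_t\inpr{-u+\tfrac1A\one,\ell_t},
\end{align}
which produces the fourth term of the bound exactly. Since $u'(a)\ge 1/(AT^2)$, we have $\ln(1/u'(a))\le\ln(AT^2)$ for every $a$.

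\emph{Be-the-leader decomposition.} Let $F_t(p)=\inpr{p,\sum_{\tau<t}\ell_\tau+m_t}+\psi_t(p)$, so that $p_t=\argmin F_t$. The usual optimistic FTRL telescoping (as in Rakhlin--Sridharan and Wei--Luo) gives
\begin{align}
\sum_t\inpr{p_t-u',\ell_t}
&\le \psi_{T+1}(u')-\psi_1(p_1)+\sum_t\bigl[\psi_t(p_{t+1})-\psi_{t+1}(p_{t+1})\bigr]\\
&\quad+\sum_t\Bigl[\inpr{p_t-p_{t+1},\ell_t-m_t}-D_{\psi_t}(p_{t+1},p_t)\Bigr]+\inpr{u'-p_{T+1},m_{T+1}}.
\end{align}
The last term comes from the prediction $m_{T+1}$ sitting inside $F_{T+1}$, and it is at most $2\nrm{m_{T+1}}_\infty$.

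\emph{Penalty.} Every $\ln(1/p(a))\ge 0$ and $1/\eta_{t+1}(a)\ge 1/\eta_t(a)$. Hence $-\psi_1(p_1)\le 0$ and each bracket $\psi_t(p_{t+1})-\psi_{t+1}(p_{t+1})\le 0$. Bounding $\psi_{T+1}(u')\le\sum_a\frac{1}{\eta_{T+1}(a)}\ln(AT^2)$ and telescoping $1/\eta_{T+1}(a)=1/\eta_1+\sum_t(1/\eta_{t+1}(a)-1/\eta_t(a))$ gives the first two terms of the bound.

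\emph{Stability (main step).} Because $\sum_a(p_t(a)-p_{t+1}(a))=0$, we may replace $\ell_t-m_t$ by $g_t\coloneqq\ell_t-m_t+x_t\one$ at no cost. The log-barrier Bregman divergence is separable:
\begin{align}
D_{\psi_t}(q,p)=\sum_a\frac{1}{\eta_t(a)}\bigl(r_a-1-\ln r_a\bigr),\qquad r_a=q(a)/p(a).
\end{align}
Dropping the simplex constraint only increases a supremum, so the stability term is at most
\begin{align}
\sum_a\sup_{r>0}\Bigl[p_t(a)g_t(a)(1-r)-\frac{1}{\eta_t(a)}(r-1-\ln r)\Bigr].
\end{align}
Write $z=\eta_t(a)p_t(a)g_t(a)$, so the $a$-th summand is $\frac{1}{\eta_t(a)}\sup_{r>0}\bigl[z(1-r)-(r-1-\ln r)\bigr]$. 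The supremum is attained at $r=1/(1+z)$, which is admissible because \cref{eq:OFTRL_constraint_policy} gives $z\ge-\tfrac12$. Its value is $z-\ln(1+z)$, and for $z\ge-\tfrac12$ this is at most $z^2$. This yields $\eta_t(a)p_t(a)^2g_t(a)^2$, the third term of the bound.

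I expect this one-dimensional step to be the crux. It is the only place the one-sided constraint \cref{eq:OFTRL_constraint_policy} is used, and it is what allows the shift $x_t$ to absorb large negative losses (for example the subtracted bonuses). Summing the penalty, stability, comparator-smoothing and $2\nrm{m_{T+1}}_\infty$ terms gives the stated inequality.
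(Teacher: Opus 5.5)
Your proof is correct and is the standard argument behind the cited lemma, which the paper does not reprove. It has three steps: smooth the comparator to $(1-\frac{1}{T^2})u+\frac{1}{AT^2}\one$, apply the optimistic-FTRL telescoping with time-varying regularizers (which leaves the $2\nrm{m_{T+1}}_\infty$ term), and bound the separable log-barrier stability term through $z-\ln(1+z)\le z^2$ for $z\ge -\frac12$, which reproduces every term of the stated bound with the right constants. The one thing to state explicitly is that you assume $\eta_{t+1}(a)\le\eta_t(a)$. The lemma omits this hypothesis, but the bound needs it and it holds for every learning-rate rule used in the paper.
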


\section{Regret analysis for the full-information setting (\cref{thm:main_full})}
\label{app:full_proof}
In this section, we prove the best-of-both-worlds results for the full-information setting. We present the bound for the adversarial regime in \cref{thm:app_full_adv} and the bound for the stochastic regime with adversarial corruption in \cref{thm:app_full_corruption}. Together, these results establish \cref{thm:main_full}.

\subsection{Auxiliary lemmas}
Building on the policy optimization framework of \citet{luo2021policy,dann2023best}, we use the following key lemma to derive our regret bounds.

\begin{lem}[Restatement of \cref{lem:dilated_bonus_main}]
\label{lem:dilated_bonus}
    Suppose that $b_t(s)$ is a nonnegative loss function.
    Suppose also that, for a comparator policy $\pi$, there exists $J^\pi\geq 0$ such that
    \begin{align}
        &\E\brk*{\sum_s q^{\pi}(s)\sum_{t=1}^T\sum_a \prn*{\pi_t(a\mid s) - \pi(a\mid s) } \prn*{Q^{\pi_t}(s,a;\ell_t) -  B_t(s,a)}}\\
        &\leq J^{\pi} + \E\brk*{\sum_{t = 1}^T\sum_s q^{\pi}(s) b_t(s)} + \E\brk*{\frac{1}{H} \sum_{t = 1}^T\sum_s\sum_a q^{\pi}(s) \pi_t(a\mid s)B_t(s,a)}
        .\label{eq:key_lemma_ineq}
    \end{align}
    Then, it holds that
    \begin{align}
        \Reg_T(\pi) \leq J^{\pi} + 3 \, \E\brk*{\sum_{t = 1}^T V^{\overP^B_t, \pi_t}(s_0;b_t)} + \E\brk*{HT\ind [P\notin\calP_t, \exists t\in[T]]},
    \end{align}
    where $\overP^B_t \in \calP_t$ simultaneously attains the maxima in \cref{def:dilated_bonus} for all $(s,a)$.
\end{lem}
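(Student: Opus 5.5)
The plan is to follow the dilated-bonus argument of \citet[Lemma B.2]{luo2021policy}. Assume first that $\calE_1$ holds; off this event the regret is trivially at most $HT$, which gives the final term $\E[HT\ind[P\notin\calP_t,\exists t]]$. The first step is to apply the performance difference lemma (\cref{lem:pdl}) with $\pi_1=\pi_t$ and $\pi_2=\pi$, which gives
\begin{align}
\Reg_T(\pi)=\E\brk*{\sum_s q^\pi(s)\sum_t\sum_a(\pi_t(a\mid s)-\pi(a\mid s))Q^{\pi_t}(s,a;\ell_t)}.
\end{align}
Adding and subtracting $B_t$, the hypothesis \cref{eq:key_lemma_ineq} bounds the part involving $Q^{\pi_t}-B_t$. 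It then remains to control
\begin{align}
\E\brk*{\sum_{t,s} q^\pi(s)\prn*{b_t(s)+\tfrac1H\sum_a\pi_t(a\mid s)B_t(s,a)+\sum_a(\pi_t(a\mid s)-\pi(a\mid s))B_t(s,a)}}.
\end{align}

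The second step rewrites this remainder using the recursion \cref{def:dilated_bonus}. Set $X_t(s)\coloneqq\sum_a\pi_t(a\mid s)B_t(s,a)$. Using $\sum_a\pi(a\mid s)B_t(s,a)\ge b_t(s)+(1+\frac1H)\sum_a\pi(a\mid s)\E_{s'\sim P(\cdot\mid s,a)}[X_t(s')]$, which holds because $P\in\calP_t$ and $B_t$ takes a maximum over the confidence set, the per-episode remainder is at most
\begin{align}
\sum_s q^\pi(s)\Big[(1+\tfrac1H)X_t(s)-(1+\tfrac1H)\sum_a\pi(a\mid s)\E_{s'\sim P(\cdot\mid s,a)}X_t(s')\Big].
\end{align}
Summing over layers, $\sum_s q^\pi(s)\sum_a\pi(a\mid s)\E_{s'}[X_t(s')]=\sum_{s'\text{ in next layer}}q^\pi(s')X_t(s')$. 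The expression therefore telescopes to $(1+\frac1H)X_t(s_0)$, because terms of the last layer vanish as $B_t(s_H,\cdot)=0$.

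The third step bounds $X_t(s_0)$. Unroll the recursion under $\overP^B_t$, the kernel that attains the maxima simultaneously by rectangularity:
\begin{align}
X_t(s_0)=\sum_s q^{\overP^B_t,\pi_t}(s)(1+\tfrac1H)^{h(s)}b_t(s)\le e\,V^{\overP^B_t,\pi_t}(s_0;b_t).
\end{align}
Here we use that $b_t\ge0$ and $(1+1/H)^{h}\le e$. Finally, $(1+\frac1H)e\le 3$ once $H\ge 11$; for small $H$ the constants need slightly more care, so I would track the exact factor, e.g.\ via $(1+1/H)^{H}\le e$ combined with the extra $1/H$ term absorbed inside the same exponent.

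I expect the telescoping bookkeeping to be the main obstacle. One must check that the $\frac1H$ term and the $(1+\frac1H)$ dilation combine exactly so that the comparator's occupancy cancels. This is where the inequality direction from $P\in\calP_t$ and nonnegativity of $b_t$ and $B_t$ are essential.
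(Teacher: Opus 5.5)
Your core argument is the standard dilated-bonus proof that the paper takes from \citet[Lemma B.2]{luo2021policy}. The steps are the performance difference lemma, then the hypothesis, then the optimistic lower bound $\sum_a\pi(a\mid s)B_t(s,a)\ge b_t(s)+(1+\frac1H)\sum_a\pi(a\mid s)\E_{s'\sim P(\cdot\mid s,a)}[X_t(s')]$ from $P\in\calP_t$, then telescoping the comparator's occupancy down to $(1+\frac1H)X_t(s_0)$, and finally unrolling under $\overP^B_t$. The telescoping you were worried about is correct as written. The constant needs no case split. Since $b_t$ lives only on layers $0,\dots,H-1$, you have $(1+\frac1H)^{h(s)+1}\le(1+\frac1H)^H\le e<3$ for every $H$.

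The step that does not work is the treatment of the failure event. The hypothesis is an inequality between unconditional expectations, so you cannot first ``assume $\calE_1$'' and then invoke it. And once it is invoked, what you must control off the event is the bonus remainder, not the regret. Applying the hypothesis globally gives $\Reg_T(\pi)\le J^\pi+\E[\sum_t R_t]$. Carrying your computation through without using $P\in\calP_t$ gives the pathwise identity
\[
R_t=\Bigl(1+\frac1H\Bigr)X_t(s_0)+\Bigl(1+\frac1H\Bigr)\sum_{s,a}q^\pi(s,a)\sum_{s'}\bigl(P(s'\mid s,a)-\overP^B_t(s'\mid s,a)\bigr)X_t(s').
\]
The first term is at most $3V^{\overP^B_t,\pi_t}(s_0;b_t)$ on every outcome. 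The correction term is nonpositive whenever $P\in\calP_t$, which is the only place optimism enters. When $P\notin\calP_t$, the correction is only bounded by $(H+1)\max_s X_t(s)$. So the failure-event contribution is $(H+1)\E[\sum_t\ind[P\notin\calP_t]\max_s X_t(s)]$, which is governed by $\|B_t\|_\infty$ rather than by the trivial bound $HT$ on the regret.

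In the full-information setting, $B_t\le 2$ (\cref{lem:bound_B_full}), so this is $\calO(HT)\Pr(\bar\calE_1)$. In the bandit setting, $B_t\le 7\sqrt{HS^3A}/\gamma_t$ (\cref{lem:bound_eta_pi_B}), so it is $\calO(H^{3/2}S^{3/2}A^{1/2}T^2\delta)$. Both are negligible for $\delta=1/T^2$, but the bandit term is not literally $\E[HT\ind[\cdot]]$. To get exactly the stated term, you would need the hypothesis to hold with $\ind[\calE_1]$ inside the expectation. Otherwise, you should state the off-event term in the form above.
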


To show \cref{eq:key_lemma_ineq}, we choose $b_t$ in \cref{eq:bonus_term_full} and decompose the LHS of \cref{eq:key_lemma_ineq} as
\begin{align}
    &\sum_s q^{\pi}(s) \sum_{t,a} \prn*{\pi_t(a\mid s) -  \pi(a\mid s) } \prn*{Q^{\pi_t}(s,a;\ell_t) -  B_t(s,a)}\\
    &= \sum_s q^{\pi}(s)  \underbrace{\sum_{t,a}  \prn*{\pi_t(a\mid s) -  \pi(a\mid s)} \prn*{\hat{Q}_t(s,a) -  B_t(s,a)}}_{\regterm(s)}\\
    &\qquad + \sum_s q^{\pi}(s) \underbrace{\sum_{t,a} \prn*{\pi_t(a\mid s) -  \pi(a\mid s)} \prn*{Q^{\pi_t}(s,a;\ell_t) -  \hat{Q}_t(s,a)}}_{\biasterm(s)}. \label{eq:decompose_full}
\end{align}

\begin{lem}[full-information setting]
\label{lem:bound_B_full}
The variables $b_t(s)$ in \cref{eq:bonus_term_full} and $B_t(s,a)$ in \cref{def:dilated_bonus} satisfy
\begin{align}
    B_t(s,a) \leq 2
    \label{eq:bound_eta_pi_B_full}
\end{align}
for all episodes $t$ and state-action pairs $(s,a)$.
\end{lem}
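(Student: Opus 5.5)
Unrolling the recursion \cref{def:dilated_bonus} gives $B_t(s,a)\le \sum_{h'\ge h(s)} (1+1/H)^{h'-h(s)}\max b_t$ along the worst-case kernel. Since $(1+1/H)^H\le e$ and there are at most $H$ layers, it is enough to show that $b_t(s)$ is at most about $2/(eH)$, or at least that its sum over the layers reachable from $(s,a)$ is small. So we reduce the claim to a per-state bound on $b_t(s)$.

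Using \cref{eq:bonus_term_full,eq:learning_rate_full}, we have $b_t(s)=7\sum_a \eta_t(s,a)\zeta_t(s,a)$. By \cref{eq:zeta_full}, $\zeta_t(s,a)\le \pi_t(a\mid s)\sum_b\pi_t(b\mid s)(\hat Q_t(s,b)-Q^{\underP^m_t,\pi_t}(s,b;m_t))^2$. Both $Q$-functions take values in $[0,H]$, since losses and predictions lie in $[0,1]$ and the kernels are probability kernels. Hence $\zeta_t(s,a)\le H^2\pi_t(a\mid s)$ and $\sum_a\zeta_t(s,a)\le H^2$. Because $1/\eta_t$ is nondecreasing in $t$, we have $\eta_t(s,a)\le\eta_1=1/(12H^3)$. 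This gives $b_t(s)\le 7H^2/(12H^3)=7/(12H)$.

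Plugging this into the unrolled recursion, $B_t(s,a)\le \sum_{k=0}^{H-1}(1+1/H)^k\cdot\frac{7}{12H}\le H\cdot e\cdot\frac{7}{12H}=\frac{7e}{12}<2$. For a tighter constant one can use $\sum_{k=0}^{H-1}(1+1/H)^k=H((1+1/H)^H-1)\le H(e-1)$, which gives $\frac{7(e-1)}{12}\approx 1.0$. The backward induction over layers is routine: we assume $B_t(s',a')\le$ the partial sum for all $s'$ in layer $h+1$, and the max over $\tilP\in\calP_t$ preserves the bound because it averages over a probability distribution.

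The main thing to check carefully is that $\zeta_t$ is bounded through $\pi_t(a\mid s)(1-\pi_t(a\mid s))\le\pi_t(a\mid s)$, so that the sum over $a$ contributes only $H^2$ rather than $AH^2$. The other point to verify is that $\eta_t\le\eta_1$ holds at every $(s,a)$. This is immediate because the increment $\eta_t\zeta_t/\ln T$ in \cref{eq:learning_rate_full} is nonnegative.
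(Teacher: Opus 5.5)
Your proof is correct and follows essentially the same route as the paper: bound $b_t(s)=7\sum_a\eta_t(s,a)\zeta_t(s,a)\le 7\eta_1H^2$ using $\sum_a\zeta_t(s,a)\le H^2$ and $\eta_t(s,a)\le\eta_1$, then unroll the dilated recursion over at most $H$ layers. The only cosmetic difference is that the paper writes the unrolled bound as $3\sum_{s'}q^{\overP^B_t,\pi_t}(s'\mid s,a)\,b_t(s')$, using $(1+1/H)^H\le 3$ and $\sum_{s'}q^{\overP^B_t,\pi_t}(s'\mid s,a)\le H$, which gives $21\eta_1H^3=7/4\le 2$; your layerwise induction with the factor $e$ (or $e-1$) is equivalent.
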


\begin{proof}
We first upper bound the dilated bonus-to-go $B_t(s,a)$ by unrolling the dilated recursion.
Since $(1+1/H)^H\le 3$, we obtain
\begin{align}
    B_t(s,a) \leq 3\sum_{s'} q^{\overP^B_t, \pi_t}(s'\mid s,a)\,b_t(s'). \label{eq:dilated_bonus_upper}
\end{align}

By the definition of $b_t$ in \cref{eq:bonus_term_full} and the learning-rate update \cref{eq:learning_rate_full},
\begin{align}
    b_t(s) 
    &= 7\sum_a  \prn*{\frac{1}{\eta_{t+1}(s, a)} - \frac{1}{\eta_t(s, a)}}\ln(T)\\
    &\leq 7\sum_{a} \eta_t(s,a)\zeta_t(s,a)\\
    &\leq 7\eta_1H^2,
\end{align}
where we used $\sum_a \zeta_t(s,a)\le H^2$ and $\eta_t(s,a)\le \eta_1$.

Using $\eta_1 = \frac{1}{12H^3}$ and \cref{eq:dilated_bonus_upper}, we get
\allowdisplaybreaks
\begin{align}
    B_t(s,a) &\leq 3\sum_{s'} q^{\overP^B_t, \pi_t}(s'\mid s,a)b_t(s') \leq 21\eta_1H^3 \leq 2.\label{eq:bound_B_full}
\end{align}
\end{proof}

\begin{lem}[{\citealt[Lemma G.3]{dann2023best}}]
\label{lem:eta_lemma_normal}
    Let $\eta_1>0, \eta_2, \eta_3, \ldots$ be updated by 
    \begin{align}
        \frac{1}{\eta_{t+1}} = \frac{1}{\eta_t} + \eta_t \phi_t\qquad \forall t\geq 1
    \end{align}
    with $0\leq \phi_t\leq \eta_t^{-2}$. 
    Then,
    \begin{align}
        \frac{1}{\eta_{t+1}} \geq \frac{1}{2}\sqrt{\sum_{\tau=1}^{t+1} \phi_\tau}. 
    \end{align}
\end{lem}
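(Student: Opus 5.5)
The plan is a direct induction on $t$ with the potential $1/\eta_t^2$. The key inequality I aim for is
\begin{align}
    \frac{1}{\eta_{t+1}^2} \geq \frac{1}{\eta_t^2} + \frac{1}{2}\phi_t\qquad\text{together with}\qquad \frac{1}{\eta_1^2}\geq \frac14\phi_1 .
\end{align}
This yields $1/\eta_{t+1}^2 \geq \frac{1}{\eta_1^2}+\frac12\sum_{\tau=1}^{t}\phi_\tau$. It remains to absorb the final index $\phi_{t+1}$ that appears in the claimed sum up to $t+1$.

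The first step is the squaring identity. Squaring the update gives
\begin{align}
    \frac{1}{\eta_{t+1}^2} = \frac{1}{\eta_t^2} + 2\phi_t + \eta_t^2\phi_t^2 \geq \frac{1}{\eta_t^2} + 2\phi_t .
\end{align}
Telescoping then gives $1/\eta_{t+1}^2 \geq 1/\eta_1^2 + 2\sum_{\tau\le t}\phi_\tau$, which is even stronger than needed for the first $t$ terms.

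The second step handles the extra term $\phi_{t+1}$. Here I use the constraint $\phi_{t+1}\le \eta_{t+1}^{-2}$, which gives $\frac14\phi_{t+1}\le \frac14\cdot\frac{1}{\eta_{t+1}^2}$. From this,
\begin{align}
    \frac14\sum_{\tau=1}^{t+1}\phi_\tau \le \frac14\sum_{\tau=1}^{t}\phi_\tau + \frac14\cdot\frac{1}{\eta_{t+1}^2} \le \frac18\cdot\frac{1}{\eta_{t+1}^2} + \frac14\cdot\frac{1}{\eta_{t+1}^2} \le \frac{1}{\eta_{t+1}^2}.
\end{align}
Taking square roots gives $1/\eta_{t+1}\geq \frac12\sqrt{\sum_{\tau=1}^{t+1}\phi_\tau}$, as claimed.

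There is a small subtlety at the boundary. If the statement intends $\phi_{t+1}$ to satisfy the constraint only via $\eta_{t+1}$, this is exactly what the hypothesis gives, since it holds for all $t\ge1$. No induction on the claim itself is needed; only the telescoping plus the one-step absorption are required.

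The main obstacle is just this last-index absorption. I expect it to be resolved by the $\phi\le\eta^{-2}$ constraint as above. The constants have slack: we have $2$ versus $\tfrac14$ and $\tfrac14$ versus $1$.
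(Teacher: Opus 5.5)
Your proof is correct: squaring the update gives $\eta_{t+1}^{-2}\ge\eta_t^{-2}+2\phi_t$, and telescoping yields $\sum_{\tau\le t}\phi_\tau\le\frac12\eta_{t+1}^{-2}$. The hypothesis at index $t+1$ then absorbs the last term via $\phi_{t+1}\le\eta_{t+1}^{-2}$, so in fact $\sum_{\tau\le t+1}\phi_\tau\le\frac32\eta_{t+1}^{-2}$; positivity of every $\eta_t$, which you need to square and take roots, follows by induction from the update. The paper gives no proof of its own and simply cites Lemma G.3 of \citet{dann2023best}, so there is no in-paper argument to compare against. The ``key inequality'' with $\frac12\phi_t$ announced in your first paragraph is harmlessly superseded by the factor-$2$ bound you actually use.
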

\begin{lem}[full-information setting]
\label{lem:learning_rate_full}
Suppose that the learning rates are updated according to \cref{eq:learning_rate_full}. Then, it holds
    \begin{align}
        \eta_t(s,a) \leq \frac{2\sqrt{\ln(T)}}{\sqrt{\sum_{\tau = 1}^t\zeta_\tau(s,a)} }
    \end{align}
for all episodes $t$ and state-action pairs $(s,a)$.
\end{lem}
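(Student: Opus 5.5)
The plan is to reduce the claim to \cref{lem:eta_lemma_normal}, applied separately to each fixed $(s,a)$. The update \cref{eq:learning_rate_full} reads
\begin{align}
    \frac{1}{\eta_{t+1}(s,a)} = \frac{1}{\eta_t(s,a)} + \eta_t(s,a)\frac{\zeta_t(s,a)}{\ln(T)},
\end{align}
which is exactly the recursion of \cref{lem:eta_lemma_normal} with $\phi_t \coloneqq \zeta_t(s,a)/\ln(T)$. Nonnegativity $\phi_t\ge 0$ is immediate from \cref{eq:zeta_full}, since that expression is a product of nonnegative factors and a sum of squares.

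The main step is the upper condition $\phi_t\le \eta_t(s,a)^{-2}$. Both $Q^{\underP^\ell_t,\pi_t}(s,b;\ell_t)$ and $Q^{\underP^m_t,\pi_t}(s,b;m_t)$ lie in $[0,H]$, because the losses and predictions take values in $[0,1]$ and the horizon is at most $H$. Hence their squared difference is at most $H^2$, and therefore $\zeta_t(s,a)\le \pi_t(a\mid s)(1-\pi_t(a\mid s))H^2\le H^2/4$. The learning rates are nonincreasing by construction, so $\eta_t(s,a)\le \eta_1=1/(12H^3)$ and $\eta_t(s,a)^{-2}\ge 144H^6$. Using $T\ge 2$, so that $\ln T\ge \ln 2$, we get $\phi_t\le H^2/(4\ln 2)\le 144H^6$, and the condition holds.

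With both conditions verified, \cref{lem:eta_lemma_normal} gives $\frac{1}{\eta_{t+1}(s,a)}\ge \frac12\sqrt{\sum_{\tau=1}^{t+1}\phi_\tau}$ for $t\ge1$, i.e.\ $1/\eta_t \ge \frac12\sqrt{\sum_{\tau\le t}\phi_\tau}$ for $t\ge 2$. For $t=1$ I have to check the inequality directly: $1/\eta_1 = 12H^3 \ge \frac12\sqrt{H^2/(4\ln 2)}$, so the bound holds there as well. Inverting gives
\begin{align}
    \eta_t(s,a)\le \frac{2}{\sqrt{\sum_{\tau=1}^t\zeta_\tau(s,a)/\ln T}}=\frac{2\sqrt{\ln T}}{\sqrt{\sum_{\tau=1}^t\zeta_\tau(s,a)}}.
\end{align}

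The only subtle point is the indexing convention of \cref{lem:eta_lemma_normal}: its sum runs up to $t+1$, which includes $\phi_{t+1}$, a quantity not yet determined at time $t+1$. That is harmless, since dropping the nonnegative term $\phi_{t+1}$ yields the stated bound with the sum up to $t$, and the stated bound is weaker. Finally, if $\sum_{\tau\le t}\zeta_\tau(s,a)=0$, the right-hand side is $+\infty$ and the claim is trivial.
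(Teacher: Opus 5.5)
Your proof is correct and follows the paper's argument: set $\phi_t=\zeta_t(s,a)/\ln(T)$, check $\phi_t\le\eta_t(s,a)^{-2}$ using $\zeta_t\lesssim H^2$ and $\eta_t\le\eta_1=1/(12H^3)$, and invoke \cref{lem:eta_lemma_normal}. One small note: your closing remark about ``dropping $\phi_{t+1}$'' is unnecessary and slightly misstated. Reindexing the lemma already gives $1/\eta_t(s,a)\ge\frac12\sqrt{\sum_{\tau\le t}\phi_\tau}$ with $\zeta_t$ included, and this look-ahead term is controlled exactly by the condition $\phi_t\le\eta_t^{-2}$ holding at every $t$.
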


\begin{proof}
    Fix $(s,a)$ and define $\phi_t(s,a)=\frac{\zeta_t(s,a)}{\ln(T)}$. Then the update rule of learning rates can be written as
    \begin{align}
        \frac{1}{\eta_{t+1}(s,a)} = \frac{1}{\eta_t(s,a)} + \eta_t(s,a) \phi_t(s,a).
    \end{align}
    To apply \cref{lem:eta_lemma_normal}, we verify that
     $\phi_t(s,a)\le \frac{1}{\eta_t(s,a)^2}$. Indeed, 
    \begin{align}
        \phi_t(s,a)\eta_t(s,a)^2 
        = \frac{\eta_t(s,a)^2\zeta_t(s,a)}{\ln(T)}
        \leq \frac{\eta_1^2H^2}{\ln(T)}
        \leq \frac{H^2}{\ln(T)}\cdot \frac{1}{144H^6} 
        \leq 1,
    \end{align}
    which follows from $\eta_1 = \frac{1}{12H^3}$.

    Then, by \cref{lem:eta_lemma_normal}, we have
    \begin{align}
        \eta_t(s,a) \leq \frac{2}{\sqrt{\sum_{\tau = 1}^t \phi_\tau }}
        \leq \frac{2\sqrt{\ln(T)}}{\sqrt{\sum_{\tau = 1}^t\zeta_\tau(s,a)} }.
    \end{align}
\end{proof}

\subsection{Common regret analysis}
In this part, we upper bound the right-hand side of \cref{eq:key_lemma_ineq}.
\begin{lem}[full-information setting]
\label{lem:regterm_full}
    For each state $s\in\calS$ and any comparator policy $\pi$, it holds that
    \begin{align}
        \regterm(s) &\leq O\prn*{H^3A\ln(T)} + \sum_{t=1}^Tb_t(s) + \frac{1}{H}\sum_{t=1}^T\sum_a \pi_t(a \mid s) B_t(s,a).
    \end{align}
\end{lem}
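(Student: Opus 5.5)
We will apply the OFTRL bound of \cref{lem:OFTRL_log_barrier_policy} separately at each state $s$. Take $p_t=\pi_t(\cdot\mid s)$, loss vector $\hat Q_t(s,\cdot)-B_t(s,\cdot)$, prediction $Q^{\underP^m_t,\pi_t}(s,\cdot;m_t)$ and comparator $u=\pi(\cdot\mid s)$. The update \cref{eq:opt_pi_OFTRL} is exactly the OFTRL recursion with these choices. The left-hand side of the lemma is then $\regterm(s)$, and we obtain a penalty term, a stability term, and the lower-order tail terms.

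The first step is to verify the constraint \cref{eq:OFTRL_constraint_policy}. In the full-information setting $\hat Q_t(s,a)-Q^{\underP^m_t,\pi_t}(s,a;m_t)=Q^{\underP^\ell_t,\pi_t}(s,a;\ell_t)-Q^{\underP^m_t,\pi_t}(s,a;m_t)$ lies in $[-H,H]$. By \cref{lem:bound_B_full}, $0\le B_t\le 2$. We choose the shift $x_t=-\sum_b\pi_t(b\mid s)\prn{\hat Q_t(s,b)-B_t(s,b)-Q^{\underP^m_t,\pi_t}(s,b;m_t)}$. This common shift is allowed because $\inpr{p_t-u,\one}=0$, so it changes neither the regret nor the OFTRL iterates. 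It gives $\abs{\ell_t(a)-m_t(a)+x_t}\le 2(H+2)$. With $\eta_t\le\eta_1=1/(12H^3)$ the product is at most $1/2$, so the constraint holds.

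The penalty part is $A\ln(AT^2)/\eta_1+\sum_t\sum_a(1/\eta_{t+1}-1/\eta_t)\ln(AT^2)$. The first term is $\calO(H^3A\ln T)$, using $T\ge A$ so that $\ln(AT^2)\le 3\ln T$. The second is at most $3\sum_a(1/\eta_{t+1}-1/\eta_t)\ln T$.

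For the stability term, write $d_t(b)=Q^{\underP^\ell_t,\pi_t}(s,b;\ell_t)-Q^{\underP^m_t,\pi_t}(s,b;m_t)$ and $\beta_t(b)=B_t(s,b)$. Then $(\ell_t(a)-m_t(a)+x_t)$ equals the centred quantity $(d_t(a)-\beta_t(a))-\sum_b\pi_t(b)(d_t(b)-\beta_t(b))$. Using $(u-v)^2\le 2u^2+2v^2$, we split this into the centred-$d$ part and the centred-$\beta$ part.

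For the $d$ part, the standard variance identity gives $\pi_t(a)^2\prn{d(a)-\bar d}^2\le \pi_t(a)(1-\pi_t(a))\sum_b\pi_t(b)d(b)^2=\zeta_t(s,a)$. We would check this at one point by expanding $(d(a)-\bar d)^2\le (1-\pi(a))\cdot(\dots)$ via Cauchy--Schwarz on the weights $\pi(b)/(1-\pi(a))$ for $b\ne a$. Then $2\eta_t\zeta_t\le 2(1/\eta_{t+1}-1/\eta_t)\ln T$ by \cref{eq:learning_rate_full}.

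For the $\beta$ part, the term is at most $2\eta_t\pi_t(a)^2 B_t(s,a)^2$ plus a cross-average term. Since $B_t\le 2$ and $\eta_t\le 1/(12H^3)$, we have $\eta_t\pi_t(a)B_t(s,a)^2\le \pi_t(a)B_t(s,a)/H$ up to constants. The same holds for the averaged term after summing over $a$, using Jensen.

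Collecting terms, the stability and penalty contributions are $\calO(1)\cdot\sum_a(1/\eta_{t+1}-1/\eta_t)\ln T$. We need the constant to be at most $7$, which is why $b_t$ has coefficient $7$. The remaining contribution is $\frac1H\sum_a\pi_t(a\mid s)B_t(s,a)$. The tail terms $\frac1{T^2}\sum_t\inpr{-u+\one/A,\cdot}$ and $2\nrm{m_{T+1}}_\infty$ are $\calO(H)$.

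The main obstacle is tracking constants. The stability term must fit under both $b_t(s)$ with coefficient $7$ and $\frac1H\sum_a\pi_tB_t$ with coefficient exactly $1$. If the $\beta$-cross term is too large, we would first try to exploit the small $\eta_1=1/(12H^3)$: this gives $4\eta_t\pi_t(a)B_t^2\le 8\pi_t(a)B_t/(12H^3)\le \pi_t(a)B_t/H$.
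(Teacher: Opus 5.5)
Your proposal is correct and follows essentially the same route as the paper: a per-state application of \cref{lem:OFTRL_log_barrier_policy}, a constraint check using $B_t\le 2$ and $\eta_1=1/(12H^3)$, the variance-type bound $\pi_t(a\mid s)^2(d(a)-\bar d)^2\lesssim\zeta_t(s,a)$ combined with the learning-rate update, and absorbing the $B_t^2$ stability part into $\frac1H\sum_a\pi_t(a\mid s)B_t(s,a)$. The differences are cosmetic: the paper's shift $x_t$ centers only the $Q$-difference, so it bounds $2\eta_t\pi_t^2B_t^2$ directly without your Jensen step for $\bar\beta$, and your variance bound with constant $1$ is valid and slightly sharper than the paper's constant $2$, which leaves extra slack under the coefficient $7$ in $b_t$.
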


\begin{proof}
We will apply \cref{lem:OFTRL_log_barrier_policy} with $p_t = \pi_t(\cdot \mid s)$ and $\ell_t = \hat{Q}_t(s,a) - B_t(s,a)$ for each $s \in \calS$.
To do so, in what follows, we will check the conditions of \Cref{lem:OFTRL_log_barrier_policy}.
Let
\begin{align}
x_t &= \inpr*{-\pi_t(\cdot \mid s) , \hatQ_t(s,\cdot) - Q^{\underP^m_t, \pi_t}(s, \cdot;m_t)}\\
&= -\sum_{a}\pi_t(a\mid s) \prn*{Q^{\underP^\ell_t, \pi_t}(s, a;\ell_t) - Q^{\underP^m_t, \pi_t}(s, a;m_t)}
\end{align}
and verify that for all $(s,a)$, $\eta_t(s,a)\pi_t(a\mid s)\prn*{\hat{Q}_t(s,a)-B_t(s,a) - Q^{\underP^m_t, \pi_t}(s, a;m_t) + x_t}\geq -{1}/{2}$. Indeed, 
\begin{align}
    &\eta_t(s,a)\pi_t(a\mid s)\prn*{\hat{Q}_t(s,a)-B_t(s,a)-Q^{\underP^m_t, \pi_t}(s, a;m_t) + x_t} \\ 
    &\geq \eta_t(s,a)\pi_t(a\mid s) \prn*{-H-B_t(s,a)-H} \\
    &\geq -4\eta_1H \geq -\frac{1}{2}, 
\end{align}
where the bounds in the second lines use $-H\leq \hat{Q}_t(s,a)-Q^{\underP^m_t, \pi_t}(s, a;m_t) \leq H$, $B_t(s,a)\leq 2\leq 2H$ from \cref{lem:bound_B_full}.

Thus, \cref{lem:OFTRL_log_barrier_policy} implies that, for any comparator policy $\pi$,
\begin{align}
        & \regterm(s)\\
        &\leq \frac{A\ln(AT^2)}{\eta_1} + \sum_{t=1}^T \sum_a \prn*{\frac{1}{\eta_{t+1}(s, a)} - \frac{1}{\eta_t(s, a)}}\ln(AT^2)\\
        &\qquad + \sum_{t=1}^T \sum_a  \eta_t(s, a)\pi_t(a \mid s)^{2}\prn*{Q^{\underP^\ell_t, \pi_t}(s, a;\ell_t) - Q^{\underP^m_t, \pi_t}(s, a;m_t) - B_t(s, a)+ x_t}^2 \\
        &\qquad + \frac{1}{T^2}\sum_{t=1}^T \inpr*{ -\pi(\cdot \mid s) + \frac{1}{A}\one, \hatQ_t(s, \cdot) - B_t(s, \cdot) } + 2\nrm*{Q^{\underP^m_t, \pi_t}(s, \cdot;m_{T+1})}_{\infty} \\
        &\leq \frac{3A\ln(T)}{\eta_1} + \frac{6H}{T} + 2H + 3\sum_{t=1}^T\sum_a \prn*{\frac{1}{\eta_{t+1}(s, a)} - \frac{1}{\eta_t(s, a)}}\ln(T) \tag{$\nrm*{Q^{\underP^m_t, \pi_t}(s, \cdot;m_{T+1})}_{\infty} \leq H$} \\
        &\qquad + 2\sum_{t=1}^T \sum_a  \eta_t(s, a)\pi_t(a\mid s)^{2}\prn*{Q^{\underP^\ell_t, \pi_t}(s, a;\ell_t) - Q^{\underP^m_t, \pi_t}(s, a;m_t) +x_t}^2\\
        &\qquad + 2\sum_{t=1}^T \sum_a  \eta_t(s, a)\pi_t(a\mid s)^{2} B_t(s, a)^2 \\ 
        &\leq \calO(H^3A\ln(T)) + 3\sum_{t=1}^T\sum_a \prn*{\frac{1}{\eta_{t+1}(s, a)} - \frac{1}{\eta_t(s, a)}}\ln(T)  \\
        &\qquad + 2\underbrace{\sum_{t=1}^T \sum_a  \eta_t(s, a)\pi_t(a\mid s)^{2}\prn*{Q^{\underP^\ell_t, \pi_t}(s, a;\ell_t) - Q^{\underP^m_t, \pi_t}(s, a;m_t)+x_t}^2}_{\text{stability-term}}\\
        &\qquad + \frac{1}{H}\sum_{t=1}^T\sum_a \pi_t(a \mid s) B_t(s,a).
        \label{eq:regterm_full_mid}
    \end{align}
    Here, the second inequality follows from 
    \begin{align}
        &\frac{1}{T^2}\sum_{t=1}^T \inpr*{ -\pi(\cdot \mid s) + \frac{1}{A}\one, \hatQ_t(s, \cdot) - B_t(s, \cdot) }\\
        &\leq \frac{1}{T^2}\nrm*{-\pi(\cdot \mid s) + \frac{1}{A}\one}_1\nrm*{\sum_{t=1}^T\prn*{\hatQ_t(s, \cdot) - B_t(s, \cdot)} }_\infty \\
        &\leq \frac{2T\cdot 3H}{T^2} = \frac{6H}{T},
    \end{align}
    where we used $\nrm*{-\pi(\cdot \mid s) + \frac{1}{A}\one}_1\leq 2$,
    $\abs*{\hatQ_t(s,a)} \leq H$ and $B_t(s,a) \leq 2$ from \cref{lem:bound_B_full}.

    It remains to bound the \text{stability-term}. Define
    \begin{align}
        \calQ(s,a) \coloneq Q^{\underP^\ell_t, \pi_t}(s, a;\ell_t) - Q^{\underP^m_t, \pi_t}(s, a;m_t).
    \end{align}
Since
\begin{align}
    x_t = -\sum_b \pi_t(b \mid s)\calQ(s,b),
\end{align}
we have
\begin{align}
    \prn*{\calQ(s,a)  - \sum_{b}\pi_t(b \mid s) \calQ(s,b)}^2
    &= \prn*{(1 - \pi_t(a \mid s))\calQ(s,a) - \sum_{b\neq a}\pi_t(b \mid s) \calQ(s,b)}^2\\
    &\leq 2(1 - \pi_t(a \mid s))^2\calQ(s,a)^2 + 2\prn*{\sum_{b\neq a}\pi_t(b \mid s) \calQ(s,b)}^2\\
    &\leq 2(1 - \pi_t(a \mid s))^2\calQ(s,a)^2\\
    &\qquad + 2(1 - \pi_t(a \mid s))\sum_{b\neq a}\pi_t(b \mid s) \calQ(s,b)^2,
\end{align}
where the last inequality uses the Cauchy–Schwarz inequality.
Thus, we obtain 
    \begin{align}
        \pi_t(a \mid s)^2\prn*{\calQ(s,a)  - \sum_{b}\pi(b \mid s) \calQ(s,b)}^2 
        &\leq 2\pi_t(a \mid s)(1 - \pi_t(a \mid s))\sum_{b}\pi_t(b \mid s) \calQ(s, b)^2\\
    \end{align}
Therefore, 
    \begin{align}
        &\eta_t(s, a)\pi_t(a\mid s)^{2}\prn*{Q^{\underP^\ell_t, \pi_t}(s, a;\ell_t) - Q^{\underP^m_t, \pi_t}(s, a;m_t) + x_t}^2   \\
        &\leq 2\eta_t(s,a)\pi_t(a\mid s)(1 - \pi_t(a \mid s))\sum_{b}\pi_t(b \mid s)\prn*{Q^{\underP^\ell_t, \pi_t}(s, b;\ell_t) - Q^{\underP^m_t, \pi_t}(s, b;m_t)}^2\\
        &\leq 2\eta_t(s,a)\zeta_t(s, a),
    \end{align}
    where the last equality follows from the definition of $\zeta_t(s, a) = \pi_t(a\mid s)(1 - \pi_t(a \mid s))\sum_{b}\pi_t(b \mid s)\prn*{Q^{\underP^\ell_t, \pi_t}(s, b;\ell_t) - Q^{\underP^m_t, \pi_t}(s, b;m_t)}^2$.

Then, \text{stability-term} is evaluated as
    \begin{align}
        \text{stability-term} &= \sum_{t=1}^T \sum_a  \eta_t(s, a)\pi_t(a\mid s)^{2}\prn*{Q^{\underP^\ell_t, \pi_t}(s, a;\ell_t) - Q^{\underP^m_t, \pi_t}(s, a;m_t) + x_t}^2 \\
        &\leq 2\sum_{t=1}^T \sum_a \eta_t(s,a)\zeta_t(s,a) \\
        &\leq 2\sum_{t=1}^T \sum_a\prn*{\frac{1}{\eta_{t+1}(s,a)} - \frac{1}{\eta_t(s,a)}}\ln(T),
    \end{align}
    where the last inequality follows from \cref{eq:learning_rate_full}.

Combining the above bound on the stability-term with \cref{eq:regterm_full_mid}, we obtain, for any policy $\pi$,
    \begin{align}
        \regterm(s) 
        &\leq \calO\prn*{H^3A\ln(T)} + 7\sum_{t=1}^T\sum_a  \prn*{\frac{1}{\eta_{t+1}(s, a)} - \frac{1}{\eta_t(s, a)}}\ln(T) \\
        &\qquad + \frac{1}{H}\sum_{t=1}^T\sum_a \pi_t(a \mid s) B_t(s,a)\\
        &= \calO\prn*{H^3A\ln(T)} + \sum_{t=1}^Tb_t(s) + \frac{1}{H}\sum_{t=1}^T\sum_a \pi_t(a \mid s) B_t(s,a).
    \end{align}
\end{proof}

\begin{lem}[full-information setting]
\label{lem:biasterm_full}
    Suppose that the event $\calE$ holds. Then, for any comparator policy $\pi$, it holds that
    \begin{align}
        \sum_s q^{\pi}(s)\biasterm(s)
        &\lesssim 
        \sqrt{S^2A\iota^2\Qtrans^{\pi_{1:T}}(\ell)} + HS^3A\iota^2. \label{eq:bound_bias_1}
    \end{align}
    Also,
    \begin{align}
        \sum_s q^{\pi}(s)\biasterm(s)
        &\lesssim 
        \sum_{h = 0}^{H-1}\sqrt{H^2\abs*{S_{h}}\abs*{S_{h + 1}}A\iota^2 \sum_{t=1}^T\sum_{(s,a) \in \calS_h \times \calA}\brk*{q^{\pi_t}(s,a) - q^{\pi}(s,a)}_{+}}\\
        &\qquad + HS^2A\iota^2. \label{eq:bound_bias_2}
    \end{align}
\end{lem}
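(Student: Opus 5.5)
The plan is to recast the bias as a performance difference for an auxiliary "loss" via \cref{lem:pdl_with_L}, and then bound that loss in two ways. Set $L_t(s,a)\coloneqq Q^{\pi_t}(s,a;\ell_t)-Q^{\underP^\ell_t,\pi_t}(s,a;\ell_t)$, so that $\biasterm(s)=\sum_{t,a}(\pi_t(a\mid s)-\pi(a\mid s))L_t(s,a)$. By \cref{lem:pdl_with_L} with $\pi_1=\pi_t$ and $\pi_2=\pi$,
\begin{align}
\sum_s q^\pi(s)\biasterm(s)=\sum_{t=1}^T\sum_{s,a}\prn*{q^{\pi_t}(s,a)-q^{\pi}(s,a)}g_t(s,a),
\end{align}
where $g_t(s,a)=L_t(s,a)-\E_{s'\sim P(\cdot\mid s,a),a'\sim\pi_t(\cdot\mid s')}[L_t(s',a')]$.

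Next I compute $g_t$ from the Bellman equations. The $\ell_t$ terms cancel, leaving
\begin{align}
g_t(s,a)=\sum_{s'}\prn*{P(s'\mid s,a)-\underP^\ell_t(s'\mid s,a)}V^{\underP^\ell_t,\pi_t}(s';\ell_t).
\end{align}
The key structural fact is $g_t\ge 0$ on $\calE$. Because $\calP_t$ is rectangular, $\underP^\ell_t(\cdot\mid s,a)$ minimizes $\tilP(\cdot\mid s,a)^\top V^{\underP^\ell_t,\pi_t}(\cdot;\ell_t)$ over the confidence set. Since $P\in\calP_t$, this minimum is at most the value at $P$. Consequently the term $-q^\pi g_t$ can be dropped, giving $\sum_s q^\pi(s)\biasterm(s)\le\sum_{t,s,a}q^{\pi_t}(s,a)g_t(s,a)$. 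Alternatively one may keep the positive part, giving $\sum_s q^\pi(s)\biasterm(s)\le\sum_{t,s,a}[q^{\pi_t}(s,a)-q^\pi(s,a)]_+g_t(s,a)$.

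For \eqref{eq:bound_bias_1}, apply \cref{lem:pdl_with_L} once more with $\pi_2=\pi_t$ and $L=L_t$. It yields $\sum_{s,a}q^{\pi_t}(s,a)g_t(s,a)=V^{\pi_t}(s_0;\ell_t)-V^{\underP^\ell_t,\pi_t}(s_0;\ell_t)$, which equals $\sum_{s,a}(q^{\pi_t}(s)-q^{\underP^\ell_t,\pi_t}(s))\pi_t(a\mid s)\ell_t(s,a)$. This is bounded in absolute value by the left-hand side of \cref{lem:complicated_lemma_var} with $\tilP_t=\underP^\ell_t$, $g_t=\ell_t$ and $G=1$. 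That lemma gives $\sqrt{S^2A\iota^2\Qtrans^{\pi_{1:T}}(\ell)}+HS^3A\iota^2$ directly.

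For \eqref{eq:bound_bias_2}, I bound $g_t$ pointwise. Since $\sum_{s'}(P-\underP^\ell_t)(s'\mid s,a)=0$, I may center $V^{\underP^\ell_t,\pi_t}$, whose range lies in $[0,H]$. Then \cref{lem:tildeP-P} and Cauchy--Schwarz over $s'\in\calS_{h+1}$ give, for $(s,a)\in\calS_h\times\calA$,
\begin{align}
g_t(s,a)\lesssim H\sqrt{\frac{\abs{\calS_{h+1}}\iota}{n_t(s,a)}}+\frac{H\abs{\calS_{h+1}}\iota}{n_t(s,a)}.
\end{align}
Summing against $[q^{\pi_t}-q^\pi]_+$ layer by layer, I apply Cauchy--Schwarz to the first term. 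For the second factor I use $[q^{\pi_t}-q^\pi]_+\le q^{\pi_t}$ together with the event $\calE_2$ (\cref{lem:occup_bound}), which bounds $\sum_t\sum_{\calS_h\times\calA}q^{\pi_t}/n_t$ by $\abs{\calS_h}A\iota$. This produces $\sqrt{H^2\abs{\calS_h}\abs{\calS_{h+1}}A\iota^2\sum_t\sum_{\calS_h\times\calA}[q^{\pi_t}-q^\pi]_+}$. The second term, again via $\calE_2$, sums to $\sum_h H\abs{\calS_h}\abs{\calS_{h+1}}A\iota^2\le HS^2A\iota^2$.

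The main point requiring care is the sign argument $g_t\ge0$. It needs the state-action-wise optimality of $\underP^\ell_t$, which rectangularity provides, and it is what allows the comparator term to be discarded or absorbed into a positive part. The remaining steps are applications of the earlier lemmas.
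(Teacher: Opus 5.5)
Your proposal is correct and follows essentially the same route as the paper: the same auxiliary loss (your $g_t$, the paper's $z_t$) obtained from \cref{lem:pdl_with_L}, the same nonnegativity argument from $P\in\calP_t$ and rectangularity, \cref{lem:complicated_lemma_var} with $\tilP_t=\underP^\ell_t$ for the first bound, and a per-layer Cauchy--Schwarz with $\calE_2$ for the second. For the second bound, your direct estimate of $g_t$ via \cref{lem:tildeP-P} with range $H$ is exactly what the paper obtains from \cref{lem:concentration_var} after bounding the variance by $H^2$. One small citation point: the identity $\sum_{s,a}q^{\pi_t}(s,a)g_t(s,a)=V^{\pi_t}(s_0;\ell_t)-V^{\underP^\ell_t,\pi_t}(s_0;\ell_t)$ comes from the first part of \cref{lem:pdl_with_L}, namely $Q^{\pi_t}(\cdot,\cdot;g_t)=L_t$. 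It does not come from the second identity with $\pi_2=\pi_t$, which is vacuous.
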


\begin{proof}
\begin{align}
   \sum_s q^{\pi}(s)\biasterm(s)
   &=\sum_s q^{\pi}(s) \sum_{t,a} \prn*{\pi_t(a\mid s) -  \pi(a\mid s)} \prn*{Q^{\pi_t}(s, a; \ell_t) - Q^{\underP^\ell_t, \pi_t}(s, a; \ell_t)}\\
    &= \sum_{t=1}^T \sum_{s,a} \prn*{q^{\pi_t}(s,a) - q^{\pi}(s,a)} z_t(s,a),
\end{align}
where, for each $t$, \cref{lem:pdl_with_L} is applied with
\begin{align}
    L(s,a)=Q^{\pi_t}(s, a; \ell_t) - Q^{\underP^\ell_t, \pi_t}(s, a; \ell_t).
\end{align}
Thus,
\begin{align}
    z_t(s,a)  &= Q^{\pi_t}(s, a; \ell_t) - Q^{\underP^\ell_t, \pi_t}(s, a; \ell_t)  \\
    &\quad - \E_{s'\sim P(\cdot\mid s,a), a'\sim\pi_t(\cdot\mid s')}\brk*{Q^{\pi_t}(s', a'; \ell_t) - Q^{\underP^\ell_t, \pi_t}(s', a'; \ell_t)}.
\end{align}
Expanding the definition of $z_t(s,a)$, we obtain
\begin{align}
    z_t(s,a) 
    &= \E_{s'\sim P(\cdot\mid s,a), a'\sim\pi_t(\cdot\mid s')}\brk*{Q^{\pi_t}(s', a'; \ell_t)} - \E_{s'\sim \underP^\ell_t(\cdot\mid s,a), a'\sim\pi_t(\cdot\mid s')}\brk*{Q^{\underP^\ell_t, \pi_t}(s', a'; \ell_t)}\\
    &\qquad - \E_{s'\sim P(\cdot\mid s,a), a'\sim\pi_t(\cdot\mid s')}\brk*{Q^{\pi_t}(s', a'; \ell_t) - Q^{\underP^\ell_t, \pi_t}(s', a'; \ell_t)}
    \\
    &= \E_{s'\sim P(\cdot\mid s,a), a'\sim\pi_t(\cdot\mid s')}\brk*{Q^{\underP^\ell_t, \pi_t}(s', a'; \ell_t)} - \E_{s'\sim \underP^\ell_t(\cdot\mid s,a), a'\sim\pi_t(\cdot\mid s')}\brk*{Q^{\underP^\ell_t, \pi_t}(s', a'; \ell_t)}
    \\
    &= \E_{s'\sim P(\cdot\mid s,a)}\brk*{V^{\underP^\ell_t, \pi_t}(s'; \ell_t)} - \E_{s'\sim \underP^\ell_t(\cdot\mid s,a)}\brk*{V^{\underP^\ell_t, \pi_t}(s'; \ell_t)}.
\end{align}
By the definition of $\underP^\ell_t$, which minimizes $Q(s,a;\ell_t)$ for all $(s,a)$, we have $z_t(s,a) \geq 0$.

First, we show \cref{eq:bound_bias_1}. By the fact that $z_t(s,a) \geq 0$,
\begin{align}
\sum_{t=1}^T \sum_{s,a} \prn*{q^{\pi_t}(s,a) - q^{\pi}(s,a)} z_t(s,a)
&\leq  \sum_{t=1}^T V^{\pi_t}(s_0; z_t)\\
&\leq \sum_{t=1}^T \prn*{V^{\pi_t}(s_0; \ell_t) - V^{\underP^\ell_t, \pi_t}(s_0; \ell_t)} \tag{by \cref{lem:pdl_with_L}}\\
&= \sum_{t=1}^T \sum_{s} \prn*{q^{\pi_t}(s) - q^{\underP^\ell_t, \pi_t}(s)}\sum_a\pi_t(a \mid s) \ell_t(s,a). \label{eq:bound_bias_1_mid}
\end{align}

Applying \cref{lem:complicated_lemma_var} to \cref{eq:bound_bias_1_mid}, we obtain
\begin{align}
    \sum_{t=1}^T \sum_{s,a} \prn*{q^{\pi_t}(s,a) - q^{\pi}(s,a)} z_t(s,a) 
    &\lesssim \sqrt{S^2A\iota^2\Qtrans^{\pi_{1:T}}(\ell)} + HS^3A\iota^2.
\end{align}
This proves \cref{eq:bound_bias_1}.

Next, we show \cref{eq:bound_bias_2}. By \cref{lem:concentration_var},
\begin{align}
    z_t(s,a) &= \sum_{s'}\prn*{P(s' \mid s, a) - \underP^\ell_t(s' \mid s,a)}V^{\underP^\ell_t, \pi_t}(s'; \ell_t)\\
    &\lesssim \sqrt{\frac{\abs*{S_{h(s) + 1}}\Var_{s' \sim P(\cdot \mid s, a)}\brk*{V^{\underP^\ell_t, \pi_t}(s'; \ell_t)}\iota}{n_t(s,a)}}  +\frac{\abs{S_{h(s)+1}}H\iota}{n_t(s,a)}.
\end{align}
Using $z_t(s,a)\geq 0$, we further have
\begin{align}
&\sum_{t=1}^T \sum_{s,a} \prn*{q^{\pi_t}(s,a) - q^{\pi}(s,a)} z_t(s,a)\\
&\leq \sum_{t=1}^T \sum_{s,a} \brk*{q^{\pi_t}(s,a) - q^{\pi}(s,a)}_{+} z_t(s,a)\\
&\lesssim \sum_{t=1}^T \sum_{s,a} \brk*{q^{\pi_t}(s,a) - q^{\pi}(s,a)}_{+} \prn*{\sqrt{\frac{\abs*{S_{h(s) + 1}}\Var_{s' \sim P(\cdot \mid s, a)}\brk*{V^{\underP^\ell_t, \pi_t}(s'; \ell_t)}\iota}{n_t(s,a)}}  +\frac{\abs{S_{h(s)+1}}H\iota}{n_t(s,a)}}\\
&\leq \sum_{h = 0}^{H-1}\sum_{t=1}^T \sum_{(s,a) \in \calS_h \times \calA} \brk*{q^{\pi_t}(s,a) - q^{\pi}(s,a)}_{+} \sqrt{\frac{\abs*{S_{h + 1}}\Var_{s' \sim P(\cdot \mid s, a)}\brk*{V^{\underP^\ell_t, \pi_t}(s'; \ell_t)}\iota}{n_t(s,a)}}\\
&\qquad + \sum_{t=1}^T \sum_{s,a} q^{\pi_t}(s,a) \frac{\abs{S_{h(s)+1}}H\iota}{n_t(s,a)}. \label{eq:bound_bias_2_mid}
\end{align}

Here, by the Cauchy–Schwarz inequality, the first term in \cref{eq:bound_bias_2_mid} is bounded as
\begin{align}
&\sum_{t=1}^T \sum_{(s,a) \in \calS_h \times \calA} \brk*{q^{\pi_t}(s,a) - q^{\pi}(s,a)}_{+} \sqrt{\frac{\abs*{S_{h + 1}}\Var_{s' \sim P(\cdot \mid s, a)}\brk*{V^{\underP^\ell_t, \pi_t}(s'; \ell_t)}\iota}{n_t(s,a)}}\\
&\leq \sqrt{\sum_{t=1}^T \sum_{(s,a) \in \calS_h \times \calA} \brk*{q^{\pi_t}(s,a) - q^{\pi}(s,a)}_{+}}\\
&\qquad \times \sqrt{\sum_{t=1}^T \sum_{(s,a) \in \calS_h \times \calA} \brk*{q^{\pi_t}(s,a) - q^{\pi}(s,a)}_{+} \frac{\abs*{S_{h + 1}}\Var_{s' \sim P(\cdot \mid s, a)}\brk*{V^{\underP^\ell_t, \pi_t}(s'; \ell_t)}\iota}{n_t(s,a)}}.
\end{align}
The term inside the second square root is further bounded as
\begin{align}
&\sum_{t=1}^T \sum_{(s,a) \in \calS_h \times \calA} \brk*{q^{\pi_t}(s,a) - q^{\pi}(s,a)}_{+} \frac{\abs*{S_{h + 1}}\Var_{s' \sim P(\cdot \mid s, a)}\brk*{V^{\underP^\ell_t, \pi_t}(s'; \ell_t)}\iota}{n_t(s,a)}\\
&\leq H^2\abs*{S_{h + 1}} \sum_{t=1}^T \sum_{(s,a) \in \calS_h \times \calA} \frac{q^{\pi_t}(s,a) \iota}{n_t(s,a)} \\
&\lesssim H^2 \abs*{S_{h}}\abs*{S_{h + 1}}A\iota^2. \tag{by \cref{lem:occup_bound}}
\end{align}
The second term in \cref{eq:bound_bias_2_mid} is bounded as
\begin{align}
    \sum_{t=1}^T \sum_{s,a} q^{\pi_t}(s,a) \frac{\abs{S_{h(s)+1}}H\iota}{n_t(s,a)} 
    &\leq \sum_{h = 0}^{H - 1}\abs*{S_{h + 1}} \sum_{t=1}^T \sum_{(s,a) \in \calS_h \times \calA} q^{\pi_t}(s,a) \frac{H\iota}{n_t(s,a)} \\
    &\lesssim H\sum_{h = 0}^{H - 1} \abs*{S_{h + 1}} \abs*{S_{h}}A\iota^2 \\
    &\leq HS^2A\iota^2.
\end{align}
Therefore,
\begin{align}
    &\sum_{t=1}^T \sum_{s,a} \prn*{q^{\pi_t}(s,a) - q^{\pi}(s,a)} z_t(s,a) \\
    &\lesssim \sum_{h = 0}^{H-1}\sqrt{H^2\abs*{S_{h}}\abs*{S_{h + 1}}A\iota^2 \sum_{t=1}^T\sum_{(s,a) \in \calS_h \times \calA}\brk*{q^{\pi_t}(s,a) - q^{\pi}(s,a)}_{+}} + HS^2A\iota^2\\
\end{align}
This proves \cref{eq:bound_bias_2}, and hence completes the proof.
\end{proof}
\begin{lem}[full-information setting]
\label{lem:Vbt_full}
    Suppose that the event $\calE$ holds. Then, it holds that
    \begin{align}
        &\sum_{t=1}^T V^{\overP^B_t, \pi_t}(s_0;b_t)\lesssim \ln(T)\sum_{s,a}\sqrt{\sum_{t=1}^T q^{\pi_t}(s) \zeta_t(s,a)} + H^{\frac32}S^{\frac52}A^{\frac32}\ln(T)\iota.
    \end{align}
\end{lem}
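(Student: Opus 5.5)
We write $V^{\overP^B_t,\pi_t}(s_0;b_t)=\sum_s q^{\overP^B_t,\pi_t}(s)\,b_t(s)$ and split the optimistic occupancy into the true one plus a correction:
\begin{align}
\sum_{t=1}^T V^{\overP^B_t,\pi_t}(s_0;b_t) = \sum_{t=1}^T\sum_s q^{\pi_t}(s)b_t(s) + \sum_{t=1}^T\sum_s\prn*{q^{\overP^B_t,\pi_t}(s)-q^{\pi_t}(s)}b_t(s).
\end{align}
The first sum is the main term. The second is controlled by \cref{lem:complicated_lemma} with $\tilP^s_t=\overP^B_t$ and $g_t=b_t$. From the proof of \cref{lem:bound_B_full} we have $b_t(s)\le 7\eta_1 H^2\lesssim 1/H$, so $G\lesssim 1/H$.

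\textbf{Main term.} Fix $(s,a)$ and set $\zeta_t=\zeta_t(s,a)$, $\eta_t=\eta_t(s,a)$. By \cref{eq:learning_rate_full}, $b_t(s)=7\sum_a\eta_t\zeta_t$. \Cref{lem:learning_rate_full} gives $\eta_t\le 2\sqrt{\ln T}/\sqrt{\sum_{\tau\le t}\zeta_\tau}$, and also $\eta_t\le\eta_1$. The difficulty is the weight $q^{\pi_t}(s)$, which varies with $t$ while the learning rate is not weighted by occupancy. We would use $q^{\pi_t}(s)\le 1$ to bound
\begin{align}
\sum_t q^{\pi_t}(s)\eta_t\zeta_t \le \sum_t \frac{2\sqrt{\ln T}\,\sqrt{q^{\pi_t}(s)}\,\zeta_t\sqrt{q^{\pi_t}(s)}}{\sqrt{\sum_{\tau\le t}\zeta_\tau}}.
\end{align}
This is not yet in the form of the target bound. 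We would then follow \citet{dann2023best} and apply the standard inequality $\sum_t x_t/\sqrt{\sum_{\tau\le t}x_\tau}\le 2\sqrt{\sum_t x_t}$ with $x_t=q^{\pi_t}(s)\zeta_t$. For this, we need to compare $\sum_{\tau\le t}\zeta_\tau$ with $\sum_{\tau\le t}q^{\pi_\tau}(s)\zeta_\tau$, which fails in general when occupancy varies.

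\textbf{Main obstacle.} We expect this comparison between occupancy-weighted and unweighted cumulative $\zeta$ to be the hard step. The fallback is to bound
\begin{align}
\sum_t q^{\pi_t}(s)\eta_t\zeta_t \le \sqrt{\sum_t q^{\pi_t}(s)\zeta_t}\cdot\sqrt{\sum_t q^{\pi_t}(s)\eta_t^2\zeta_t}
\end{align}
by Cauchy--Schwarz. We then use $q^{\pi_t}(s)\le 1$ together with $\sum_t\eta_t^2\zeta_t\le 4\ln T\sum_t\zeta_t/\sum_{\tau\le t}\zeta_\tau\lesssim \ln T\cdot\ln(\cdot)$, via the log-sum lemma and $\zeta_t\le H^2$, $\eta_1\le 1/(12H^3)$. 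This gives the factor $\ln(T)\sqrt{\sum_t q^{\pi_t}(s)\zeta_t(s,a)}$, up to logarithmic terms in $H$ and $T$ that are absorbed into $\ln(T)$. Summing over $(s,a)$ yields the first term of the statement.

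\textbf{Correction term.} We apply \cref{lem:complicated_lemma}, which gives
\begin{align}
\sqrt{HS^2A\iota^2\textstyle\sum_{t,s}q^{\pi_t}(s)b_t(s)^2} + HS^3A\iota^2/H.
\end{align}
Using $b_t(s)\le 7\sum_a\eta_t\zeta_t$ and $\eta_t\zeta_t\le \eta_1H^2$, we bound $b_t(s)^2\lesssim A\sum_a\eta_t^2\zeta_t^2\lesssim (A/H)\sum_a\eta_t\zeta_t$. So $\sum_{t,s}q^{\pi_t}(s)b_t(s)^2$ is at most $(A/H)$ times the main term. By AM--GM, the resulting square root is at most the main term plus $S^2A^2\iota^2$, which fits inside the lower-order term $H^{3/2}S^{5/2}A^{3/2}\ln(T)\iota$. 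To reach exactly the stated lower-order term one may instead use the cruder bound $\sum_{t}q^{\pi_t}(s)b_t(s)^2\le (1/H)\sum_t q^{\pi_t}(s)b_t(s)$ and absorb it similarly. Combining the two parts completes the plan.
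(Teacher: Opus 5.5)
Your main-term argument is sound and is essentially the paper's. The Cauchy--Schwarz split $\sum_t q_t\eta_t\zeta_t\le\sqrt{\sum_t q_t\zeta_t}\,\sqrt{\sum_t\eta_t^2\zeta_t}$, together with a log-sum bound on $\sum_t\eta_t^2\zeta_t$ (the cap $\eta_t\le\eta_1$ handles the early rounds), gives $\ln(T)\sqrt{\sum_t q_t\zeta_t}$ per pair $(s,a)$.

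The gap is in the correction term. You apply \cref{lem:complicated_lemma} to $g_t=b_t$ outside any square root. Its additive part $HS^3AG\iota^2$ with $G\asymp 1/H$ therefore leaves a term $S^3A\iota^2$ in your final bound, and you never check it against the target. Here $\iota=\ln(SAT^3)\asymp\ln(T)$, so $S^3A\iota^2\lesssim H^{3/2}S^{5/2}A^{3/2}\ln(T)\iota$ holds only when $S\lesssim H^3A$; for fixed $H,A$ and large $S$ it fails. What your plan actually proves is $\sum_t V^{\overP^B_t,\pi_t}(s_0;b_t)\lesssim\ln(T)\sum_{s,a}\sqrt{\sum_t q^{\pi_t}(s)\zeta_t(s,a)}+S^3A\iota^2$, which is incomparable with the stated lemma. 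Similarly, your claim that $S^2A^2\iota^2$ fits inside the stated lower-order term is false when $A\gg H^3S$. Your alternative repairs this piece, and it is the sharper bound, not the cruder one: since $\sum_a\eta_t(s,a)\zeta_t(s,a)\le\eta_1H^2$, you get $b_t(s)^2\lesssim b_t(s)/H$ with no factor $A$, hence only $S^2A\iota^2$.

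The fix is to reverse the order, which is what the paper does. Your Cauchy--Schwarz step only uses that the weights lie in $[0,1]$, so run it directly with $q^{\overP^B_t,\pi_t}(s)$ to obtain $\ln(T)\sum_{s,a}\sqrt{\sum_t q^{\overP^B_t,\pi_t}(s)\zeta_t(s,a)}$. Then, for each $(s,a)$, apply \cref{lem:complicated_lemma} with $g^{s,a}_t(s')=\ind[s'=s]\,\zeta_t(s,a)\le H^2$. Using $\zeta_t^2\le H^2\zeta_t$ and AM--GM, this gives $\sum_t q^{\overP^B_t,\pi_t}(s)\zeta_t(s,a)\lesssim\sum_t q^{\pi_t}(s)\zeta_t(s,a)+H^3S^3A\iota^2$. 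The additive error now sits under a square root, and $SA\cdot\sqrt{H^3S^3A\iota^2}$ is exactly $H^{3/2}S^{5/2}A^{3/2}\iota$.

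Your weaker variant would still suffice downstream, since the lower-order term $HS^3A^{3/2}\ln^2(T)$ of \cref{thm:main_full} dominates $S^3A\iota^2$. It is even sharper than the stated lemma when $S\lesssim H^3A$. But it does not establish the lemma as stated.
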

\begin{proof}
By the definition of $b_t$ and the $\eta_t$, we have
    \begin{align}
        &\sum_{t=1}^T  V^{\overP^B_t, \pi_t}(s_0;b_t)\\
        &= \sum_{t=1}^T\sum_s q^{\overP^B_t, \pi_t}(s)b_t(s)\\
        &\lesssim \sum_{t=1}^T\sum_{s,a} q^{\overP^B_t,\pi_t}(s) \prn*{\frac{1}{\eta_{t+1}(s,a)}-\frac{1}{\eta_t(s,a)}}\ln(T) \\
        &\lesssim \sum_{t=1}^T\sum_{s,a} q^{\overP^B_t, \pi_t}(s) \eta_t(s,a)\zeta_t(s,a)\\
        &\lesssim \sqrt{\ln(T)}\sum_{t=1}^T\sum_{s,a} \frac{q^{\overP^B_t, \pi_t}(s)\sqrt{\zeta_t(s,a)}\times \sqrt{\zeta_t(s,a)}}{\sqrt{\sum_{\tau=1}^t\zeta_\tau(s,a) }} \tag{by \cref{lem:learning_rate_full}}\\
        &\leq \sqrt{\ln(T)}\sum_{s,a}\sqrt{\sum_{t=1}^T\frac{\zeta_t(s,a)}{ \sum_{\tau=1}^t \zeta_\tau(s,a)}}\sqrt{\sum_{t = 1}^T q^{\overP^B_t, \pi_t}(s)^2\zeta_t(s,a)}\tag{by the Cauchy–Schwarz inequality}\\
        &\lesssim \ln(T)\sum_{s,a}\sqrt{\sum_{t=1}^T q^{\overP^B_t, \pi_t}(s)\zeta_t(s,a)}, \label{eq:res_qBzeta}
    \end{align}
    where the last inequality uses
    \begin{align}
        \sqrt{\sum_{t=1}^T\frac{\zeta_t(s,a)}{ \sum_{\tau=1}^t \zeta_\tau(s,a)}}
        \leq \sqrt{1 + \ln \prn*{\sum_{t=1}^T\zeta_t(s,a)}}
        \lesssim \sqrt{\ln(T)}
    \end{align}
    together with $q^{\overP^B_t, \pi_t}(s)^2 \leq q^{\overP^B_t, \pi_t}(s)$.

    We further bound $\sum_{t=1}^T q^{\overP^B_t, \pi_t}(s)\zeta_t(s,a)$ for each fixed $(s,a)$. We decompose it as
\begin{align}
    \sum_{t=1}^T q^{\overP^B_t, \pi_t}(s) \zeta_t(s,a) 
    &= \sum_{t=1}^T q^{\pi_t}(s) \zeta_t(s,a) + \sum_{t=1}^T \prn*{q^{\overP^B_t, \pi_t}(s) - q^{\pi_t}(s)} \zeta_t(s,a). \label{eq:regret_overB}
\end{align}
To bound the second term, define
\begin{align}
    g^{s,a}_t(s') \coloneq \ind\brk*{s' = s}\zeta_t(s,a) \leq H^2.
\end{align}
Applying \cref{lem:complicated_lemma} with this choice of $g_t^{s,a}$ yields
\begin{align}
    \sum_{t=1}^T \prn*{q^{\overP^B_t, \pi_t}(s) - q^{\pi_t}(s)} \zeta_t(s,a) 
    &= \sum_{t=1}^T \sum_{s'} \prn*{q^{\overP^B_t, \pi_t}(s') - q^{\pi_t}(s')} g^{s,a}_t(s')\\
    &\lesssim \sqrt{HS^2A  \iota^2\sum_{t=1}^T \sum_{s'} q^{\pi_t}(s')\prn*{g^{s,a}_t(s')}^2} + H^3S^3A\iota^2\\
    &= \sqrt{HS^2A  \iota^2 \sum_{t=1}^T q^{\pi_t}(s)\zeta_t(s,a)^2} + H^3S^3A\iota^2\\
    &\leq \sqrt{H^3S^2A  \iota^2 \sum_{t=1}^T q^{\pi_t}(s)\zeta_t(s,a)} + H^3S^3A\iota^2.
\end{align}
Combining this with  \cref{eq:regret_overB}, we obtain
\begin{align}
    \sum_{t=1}^T q^{\overP^B_t, \pi_t}(s) \zeta_t(s,a) 
    &\lesssim \sum_{t=1}^T q^{\pi_t}(s) \zeta_t(s,a) + \sqrt{H^3S^2A  \iota^2 \sum_{t=1}^T q^{\pi_t}(s)\zeta_t(s,a)} + H^3S^3A\iota^2 \\
    &\lesssim \sum_{t=1}^T q^{\pi_t}(s) \zeta_t(s,a) + H^3S^3A\iota^2, \label{eq:qB_to_q}
\end{align}
where the second line follows from the AM--GM inequality since 
$ax + \sqrt{bx} \leq ax + ax + \frac{b}{2a}$ for $a,b,x \geq 0$.

By \cref{eq:res_qBzeta,eq:qB_to_q}, we obtain
\begin{align}
    \sum_{t=1}^T  V^{\overP^B_t, \pi_t}(s_0;b_t) &\lesssim  \ln(T)\sum_{s,a}\sqrt{\sum_{t=1}^T q^{\pi_t}(s) \zeta_t(s,a) + H^3S^3A\iota^2}\\
    &\leq \ln(T)\sum_{s,a}\sqrt{\sum_{t=1}^T q^{\pi_t}(s) \zeta_t(s,a)} + H^{\frac32}S^{\frac52}A^{\frac32}\ln(T)\iota.
\end{align}
This completes the proof.
\end{proof}

\begin{lem}[full-information setting]
\label{lem:regret_to_zetat}
For any comparator policy $\pi$, \cref{alg:OFTRL_unknown} guarantees
    \begin{align}
        \Reg_T(\pi) &\lesssim HS^3A^{\frac32}\iota^2 + \sum_{s,a}\sqrt{\ln^2(T)\E\brk*{\sum_{t=1}^T q^{\pi_t}(s) \zeta_t(s,a)}}\\
        &\qquad + \min\set[\Bigg]{\sqrt{S^2A\iota^2\E\brk*{\Qtrans^{\pi_{1:T}}(\ell)}},\\
        &\qquad\qquad\qquad \sum_{h = 0}^{H-1}\sqrt{H^2\abs*{S_{h}}\abs*{S_{h + 1}}A\iota^2
        \E\brk*{\sum_{t=1}^T\sum_{(s,a) \in \calS_h \times \calA}\brk*{q^{\pi_t}(s,a) - q^{\pi}(s,a)}_{+}}}}.
    \end{align}
\end{lem}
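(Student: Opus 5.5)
The plan is to assemble the pieces already proven for the full-information setting and feed them into \cref{lem:dilated_bonus}. First, I will split on the good event $\calE$. On its complement, which has probability at most $5\delta=5/T^2$, the regret and all the quantities involved are at most $\calO(H^2T)$ per run. So its contribution is $\calO(H^2/T)$, which is absorbed into the constant term; the last term of \cref{lem:dilated_bonus} is handled the same way. It therefore suffices to verify \cref{eq:key_lemma_ineq} with a suitable $J^\pi$ on $\calE$, and then to bound $\E[\sum_t V^{\overP^B_t,\pi_t}(s_0;b_t)]$.

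Next I use the decomposition \cref{eq:decompose_full}. For the regularization part, \cref{lem:regterm_full} gives, for each $s$,
\[
\regterm(s)\le \calO(H^3A\ln T)+\sum_t b_t(s)+\tfrac1H\sum_{t,a}\pi_t(a\mid s)B_t(s,a).
\]
Multiplying by $q^\pi(s)$ and summing over $s$ reproduces exactly the two bonus terms on the right of \cref{eq:key_lemma_ineq}, plus $\sum_s q^\pi(s)\,\calO(H^3A\ln T)\le \calO(H^4A\ln T)$.

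A cruder constant would not fit the target $HS^3A^{3/2}\iota^2$, so this $H^4A\ln T$ term needs care. Since $\sum_s q^\pi(s)\le H$ and $H\le S$, we have $H^4A\le HS^3A$, so it is absorbed. For the bias part, \cref{lem:biasterm_full} on $\calE$ bounds $\sum_s q^\pi(s)\biasterm(s)$ by the minimum of two expressions:
\begin{itemize}
\item the $\Qtrans$ form $\sqrt{S^2A\iota^2\Qtrans^{\pi_{1:T}}(\ell)}+HS^3A\iota^2$;
\item the layerwise $[q^{\pi_t}-q^\pi]_+$ form plus $HS^2A\iota^2$.
\end{itemize}
I take expectations and use Jensen ($\E\sqrt{X}\le\sqrt{\E X}$) to move expectations inside the square roots, layer by layer for the second form. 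This yields $J^\pi = \calO(HS^3A\iota^2)+\min\{\cdots\}$, with the minimum matching the statement.

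Then \cref{lem:dilated_bonus} gives $\Reg_T(\pi)\le J^\pi+3\E[\sum_t V^{\overP^B_t,\pi_t}(s_0;b_t)]+o(1)$. On $\calE$, \cref{lem:Vbt_full} bounds the bonus value by
\[
\ln(T)\sum_{s,a}\sqrt{\sum_t q^{\pi_t}(s)\zeta_t(s,a)}+H^{3/2}S^{5/2}A^{3/2}\ln(T)\iota.
\]
Taking expectations and applying Jensen to each square root produces the $\zeta$ term of the statement. The remainder satisfies $H^{3/2}S^{5/2}A^{3/2}\ln T\,\iota\le HS^3A^{3/2}\iota^2$ using $H\le S$ and $\ln T\le\iota$.

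The main obstacle is the careful handling of expectations. The event $\calE$ must be split off before Jensen is applied, because \cref{lem:biasterm_full,lem:Vbt_full} hold only on $\calE$. The $\min$ must also be taken outside the expectation, which is legitimate because both bounds hold pointwise on $\calE$ and each can be passed through the expectation separately. Everything else is bookkeeping of lower-order constants against $HS^3A^{3/2}\iota^2$.
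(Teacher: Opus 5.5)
Your proposal is correct and follows the paper's proof essentially step for step. You use the decomposition in \cref{eq:decompose_full} with \cref{lem:regterm_full} to verify the hypothesis of \cref{lem:dilated_bonus}, \cref{lem:biasterm_full} to supply $J^\pi$, and \cref{lem:Vbt_full} for the bonus value; then Jensen's inequality moves expectations inside the square roots, and the failure-event contributions and the $H^4A\ln(T)$, $H^{3/2}S^{5/2}A^{3/2}\ln(T)\iota$ terms are absorbed into $HS^3A^{3/2}\iota^2$ via $H\le S$ (the only cosmetic difference is that you split off $\bar{\calE}$ once at the start, while the paper handles it at each step).
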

\begin{proof}
By the definition of the regret decomposition in \cref{eq:decompose_full},
\begin{align}
    &\E\brk*{\sum_s q^{\pi}(s)\sum_{t,a} \prn*{\pi_t(a\mid s) -  \pi(a\mid s) } \prn*{Q^{\pi_t}(s,a;\ell_t) -  B_t(s,a)}}\\
    &= \E\brk*{\sum_s q^{\pi}(s) \cdot \regterm(s)} + \E\brk*{\sum_s q^{\pi}(s) \cdot \biasterm(s)}\\
    &= \calO\prn*{H^4 A\ln(T)} + \E\brk*{\sum_{t = 1}^T\sum_s q^{\pi}(s) b_t(s)} + \E\brk*{\frac{1}{H} \sum_{t = 1}^T\sum_s q^{\pi}(s)\sum_a \pi_t(a\mid s)B_t(s,a)}\\
    &\qquad + \E\brk*{\sum_s q^{\pi}(s) \cdot \biasterm(s)},
\end{align}
where we used \cref{lem:regterm_full}.
Combining this with \cref{lem:dilated_bonus} and $\delta=\calO(1/T^2)$, for any comparator policy $\pi$, we obtain
\begin{align} 
\Reg_T (\pi)
&\leq \calO\prn*{H^4 A\ln(T)} + 3\E\brk*{\sum_{t = 1}^T V^{\overP^B_t, \pi_t}(s_0;b_t)} + \E\brk*{\sum_s q^{\pi}(s) \cdot \biasterm(s)}.
\end{align}
Here, by \cref{lem:Vbt_full} and Jensen's inequality, with $\delta=1/T^2$, we obtain
\begin{align}
\E\brk*{\sum_{t=1}^T V^{\overP^B_t,\pi_t}(s_0;b_t)}
&\lesssim
\ln(T)\sum_{s,a}\sqrt{\E\brk*{\sum_{t=1}^T q^{\pi_t}(s)\zeta_t(s,a)}} + H^{\frac32}S^{\frac52}A^{\frac32}\ln(T)\iota\\
&\qquad + \calO(\delta)\times \calO(HT),
\end{align}
where we used the trivial bound
$\sum_{t=1}^T V^{\overP^B_t,\pi_t}(s_0;b_t)\leq \calO(HT)$
on $\bar{\calE}$ from \cref{lem:bound_B_full}.

Then, using $\delta = \frac{1}{T^2}$, we have
\begin{align}
\Reg_T (\pi)&\lesssim H^{\frac32}S^{\frac52}A^{\frac32}\ln(T)\iota + \ln(T)\sum_{s,a}\sqrt{\E\brk*{\sum_{t=1}^T q^{\pi_t}(s)\zeta_t(s,a)}} + \E\brk*{\sum_s q^{\pi}(s) \cdot \biasterm(s)}, \label{eq:regret_with_qb}
\end{align}

By \cref{eq:regret_with_qb,lem:biasterm_full} and Jensen's inequality, together with the trivial bound $\calO(H^2T)$ on $\bar{\calE}$ and the choice $\delta=\calO(1/T^2)$, we obtain, for any policy $\pi$,
\begin{align}
    \Reg_T(\pi)
    &\lesssim H^{\frac32}S^{\frac52}A^{\frac32}\ln(T)\iota + \ln(T)\sum_{s,a}\sqrt{\E\brk*{\sum_{t=1}^T q^{\pi_t}(s)\zeta_t(s,a)}}\\
    &\qquad + \min\set[\Bigg]{\sqrt{S^2A\iota^2\E\brk*{\Qtrans^{\pi_{1:T}}(\ell)}},\\
    &\qquad\qquad\qquad \sum_{h = 0}^{H-1}\sqrt{H^2\abs*{S_{h}}\abs*{S_{h + 1}}A\iota^2 \E\brk*{\sum_{t=1}^T\sum_{(s,a) \in \calS_h \times \calA}\brk*{q^{\pi_t}(s,a) - q^{\pi}(s,a)}_{+}}}}\\
    &\qquad + HS^3A\iota^2 + \calO(\delta)\times\calO(H^2T)\\
    &\lesssim HS^3A^{\frac32}\iota^2 + \sum_{s,a}\sqrt{\ln^2(T)\E\brk*{\sum_{t=1}^T q^{\pi_t}(s) \zeta_t(s,a)}}\\
    &\qquad + \min\set[\Bigg]{\sqrt{S^2A\iota^2\E\brk*{\Qtrans^{\pi_{1:T}}(\ell)}},\\
    &\qquad\qquad\qquad \sum_{h = 0}^{H-1}\sqrt{H^2\abs*{S_{h}}\abs*{S_{h + 1}}A\iota^2
    \E\brk*{\sum_{t=1}^T\sum_{(s,a) \in \calS_h \times \calA}\brk*{q^{\pi_t}(s,a) - q^{\pi}(s,a)}_{+}}}}.
\end{align}
This completes the proof.
\end{proof}

\subsection{Proof for the adversarial regime}
\begin{lem}[full-information setting]
\label{lem:zeta_to_max}
It holds that
\begin{align}
    &\sum_{s,a}\sqrt{\ln^2(T)\E\brk*{\sum_{t=1}^T q^{\pi_t}(s) \zeta_t(s,a)}} \\
    &\lesssim \sqrt{HSA \ln^2(T) \E\brk*{\sum_{t=1}^T \sum_{s,a} q^{\pi_t}(s, a) \max_{\tilP\in\calP_t} Q^{\tilP, \pi_t}(s, a;\prn*{\ell_t - m_t}^2)}}.
\end{align}
\end{lem}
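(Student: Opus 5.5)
First I apply Cauchy--Schwarz over the $SA$ pairs. This gives
\begin{align}
\sum_{s,a}\sqrt{\ln^2(T)\E\brk*{\sum_{t} q^{\pi_t}(s)\zeta_t(s,a)}}\leq \sqrt{SA\ln^2(T)\E\brk*{\sum_t\sum_{s,a}q^{\pi_t}(s)\zeta_t(s,a)}}.
\end{align}
It then suffices to prove, for each $t$, the pointwise bound
\begin{align}
\sum_{s,a}q^{\pi_t}(s)\zeta_t(s,a)\lesssim H\sum_{s,a}q^{\pi_t}(s,a)\max_{\tilP\in\calP_t}Q^{\tilP,\pi_t}(s,a;(\ell_t-m_t)^2).
\end{align}
The $H$ factor here is the factor $H$ in the final $HSA$.

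To prove this, fix $s$. Using $\sum_a\pi_t(a\mid s)(1-\pi_t(a\mid s))\le 1$, the definition \cref{eq:zeta_full} gives
\begin{align}
\sum_a\zeta_t(s,a)\le\sum_b\pi_t(b\mid s)\prn*{Q^{\underP^\ell_t,\pi_t}(s,b;\ell_t)-Q^{\underP^m_t,\pi_t}(s,b;m_t)}^2 .
\end{align}
Next I control the squared difference of two optimistic $Q$-functions, which use different minimizing kernels. The key observation is a sandwich. By optimality of $\underP^\ell_t$ we have $Q^{\underP^\ell_t,\pi_t}(s,b;\ell_t)\le Q^{\underP^m_t,\pi_t}(s,b;\ell_t)$, and by optimality of $\underP^m_t$ we have $Q^{\underP^m_t,\pi_t}(s,b;m_t)\le Q^{\underP^\ell_t,\pi_t}(s,b;m_t)$. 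Together these give
\begin{align}
Q^{\underP^\ell_t,\pi_t}(s,b;\ell_t-m_t)\le Q^{\underP^\ell_t,\pi_t}(s,b;\ell_t)-Q^{\underP^m_t,\pi_t}(s,b;m_t)\le Q^{\underP^m_t,\pi_t}(s,b;\ell_t-m_t).
\end{align}
The absolute value of the middle quantity is therefore at most $\max_{\tilP\in\calP_t}Q^{\tilP,\pi_t}(s,b;\abs{\ell_t-m_t})$. This holds because both $\underP^\ell_t$ and $\underP^m_t$ lie in $\calP_t$, and $\abs{Q^{\tilP,\pi_t}(\cdot;g)}\le Q^{\tilP,\pi_t}(\cdot;\abs{g})$.

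The last step is a Jensen step under the maximizing kernel. For a fixed $\tilP$, $Q^{\tilP,\pi_t}(s,b;\abs{g})$ is an expectation, under the $(\tilP,\pi_t)$ trajectory law from $(s,b)$, of a sum of at most $H$ terms. Cauchy--Schwarz then gives $\prn{Q^{\tilP,\pi_t}(s,b;\abs{g})}^2\le H\,Q^{\tilP,\pi_t}(s,b;g^2)$. Taking the maximum over $\tilP$ on both sides, with $g=\ell_t-m_t$, we get
\begin{align}
\sum_a\zeta_t(s,a)\le H\sum_b\pi_t(b\mid s)\max_{\tilP\in\calP_t}Q^{\tilP,\pi_t}(s,b;(\ell_t-m_t)^2).
\end{align}
Multiplying by $q^{\pi_t}(s)$ turns $q^{\pi_t}(s)\pi_t(b\mid s)$ into $q^{\pi_t}(s,b)$. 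Summing over $s$ and $t$ and taking expectations then gives the claim.

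The main obstacle is the sandwich step: the two $Q$-functions are evaluated under different kernels on different losses. The argument relies on each kernel being a per-$(s,a)$ minimizer over the same rectangular set $\calP_t$, which is exactly how $\underP^\ell_t$ and $\underP^m_t$ were chosen. Nothing in this argument requires the good event $\calE$.
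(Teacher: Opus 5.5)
Your proposal is correct and follows the paper's proof essentially step by step. The shared steps are: Cauchy--Schwarz over the $SA$ pairs together with $\sum_a\pi_t(a\mid s)(1-\pi_t(a\mid s))\le 1$; the two-sided bound on $Q^{\underP^\ell_t,\pi_t}(s,b;\ell_t)-Q^{\underP^m_t,\pi_t}(s,b;m_t)$ from the optimality of the two minimizing kernels (the paper adds and subtracts $Q^{\underP^m_t,\pi_t}(s,b;\ell_t)$ and says ``similarly'' for the lower bound, which your sandwich makes explicit); and the step $\bigl(Q^{\tilP,\pi_t}(s,b;\abs{\ell_t-m_t})\bigr)^2\le H\,Q^{\tilP,\pi_t}(s,b;(\ell_t-m_t)^2)$ followed by a maximum over $\calP_t$. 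Like the paper's argument, yours does not use the good event.
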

\begin{proof}
By the definition of $\zeta_t(s,a)$, we have
\begin{align}   
        &\sum_{s,a}\sqrt{\ln^2(T)\E\brk*{\sum_{t=1}^T q^{\pi_t}(s) \zeta_t(s,a)} } \\
        &= \sum_{s,a}\sqrt{\ln^2(T)\E\brk*{\sum_{t=1}^T q^{\pi_t}(s,a)(1 - \pi_t(a \mid s))\sum_{b}\pi_t(b \mid s)\prn*{Q^{\underP^\ell_t, \pi_t}(s, b;\ell_t) - Q^{\underP^m_t, \pi_t}(s, b;m_t)}^2}}\\
        &\leq \sqrt{SA\ln^2(T)\E\brk*{\sum_{t=1}^T\sum_{s} q^{\pi_t}(s)\sum_{b}\pi_t(b \mid s)\prn*{Q^{\underP^\ell_t, \pi_t}(s, b;\ell_t) - Q^{\underP^m_t, \pi_t}(s, b;m_t)}^2}}, \label{eq:common_bound_adv_full}
\end{align}
where the inequality follows from the Cauchy–Schwarz inequality.

We next bound $Q^{\underP^\ell_t, \pi_t}(s, b;\ell_t) - Q^{\underP^m_t, \pi_t}(s, b;m_t)$.
\begin{align}
    &Q^{\underP^\ell_t, \pi_t}(s, a;\ell_t) - Q^{\underP^m_t, \pi_t}(s, a;m_t)\\
    &= \prn*{ Q^{\underP^\ell_t, \pi_t}(s, a;\ell_t) - Q^{\underP^m_t, \pi_t}(s, a;\ell_t)} + \prn*{Q^{\underP^m_t, \pi_t}(s, a;\ell_t) - Q^{\underP^m_t, \pi_t}(s, a;m_t)}\\
    &\leq 0 + Q^{\underP^m_t, \pi_t}(s, a;\ell_t - m_t) \\
    &\leq \max_{\tilP \in \calP_t} Q^{\tilP, \pi_t}(s,a;\abs*{\ell_t - m_t}).
\end{align}
Similarly,
\begin{align}
    Q^{\underP^\ell_t, \pi_t}(s, a;\ell_t) - Q^{\underP^m_t, \pi_t}(s, a;m_t) \geq -\max_{\tilP \in \calP_t} Q^{\tilP, \pi_t}(s,a;\abs*{\ell_t - m_t})
\end{align}
Here, for any transition kernel $\tilP$, we have
\begin{align}
Q^{\tilP,\pi_t}(s,a;\abs{\ell_t-m_t})
&= \sum_{s',a'} q^{\tilP,\pi_t}(s',a'\mid s,a)\abs{\ell_t(s',a')-m_t(s',a')} \\
&\leq \sqrt{\sum_{s',a'} q^{\tilP,\pi_t}(s',a'\mid s,a)}
\sqrt{\sum_{s',a'} q^{\tilP,\pi_t}(s',a'\mid s,a)(\ell_t(s',a')-m_t(s',a'))^2} \\
&\leq \sqrt{H Q^{\tilP,\pi_t}(s,a;(\ell_t-m_t)^2)}.
\end{align}

Therefore,
\begin{align}
    \prn*{Q^{\underP^\ell_t, \pi_t}(s, a;\ell_t) - Q^{\underP^m_t, \pi_t}(s, a;m_t)}^2
    &\leq \prn*{\max_{\tilP \in \calP_t} Q^{\tilP, \pi_t}(s,a;\abs*{\ell_t - m_t})}^2 \\
    &\leq  H\max_{\tilP \in \calP_t}Q^{\tilP, \pi_t}(s,a;\prn*{\ell_t - m_t}^2) \label{eq:max_square_to_H}
\end{align}
Combining this with \cref{eq:common_bound_adv_full} completes the proof.
\end{proof}

\begin{thm}[full-information setting]
\label{thm:app_full_adv}
For any comparator policy $\pi$, \cref{alg:OFTRL_unknown} guarantees
    \begin{align}
        \Reg_T(\pi)
        &\lesssim \sqrt{H^2SA  \ln^2(T) \min\set*{L(\pi),HT-L(\pi),Q_\infty,V_1}}\\
        &\qquad + \sqrt{S^2A\ln^2(T)\E\brk*{\Qtrans^{\pi_{1:T}}(\ell)}} + HS^3A^{\frac32}\ln^2(T).
    \end{align}
\end{thm}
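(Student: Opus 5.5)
The plan is to start from \cref{lem:regret_to_zetat} and keep only the $\Qtrans$ branch of its minimum. This gives
\begin{align}
\Reg_T(\pi)\lesssim HS^3A^{\frac32}\iota^2+\sum_{s,a}\sqrt{\ln^2(T)\E\brk*{\sum_{t=1}^T q^{\pi_t}(s)\zeta_t(s,a)}}+\sqrt{S^2A\iota^2\E\brk*{\Qtrans^{\pi_{1:T}}(\ell)}} .
\end{align}
Since $\iota=\ln(SAT/\delta)\lesssim\ln(T)$ for $\delta=1/T^2$ and $T\ge\max\{S,A\}$, the last term is already the desired transition-dependent term. The remaining work is to bound the $\zeta$-term by data-dependent quantities.

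I would bound the $\zeta$-term in three steps. First, \cref{lem:zeta_to_max} bounds it by
\begin{align}
\sqrt{HSA\ln^2(T)\,\E\brk*{\sum_t\sum_{s,a}q^{\pi_t}(s,a)\max_{\tilP\in\calP_t}Q^{\tilP,\pi_t}(s,a;(\ell_t-m_t)^2)}} .
\end{align}
Second, \cref{lem:max_to_value} converts the inner expectation into $H\,\E\brk*{\sum_t V^{\pi_t}(s_0;(\ell_t-m_t)^2)}+HS^3A\iota^2$. Third, \cref{lem:general_predict_result} bounds $\E\brk*{\sum_t V^{\pi_t}(s_0;(\ell_t-m_t)^2)}$ by
\begin{align}
\min\set*{L(\pi)+\Reg_T(\pi),\,HT-L(\pi)-\Reg_T(\pi),\,Q_\infty,\,V_1}+SA .
\end{align}
Combining these, the $\zeta$-term is at most
\begin{align}
\sqrt{H^2SA\ln^2(T)\min\{\cdots\}}+\sqrt{H^2S^4A^2\ln^2(T)\iota^2}+\sqrt{H^2S^2A^2\ln^2(T)}.
\end{align}
The additive pieces are $HS^2A\ln^2(T)$ and smaller, so they are absorbed into $HS^3A^{\frac32}\ln^2(T)$.

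The main obstacle is that the minimum contains $\Reg_T(\pi)$ itself, in the combinations $L(\pi)+\Reg_T(\pi)$ and $HT-L(\pi)-\Reg_T(\pi)$. I would handle the cases separately.
\begin{itemize}
\item For $Q_\infty$ and $V_1$ there is nothing to resolve.
\item For the first-order case, I would use $\sqrt{x+y}\le\sqrt x+\sqrt y$ and AM--GM: $\sqrt{H^2SA\ln^2(T)\Reg_T(\pi)}\le\frac12\Reg_T(\pi)+\calO(H^2SA\ln^2(T))$. Rearranging gives the bound with $L(\pi)$ at the cost of an additive $H^2SA\ln^2(T)\le HS^3A\ln^2(T)$, using $H\le S$.
\item For $HT-L(\pi)$: if $\Reg_T(\pi)\ge0$, the quantity $HT-L(\pi)-\Reg_T(\pi)$ is at most $HT-L(\pi)$ directly. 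If $\Reg_T(\pi)<0$, the claimed bound holds trivially, since its right-hand side is nonnegative.
\end{itemize}
Taking the best case for each comparator yields the stated minimum over $\{L(\pi),HT-L(\pi),Q_\infty,V_1\}$, and this completes the argument.

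The delicate point is checking that every lower-order term ($HS^3A\iota^2$, the $SA$ from the predictor lemma, and the $H^2SA\ln^2 T$ from self-bounding) really fits under $HS^3A^{\frac32}\ln^2(T)$; each does, using $H\le S$.
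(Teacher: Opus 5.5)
Your proposal is correct and follows the paper's proof essentially step for step. You chain \cref{lem:regret_to_zetat} (keeping the $\Qtrans$ branch), \cref{lem:zeta_to_max}, \cref{lem:max_to_value} and \cref{lem:general_predict_result}, then handle the $\Reg_T(\pi)$ inside the minimum the same way the paper does: AM--GM self-bounding for $L(\pi)$, and the sign case split for $HT-L(\pi)$, with the lower-order terms absorbed into $HS^3A^{\frac32}\ln^2(T)$ via $H\le S$.
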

\begin{proof}
By \cref{lem:regret_to_zetat,lem:zeta_to_max,lem:max_to_value} and the choice $\delta=1/T^2$, which implies $\iota\lesssim \ln(T)$, we have, for any policy $\pi$,
 \begin{align}
    \Reg_T(\pi) &\lesssim HS^3A^{\frac32}\ln^2(T) + \sqrt{HSA  \ln^2(T) \E\brk*{\sum_{t=1}^T \sum_{s,a} q^{\pi_t}(s, a) \max_{\tilP\in\calP_t} Q^{\tilP, \pi_t}(s, a;\prn*{\ell_t - m_t}^2)}}\\
    &\qquad + \sqrt{S^2A\ln^2(T)\E\brk*{\Qtrans^{\pi_{1:T}}(\ell)}}\\
    &\lesssim HS^3A^{\frac32}\ln^2(T) + \sqrt{H^2SA  \ln^2(T) \E\brk*{\sum_{t=1}^TV^{\pi_t}(s_0;\prn*{\ell_t - m_t}^2)}}\\
    &\qquad + \sqrt{S^2A\ln^2(T)\E\brk*{\Qtrans^{\pi_{1:T}}(\ell)}}.
\end{align}
Combining this with \cref{lem:general_predict_result}, we obtain
 \begin{align}
    \Reg_T(\pi) &\lesssim \sqrt{H^2SA  \ln^2(T) \min\set*{L(\pi) + \Reg_T(\pi),HT-L(\pi)-\Reg_T(\pi),Q_\infty,V_1}}\\
    &\qquad + \sqrt{S^2A\ln^2(T)\E\brk*{\Qtrans^{\pi_{1:T}}(\ell)}} + HS^3A^{\frac32}\ln^2(T). \label{eq:adv_bound_mid}
\end{align}
Now, we derive the first-order bound. By \cref{eq:adv_bound_mid}, there exists an absolute constant $c>0$ such that
\begin{align}
    \Reg_T(\pi) 
    &\leq cHS^3A^{\frac32}\ln^2(T) + c\sqrt{H^2SA  \ln^2(T) L(\pi)}+c\sqrt{H^2SA  \ln^2(T) \Reg_T(\pi)}\\
    &\qquad + c\sqrt{S^2A\ln^2(T)\E\brk*{\Qtrans^{\pi_{1:T}}(\ell)}}\\
    &\leq cHS^3A^{\frac32}\ln^2(T) + c\sqrt{H^2SA  \ln^2(T) L(\pi)}+\frac{c^2}{2}H^2SA  \ln^2(T) + \frac{1}{2}\Reg_T(\pi)\\
    &\qquad + c\sqrt{S^2A\ln^2(T)\E\brk*{\Qtrans^{\pi_{1:T}}(\ell)}},
\end{align}
where the last inequality follows from the AM--GM inequality.
Thus, we obtain
\begin{align}
    \Reg_T(\pi) 
    &\lesssim HS^3A^{\frac32}\ln^2(T) + \sqrt{H^2SA  \ln^2(T) L(\pi)} + \sqrt{S^2A\ln^2(T)\E\brk*{\Qtrans^{\pi_{1:T}}(\ell)}}. \label{eq:first_order_bound1}
\end{align}
Furthermore, we derive the bound in terms of $HT-L(\pi)$. If
$\Reg_T(\pi)<0$, the desired upper bound is trivial. Otherwise,
$\Reg_T(\pi)\ge 0$ and \cref{eq:adv_bound_mid} gives
\begin{align}
    \Reg_T(\pi)
    &\lesssim
    HS^3A^{\frac32}\ln^2(T)
    + \sqrt{H^2SA\ln^2(T)\prn*{HT-L(\pi)-\Reg_T(\pi)}}
    + \sqrt{S^2A\ln^2(T)\E\brk*{\Qtrans^{\pi_{1:T}}(\ell)}} \\
    &\le
    HS^3A^{\frac32}\ln^2(T)
    + \sqrt{H^2SA\ln^2(T)\prn*{HT-L(\pi)}} 
    + \sqrt{S^2A\ln^2(T)\E\brk*{\Qtrans^{\pi_{1:T}}(\ell)}} .
    \label{eq:first_order_bound2}
\end{align}

Combining \cref{eq:adv_bound_mid,eq:first_order_bound1,eq:first_order_bound2} completes the proof.
\end{proof}

\subsection{Proof for the stochastic regime with adversarial corruption}
\begin{thm}[full-information setting]
\label{thm:app_full_corruption}
Under the stochastic regime with adversarial corruption, \cref{alg:OFTRL_unknown} guarantees that, for any policy $\pi$, 
\begin{align}
    \Reg_T(\pi) \lesssim U + \sqrt{U\calC} + HS^3A^{\frac32}\ln^2(T),
\end{align}
where $U = \frac{H^2S^2A\ln^2(T)}{\Delta_{\min}}$.
\end{thm}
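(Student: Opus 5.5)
We take $\pi=\pio$ in \cref{lem:regret_to_zetat}; the claim for a general $\pi$ is then inherited from the bound on $\Reg_T(\pio)$, since $\Reg_T(\pi)\le\Reg_T(\pio)$ by optimality of $\pio$ in hindsight. For the bias term we use the second branch of the minimum in \cref{lem:regret_to_zetat}. This branch has the form $\sum_h\sqrt{G(h)\,\E[\sum_t\sum_{(s,a)\in\calS_h\times\calA}[q^{\pi_t}(s,a)-q^{\pio}(s,a)]_+]}$ with $G(h)=H^2\abs{\calS_h}\abs{\calS_{h+1}}A\iota^2$. \cref{lem:self_bound_layer_positive} bounds it by
\[
\alpha(\Reg_T(\pio)+4\calC)+\frac{(\sum_h\sqrt{G(h)})^2}{4\alpha\Delta_{\min}}.
\]
Since $\sum_h\sqrt{\abs{\calS_h}\abs{\calS_{h+1}}}\le S$, the last term is at most $H^2S^2A\iota^2/(4\alpha\Delta_{\min})\lesssim U/\alpha$.

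The main obstacle is the stability term $\sum_{s,a}\sqrt{\ln^2(T)\,\E[\sum_t q^{\pi_t}(s)\zeta_t(s,a)]}$, which we must reduce to a quantity that vanishes on the optimal actions. We split according to whether $a=\pist(s)$. Because $\zeta_t(s,a)$ contains the factor $\pi_t(a\mid s)(1-\pi_t(a\mid s))$ and $\sum_b\pi_t(b\mid s)(\cdot)^2\le H^2$:
\begin{itemize}
\item for $a\neq\pist(s)$ we have $q^{\pi_t}(s)\zeta_t(s,a)\le H^2q^{\pi_t}(s,a)$;
\item for $a=\pist(s)$ we have $q^{\pi_t}(s)\zeta_t(s,a)\le H^2q^{\pi_t}(s)(1-\pi_t(\pist(s)\mid s))=H^2\sum_{b\neq\pist(s)}q^{\pi_t}(s,b)$.
\end{itemize}
Applying Cauchy--Schwarz over the actions, the whole term is bounded by $\sum_s\sum_{a\neq\pist(s)}H\sqrt{A}\ln(T)\sqrt{\E[\sum_t q^{\pi_t}(s,a)]}$. \cref{lem:general_self_bounding} with $G(s,a)=H\sqrt{A}\ln(T)$ then gives
\[
\alpha(\Reg_T(\pio)+2\calC)+\sum_s\sum_{a\neq\pist(s)}\frac{H^2A\ln^2(T)}{4\alpha\Delta(s,a)}\lesssim \alpha(\Reg_T(\pio)+2\calC)+\frac{H^2SA^2\ln^2(T)}{\alpha\Delta_{\min}}.
\]
This carries an extra factor $A$ relative to $U$. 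To remove it, we instead apply Cauchy--Schwarz jointly over all pairs, giving the bound $\sqrt{H^2SA\ln^2(T)\,\E[\sum_t\sum_s\sum_{a\neq\pist(s)}q^{\pi_t}(s,a)]}$. The self-bounding constraint \cref{lem:corruption_self-bounding}, together with $\Delta(s,a)\ge\Delta_{\min}$, turns this into $\alpha(\Reg_T(\pio)+2\calC)+H^2SA\ln^2(T)/(\alpha\Delta_{\min})$, which is $\lesssim U/\alpha$. If a finer, per-pair bound proves awkward, this joint Cauchy--Schwarz route is the one to fall back on.

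Collecting the pieces, with $\iota\lesssim\ln T$ we obtain
\[
\Reg_T(\pio)\le c\Bigl(\alpha(\Reg_T(\pio)+4\calC)+\frac{U}{\alpha}+HS^3A^{3/2}\ln^2(T)\Bigr)
\]
for every $\alpha\in(0,1/(2c)]$. Rearranging gives $\Reg_T(\pio)\lesssim\alpha\calC+U/\alpha+HS^3A^{3/2}\ln^2(T)$. We choose $\alpha=\min\{1/(2c),\sqrt{U/\calC}\}$. When $\calC\le U$ the minimum is the constant and the bound is $\calO(U)$; otherwise the bound is $\calO(\sqrt{U\calC})$. In either case $\Reg_T(\pio)\lesssim U+\sqrt{U\calC}+HS^3A^{3/2}\ln^2(T)$, and the statement follows for every $\pi$.
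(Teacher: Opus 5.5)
Your proof is correct and follows the paper's argument. You reduce to $\pio$, apply \cref{lem:regret_to_zetat} with its layer-wise bias branch controlled by \cref{lem:self_bound_layer_positive}, self-bound the stability term, and take $\alpha=\min\{c',\sqrt{U/\calC}\}$ for a small constant $c'$. The only deviation is in the stability term. The paper works per pair: it bounds the term by $2H\ln(T)\sum_s\sum_{a\neq\pist(s)}\sqrt{\E[\sum_t q^{\pi_t}(s,a)]}$ and invokes \cref{lem:general_self_bounding}. The $\sqrt{A}$ loss you hit on that route is avoidable: for $a=\pist(s)$, use $\sqrt{\sum_{b\neq\pist(s)}x_b}\le\sum_{b\neq\pist(s)}\sqrt{x_b}$ instead of Cauchy--Schwarz. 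Your joint Cauchy--Schwarz fallback is equally valid and already gives a term $\lesssim U/\alpha$.
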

\begin{proof}
Since $\Reg_T(\pi) \leq \Reg_T(\pio)$ for all $\pi$, it suffices to bound $\Reg_T(\pio)$. 
From \cref{lem:regret_to_zetat} and the choice $\delta=1/T^2$, which implies $\iota\lesssim \ln(T)$, we have
\begin{align}
     \Reg_T(\pio)
    &\lesssim HS^3A^{\frac32}\ln^2(T) + 2H\ln(T)\sum_{s}\sum_{a\neq \pist(s)}\sqrt{\E\brk*{\sum_{t=1}^T q^{\pi_t}(s,a)}}\\
    &\qquad + \sum_{h = 0}^{H-1}\sqrt{H^2\abs*{S_{h}}\abs*{S_{h + 1}}A\ln^2(T) \E\brk*{\sum_{t=1}^T\sum_{(s,a) \in \calS_h \times \calA}\brk*{q^{\pi_t}(s,a) - q^{\pio}(s,a)}_{+}}}.
\end{align}
Applying \cref{lem:general_self_bounding,lem:self_bound_layer_positive}, for any $\alpha > 0$, we get
\begin{align}
    \Reg_T(\pio)
    &\leq \alpha(2\Reg_T(\pio) + 6\calC) + \calO\prn*{\sum_{s}\sum_{a\neq \pist(s)}\frac{H^2\ln^2(T)}{\alpha\Delta(s,a)} + HS^3A^{\frac32}\ln^2(T)}\\
    &\qquad + \calO\prn*{\frac{\prn*{\sum_{h = 0}^{H-1}\sqrt{H^2\abs*{S_{h}}\abs*{S_{h + 1}}A\ln^2(T)}}^2}{4\alpha\Delta_{\min}}}.
\end{align}
Here, 
\begin{align}
    \prn*{\sum_{h = 0}^{H-1}\sqrt{H^2\abs*{S_{h}}\abs*{S_{h + 1}}A\ln^2(T)}}^2 
    &\leq \prn*{\sum_{h = 0}^{H-1}\frac{1}{2}\prn*{\abs*{S_{h}} + \abs*{S_{h + 1}}}\sqrt{H^2A\ln^2(T)}}^2 \tag{by the AM--GM inequality}\\
    &= \prn*{S\sqrt{H^2A\ln^2(T)}}^2 = H^2S^2A\ln^2(T). \label{eq:coeff_gap2}
\end{align}
Then, using \cref{eq:coeff_gap2} and choosing $\alpha = \min\set*{\frac{1}{4}, \sqrt{\frac{U}{\calC}}}$ with $U = \frac{H^2S^2A\ln^2(T)}{\Delta_{\min}}$, we get
\begin{align}
    \Reg_T(\pio) \lesssim U + \sqrt{U\calC} + HS^3A^{\frac32}\ln^2(T).
\end{align}
Therefore, since $\Reg_T(\pi)\leq \Reg_T(\pio)$ for all $\pi$ by the definition of $\pio$, the desired bound follows.
\end{proof}

\section{Regret analysis for the bandit setting (\cref{thm:main_bandit})}
\label{app:bandit_proofs}
 In this section, we prove the best-of-both-worlds results for the bandit setting. We present the bound for the adversarial regime in \cref{thm:app_bandit_adv} and the bound for the stochastic regime with adversarial corruption in \cref{thm:app_bandit_corruption}. Together, these results establish \cref{thm:main_bandit}.

Throughout this section, virtual episodes are inserted into the episode sequence, shifting the indices of subsequent real episodes. 
All algorithmic sums over \(t=1,\ldots,T\) include virtual episodes, while the data-dependent complexity measures in the main statements are defined only over real episodes. 
We denote by $\calT_r=\{t\in[T]:Y_t=1\}$ and $\calT_v=\{t\in[T]:Y_t=0\}$ the sets of real and virtual episodes, with sizes $\abs{\calT_r}$ and $\abs{\calT_v}$, respectively.

\subsection{Auxiliary lemmas}
Following the full-information setting, \cref{lem:dilated_bonus} reduces the regret analysis to proving \cref{eq:key_lemma_ineq}.
With the choice of $b_t$ in \cref{eq:bonus_term_bandit}, we decompose the left-hand side of \cref{eq:key_lemma_ineq} as
\begin{align}
    &\sum_s q^{\pi}(s) \sum_{t,a} \prn*{\pi_t(a\mid s) -  \pi(a\mid s) } \prn*{Q^{\pi_t}(s,a;\ell_t) -  B_t(s,a)}\\
    &= \sum_s q^{\pi}(s)  \underbrace{\sum_{t,a}  \prn*{\pi_t(a\mid s) -  \pi(a\mid s)} \prn*{\hat{Q}_t(s,a) -  B_t(s,a)}}_{\regterm(s)}\\
    &\qquad + \sum_s q^{\pi}(s) \underbrace{\sum_{t,a} \prn*{\pi_t(a\mid s) -  \pi(a\mid s)} \prn*{Q^{\pi_t}(s,a;\ell_t) -  \hat{Q}_t(s,a) + Q^{\underP^m_t, \pi_t}(s, a; m_t) - Q^{\pi_t}(s, a; m_t)}}_{\biasterm(s)}\\
    &\qquad + \sum_s q^{\pi}(s) \underbrace{\sum_{t,a} \prn*{\pi_t(a\mid s) -  \pi(a\mid s)} \prn*{Q^{\pi_t}(s, a; m_t) - Q^{\underP^m_t, \pi_t}(s, a; m_t)}}_{\errorterm(s)}.\label{eq:decompose_bandit}
\end{align}

\begin{lem}[bandit setting]
\label{lem:qC_size_bound}
It holds that
\begin{align}
    q_t(s)\pi_t(a\mid s)C_t(s,a)
    &\leq 6HS, \qquad \E_t\brk*{C_t(s,a)} \leq 4HS
    \label{eq:qC_size_bound}
\end{align}
for any episode $t$ and state-action pair $(s,a)$.
\end{lem}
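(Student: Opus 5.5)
The plan is to unroll the recursion \cref{eq:def_C} and then bound the unrolled sum using two facts about the local bonus: $\rho_t(s)\in[0,1]$ (since $0\le\underq^{\pi_t}_t(s)\le q_t(s)$) and $D_{t,h}\le H$.

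\textbf{Step 1: unrolling.} Let $\overP^C_t\in\calP_t$ attain all the maxima in \cref{def:eq:def_C} simultaneously; this is possible by rectangularity of $\calP_t$. Writing $q^C(\cdot\mid s,a)\coloneqq q^{\overP^C_t,\pi_t}(\cdot\mid s,a)$, we have $C_t(s,a)=c_t(s,a)+\E[c_t(s',a')+C_t(s',a')]$. Substituting the same identity for $C_t(s',a')$ and iterating over layers shows that every later pair picks up coefficient exactly $2$:
\begin{align}
C_t(s,a)= c_t(s,a)+2\sum_{(s',a'):\,h(s')>h(s)} q^C(s',a'\mid s,a)\,c_t(s',a').
\end{align}

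\textbf{Step 2: the first bound.} The key transfer inequality is
\begin{align}
q_t(s)\pi_t(a\mid s)\,q^C(s'\mid s,a)\le 2q_t(s') .
\end{align}
For the $\overq$ part, splice the maximizer of $q^{\tilP,\pi_t}(s)$ on layers below $h(s)$ with $\overP^C_t$ on the remaining layers. Because $\calP_t$ is rectangular and the MDP is layered, the spliced kernel lies in $\calP_t$, and its occupancy of $s'$ is at least $\overq^{\pi_t}_t(s)\pi_t(a\mid s)q^C(s'\mid s,a)$; hence this part is at most $\overq^{\pi_t}_t(s')$. The $\gamma_t$ part contributes at most $\gamma_t\le q_t(s')$.

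With this inequality I bound the two pieces of $c_t(s',a')$ separately.
\begin{itemize}
\item \emph{Prediction-error piece.} Using $q^C(s',a'\mid s,a)=q^C(s'\mid s,a)\pi_t(a'\mid s')$, the factor $q_t(s')\pi_t(a'\mid s')$ in the denominator cancels. Each later pair therefore contributes at most $2H\,\I_t(s',a')$. At most one pair is visited per layer, so the total is at most $2H^2\le 2HS$, using $H\le S$.
\item \emph{$\rho_t^2H$ piece.} Summing over $a'$ gives $\sum_{a'}\pi_t(a'\mid s')=1$, so each later state $s'$ contributes at most $2q_t(s')H\le 4H$ (as $q_t\le 2$). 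Summing over at most $S$ states gives $\calO(HS)$.
\end{itemize}
The own term satisfies $q_t(s)\pi_t(a\mid s)c_t(s,a)\le H+2H$. Collecting these pieces gives $6HS$ after careful constant bookkeeping; if that bookkeeping comes out loose, a universal constant is all the later lemmas need.

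\textbf{Step 3: the expectation bound.} The unrolled form shows $C_t$ is linear in the $c_t$'s with $\calF_t$-measurable coefficients $q^C$. This holds because $\calP_t$, $\pi_t$ and $q_t$ are all determined before episode $t$. So $\E_t[C_t(s,a)]$ is obtained by replacing each $c_t(s',a')$ with $\E_t[c_t(s',a')]$. Since $\E_t[\I_t(s',a')]=q^{\pi_t}(s',a')$, we get
\begin{align}
\E_t[c_t(s',a')]\le \rho_t(s')H\frac{q^{\pi_t}(s')}{q_t(s')}+\rho_t(s')^2H\le 2H .
\end{align}
Here I use $q^{\pi_t}(s')\le \overq^{\pi_t}_t(s')\le q_t(s')$, which needs $P\in\calP_t$. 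If the statement is meant to hold unconditionally, I would instead use the cruder bound $q^{\pi_t}(s')\le 1$ together with $q_t(s')\ge\gamma_t$; or, more likely, read the lemma as holding on $\calE_1$, which is all the later analysis uses. With $\sum_{(s',a')}q^C(s',a'\mid s,a)\le H$, this gives $\E_t[C_t(s,a)]\le 2H+2\cdot H\cdot 2H\lesssim H^2\le HS$, and tightening constants as in Step 2 targets $4HS$.

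\textbf{Main obstacle.} The step I expect to be hardest is justifying the transfer inequality $q_t(s)\pi_t(a\mid s)q^C(s'\mid s,a)\le 2q_t(s')$. It requires checking that splicing two kernels across layers stays inside the rectangular confidence set, and it needs the $+\gamma_t$ smoothing handled on both sides. Secondary to that is the constant bookkeeping.
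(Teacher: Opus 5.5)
Your Steps 1 and 2 follow the paper's route. You unroll the recursion to $C_t(s,a)\le 2\sum_{s',a'}q^{\overP^C_t,\pi_t}(s',a'\mid s,a)\,c_t(s',a')$ and use the domination $q_t(s)\pi_t(a\mid s)q^{\tilde P,\pi_t}(s',a'\mid s,a)\le q_t(s')\pi_t(a'\mid s')$. Your splicing argument justifies that domination more explicitly than the paper does.

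One correction there: the transfer inequality holds with constant $1$, not $2$. Since $\pi_t(a\mid s)q^{C}(s'\mid s,a)\le 1$, the $\gamma_t$ part contributes at most $\gamma_t$, so the total is at most $\overline{q}^{\pi_t}_t(s')+\gamma_t=q_t(s')$. Your per-piece counts ($2H\,\I_t(s',a')$ and $2q_t(s')H$) already silently use constant $1$, and you need it to land on $2H^2+4HS\le 6HS$; with constant $2$ you get roughly twice that. The constant is not cosmetic. The virtual-episode threshold $1/(50\sqrt{H^3S^3A})$ and the bound $\eta_t\pi_tB_t\le 1/(6H)$ in the next lemma are tuned to $6HS$ through $3\cdot 6^2=108$. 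So "any universal constant suffices" is not true without retuning the algorithm.

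The genuine gap is in Step 3. The kernel $\overP^C_t$ attains the maxima in the definition of $C_t$ for the \emph{realized} table $c_t$, and $c_t$ depends on $\I_t(\cdot,\cdot)$ and $D_{t,h}$ from episode $t$ itself. So the weights $q^{\overP^C_t,\pi_t}(s',a'\mid s,a)$ are not $\calF_t$-measurable, and $C_t$ is not linear in $c_t$ with predictable coefficients. It is a maximum of linear functions of $c_t$, hence convex, so $\E_t$ of it is at least, not at most, the unrolled sum with $c_t$ replaced by $\E_t[c_t]$. Concretely, the maximizer can route mass toward the states actually visited, where the importance-weighted term $\rho_t\I_tD/(q_t\pi_t)$ is large. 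Your bound $\E_t[C_t(s,a)]\le 2H+4H^2$ therefore does not follow.

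The fix, which is the paper's argument, is to bound the random weights by a predictable quantity pathwise before conditioning. Using $q^{\overP^C_t,\pi_t}(s',a'\mid s,a)\le \pi_t(a'\mid s')$ gives $C_t(s,a)\le 2\sum_{s',a'}\pi_t(a'\mid s')c_t(s',a')$ pathwise, and then
\[
\E_t[C_t(s,a)]\le 2\sum_{s',a'}\pi_t(a'\mid s')\,\E_t[c_t(s',a')]\le 2\cdot S\cdot 2H=4HS .
\]
The factor $S$ rather than $H$ arises because you sum over all states instead of over a per-layer distribution; that is what this route pays for the maximizer depending on the current episode.

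Your caveat that $\E_t[c_t(s',a')]\le 2H$ needs $q^{\pi_t}(s')\le q_t(s')$, i.e.\ $P\in\calP_t$, is well taken. The paper's proof uses the same fact without comment.
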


\begin{proof}
We first prove a conditional occupancy property. For any $\tilP\in\calP_t$, any states $s,s'$ with $h(s')>h(s)$, and any actions $a,a'$, we have
\begin{align}
    q_t(s)\pi_t(a\mid s)q^{\tilP,\pi_t}(s',a'\mid s,a)
    &\leq q_t(s')\pi_t(a'\mid s').
    \label{eq:conditional_occupancy_domination}
\end{align}
Hence,
\begin{align}
    \overq_t^{\pi_t}(s')
    &\geq \overq_t^{\pi_t}(s)\pi_t(a\mid s)q^{\tilP,\pi_t}(s'\mid s,a).
\end{align}
Using $q_t(x)=\overq_t^{\pi_t}(x)+\gamma_t$ and $\pi_t(a\mid s)q^{\tilP,\pi_t}(s'\mid s,a)\leq 1$, we obtain
\begin{align}
    q_t(s')\pi_t(a'\mid s')
    &=\prn*{\overq_t^{\pi_t}(s')+\gamma_t}\pi_t(a'\mid s')\\
    &\geq \prn*{\overq_t^{\pi_t}(s)+\gamma_t}\pi_t(a\mid s)q^{\tilP,\pi_t}(s'\mid s,a)\pi_t(a'\mid s')\\
    &=q_t(s)\pi_t(a\mid s)q^{\tilP,\pi_t}(s',a'\mid s,a),
\end{align}
which proves \cref{eq:conditional_occupancy_domination}.

Now, by the definition of $C_t$ and \cref{eq:conditional_occupancy_domination} with $\tilP=\overP^C_t$,
\begin{align}
    &q_t(s)\pi_t(a\mid s)C_t(s,a)\\
    &\leq 2q_t(s)\pi_t(a\mid s)\sum_{s',a'}q^{\overP^C_t,\pi_t}(s',a'\mid s,a)\rho_t(s')\frac{\I_t(s',a')\abs{L_{t,h(s')}-M_{t,h(s')}}}{q_t(s')\pi_t(a'\mid s')}\\
    &\qquad+2Hq_t(s)\pi_t(a\mid s)\sum_{s',a'}q^{\overP^C_t,\pi_t}(s',a'\mid s,a)\rho_t(s')^2\\
    &\leq 2\sum_{s',a'}\rho_t(s')\I_t(s',a')D_{t,h(s')}+2H\sum_{s',a'}q_t(s')\pi_t(a'\mid s')\rho_t(s')^2. \label{eq:qC_bound_middle}
\end{align}
Since $\rho_t(s')\leq 1$, $D_{t,h(s')}\leq H$, 
\begin{align}
    2\sum_{s',a'}\rho_t(s')\I_t(s',a')D_{t,h(s')}
    &\leq 2H^2.
\end{align}
Moreover, using $\rho_t(s')\leq 1$ and $\sum_{s',a'}q_t(s')\pi_t(a'\mid s')\leq S + S\gamma_t \leq 2S$,
\begin{align}
    2H\sum_{s',a'}q_t(s')\pi_t(a'\mid s')\rho_t(s')^2
    &\leq 4HS.
\end{align}
Combining the last three displays gives
\begin{align}
    q_t(s)\pi_t(a\mid s)C_t(s,a)
    &\leq 6HS.
\end{align}
This proves the first inequality in \cref{eq:qC_size_bound}.

We next prove the second inequality. Let $\overP^C_t$ be a measurable transition kernel that attains the maximum in \cref{eq:def_C}. Then, it holds that
\begin{align}
    C_t(s,a) &\leq 2\sum_{s',a'}q^{\overP^C_t,\pi_t}(s',a'\mid s,a)c_t(s',a').
\end{align}
By using $q^{\overP^C_t,\pi_t}(s',a'\mid s,a)\leq \pi_t(a'\mid s')$, we get
\begin{align}
    \E_t\brk*{C_t(s,a)}
    &\leq 2\sum_{s',a'}\pi_t(a'\mid s')\E_t\brk*{c_t(s',a')}.
\end{align}

Therefore, using $\E_t\brk*{c_t(s',a')} \leq 2H$, 
\begin{align}
    \E_t\brk*{C_t(s,a)}
    &\leq 2\sum_{s',a'}\pi_t(a'\mid s')\cdot 2H
    =4HS,
\end{align}
which proves the second inequality.
\end{proof}

\begin{lem}[bandit setting]
\label{lem:bound_eta_pi_B}
The variables $b_t(s)$ in \cref{eq:bonus_term_bandit} and $B_t(s,a)$ in \cref{def:dilated_bonus} satisfy
\begin{align}
    \eta_t(s,a)\pi_t(a\mid s)B_t(s,a) \leq \frac{1}{6H}, \quad
    B_t(s,a) \leq \frac{7\sqrt{HS^3A}}{\gamma_t}
    \label{eq:bound_eta_pi_B}
\end{align}
for any episode $t$ and state-action pair $(s,a)$.
\end{lem}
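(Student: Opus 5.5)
We follow the proof of \cref{lem:bound_B_full}: first bound $B_t$ pointwise by unrolling the dilated recursion, then control the local bonus $b_t$ using the learning-rate update and the virtual-episode condition. Since $(1+1/H)^H\le 3$, unrolling \cref{def:dilated_bonus} gives
\begin{align}
B_t(s,a)\le 3\sum_{s'}q^{\overP^B_t,\pi_t}(s'\mid s,a)\,b_t(s').
\end{align}
Multiplying by $q_t(s)\pi_t(a\mid s)$ and using the domination \cref{eq:conditional_occupancy_domination} from the proof of \cref{lem:qC_size_bound} (valid for any $\tilP\in\calP_t$), we get
\begin{align}
q_t(s)\pi_t(a\mid s)B_t(s,a)\le 3\sum_{s'}q_t(s')b_t(s').
\end{align}
The key intermediate claim is therefore a bound on $q_t(s')b_t(s')$ for each $s'$.

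\textbf{Bounding $q_t(s)b_t(s)$.} We treat the two parts of \cref{eq:bonus_term_bandit} separately.

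In a real episode, the penalty part satisfies $6\sum_a(1/\eta_{t+1}-1/\eta_t)\ln T = 6\sum_a\eta_t(s,a)\zeta_t(s,a)/q_t(s)^2$. The indicator structure gives $\sum_a(\I_t(s,a)-\pi_t(a\mid s)\I_t(s))^2\le 2\I_t(s)$ and $(L-M)^2\le H^2$. Together with $\eta_t/q_t\le 1/(50\sqrt{H^3S^3A})$ (since $Y_t=1$), this yields
\begin{align}
q_t(s)\cdot\text{penalty}\lesssim H^2\max_a\frac{\eta_t(s,a)}{q_t(s)}.
\end{align}
In a virtual episode only one pair is updated, and its contribution is $6\ln T\cdot\frac{1}{\eta_t}\cdot\frac{1}{324H\ln T}$. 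This is large in absolute size, so here we keep $q_t(s')$ explicit and need $1/\eta_t$ to be small relative to something. This case needs care (see below).

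The transition part is $3\sum_a\eta_t\pi_t^2C_t^2Y_t$. By \cref{lem:qC_size_bound}, $q_t\pi_tC_t\le 6HS$, so
\begin{align}
q_t(s)\cdot 3\sum_a\eta_t\pi_t^2C_t^2\le 3\sum_a\frac{\eta_t}{q_t(s)}(6HS)^2\le \frac{108H^2S^2A}{50\sqrt{H^3S^3A}}.
\end{align}

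Summing over $s'$ (a factor of $S$) and dividing by $q_t(s)\pi_t(a\mid s)$, then multiplying by $\eta_t(s,a)\pi_t(a\mid s)$, gives
\begin{align}
\eta_t\pi_tB_t\le 3\frac{\eta_t(s,a)}{q_t(s)}\sum_{s'}q_t(s')b_t(s').
\end{align}
The ratio $\eta_t/q_t$ is again at most $1/(50\sqrt{H^3S^3A})$, giving $\lesssim \frac{H^2S^3A}{H^3S^3A}\cdot\frac{1}{2500}\cdot$const, which is $\le 1/(6H)$ with the stated constants.

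\textbf{Virtual episodes.} If a virtual episode occurs, the threshold condition fails. However, $C_t$ still enters $B_t$ only through $Y_t$ (its term vanishes), and the penalty term is $\frac{6}{324H}\cdot\frac{1}{\eta_t(s_t^\dagger,a_t^\dagger)}$ at the single state $s_t^\dagger$. Multiplying by $q_t(s^\dagger)$ gives $\frac{1}{54H}\cdot q_t(s^\dagger)/\eta_t(s^\dagger,a^\dagger)$. Since $(s^\dagger,a^\dagger)$ maximizes $\eta/q$, we have $\frac{\eta_t(s,a)}{q_t(s)}\cdot\frac{q_t(s^\dagger)}{\eta_t(s^\dagger,a^\dagger)}\le 1$. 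Hence $\eta_t\pi_tB_t\le 3\cdot\frac{1}{54H}\le\frac{1}{6H}$, which is exactly why the factor $324$ appears.

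\textbf{Second inequality.} For $B_t\le 7\sqrt{HS^3A}/\gamma_t$, we use $q_t(s)\ge\gamma_t$. Then
\begin{align}
B_t\le 3\sum_{s'}q_t(s')b_t(s')/\gamma_t\quad\text{(with }q^{\overP^B_t,\pi_t}(s'\mid s,a)\le 1\text{ and } q_t(s')\ge\gamma_t\text{)}.
\end{align}
It then suffices to bound $\sum_{s'}q_t(s')b_t(s')\lesssim\sqrt{HS^3A}$. For the $C$ part, this is $108H^2S^2A/\min$-ratio; we instead bound $\eta_t\le\eta_1$ and $q_t\pi_tC_t\le 6HS$ with $\eta_t/q_t$ small to get $\lesssim\sqrt{HS^3A}$. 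For the virtual part, $q_t(s^\dagger)/\eta_t(s^\dagger,a^\dagger)<50\sqrt{H^3S^3A}$ when the threshold is violated, giving $\frac{1}{54H}\cdot 50\sqrt{H^3S^3A}\lesssim\sqrt{HS^3A}$.

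\textbf{Main obstacle.} We expect the main difficulty to be the bookkeeping of constants, so that both bounds close simultaneously.
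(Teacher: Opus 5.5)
Your proposal is correct and is essentially the paper's proof. Both arguments unroll the dilated bonus with $(1+1/H)^H\le 3$ and control $b_t$ through $R_t=\max_{s,a}\eta_t(s,a)/q_t(s)$. In real episodes this uses the penalty part plus the $C_t$ part via \cref{lem:qC_size_bound}; in virtual episodes only the single pair $(s_t^\dagger,a_t^\dagger)$ contributes. The last ingredient is the conditional-occupancy domination. The only cosmetic difference is that you normalize by $q_t(s)\pi_t(a\mid s)$ and bound $\sum_{s'}q_t(s')b_t(s')$ by $120H^2S^3AR_t$ in real episodes and by $1/(54HR_t)$ in virtual ones. The paper instead bounds $q^{\overP^B_t,\pi_t}(s'\mid s,a)/q_t(s')\le 1/(\overq^{\pi_t}_t(s)\pi_t(a\mid s)+\gamma_t)$ and reads off both inequalities at once.

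For the second inequality, your aside about using $\eta_t\le\eta_1$ is unnecessary, and on its own it would leave an uncontrolled $1/q_t$ factor. The per-state bound you already derived suffices and gives $B_t\le 360H^2S^3AR_t/\gamma_t\le 7.2\sqrt{HS^3A}/\gamma_t$. This is exactly the paper's own computation, where $360/50=7.2$ is written as $7$.
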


\begin{proof}
Let $R_t = \max_{s,a}\frac{\eta_t(s,a)}{q_t(s)}$.

We first consider the case when $t$ is a real episode.
In real episodes, by the definition of $b_t$,
\begin{align}
    b_t(s) 
    &= 6\sum_a  \prn*{\frac{1}{\eta_{t+1}(s, a)} - \frac{1}{\eta_t(s, a)}}\ln(T) + 3 \sum_a \eta_t(s,a)\pi_t(a\mid s)^2 C_t(s,a)^2
\end{align}
For the first term, the learning-rate update gives
\begin{align}
     &6\sum_a  \prn*{\frac{1}{\eta_{t+1}(s, a)}- \frac{1}{\eta_t(s, a)}}\ln(T)\\
     &= \frac{6}{q_t(s)^2}\sum_a \eta_t(s,a)\zeta_t(s,a)\\
     &= \frac{6}{q_t(s)^2}\sum_a \eta_t(s,a)\prn*{\I_t(s,a)-\pi_t(a\mid s)\I_t(s)}^2\prn*{L_{t,h(s)}-M_{t,h(s)}}^2\\
     &\leq
     \frac{6H^2}{q_t(s)^2}\max_a\eta_t(s,a)
     \sum_a\prn*{\I_t(s,a)-\pi_t(a\mid s)\I_t(s)}^2\\
     &\leq
     \frac{12H^2}{q_t(s)^2}\max_a\eta_t(s,a)\\
     &\leq
     \frac{12H^2}{q_t(s)}
     \max_a\frac{\eta_t(s,a)}{q_t(s)} .
     \label{eq:bonus_term_upper_real1}
\end{align}
For the second term, by \cref{lem:qC_size_bound},
\begin{align}
    3 \sum_a \eta_t(s,a)\pi_t(a\mid s)^2 C_t(s,a)^2
    & = 3\sum_{a} \frac{\eta_t(s,a)q_t(s)^2\pi_t(a\mid s)^2 C_t(s,a)^2}{q_t(s)^2}\\
    &\leq \frac{108H^2S^2\sum_a \eta_t(s,a)}{q_t(s)^2} \tag{by \cref{lem:qC_size_bound}}\\ 
    &\leq \frac{108H^2S^2A}{q_t(s)}\max_{a}\frac{\eta_t(s, a)}{q_t(s)} . \label{eq:bonus_term_upper_real2}
\end{align}
Combining \cref{eq:bonus_term_upper_real1,eq:bonus_term_upper_real2}, we obtain
\begin{align}
    b_t(s) &\leq \frac{12H^2 + 108H^2S^2A}{q_t(s)}\max_{a}\frac{\eta_t(s, a)}{q_t(s)} \leq \frac{120H^2S^2A}{q_t(s)}\max_{a}\frac{\eta_t(s, a)}{q_t(s)} \label{eq:bonus_term_upper_real}
\end{align}
Using \cref{eq:bonus_term_upper_real,eq:dilated_bonus_upper}, we have
\allowdisplaybreaks
\begin{align}
    B_t(s,a) &\leq 3\sum_{h = h(s)}^{H-1}\sum_{s' \in \calS_h} q^{\overP^B_t, \pi_t}(s'\mid s,a)b_t(s')\\ 
    &\leq 360H^2S^2AR_t \prn*{\sum_{h = h(s)}^{H-1}\sum_{s' \in \calS_h} q^{\overP^B_t, \pi_t}(s'\mid s,a)\frac{1}{q_t(s')}} \\
    &\leq 360H^2S^2AR_t \prn*{\sum_{h = h(s)}^{H-1}\sum_{s' \in \calS_h} q^{\overP^B_t,\pi_t}(s'\mid s,a)\frac{1}{\overq^{\pi_t}_t(s)\pi_t(a\mid s)q^{\overP^B_t,\pi_t}(s'\mid s,a) + \gamma_t}} \\
    &\leq 360H^2S^2AR_t \prn*{\sum_{h = h(s)}^{H-1}\sum_{s' \in \calS_h}\frac{1}{\overq^{\pi_t}_t(s)\pi_t(a\mid s) + \gamma_t}} \\
    &\leq 360H^2S^3AR_t\cdot\frac{1}{\overq^{\pi_t}_t(s)\pi_t(a\mid s) + \gamma_t} \label{eq:bound_B_real}\\
    & \leq \frac{7\sqrt{HS^3A}}{\gamma_t},
\end{align}
where in the last line we used $R_t \leq \frac{1}{50\sqrt{H^3S^3A}}$ that holds in real episodes.

By using \cref{eq:bound_B_real}, we also have
\begin{align}
    \eta_t(s,a)\pi_t(a \mid s)B_t(s,a) 
    &\leq 360H^2S^3AR_t\cdot\frac{\eta_t(s,a)\pi_t(a\mid s)}{\overq^{\pi_t}_t(s)\pi_t(a\mid s) + \gamma_t} \\
    &\leq 360H^2S^3AR_t\cdot\frac{\eta_t(s,a)}{q_t(s)}\\ 
    &\leq 360H^2S^3AR_t^2\\ 
    &\leq \frac{1}{6H},
\end{align}
where we used $\overq^{\pi_t}_t(s)\pi_t(a\mid s)+\gamma_t \geq q_t(s)\pi_t(a\mid s)$, $\frac{\eta_t(s,a)}{q_t(s)}\leq R_t$.

We next consider the case when $t$ is a virtual episode.
In a virtual episode, only the single pair $(s_t^\dagger,a_t^\dagger)$ is updated, and thus
\begin{align}
        b_t(s) &= 6\sum_a  \prn*{\frac{1}{\eta_{t+1}(s, a)} - \frac{1}{\eta_t(s, a)}}\ln(T) \\
        &\leq \sum_{a} \frac{\ind\{(s^\dagger_t, a^\dagger_t) = (s,a)\}}{54\eta_t(s,a)H}\\
        &= \sum_{a} \frac{\ind\{(s^\dagger_t, a^\dagger_t) = (s,a)\}}{54Hq_t(s)} \cdot \frac{1}{\max_{s', a'}\frac{\eta_t(s', a')}{q_t(s')}}
        \tag{since $(s^\dagger_t, a^\dagger_t) \in \argmax_{s,a} \frac{\eta_t(s,a)}{q_t(s)}$}
        \\
        &= \frac{\ind\{s^\dagger_t = s\}}{q_t(s)}\cdot \frac{1}{54H\max_{s', a'}\frac{\eta_t(s', a')}{q_t(s')}}. \label{eq:bonus_term_upper_virtual}
\end{align}

Using \cref{eq:bonus_term_upper_virtual,eq:dilated_bonus_upper}, we have
\begin{align}
    B_t(s,a) &\leq 3\sum_{h = h(s)}^{H-1}\sum_{s' \in \calS_h} q^{\overP^B_t,\pi_t}(s'\mid s,a)b_t(s')\\ 
    &\leq \frac{1}{18HR_t} {\sum_{h = h(s)}^{H-1}\sum_{s' \in \calS_h} q^{\overP^B_t,\pi_t}(s'\mid s,a)\frac{\ind\set{s^\dagger_t = s'}}{q_t(s')}}\\
    &\leq \frac{1}{18HR_t} {\sum_{h = h(s)}^{H-1}\sum_{s' \in \calS_h} q^{\overP^B_t,\pi_t}(s'\mid s,a)\frac{\ind\set{s_t^\dagger = s'}}{\overq^{\pi_t}_t(s)\pi_t(a\mid s)q^{\overP^B_t,\pi_t}(s'\mid s,a) + \gamma_t}}\\
    &\leq \frac{1}{18HR_t} {\sum_{h = h(s)}^{H-1}\sum_{s' \in \calS_h}\frac{\ind\set{s_t^\dagger = s'}}{\overq^{\pi_t}_t(s)\pi_t(a\mid s) + \gamma_t}}\\
    &\leq \frac{1}{18HR_t} \frac{1}{\overq^{\pi_t}_t(s)\pi_t(a\mid s) + \gamma_t} \label{eq:bound_B_virtual}\\
    & \leq \frac{3\sqrt{HS^3A}}{\gamma_t},
\end{align}
where in the last inequality we used $R_t > \frac{1}{50\sqrt{H^3S^3A}}$ in a virtual episode.
By using \cref{eq:bound_B_virtual}, we also have
\begin{align}
    \eta_t(s,a)\pi_t(a \mid s)B_t(s,a) 
    &\leq \frac{1}{18HR_t} \frac{\eta_t(s,a)\pi_t(a\mid s)}{\overq^{\pi_t}_t(s)\pi_t(a\mid s) + \gamma_t}\\ 
    &\leq \frac{1}{18HR_t} \frac{\eta_t(s,a)}{q_t(s)} \\ 
    &\leq \frac{1}{18H}\leq \frac{1}{6H},
\end{align}
where we used $\overq^{\pi_t}_t(s)\pi_t(a\mid s)+\gamma_t \geq q_t(s)\pi_t(a\mid s)$, $\frac{\eta_t(s,a)}{q_t(s)}\leq R_t$.
This completes the proof.
\end{proof}

The following two lemmas are based on the \citet[Lemmas E.5 and E.6]{li2026data}. We adapt the proof to our choice of
$\gamma_t$ and our virtual-episode rule, and include the details for completeness.
\begin{lem}[bandit setting]
\label{lem:learning_rate_policy}
Suppose that the learning rates are updated according to \cref{eq:eta_update_policy}. Then, it holds
    \begin{align}
        \eta_t(s,a) \leq \frac{2\sqrt{\ln(T)}}{\sqrt{\sum_{\tau\leq t: \tau\in\mathcal{T}_r} \frac{\zeta_\tau(s,a)}{q_\tau(s)^2} }}
    \end{align}
for any episode $t$ and state-action pair $(s,a)$.
\end{lem}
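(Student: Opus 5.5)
The plan is to reduce to \cref{lem:eta_lemma_normal}, exactly as in \cref{lem:learning_rate_full}, after handling the virtual episodes by monotonicity. Fix $(s,a)$. A virtual episode only multiplies $1/\eta_t(s,a)$ by a factor at least $1$, so $1/\eta_{t+1}(s,a)\ge 1/\eta_t(s,a)$ there. In a real episode we have
\[
\frac{1}{\eta_{t+1}(s,a)}=\frac{1}{\eta_t(s,a)}+\eta_t(s,a)\phi_t,\qquad \phi_t\coloneqq\frac{\zeta_t(s,a)}{q_t(s)^2\ln(T)} .
\]
If we set $\phi_t=0$ on virtual episodes, every step satisfies $1/\eta_{t+1}\ge 1/\eta_t+\eta_t\phi_t$.

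\cref{lem:eta_lemma_normal} is stated for equality. I would redo its short inductive proof with an inequality, which goes through since the map $x\mapsto x+\phi/x$ is increasing for $x\ge\sqrt{\phi}$. Alternatively, compare with an auxiliary sequence $\tilde\eta$ driven only by the real-episode updates and show $1/\eta_t\ge 1/\tilde\eta_t$ by induction using the same monotonicity. Either route gives $1/\eta_{t}(s,a)\ge\frac12\sqrt{\sum_{\tau\le t,\tau\in\calT_r}\phi_\tau}$, which is the claim after multiplying out the $\ln(T)$. There is an off-by-one in the index range that I need to check: the lemma bounds $1/\eta_{t+1}$ by a sum up to $t+1$, so summing only up to $t$ is weaker and safe.

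The main obstacle is verifying the side condition $\phi_t\le\eta_t(s,a)^{-2}$, equivalently $\eta_t(s,a)^2\zeta_t(s,a)\le q_t(s)^2\ln(T)$, together with the monotonicity requirement $1/\eta_t\ge\sqrt{\phi_t}$ in the induction. In a real episode $Y_t=1$, so $\eta_t(s,a)/q_t(s)\le \frac{1}{50\sqrt{H^3S^3A}}$. Also $\zeta_t(s,a)\le H^2$, because $(\I_t(s,a)-\pi_t(a\mid s)\I_t(s))^2\le1$ and $|L_{t,h}-M_{t,h}|\le H$. Hence
\[
\eta_t(s,a)^2\zeta_t(s,a)/q_t(s)^2\le H^2/(2500H^3S^3A)\le 1\le\ln(T)
\]
once $T\ge 3$; the case $T=2$ can be absorbed into constants. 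This is precisely where the virtual-episode rule is needed. Without it, $q_t(s)$ could be tiny and the condition would fail.
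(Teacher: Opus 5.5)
Your proof is correct and is essentially the paper's. You reduce to \cref{lem:eta_lemma_normal} and verify $\phi_t\le\eta_t(s,a)^{-2}$ in real episodes from $\eta_t(s,a)/q_t(s)\le 1/(50\sqrt{H^3S^3A})$ and $\zeta_t(s,a)\le H^2$, exactly as the paper does.

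The one difference is how virtual episodes are handled. The paper sets $\phi_t(s,a)=\ind\{(s_t^\dagger,a_t^\dagger)=(s,a)\}/(324\eta_t(s,a)^2H\ln(T))$ there. This reproduces the multiplicative update exactly, with $\phi_t\eta_t(s,a)^2\le 1$, so the cited lemma applies verbatim and the nonnegative virtual terms are dropped at the end. You set $\phi_t=0$ and need an inequality version of the lemma instead.

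That version is immediate and needs no monotonicity of $x\mapsto x+\phi/x$. Write $x_t=1/\eta_t(s,a)$. The relation $x_{t+1}\ge x_t+\phi_t/x_t>0$ gives $x_{t+1}^2\ge x_t^2+2\phi_t$, hence $x_t^2\ge 2\sum_{\tau<t}\phi_\tau$. The side condition $\phi_t\le x_t^2$ is then used only to add the current term, giving $x_t^2\ge\frac12\sum_{\tau\le t}\phi_\tau$. So there is no off-by-one slack: the $\tau=t$ term in the target is exactly what the side condition at time $t$ pays for. Your version is marginally more robust, since the paper's trick implicitly needs each virtual step to at most double $1/\eta_t(s,a)$.

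Two small corrections.
\begin{itemize}
\item Your alternative comparison with an auxiliary sequence $\tilde\eta$, driven only by real updates, does not work as stated. Its monotonicity step needs $\phi_t\le\tilde\eta_t^{-2}$ for the auxiliary sequence. The virtual-episode rule only controls $\eta_t(s,a)/q_t(s)$, and $\tilde\eta_t\ge\eta_t$, so that condition can fail. Use the direct inequality version instead.
\item The case $T=2$ needs no special treatment, since $H^2/(2500H^3S^3A)\le 1/2500<\ln 2$. Absorbing it into constants would not be legitimate here anyway, because the statement has the explicit constant $2$.
\end{itemize}
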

\begin{proof}
    Let $\phi_t(s,a)=\frac{\zeta_t(s,a)}{q_t(s)^2\ln(T)}$ in real episodes and $\phi_t(s,a)=\frac{\I\{(s_t^\dagger,a_t^\dagger)=(s,a)\}}{324\eta_t(s,a)^2H\ln(T)}$ in virtual episodes. Then the update rule of learning rates can be written as
    \begin{align}
        \frac{1}{\eta_{t+1}(s,a)} = \frac{1}{\eta_t(s,a)} + \eta_t(s,a) \phi_t(s,a).
    \end{align}
    To apply \cref{lem:eta_lemma_normal}, it suffices to show that $\phi_t(s,a)\le \frac{1}{\eta_t(s,a)^2}$. 
    This is clear for virtual episodes. For real episodes, 
    \begin{align}
        \phi_t(s,a)\eta_t(s,a)^2 
        = \frac{\eta_t(s,a)^2\zeta_t(s,a)}{q_t(s)^2\ln(T)}
        \leq \frac{H^2}{\ln(T)}\prn*{\frac{\eta_t(s,a)}{q_t(s)}}^2
        \leq \frac{H^2}{\ln(T)}\cdot \frac{1}{50^2H^3S^3A} 
        \leq 1,
    \end{align}
    which follows from $\zeta_t(s,a) \leq H^2$ and $\frac{\eta_t(s,a)}{q_t(s)}\leq \frac{1}{50\sqrt{H^3S^3A}}$ in real episodes.

    Then, by \cref{lem:eta_lemma_normal}, we have
    \begin{align}
        \eta_t(s,a) \leq \frac{2}{\sqrt{\sum_{\tau\leq t} \phi_\tau }}
        \leq \frac{2\sqrt{\ln(T)}}{\sqrt{\sum_{\tau\leq t: \tau\in\mathcal{T}_r} \frac{\zeta_\tau(s,a)}{q_\tau(s)^2} }}.
    \end{align}
\end{proof}

In the following lemma only, we use $T$ to denote the number of real episodes rather than the total number of episodes.
\begin{lem}[bandit setting]
\label{lem:bound_virtual_episode}
When the number of real episodes is $T$, the number of virtual episodes $\abs{\mathcal{T}_v}$ is upper bounded by
\begin{align}
        \abs{\mathcal{T}_v} \lesssim  HSA \ln^2(T).
    \end{align}
\end{lem}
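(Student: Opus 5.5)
We bound $\abs{\calT_v}$ with a potential argument on the learning rates. The potential is the collection of inverse learning rates $1/\eta_t(s,a)$, one per pair. A virtual episode occurs only when $\max_{s,a}\eta_t(s,a)/q_t(s)>\frac{1}{50\sqrt{H^3S^3A}}$. In that episode exactly one coordinate, the maximizing pair $(s_t^\dagger,a_t^\dagger)$, has $1/\eta$ multiplied by $1+\frac{1}{324H\ln(T)}$. All updates (real and virtual) are monotone: $1/\eta_t(s,a)$ never decreases. The plan is to show two things. First, each pair can be the maximizer only a bounded number of times before its $1/\eta$ is so large that $\eta(s,a)/q_t(s)$ can no longer exceed the threshold. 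Second, this bounded number is $\calO(H\ln^2 T)$. Summing over the $SA$ pairs then gives $\abs{\calT_v}\lesssim HSA\ln^2(T)$.

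Concretely, fix $(s,a)$ and let $k$ be the number of virtual episodes in which it is the maximizer. After these episodes,
\[
\frac{1}{\eta(s,a)}\ \ge\ \frac{1}{\eta_1}\Bigl(1+\frac{1}{324H\ln T}\Bigr)^{k}\ \ge\ 12H^3\exp\Bigl(\frac{k}{648H\ln T}\Bigr),
\]
using $\ln(1+x)\ge x/2$ for $x\le 1$. Being selected again requires
\[
\eta(s,a)\ >\ \frac{q_t(s)}{50\sqrt{H^3S^3A}}\ \ge\ \frac{\gamma_t}{50\sqrt{H^3S^3A}}.
\]
The next step lower bounds $\gamma_t$. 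Since $\gamma_t=1/t$ and the total number of episodes is $T+\abs{\calT_v}$, we have $\gamma_t\ge 1/(T+\abs{\calT_v})$. Combining the two displays, selection is impossible once
\[
\exp\Bigl(\frac{k}{648H\ln T}\Bigr)\ \ge\ \frac{50\sqrt{H^3S^3A}\,(T+\abs{\calT_v})}{12H^3}.
\]
Hence $k\lesssim H\ln T\cdot\ln\bigl(\mathrm{poly}(HSA)(T+\abs{\calT_v})\bigr)$. Using $T\ge\max\{S,A\}$ and $H\le S$, the logarithm is $\lesssim \ln T+\ln(T+\abs{\calT_v})$. Summing over pairs yields the self-referential bound
\[
\abs{\calT_v}\ \lesssim\ HSA\ln T\,\bigl(\ln T+\ln(T+\abs{\calT_v})\bigr).
\]

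The main obstacle is closing this self-referential inequality, because $\gamma_t$ depends on the shifted index. I would use the crude a priori fact that $\abs{\calT_v}\le \mathrm{poly}(T)$. This holds because each pair's $k$ is already bounded by $H\ln T\cdot\ln(\text{total episodes})$, and an inequality of the form $x\lesssim c\ln(T+x)$ with $c=\mathrm{poly}(H,S,A)\,\mathrm{polylog}(T)\le\mathrm{poly}(T)$ forces $x\le\mathrm{poly}(T)$. It follows that $\ln(T+\abs{\calT_v})\lesssim\ln T$, which gives $\abs{\calT_v}\lesssim HSA\ln^2(T)$.

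One subtlety needs checking: in real episodes $1/\eta$ also grows, which only helps, since monotonicity keeps the lower bound on $1/\eta(s,a)$ valid. A second subtlety is that $q_t(s)$ changes over time, but the only bound on it we use is $q_t(s)\ge\gamma_t$, so this causes no difficulty.
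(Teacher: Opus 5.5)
Your proposal is correct and follows essentially the same route as the paper: for each pair $(s,a)$, count the multiplicative shrink steps needed to push $\eta(s,a)$ below $\gamma_t/(50\sqrt{H^3S^3A})\ge 1/(50\sqrt{H^3S^3A}(|\calT_r|+|\calT_v|))$, sum over the $SA$ pairs, and then remove the self-referential $\ln(|\calT_r|+|\calT_v|)$, where your closure argument is more explicit than the paper's one-line ``this implies.'' The only difference is cosmetic: the paper writes the virtual-episode shrink factor as $1+\frac{1}{324H\ln(|\calT_r|+|\calT_v|)}$ rather than with $\ln T$, and your closure step absorbs this in the same way.
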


\begin{proof}
By the definition of virtual episodes, whenever $t\in\mathcal{T}_v$, there exists a pair $(s,a)$ such that $\frac{\eta_t(s,a)}{q_t(s)} > \frac{1}{50\sqrt{H^3S^3A}}$.
Moreover, in virtual episodes, the corresponding learning rate will shrink by a factor of $\prn*{1+\frac{1}{324H\ln (\abs{\calT_r}+\abs{\calT_v})}}$ for a state-action pair $(s_t^\dagger,a_t^\dagger)$.

Therefore, for each fixed $(s,a)$, a virtual update on this pair can occur only while
\begin{align}
    \eta_t(s,a)>\frac{1}{50\sqrt{H^3S^3A}(\abs{\calT_r}+\abs{\calT_v})}
\end{align}
Thus, the number of virtual updates on this pair is at most the number of multiplicative shrink steps needed to reduce $\eta_t(s,a)$ from $\eta_1$ to $1/(50\sqrt{H^3S^3A}(\abs{\calT_r}+\abs{\calT_v}))$. Hence,
\begin{align}
    \abs{\calT_v}
    &\lesssim
    SA \frac{\ln\prn*{\eta_1\sqrt{H^3S^3A}(\abs{\calT_r}+\abs{\calT_v})}}{\ln\prn*{1+\frac{1}{324H\ln(\abs{\calT_r}+\abs{\calT_v})}}} \\
    &\lesssim HSA\ln^2(\abs{\calT_r}+\abs{\calT_v}),
\end{align}
where we used $\ln(1+x)\ge x/2$ for $x\in(0,1]$, $\eta_1=1/(12H^3)$, and $H,S,A\leq \abs{\calT_r}$.
This implies
\begin{align}
    \abs{\calT_v}\lesssim HSA\ln^2(\abs{\calT_r}).
\end{align}
Since $\abs{\calT_r}=T$, the desired bound follows.
\end{proof}

\begin{lem}[bandit setting]
\label{lem:bound_rho2}
Suppose that the event $\calE$ holds. Then, it holds that
\begin{align}
    \sum_{t \in \calT_r}\sum_{s} q^{\pi_t}(s)\rho_t(s)^2 &\lesssim HS^3A\iota^2.
\end{align}
\end{lem}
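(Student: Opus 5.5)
We need to bound $\sum_{t\in\calT_r}\sum_s q^{\pi_t}(s)\rho_t(s)^2$. Recall $\rho_t(s)=(q_t(s)-\underq^{\pi_t}_t(s))/q_t(s)\in[0,1]$, and $q_t(s)=\overq^{\pi_t}_t(s)+\gamma_t\ge q^{\pi_t}(s)$ on $\calE$. The plan is to trade one power of $\rho_t(s)$ against the occupancy: since $q^{\pi_t}(s)\le q_t(s)$ and $\rho_t(s)\le 1$,
\begin{align}
q^{\pi_t}(s)\rho_t(s)^2 \le q^{\pi_t}(s)\rho_t(s)\cdot\frac{q_t(s)-\underq^{\pi_t}_t(s)}{q_t(s)}.
\end{align}
Rather than simply dropping $q^{\pi_t}(s)/q_t(s)\le 1$, which would only give a $\sqrt{T}$-type bound, I will write
\begin{align}
q^{\pi_t}(s)\rho_t(s)^2=\frac{q^{\pi_t}(s)}{q_t(s)^2}\bigl(q_t(s)-\underq^{\pi_t}_t(s)\bigr)^2\le \frac{\bigl(q_t(s)-\underq^{\pi_t}_t(s)\bigr)^2}{q_t(s)}.
\end{align}
Then I split $q_t(s)-\underq^{\pi_t}_t(s)\le (\overq^{\pi_t}_t(s)-q^{\pi_t}(s))+(q^{\pi_t}(s)-\underq^{\pi_t}_t(s))+\gamma_t$, so the square is at most $3$ times the sum of the squares of $\abs{q^{\tilP^s_t,\pi_t}(s)-q^{\pi_t}(s)}$ for the two extremal kernels $\tilP^s_t\in\calP_t$ (attaining $\overq_t$ and $\underq_t$), plus $\gamma_t^2$. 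The $\gamma_t^2/q_t(s)\le\gamma_t$ contribution sums to $S\sum_t 1/t\lesssim S\ln T$.

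The main step is bounding $\sum_t\sum_s \abs{q^{\tilP^s_t,\pi_t}(s)-q^{\pi_t}(s)}^2/q_t(s)$. I will expand the difference with \cref{lem:occ_diff_lem} as in the proof of \cref{lem:complicated_lemma}, splitting into a first-order part $\sum_{u,v,w}q^{\pi_t}(u,v)\abs{P(w\mid u,v)-\tilP(w\mid u,v)}q^{\pi_t}(s\mid w)$ and a second-order part. The second-order part is already bounded (per $t$, summed over $s$) by $HS^2\sum_{u,v}q^{\pi_t}(u,v)\iota/n_t(u,v)$ as in the $\term_2$ computation of \cref{lem:complicated_lemma_var}. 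Since each difference is at most $q_t(s)$ (both occupancies lie in $[0,\overq_t]$), squared over $q_t(s)$ is at most the absolute difference itself, so this part contributes $HS^3A\iota^2$ via \cref{lem:occup_bound}. For the first-order part, I will apply \cref{lem:tildeP-P} and Cauchy--Schwarz:
\begin{align}
\Bigl(\sum_{u,v,w}q^{\pi_t}(u,v)\sqrt{\tfrac{P(w\mid u,v)\iota}{n_t(u,v)}}\,q^{\pi_t}(s\mid w)\Bigr)^2\le\Bigl(\sum_{u,v,w}q^{\pi_t}(u,v)P(w\mid u,v)q^{\pi_t}(s\mid w)\Bigr)\Bigl(\sum_{u,v,w}\tfrac{q^{\pi_t}(u,v)\iota}{n_t(u,v)}q^{\pi_t}(s\mid w)\Bigr).
\end{align}
The first factor is at most $H q^{\pi_t}(s)\le Hq_t(s)$, which cancels the denominator $q_t(s)$. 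Summing the second factor over $s$ gives at most $HS\sum_{u,v}q^{\pi_t}(u,v)\iota/n_t(u,v)$, using $\sum_s q^{\pi_t}(s\mid w)\le H$ together with a factor $\abs{\calS_{h+1}}$ from $w$. The $\iota/n_t$ linear terms are handled identically. Summing over $t\in\calT_r$ with \cref{lem:occup_bound} then yields $\lesssim H^2S^2A\iota^2\le HS^3A\iota^2$ since $H\le S$.

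The delicate step is the cancellation of $q_t(s)$ in the first-order term. This is exactly why the bound has to be phrased through $q^{\pi_t}(s)\le q_t(s)$ on $\calE$ rather than through worst-case control of $\rho_t$, and I expect the bookkeeping of the layer-wise factors $\abs{\calS_h}\abs{\calS_{h+1}}$ to need care to land on $S^3$ rather than a higher power.
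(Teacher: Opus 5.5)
Your argument is correct in substance, but it bounds the key sum by a different route than the paper. Both proofs start the same way: on $\calE$, $q^{\pi_t}(s)\le q_t(s)$ turns $q^{\pi_t}(s)\rho_t(s)^2$ into a width times a normalized width, and the $\gamma_t$ contribution is split off and summed to $\calO(S\ln T)$.

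The paper then uses \cref{lem:complicated_lemma} as a black box with $g_t(s)=(\overq^{\pi_t}_t(s)-\underq^{\pi_t}_t(s))/q_t(s)\in[0,1]$. This returns $X\lesssim\sqrt{HS^2A\iota^2X}+HS^3A\iota^2$, where $X$ is essentially the sum being bounded. The proof closes with the self-bounding step $x\le a\sqrt{x}+b\Rightarrow x\le 2a^2+2b$.

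You instead reopen the occupancy-difference expansion and apply Cauchy--Schwarz separately for each $(t,s)$. You use $\sum_{u,v,w}q^{\pi_t}(u,v)P(w\mid u,v)q^{\pi_t}(s\mid w)=h(s)\,q^{\pi_t}(s)\le Hq_t(s)$ to cancel the denominator directly, so no self-bounding is needed. The paper's route is shorter given the auxiliary lemma and gets $HS^2A\iota^2$ for the leading piece. Yours is self-contained but pays the extra factor $h(s)\le H$, giving $H^2S^2A\iota^2$; this is harmless because $H\le S$.

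Two details should be made explicit.

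First, how the square is split. Write $d_t(s)$ for $\overq^{\pi_t}_t(s)-q^{\pi_t}(s)$ or $q^{\pi_t}(s)-\underq^{\pi_t}_t(s)$. Let $a(s)$ be its $\sqrt{P\iota/n_t}$ first-order bound and $b(s)$ the $\iota/n_t$ and second-order bounds. Use $d_t(s)\le\min\set{q_t(s),a(s)+b(s)}$ to get $d_t(s)^2/q_t(s)\le 2a(s)^2/q_t(s)+2b(s)$. Expanding $(a+b)^2\le 2a^2+2b^2$ first would not work, since $b(s)$ need not be at most $q_t(s)$.

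Second, the $\term_2$ computation of \cref{lem:complicated_lemma_var} that you cite uses a single kernel and the step $\sum_s q^{\tilP_t,\pi_t}(s\mid z)\le H$. Your extremal kernels depend on $s$, and then that sum is only bounded by $S$ in general, which crudely yields $S^4A\iota^2$ rather than $HS^3A\iota^2$. The paper's \cref{lem:complicated_lemma} is stated for $s$-dependent $\tilP^s_t$ and is invoked in its proof of this very lemma, and it takes the same step for granted. So this is not a gap relative to the paper, but it is exactly where the higher power of $S$ you anticipated would enter.
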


\begin{proof}
Recall that $\rho_t(s)=\dfrac{q_t(s) - \underq^{\pi_t}_t(s)}{q_t(s)}$ and $q_t(s)=\overq_t^{\pi_t}(s)+\gamma_t$. Then, we have
\begin{align}
    \sum_{t \in \calT_r}\sum_{s,a} q^{\pi_t}(s,a)\rho_t(s,a)^2
    &\leq \sum_{t \in \calT_r}\sum_{s} q^{\pi_t}(s) \prn*{\frac{\overq^{\pi_t}_t(s) - \underq^{\pi_t}_t(s) + \gamma_t}{q_t(s)}}^2 \\
    &\leq \sum_{t \in \calT_r}\sum_{s} q^{\pi_t}(s) \prn*{2\prn*{\frac{\overq^{\pi_t}_t(s) - \underq^{\pi_t}_t(s)}{q_t(s)}}^2 + 2\prn*{\frac{\gamma_t}{q_t(s)}}^2} \\
    &\leq 2\sum_{t \in \calT_r}\sum_{s} q^{\pi_t}(s) \prn*{\frac{\overq^{\pi_t}_t(s) - \underq^{\pi_t}_t(s)}{q_t(s)}}^2 + 2\sum_{t \in \calT_r}\sum_{s} \gamma_t \\
\end{align}
We bound the first term. By \cref{lem:complicated_lemma}, we obtain
\begin{align}
    &\sum_{t \in \calT_r}\sum_{s} q^{\pi_t}(s) \prn*{\frac{\overq^{\pi_t}_t(s) - \underq^{\pi_t}_t(s)}{q_t(s)}}^2 \\
    &\leq 
    \sum_{t \in \calT_r}\sum_{s} \prn*{\overq^{\pi_t}_t(s) - \underq^{\pi_t}_t(s)} \frac{\overq^{\pi_t}_t(s) - \underq^{\pi_t}_t(s)}{q_t(s)}\\ 
    &\leq c\sqrt{HS^2A  \iota^2 \sum_{t \in \calT_r} \sum_s q^{\pi_t}(s)\prn*{\frac{\overq^{\pi_t}_t(s) - \underq^{\pi_t}_t(s)}{q_t(s)}}^2} + cHS^3A\iota^2.
    \tag{for some absolute constant $c$}
\end{align}
Now use the fact that if $x \leq a\sqrt{x}+ b$ with $x,a,b \geq 0$, then $ x\leq 2a^2 + 2b$, we obtain
\begin{align}
    \sum_{t \in \calT_r}\sum_{s} q^{\pi_t}(s) \prn*{\frac{\overq^{\pi_t}_t(s) - \underq^{\pi_t}_t(s)}{q_t(s)}}^2
    &\lesssim HS^2A \iota^2 + HS^3A\iota^2 
    \lesssim HS^3A\iota^2.
\end{align}
We also have
\begin{align}
    2\sum_{t \in \calT_r}\sum_{s} \gamma_t \leq 2S\ln(T)
\end{align}
Combining the above bounds gives 
\begin{align}
    \sum_{t \in \calT_r}\sum_{s} q^{\pi_t}(s)\rho_t(s)^2 &\lesssim HS^3A\iota^2.
\end{align}
\end{proof}

\begin{lem}[bandit setting]
\label{lem:bound_rho2_upper}
Suppose that the event $\calE$ holds. Then, it holds that
\begin{align}
    \sum_{t \in \calT_r}\sum_{s} q_t(s)\rho_t(s)^2 &\lesssim HS^3A\iota^2.
\end{align}
\end{lem}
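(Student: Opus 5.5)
The plan is to reduce this to \cref{lem:bound_rho2} by relating $q_t(s)$ to $q^{\pi_t}(s)$. On $\calE$ we have $P\in\calP_t$, so $\underq_t^{\pi_t}(s)\le q^{\pi_t}(s)$, and $q_t(s)=\overq_t^{\pi_t}(s)+\gamma_t$. This gives the pointwise split
\begin{align}
    q_t(s)\rho_t(s)^2 \le q^{\pi_t}(s)\rho_t(s)^2 + \prn*{q_t(s)-q^{\pi_t}(s)}\rho_t(s)^2 .
\end{align}
The first term sums over $t\in\calT_r$ and $s$ to $\calO(HS^3A\iota^2)$ directly by \cref{lem:bound_rho2}.

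For the second term, use $\rho_t(s)\le 1$ and $q_t(s)-q^{\pi_t}(s)\le q_t(s)-\underq_t^{\pi_t}(s)=q_t(s)\rho_t(s)$. This yields
\begin{align}
    \prn*{q_t(s)-q^{\pi_t}(s)}\rho_t(s)^2 \le \prn*{\overq_t^{\pi_t}(s)-q^{\pi_t}(s)}\rho_t(s) + \gamma_t .
\end{align}
Summing the $\gamma_t$ part over $s$ and $t$ contributes at most $S\sum_t 1/t\lesssim S\ln T$.

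The remaining piece is $\sum_{t\in\calT_r}\sum_s\abs{\overq_t^{\pi_t}(s)-q^{\pi_t}(s)}\rho_t(s)$. I would handle it with \cref{lem:complicated_lemma}, taking $\tilP_t^s$ to be the state-dependent maximizer attaining $\overq_t^{\pi_t}(s)$ and $g_t(s)=\rho_t(s)\in[0,1]$, so $G=1$. The lemma gives
\begin{align}
    \lesssim \sqrt{HS^2A\iota^2\sum_{t\in\calT_r}\sum_s q^{\pi_t}(s)\rho_t(s)^2}+HS^3A\iota^2 .
\end{align}
Applying \cref{lem:bound_rho2} inside the square root gives $\sqrt{HS^2A\iota^2\cdot HS^3A\iota^2}=HS^{5/2}A\iota^2\le HS^3A\iota^2$.

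Adding the three contributions gives the claim. The only delicate point is making sure \cref{lem:complicated_lemma} may be applied with a transition kernel that depends on $s$ and with the sum restricted to real episodes. The lemma explicitly allows $s$-dependent kernels, and restricting to a subset of episodes only weakens the left side. The counts $n_t$ are unchanged in virtual episodes, and the event $\calE_2$ bound for real episodes still applies.

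If one prefers to avoid this, an alternative is to bound the cross term by an AM--GM step, $(q_t-q^{\pi_t})\rho_t^2\le\tfrac12 q_t\rho_t^2+\dots$, and absorb it into the left side. The route above via \cref{lem:complicated_lemma} is the one I would try first.
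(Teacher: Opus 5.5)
Your proof is correct and follows essentially the same route as the paper, which also splits $q_t\rho_t^2$ into $q^{\pi_t}\rho_t^2$, $(\overq_t^{\pi_t}-q^{\pi_t})\rho_t^2$, and $\gamma_t\rho_t^2$. It bounds the first by \cref{lem:bound_rho2}, the middle one by \cref{lem:complicated_lemma} together with \cref{lem:bound_rho2}, and the last by $S\ln(T)$. The only cosmetic difference is that the paper takes $g_t=\rho_t^2$ (using $\rho_t^4\le\rho_t^2$ inside the square root), whereas you first relax $\rho_t^2\le\rho_t$ and take $g_t=\rho_t$, which gives the same $HS^3A\iota^2$ bound.
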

\begin{proof}
We decompose
\begin{align}
    &\sum_{t \in \calT_r}\sum_{s'}q_t(s')\rho_t(s')^2\\
    &=\underbrace{\sum_{t \in \calT_r}\sum_{s'}q^{\pi_t}(s')\rho_t(s')^2}_{\term_a}
    +\underbrace{\sum_{t \in \calT_r}\sum_{s'}\prn*{\overq_t^{\pi_t}(s')-q^{\pi_t}(s')}\rho_t(s')^2}_{\term_b}+\underbrace{\sum_{t \in \calT_r}\sum_{s'}\gamma_t\rho_t(s')^2}_{\term_c}.
    \label{eq:bound_squared_Ctorho}
\end{align}
For $\term_a$, by \cref{lem:bound_rho2},
\begin{align}
    \term_a \lesssim HS^3A\iota^2.
    \label{eq:term_a_rho_bound}
\end{align}
For $\term_b$, applying \cref{lem:complicated_lemma} with $g_t(s)=\rho_t(s)^2$ and \cref{lem:bound_rho2}, we obtain
\begin{align}
    \term_b
    &\lesssim \sqrt{HS^2A\iota^2\sum_{t \in \calT_r}\sum_s q^{\pi_t}(s)\rho_t(s)^4}+HS^3A\iota^2\\
    &\lesssim \sqrt{HS^2A\iota^2\cdot S^4A\ln(T)\iota}+HS^3A\iota^2\\
    &\lesssim HS^3A\iota^2,
    \label{eq:term_b_rho_bound}
\end{align}
For $\term_c$, since $\rho_t(s')\leq 1$,
\begin{align}
    \term_c
    &\leq \sum_{t \in \calT_r}\sum_{s'}\gamma_t
    \lesssim S\ln(T).
    \label{eq:term_c_rho_bound}
\end{align}
Thus, we have
\begin{align}
    \sum_{t \in \calT_r}\sum_{s} q_t(s)\rho_t(s)^2 &\lesssim HS^3A\iota^2.
\end{align}
\end{proof}

\subsection{Common regret analysis}

Now we upper bound the right-hand side of \cref{eq:decompose_bandit}.

\begin{lem}[bandit setting]
\label{lem:regterm_policy}
    For each state $s \in \calS$ and any comparator $\pi$, it holds that 
    \begin{align}
        \mathbb{E} \brk*{\regterm(s) } &\leq \calO\prn*{HS^2A\ln(T)} + \mathbb{E}\brk*{\sum_{t=1}^Tb_t(s)} + \mathbb{E}\brk*{\frac{1}{H}\sum_{t=1}^T\sum_a \pi_t(a \mid s) B_t(s,a)}.
    \end{align}
\end{lem}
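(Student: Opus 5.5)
We will follow the full-information proof of \cref{lem:regterm_full} and apply \cref{lem:OFTRL_log_barrier_policy} at each fixed state $s$. We take $p_t=\pi_t(\cdot\mid s)$, losses $\hat Q_t(s,\cdot)-B_t(s,\cdot)$ and predictions $Q^{\underP^m_t,\pi_t}(s,\cdot;m_t)$. In a virtual episode $Y_t=0$, so $\hat Q_t=Q^{\underP^m_t,\pi_t}(s,\cdot;m_t)$ and the effective loss deviation is just $-B_t$. In a real episode the deviation is
\[
\frac{\I_t(s,a)(L_{t,h(s)}-M_{t,h(s)})}{q_t(s)\pi_t(a\mid s)}-C_t(s,a)-B_t(s,a).
\]
We choose the shift
\[
x_t=-\frac{\I_t(s)(L_{t,h(s)}-M_{t,h(s)})}{q_t(s)}
\]
in real episodes and $x_t=0$ in virtual ones. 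With this shift the importance-weighted part becomes $(\I_t(s,a)-\pi_t(a\mid s)\I_t(s))(L-M)/(q_t(s)\pi_t(a\mid s))$, which matches $\zeta_t$ in \cref{eq:zeta_bandit}.

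\textbf{Checking the constraint \cref{eq:OFTRL_constraint_policy}.} We need $\eta_t\pi_t(a\mid s)(\text{deviation}+x_t)\ge-1/2$, and we bound each piece separately.
\begin{itemize}
\item The shifted importance term contributes at least $-\eta_t(s,a)H/q_t(s)$. The virtual-episode rule gives $\eta_t/q_t\le 1/(50\sqrt{H^3S^3A})$ in real episodes, so this piece is tiny.
\item The $C_t$ term contributes at least $-\eta_t q_t\pi_t C_t/q_t$. By \cref{lem:qC_size_bound}, $q_t\pi_tC_t\le 6HS$, so this piece is at least $-6HS\eta_t/q_t$, again small.
\item The $B_t$ term is controlled by \cref{lem:bound_eta_pi_B}, which gives $\eta_t\pi_tB_t\le 1/(6H)$.
\end{itemize}
Summing the three pieces gives the bound $-1/2$.

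\textbf{Bounding the output of the lemma.} Lemma~\ref{lem:OFTRL_log_barrier_policy} then bounds $\regterm(s)$ by five terms.
\begin{itemize}
\item The initial penalty is $A\ln(AT^2)/\eta_1=\calO(H^3A\ln T)$. We expect this to fit the stated $\calO(HS^2A\ln T)$ only with the available slack (e.g.\ $H\le S$), and this should be double-checked.
\item The penalty sum is $\sum_t\sum_a(1/\eta_{t+1}-1/\eta_t)\ln(AT^2)\le 3\sum(\cdots)\ln T$.
\item The tail term $T^{-2}\sum_t\langle\cdot\rangle$ and the $2\|m_{T+1}\|_\infty$-type term should be $\calO(HS^2A)$. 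Here we use the crude bounds $|\hat Q_t|\lesssim H/\gamma_t+C_t$, $B_t\le 7\sqrt{HS^3A}/\gamma_t$ and $\gamma_t=1/t$; taking expectations with \cref{lem:qC_size_bound} controls $\E[C_t]$, and since the total is divided by $T^2$ it is lower order.
\item The stability term we split via $(u+v+w)^2\le 3u^2+3v^2+3w^2$ into three parts:
\begin{itemize}
\item $3\eta_t\zeta_t/q_t^2=3(1/\eta_{t+1}-1/\eta_t)\ln T$ by \cref{eq:eta_update_policy};
\item $3\eta_t\pi_t^2C_t^2Y_t$, which is exactly the second part of $b_t$ in \cref{eq:bonus_term_bandit};
\item $3\eta_t\pi_t^2B_t^2\le\frac{1}{2H}\pi_tB_t$ by \cref{lem:bound_eta_pi_B}, which yields the $\frac1H\sum\pi_tB_t$ term.
\end{itemize}
\end{itemize}

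\textbf{Collecting terms.} In virtual episodes the learning-rate increment is not matched by a stability term, but it is still a penalty counted in $b_t$, whose first part has coefficient $6\ge3$. Adding everything gives
\[
\regterm(s)\le \calO(\cdot)+\sum_t b_t(s)+\frac1H\sum_{t,a}\pi_t(a\mid s)B_t(s,a).
\]

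\textbf{Main obstacle.} We expect the hardest steps to be verifying the constraint with the $C_t$ term, which may require using $q_t\pi_tC_t\le 6HS$ rather than $\E_t$-bounds, and bounding the crude $1/T^2$ remainder given the $1/\gamma_t$ growth of the estimator.
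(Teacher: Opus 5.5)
Your proposal is correct and follows essentially the same route as the paper's proof. It uses the same shift $x_t=-\I_t(s)(L_{t,h(s)}-M_{t,h(s)})Y_t/q_t(s)$, the same constraint check via the real-episode condition on $\eta_t/q_t$ together with \cref{lem:qC_size_bound} and \cref{lem:bound_eta_pi_B}, the same three-way split of the stability term matched to the two parts of $b_t$, and it absorbs $H^3A\ln T$ into $HS^2A\ln T$ via $H\le S$.

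One small correction: $|\hat Q_t|\lesssim H/\gamma_t+C_t$ is not a pathwise bound, because the importance weight carries $1/\pi_t(a\mid s)$. It does hold for $\E_t[\hat Q_t(s,a)]$, and that is all the $1/T^2$ remainder needs, since the comparator $\pi$ is fixed.
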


\begin{proof}
We will apply \cref{lem:OFTRL_log_barrier_policy} with $p_t = \pi_t(\cdot \mid s)$ and $\ell_t = \hat{Q}_t(s,a) - B_t(s,a)$ for each $s \in \calS$.
To do so, in what follows, we will check the conditions of \Cref{lem:OFTRL_log_barrier_policy}.
Let 
\begin{align}
x_t &= \inpr*{-\pi_t(\cdot \mid s) , \hatQ_t(s,\cdot) - Q^{\underP^m_t, \pi_t}(s,\cdot; m_t) + C_t(s,\cdot)Y_t}= -\dfrac{\I_t(s)(L_{t,h(s)} - M_{t,h(s)})}{q_t(s)}Y_t
\end{align}
and verify that for all $(s,a)$, $\eta_t(s,a)\pi_t(a\mid s)\prn*{\hat{Q}_t(s,a)-B_t(s,a) - Q^{\underP^m_t, \pi_t}(s,a; m_t) + x_t}\geq -{1}/{2}$.
Recall that in a virtual episode we have $Y_t=0$ and $\ell_t(s,a)=0$ for all state-action pairs $(s,a)$. 

Then, we have
\begin{align}
    &\eta_t(s,a)\pi_t(a\mid s)\prn*{\hat{Q}_t(s,a)-B_t(s,a)-Q^{\underP^m_t, \pi_t}(s,a; m_t) + x_t} \\ 
    &= \eta_t(s,a)\pi_t(a\mid s) \prn[\bigg]{\frac{\I_t(s,a)(L_{t,h(s)}-M_{t,h(s)})}{q_t(s)\pi_t(a\mid s)}Y_t-
    \frac{\I_t(s)(L_{t,h(s)}-M_{t,h(s)})}{q_t(s)}Y_t\\
    &\hspace{42mm} -B_t(s,a) - C_t(s,a)Y_t} \\
    &\geq -\frac{2\eta_t(s,a)H}{q_t(s)}Y_t -\eta_t(s, a)\pi_t(a\mid s) B_t(s,a) -\eta_t(s, a)\pi_t(a\mid s) C_t(s,a)Y_t\\
    &\geq -\frac{1}{3} - \frac{1}{6H} \geq -\frac{1}{2}.
\end{align}
where we used $0\le L_{t,h(s)},M_{t,h(s)}\le H$ and $C_t\ge 0$.
The last inequality uses $\eta_t(s,a)/q_t(s)\leq 1/(50\sqrt{H^3S^3A})$ in real episodes, \cref{lem:qC_size_bound,lem:bound_eta_pi_B}. Indeed,
\begin{align}
    \frac{2\eta_t(s,a)H}{q_t(s)}
    +\eta_t(s,a)\pi_t(a\mid s)C_t(s,a)
    &=\frac{\eta_t(s,a)}{q_t(s)}\prn*{2H+q_t(s)\pi_t(a\mid s)C_t(s,a)}\\
    &\leq \frac{2H+6HS}{50\sqrt{H^3S^3A}}
    \leq \frac{1}{3},
\end{align}
and
\begin{align}
    \eta_t(s,a)\pi_t(a\mid s)B_t(s,a)
    &\leq \frac{1}{6H}.
\end{align}

Hence, by \cref{lem:OFTRL_log_barrier_policy}, we obtain
\begin{align}
        & \E[\regterm(s)]\\
        &\leq \frac{A\ln(AT^2)}{\eta_1} + \E\brk*{\sum_{t=1}^T \sum_a \prn*{\frac{1}{\eta_{t+1}(s, a)} - \frac{1}{\eta_t(s, a)}}\ln(AT^2)} \\
        &\qquad + \E\brk*{\sum_{t=1}^T \sum_a  \eta_t(s, a)\pi_t(a \mid s)^{2}\prn*{\prn*{\hatQ_t(s, a) - Q^{\underP^m_t, \pi_t}(s, a; m_t)} - B_t(s, a) + x_t}^2} \\
        &\qquad + \E\brk*{\frac{1}{T^2}\sum_{t=1}^T \inpr*{ -\pi(\cdot \mid s) + \frac{1}{A}\one, \hatQ_t(s, \cdot) - B_t(s, \cdot) }} + 2\E\brk*{\nrm*{Q^{\underP^m_t, \pi_t}(s, \cdot; m_{T+1})}_{\infty}} \\
        &\leq \calO\prn*{HS^2A\ln(T)} + 3\E\brk*{\sum_{t=1}^T\sum_a \prn*{\frac{1}{\eta_{t+1}(s, a)} - \frac{1}{\eta_t(s, a)}}\ln(T)} \tag{by $\nrm*{Q^{\underP^m_t, \pi_t}(s, \cdot; m_{T+1})}_{\infty}\leq H$} \\
        &\qquad + 3\E\brk[\Bigg]{\underbrace{\sum_{t=1}^T \sum_a  \eta_t(s, a)\pi_t(a\mid s)^{2}\prn*{\frac{\I_t(s,a)(L_{t,h(s)}-M_{t,h(s)})}{q_t(s)\pi_t(a\mid s)}Y_t-\frac{\I_t(s)(L_{t,h(s)}-M_{t,h(s)})}{q_t(s)}Y_t}^2}_{\text{stability-term}}}\\
        &\qquad + 3\E\brk*{\sum_{t=1}^T \sum_a  \eta_t(s, a)\pi_t(a\mid s)^{2}\prn*{C_t(s,a)Y_t}^2}+ 3\E\brk*{\sum_{t=1}^T \sum_a  \eta_t(s, a)\pi_t(a\mid s)^{2} B_t(s, a)^2}.
        \label{eq:regterm_policy_mid}
    \end{align}
    Here, the second inequality follows from 
    \begin{align}
        &\E\brk*{\frac{1}{T^2}\sum_{t=1}^T \inpr*{ -\pi(\cdot \mid s) + \frac{1}{A}\one, \hatQ_t(s, \cdot) - B_t(s, \cdot) }} \\
        &= \frac{1}{T^2}\inpr*{ -\pi(\cdot \mid s) + \frac{1}{A}\one, \E\brk*{\sum_{t=1}^T\E_t\brk*{\hatQ_t(s, \cdot) - B_t(s, \cdot)}} } \\
        &\leq \frac{1}{T^2}\nrm*{-\pi(\cdot \mid s) + \frac{1}{A}\one}_1\nrm*{\E\brk*{\sum_{t=1}^T\E_t\brk*{\hatQ_t(s, \cdot) - B_t(s, \cdot)}} }_\infty \\
        &\leq \frac{2T(6HS +7\sqrt{HS^3A}T)}{T^2} = \frac{12HS}{T} + 14\sqrt{HS^3A} \leq  \calO\prn*{HS^2A\ln(T)},
    \end{align}
    where we used $\nrm*{-\pi(\cdot \mid s) + \frac{1}{A}\one}_1\leq 2$,
    $\abs*{\E_t[\hatQ_t(s,a)]} \leq 6HS$ and $B_t(s,a) \leq \frac{7\sqrt{HS^3A}}{\gamma_t}$ from \cref{lem:qC_size_bound,lem:bound_eta_pi_B}, which together imply $\nrm*{\E\brk*{\sum_{t=1}^T\E_t\brk*{\hatQ_t(s, \cdot) - B_t(s, \cdot)}} }_\infty \leq T(6HS + 7\sqrt{HS^3A}T)$.
    
    We further evaluate the \text{stability-term} in the last inequality as
    \begin{align}
        &\eta_t(s, a)\pi_t(a\mid s)^{2}\prn*{\frac{\I_t(s,a)(L_{t,h(s)}-M_{t,h(s)})}{q_t(s)\pi_t(a\mid s)}Y_t-\frac{\I_t(s)(L_{t,h(s)}-M_{t,h(s)})}{q_t(s)}Y_t}^2 \\
        &= \eta_t(s,a)\prn*{\frac{\I_t(s,a) (L_{t,h(s)} - M_{t,h(s)})}{q_t(s)} - \frac{\pi_t(a\mid s) \I_t(s) (L_{t,h(s)} - M_{t,h(s)})}{q_t(s)}}^2Y_t  \\
        &= \frac{\eta_t(s,a)}{q_t(s)^2}(\I_t(s,a) - \pi_t(a\mid s) \I_t(s))^2 (L_{t,h(s)} - M_{t,h(s)})^2Y_t\\
        &= \frac{\eta_t(s,a)\zeta_t(s,a)}{q_t(s)^2} Y_t, \label{eq:stab1_bound}
    \end{align}
    where the last equality follows from the definition of $\zeta_t(s,a) = (\I_t(s,a) - \pi_t(a\mid s) \I_t(s))^2 (L_{t,h(s)} - M_{t,h(s)})^2$.
Thus, we have
\begin{align}
    \text{stability-term}
    &= \sum_{t=1}^T \sum_a \frac{\eta_t(s,a)\prn*{\zeta_t(s,a)}}{q_t(s)^2}Y_t \\
    &\leq \sum_{t=1}^T \sum_a\prn*{\frac{1}{\eta_{t+1}(s,a)} - \frac{1}{\eta_t(s,a)}}\ln(T),
\end{align}
    where the last inequality follows from \cref{eq:eta_update_policy}.

    Then, together with \cref{eq:regterm_policy_mid} and \cref{lem:bound_eta_pi_B}, we obtain
    \begin{align}
        \E[\regterm(s)] &\leq \calO\prn*{HS^2A\ln(T)} + 6\mathbb{E}\brk*{\sum_{t=1}^T\sum_a  \prn*{\frac{1}{\eta_{t+1}(s, a)} - \frac{1}{\eta_t(s, a)}}\ln(T)} \\
        &\qquad + 3\E\brk*{\sum_{t=1}^T \sum_a  \eta_t(s, a)\pi_t(a\mid s)^{2}C_t(s,a)^2Y_t}\\
        &\qquad + 3\E\brk*{\sum_{t=1}^T \sum_a  \eta_t(s, a)\pi_t(a\mid s)^{2} B_t(s, a)^2}\\
        &= \calO\prn*{HS^2A\ln(T)} + \mathbb{E}\brk*{\sum_{t=1}^Tb_t(s)} + \mathbb{E}\brk*{\frac{1}{H}\sum_{t=1}^T\sum_a \pi_t(a \mid s) B_t(s,a)}.
    \end{align}
\end{proof}

\begin{lem}[bandit setting]
\label{lem:biasterm_bandit}
For any comparator policy $\pi$, the following two bounds hold:
    \begin{align}
        &\E \brk*{\sum_{s}q^{\pi}(s)\biasterm(s) }\\ 
        &\lesssim \sqrt{H^2S^2A  \iota^2 \E\brk*{\sum_{t\in\calT_r} \sum_{s,a}q^{\pi_t}(s,a)\max_{\tilP\in\calP_t} Q^{\tilP, \pi_t}(s,a;\prn{\ell_t - m_t}^2)}} + H^2S^3A\iota^2
    \end{align}
    and 
    \begin{align}
        &\E \brk*{\sum_{s}q^{\pi}(s)\biasterm(s) }\\ 
        &\lesssim \sqrt{H^3S^2A  \iota^2 \E\brk*{\sum_{t\in\calT_r} \sum_{s,a}\brk*{q^{\pi_t}(s,a) - q^{\pi}(s,a)}_+}}\\
        &\qquad + \sum_{h=0}^{H-1}\sqrt{H^2S^2\abs*{\calS_h}\abs*{\calS_{h+1}}A\iota^2 \E\brk*{\sum_{t\in\calT_r}\sum_{(s,a)\in\calS_h\times\calA}\brk*{q^{\pi_t}(s,a)-q^\pi(s,a)}_+}}+ H^2S^3A\iota^2.
    \end{align}
\end{lem}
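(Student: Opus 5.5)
The plan is to compute the conditional expectation of the per-episode bias, prove it is nonnegative (optimism), and then convert it to occupancy-weighted sums via \cref{lem:pdl_with_L}. The two stated bounds then come from two different ways of controlling those sums.

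\textbf{Step 1: conditional expectation of the bias.} Fix a real episode $t\in\calT_r$ and work on $\calE$; the complement costs $\calO(H^2T\delta)$, which is negligible for $\delta=1/T^2$. Virtual episodes contribute at most $\calO(H)$ per state per episode, so by \cref{lem:bound_virtual_episode} they add only $\calO(H^2SA\ln^2 T)$, which is absorbed. Since $\E_t[\I_t(s,a)(L_{t,h(s)}-M_{t,h(s)})]=q^{\pi_t}(s)\pi_t(a\mid s)Q^{\pi_t}(s,a;\ell_t-m_t)$, the bias integrand has conditional expectation
\begin{align}
X_t(s,a)\coloneqq r_t(s)\,Q^{\pi_t}(s,a;\ell_t-m_t)+\E_t[C_t(s,a)],\qquad r_t(s)\coloneqq 1-\frac{q^{\pi_t}(s)}{q_t(s)}.
\end{align}
On $\calE$ we have $\underq_t^{\pi_t}(s)\le q^{\pi_t}(s)\le q_t(s)$, hence $0\le r_t(s)\le\rho_t(s)$. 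Next compute $\E_t[c_t(s,a)]=\rho_t(s)\frac{q^{\pi_t}(s)}{q_t(s)}Q^{\pi_t}(s,a;\abs{\ell_t-m_t})+\rho_t(s)^2H$. Using $\rho_t(s)(1-q^{\pi_t}(s)/q_t(s))Q\le\rho_t(s)^2H$, this gives $\E_t[c_t(s,a)]\ge\rho_t(s)\abs{Q^{\pi_t}(s,a;\ell_t-m_t)}$. This is the key inequality, and together with $C_t\ge c_t$ it yields $X_t\ge0$.

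\textbf{Step 2: reduction via \cref{lem:pdl_with_L}.} Apply \cref{lem:pdl_with_L} with $L=X_t$ to write
\begin{align}
\E_t\Big[\sum_s q^\pi(s)\textstyle\sum_a(\pi_t-\pi)(a\mid s)(\cdot)\Big]=\sum_{s,a}(q^{\pi_t}-q^\pi)(s,a)\,z_t(s,a),
\end{align}
where $z_t(s,a)=X_t(s,a)-\E_{s'\sim P,a'\sim\pi_t}[X_t(s',a')]$. Unrolling \cref{eq:def_C}, and using that $P\in\calP_t$, write $\E_t[C_t(s,a)]-\E_P[\E_t C_t(s',a')]$ as $\E_t[c_t(s,a)]+\E_P[\E_t c_t(s',a')]$ plus a transition-difference term $\sum_{s'}(\overP^C_t-P)(s'\mid s,a)\E_t[\cdots]$. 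The $r_tQ$ parts are bounded in absolute value by $\E_t c_t$ by Step 1. Hence $\abs{z_t(s,a)}$ is bounded by a combination of $\E_t[c_t]$ at $(s,a)$, its $P$-average at the next layer, and the transition-difference term.

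\textbf{Step 3: the two bounds.} For the first bound, weight by $q^{\pi_t}$ and use $\sum_{s,a}q^{\pi_t}(s,a)\E_t[c_t(s,a)]\le\sum q^{\pi_t}\rho_t Q^{\pi_t}(\abs{\ell_t-m_t})+H\sum q^{\pi_t}\rho_t^2$.
\begin{itemize}
\item The second piece is $\lesssim H^2S^3A\iota^2$ by \cref{lem:bound_rho2}.
\item The first piece is handled by Cauchy--Schwarz with \cref{lem:bound_rho2} and $Q(\abs{\ell-m})^2\le H\max_{\tilP}Q^{\tilP,\pi_t}((\ell_t-m_t)^2)$, as in \cref{eq:max_square_to_H}. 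This gives the square-root term.
\item The occupancy propagation from $(s,a)$ to later layers, and the $\overP^C_t$-versus-$P$ difference, are controlled by \cref{lem:complicated_lemma_cond} and \cref{lem:concentration_var}. These produce the $S^2$ factor and further lower-order $HS^3A\iota^2$ terms.
\end{itemize}
For the second bound, keep the weight $[q^{\pi_t}-q^\pi]_+$. Bound the $\rho$-parts by $H\sqrt{\sum[q^{\pi_t}-q^\pi]_+\cdot\sum q_t\rho_t^2}$ using \cref{lem:bound_rho2_upper}, which gives the $\sqrt{H^3S^2A\cdots}$ term. Treat the transition-difference parts layerwise as in the proof of \cref{lem:biasterm_full}, using \cref{lem:concentration_var} and \cref{lem:occup_bound}, with an extra $S$ from \cref{lem:qC_size_bound}; this gives the layer sum.

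\textbf{Main obstacle.} I expect the hardest part to be the sign and recursion handling in Step 2. Because $z_t$ can be negative, it cannot simply be dropped against the $-q^\pi$ weight. This has to be controlled by the extra margin in $C_t$ (the $\max_{\tilP}\E[c_t+C_t]$ term), so that $z_t$ is dominated by nonnegative $c_t$-type quantities whose expectations reduce to $\rho_t\abs{Q(\ell_t-m_t)}$ and $\rho_t^2$.
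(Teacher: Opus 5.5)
There are two genuine gaps. The first is the sign issue you flag as the main obstacle, but you resolve it in the wrong direction. In Step~3 you replace the weight $q^{\pi_t}(s,a)-q^{\pi}(s,a)$ by $q^{\pi_t}(s,a)$ (first bound) or by $[q^{\pi_t}(s,a)-q^{\pi}(s,a)]_+$ (second bound). That step is valid only if $z_t\ge 0$, yet you assert that $z_t$ can be negative and control only $|z_t|$. Controlling $|z_t|$ is not enough. The leftover $\sum_{t}\sum_{s,a}q^{\pi}(s,a)|z_t(s,a)|$ contains comparator-weighted quantities such as $H\sum_t\sum_s q^{\pi}(s)\rho_t(s)$, and none of \cref{lem:complicated_lemma,lem:bound_rho2,lem:bound_rho2_upper} control these, since they are all weighted by the learner's occupancy. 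The missing observation is that $z_t\ge 0$ on $\calE$, and it already follows from your Step~2 identity. Write $z_t(s,a)$ as the sum of three pieces: $r_t(s)Q^{\pi_t}(s,a;\ell_t-m_t)+\E_t[c_t(s,a)]$; then $\E_{s'\sim P(\cdot\mid s,a),a'\sim\pi_t(\cdot\mid s')}[\E_t[c_t(s',a')]-r_t(s')Q^{\pi_t}(s',a';\ell_t-m_t)]$; then $\E_t[\sum_{s',a'}(\overP^C_t-P)(s'\mid s,a)\pi_t(a'\mid s')(c_t+C_t)(s',a')]$. The first two are nonnegative by your Step~1, since $r_t\le\rho_t$. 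The third is nonnegative because $\overP^C_t$ maximizes over $\calP_t\ni P$. This is exactly what the extra $c_t(s',a')$ inside the max in \cref{eq:def_C} buys: it cancels the $-r_t(s')Q^{\pi_t}(s',a';\ell_t-m_t)$ produced by $-\E_P[X_t(s',a')]$. Once $z_t\ge0$, you may pass to $[q^{\pi_t}-q^{\pi}]_+z_t$ and use $z_t\le 2\E_t[c_t(s,a)]+2\E_P[\E_t[c_t(s',a')]]+(\text{transition difference})$.

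The second gap is quantitative: your route to the leading terms loses a factor $\sqrt S$. \cref{lem:bound_rho2,lem:bound_rho2_upper} give only $HS^3A\iota^2$, where the $S^3$ comes from the additive term of \cref{lem:complicated_lemma}. Let $K_t=\sum_{s,a}q^{\pi_t}(s,a)\max_{\tilP\in\calP_t}Q^{\tilP,\pi_t}(s,a;(\ell_t-m_t)^2)$. Cauchy--Schwarz against those lemmas yields $\sqrt{H^2S^3A\iota^2\sum_{t\in\calT_r}K_t}$ in the first bound. In the second it yields $H\sqrt{HS^3A\iota^2\sum_{t\in\calT_r}\sum_{s,a}[q^{\pi_t}(s,a)-q^{\pi}(s,a)]_+}$. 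Both have $S^3$ instead of $S^2$, so your claim that this produces $\sqrt{H^3S^2A\cdots}$ is an arithmetic slip.

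The paper instead writes $\rho_t(s)=(\overq_t^{\pi_t}(s)-\underq_t^{\pi_t}(s))/q_t(s)+\gamma_t/q_t(s)$. It then applies \cref{lem:complicated_lemma} directly to $\sum_t\sum_s(\overq_t^{\pi_t}(s)-\underq_t^{\pi_t}(s))g_t(s)$, with $g_t(s)=\sum_a [q^{\pi_t}(s,a)-q^{\pi}(s,a)]_+\max_{\tilP\in\calP_t}Q^{\tilP,\pi_t}(s,a;|\ell_t-m_t|)/q_t(s)\le H$. This keeps $S^3$ in the additive $H^2S^3A\iota^2$. Meanwhile $\sum_s q^{\pi_t}(s)g_t(s)^2\le\min\{HK_t,\,H^2\sum_{s,a}[q^{\pi_t}(s,a)-q^{\pi}(s,a)]_+\}$ delivers both leading terms at once. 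The next-layer term is handled the same way, with $g$ built from $\sum_{s,a}[q^{\pi_t}(s,a)-q^{\pi}(s,a)]_+P(s'\mid s,a)/q_t(s')$. The $\gamma_t$ and $\rho_t^2H$ pieces are then purely lower order.

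For the transition-difference term in the first bound, your appeal to \cref{lem:complicated_lemma_cond} or \cref{lem:concentration_var} does not evidently give the $H^2$ scaling. The former carries an $H^3$ coefficient, and the latter's integrand here involves the realized, importance-weighted $c_t$ and the random $\overP^C_t$. The paper does four things:
\begin{itemize}
\item it bounds $|\overP^C_t-P|$ by the deterministic width of \cref{lem:tildeP-P};
\item it splits $\sqrt{P\iota/n_t}\le\alpha P+\iota/(\alpha n_t)$;
\item it uses $\sum_{s,a}[q^{\pi_t}(s,a)-q^{\pi}(s,a)]_+\sum_{s'}P(s'\mid s,a)q^{\tilP,\pi_t}(s'',a''\mid s')\le H\overq_t^{\pi_t}(s'')\pi_t(a''\mid s'')$;
\item it takes $\alpha=1/H$.
\end{itemize}
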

\begin{proof}

By the definition of $c_t(s,a)$ \cref{eq:def_new_c},
\begin{align}
    \E_t\brk*{c_t(s,a)}
    &=\rho_t(s)\frac{q^{\pi_t}(s)}{q_t(s)}Q^{\pi_t}(s,a;\abs*{\ell_t - m_t})
    + \rho_t(s)^2H. \label{eq:ct_cond_mean}
\end{align}
Then, we obtain
\begin{align}
    \E_t\brk*{c_t(s,a)}
    &\geq \rho_t(s) \frac{q^{\pi_t}(s)}{q_t(s)}\abs*{Q^{\pi_t}(s,a;\ell_t-m_t)}
    +\rho_t(s)\prn*{1-\frac{q^{\pi_t}(s)}{q_t(s)}}\abs*{Q^{\pi_t}(s,a;\ell_t-m_t)} \\
    &=\rho_t(s)\abs*{Q^{\pi_t}(s,a;\ell_t-m_t)} \label{eq:ct_dominates_bias0}\\
    &\geq\frac{q_t(s)-q^{\pi_t}(s)}{q_t(s)}\abs*{Q^{\pi_t}(s,a;\ell_t-m_t)} . \label{eq:ct_dominates_bias}
\end{align}

Thus, by the definition of $\hat{Q}_t$, we have
\begin{align}
    &\E_t\brk*{Q^{\pi_t}(s,a;\ell_t)-\hat{Q}_t(s,a)+Q^{\underP^m_t,\pi_t}(s,a;m_t)-Q^{\pi_t}(s,a;m_t)}\\
    &=Q^{\pi_t}(s,a;\ell_t)-\prn*{Q^{\underP^m_t,\pi_t}(s,a;m_t)+\frac{q^{\pi_t}(s)}{q_t(s)}Q^{\pi_t}(s,a;\ell_t-m_t)Y_t-\E_t\brk*{C_t(s,a)}Y_t}\\
    &\qquad +Q^{\underP^m_t,\pi_t}(s,a;m_t)-Q^{\pi_t}(s,a;m_t)\\
    &=\frac{q_t(s)-q^{\pi_t}(s)Y_t}{q_t(s)}Q^{\pi_t}(s,a;\ell_t-m_t)+\E_t\brk*{C_t(s,a)}Y_t\\
    &=
    \begin{cases}
        -Q^{\pi_t}(s,a;m_t) & \text{if } Y_t=0,\\[2ex]
        \dfrac{q_t(s)-q^{\pi_t}(s)}{q_t(s)}Q^{\pi_t}(s,a;\ell_t-m_t)+\E_t\brk*{C_t(s,a)} & \text{if } Y_t=1.
    \end{cases}
    \label{eq:cond_bias_new_c}
\end{align}

When $Y_t=1$, by \cref{eq:ct_dominates_bias} and $C_t(s,a)\geq c_t(s,a)$ pathwise,
\begin{align}
    &\frac{q_t(s)-q^{\pi_t}(s)}{q_t(s)}Q^{\pi_t}(s,a;\ell_t-m_t)+\E_t\brk*{C_t(s,a)}\\
    &\geq -\frac{q_t(s)-q^{\pi_t}(s)}{q_t(s)}\abs*{Q^{\pi_t}(s,a;\ell_t-m_t)}+\E_t\brk*{c_t(s,a)}\geq 0.
    \label{eq:real_bias_nonnegative_new_c}
\end{align}
When $Y_t=0$, using $Q^{\pi_t}(s,a;m_t)\leq H$ and the fact that the virtual episode has zero loss, we obtain
\begin{align}
    -H
    &\leq \E_t\brk*{Q^{\pi_t}(s,a;\ell_t)-\hat{Q}_t(s,a)+Q^{\underP^m_t,\pi_t}(s,a;m_t)-Q^{\pi_t}(s,a;m_t)}\leq 0.
    \label{eq:virtual_bias_upper_new_c}
\end{align}

Therefore, 
\begin{align}
    &\E\brk*{\sum_s q^{\pi}(s)\biasterm(s)}\\
    &=\E\brk*{\sum_{t=1}^T\sum_{s,a}q^{\pi}(s)(\pi_t(a\mid s)-\pi(a\mid s))\E_t\brk*{Q^{\pi_t}(s,a;\ell_t)-\hat{Q}_t(s,a)+Q^{\underP^m_t,\pi_t}(s,a;m_t)-Q^{\pi_t}(s,a;m_t)}}\\
    &\leq \E\brk*{\sum_{t\in\calT_r}\sum_{s,a}q^{\pi}(s)(\pi_t(a\mid s)-\pi(a\mid s))\prn*{\frac{q_t(s)-q^{\pi_t}(s)}{q_t(s)}Q^{\pi_t}(s,a;\ell_t-m_t)+\E_t\brk*{C_t(s,a)}}}\\
    &\qquad+\E\brk*{\sum_{t\in\calT_v}\sum_{s,a}q^{\pi}(s)\pi(a\mid s)H}\\
    &=\E\brk*{\sum_{t\in\calT_r}\sum_{s,a}\prn*{q^{\pi_t}(s,a)-q^{\pi}(s,a)}z_t(s,a)}+H\abs{\calT_v}.
    \label{eq:bias_to_zt_new_c}
\end{align}
The additive term $H\abs{\calT_v} \lesssim H^3SA\ln^2(T)$ by \cref{lem:bound_virtual_episode} is lower order and will be absorbed into the final lower-order term. The last equality follows by applying \cref{lem:pdl_with_L}, for each real episode $t\in\calT_r$, with
\begin{align}
    L(s,a)=\frac{q_t(s)-q^{\pi_t}(s)}{q_t(s)}Q^{\pi_t}(s,a;\ell_t-m_t)+\E_t\brk*{C_t(s,a)}.
\end{align}
Thus,
\begin{align}
    z_t(s,a)&=\frac{q_t(s)-q^{\pi_t}(s)}{q_t(s)}Q^{\pi_t}(s,a;\ell_t-m_t)+\E_t\brk*{C_t(s,a)}\\
    &\qquad-\E_{s'\sim P(\cdot\mid s,a),a'\sim\pi_t(\cdot\mid s')}\brk*{\frac{q_t(s')-q^{\pi_t}(s')}{q_t(s')}Q^{\pi_t}(s',a';\ell_t-m_t)+\E_t\brk*{C_t(s',a')}}.
    \label{eq:def_zt_new_c}
\end{align}

We first reduce the bound to the high-probability event $\calE$. By $\E_t\brk*{C_t(s,a)} \leq \calO(HS)$ from \cref{lem:qC_size_bound},
$\E_t\brk*{z_t(s,a)}\leq \calO(HS)$ holds.
Therefore, we have
\begin{align}
    &\E\brk*{\sum_{t\in\calT_r}\sum_{s,a}\prn*{q^{\pi_t}(s,a)-q^{\pi}(s,a)}z_t(s,a)}\\
    &\leq\E\brk*{\sum_{t\in\calT_r}\sum_{s,a}\prn*{q^{\pi_t}(s,a)-q^{\pi}(s,a)}z_t(s,a)\relmiddle|\calE}+\Pr(\bar{\calE})\E\brk*{\sum_{t\in\calT_r}\sum_{s,a}\prn*{q^{\pi_t}(s,a)-q^{\pi}(s,a)}z_t(s,a)\relmiddle|\bar{\calE}}\\
    &\leq \E\brk*{\sum_{t\in\calT_r}\sum_{s,a}\prn*{q^{\pi_t}(s,a)-q^{\pi}(s,a)}z_t(s,a)\relmiddle|\calE}+\calO(\delta) \times \calO(HT) \times \calO(HS)\\
    &\leq \E\brk*{\sum_{t\in\calT_r}\sum_{s,a}\prn*{q^{\pi_t}(s,a)-q^{\pi}(s,a)}z_t(s,a)\relmiddle|\calE}+\calO\prn*{\frac{H^2S}{T}}.
    \label{eq:high_prob_bias}
\end{align}
In the following, we work on the event $\calE$ and bound the first term on the right-hand side of \cref{eq:high_prob_bias}.

On $\calE$, since $P\in\calP_t$, we can upper bound $\E_t\brk*{c_t(s,a)}$ as
\begin{align}
    \E_t\brk*{c_t(s,a)}
    &\leq \rho_t(s)Q^{\pi_t}(s,a;\abs*{\ell_t-m_t})+\rho_t(s)^2H\\
    &\leq \rho_t(s)\max_{\tilP\in\calP_t}Q^{\tilP,\pi_t}(s,a;\abs{\ell_t-m_t})+\rho_t(s)^2H\\
    &\leq \frac{\overq_t^{\pi_t}(s)-\underq_t^{\pi_t}(s)}{q_t(s)}\max_{\tilP\in\calP_t}Q^{\tilP,\pi_t}(s,a;\abs{\ell_t-m_t})+\frac{\gamma_t H}{q_t(s)}+\rho_t(s)^2H.
    \label{eq:ct_upper_for_term1}
\end{align}

We next derive lower and upper bounds on $z_t(s,a)$. Since $P\in\calP_t$ on $\calE$, the definition of $C_t(s,a)$~\cref{eq:def_C} implies
\begin{align}
    \E_t\brk*{C_t(s,a)}
    &\geq\E_t\brk*{c_t(s,a)}+\E_{s'\sim P(\cdot\mid s,a), a'\sim\pi_t(\cdot\mid s')}\brk*{\E_t\brk*{c_t(s',a') + C_t(s',a')}}. \label{eq:Et_C_lower}
\end{align}

Combining \cref{eq:def_zt_new_c,eq:Et_C_lower,eq:ct_dominates_bias}, we have for any real episode $t \in \calT_r$
\begin{align}
    z_t(s,a)&\geq \frac{q_t(s)-q^{\pi_t}(s)}{q_t(s)}Q^{\pi_t}(s,a;\ell_t-m_t)+\E_t\brk*{c_t(s,a)}\\
    &\qquad +\E_{s'\sim P(\cdot\mid s,a),a'\sim\pi_t(\cdot\mid s')}\brk*{\E_t\brk*{c_t(s',a')} - \frac{q_t(s')-q^{\pi_t}(s')}{q_t(s')}Q^{\pi_t}(s',a';\ell_t-m_t)}\\
    &\geq 0.
    \label{eq:zt_geq0_new_c}
\end{align}

On the other hand, let $\overP^C_t$ be the measurable transition kernel that attains the maximum in \cref{eq:def_C} for the realized table $c_t$. Then, for any real episode $t \in \calT_r$
\begin{align}
    z_t(s,a)&=\frac{q_t(s)-q^{\pi_t}(s)}{q_t(s)}Q^{\pi_t}(s,a;\ell_t-m_t)+\E_t\brk*{c_t(s,a)}\\
    &\qquad+\E_t\brk*{\E_{s'\sim \overP^C_t(\cdot\mid s,a),a'\sim\pi_t(\cdot\mid s')}\brk*{c_t(s',a')+C_t(s',a')}}\\
    &\qquad-\E_{s'\sim P(\cdot\mid s,a),a'\sim\pi_t(\cdot\mid s')}\brk*{\frac{q_t(s')-q^{\pi_t}(s')}{q_t(s')}Q^{\pi_t}(s',a';\ell_t-m_t)+\E_t\brk*{C_t(s',a')}}\\
    &\leq 2\E_t\brk*{c_t(s,a)} +\E_t\brk*{\E_{s'\sim \overP^C_t(\cdot\mid s,a),a'\sim\pi_t(\cdot\mid s')}\brk*{c_t(s',a')+C_t(s',a')}}\\
    &\qquad - \E_{s'\sim P(\cdot\mid s,a), a'\sim\pi_t(\cdot\mid s')}\brk*{ - \E_t\brk*{c_t(s',a')} + C_t(s',a') } \tag{by \cref{eq:ct_dominates_bias0}}\\
    &\leq 2\E_t\brk*{c_t(s,a)}+2\E_{s'\sim P(\cdot\mid s,a),a'\sim\pi_t(\cdot\mid s')}\brk*{\E_t\brk*{c_t(s',a')}}\\
    &\qquad+\E_t\brk*{\sum_{s',a'}\abs*{\overP^C_t(s'\mid s,a)-P(s'\mid s,a)}\pi_t(a'\mid s')\prn*{c_t(s',a')+C_t(s',a')}}.
    \label{eq:zt_upper_before_unroll_new_c}
\end{align}

By the definition of $C_t(s,a)$ \cref{eq:def_C}, we have
\begin{align}
    \sum_a\pi_t(a\mid s)\prn*{c_t(s,a)+C_t(s,a)}&=2\sum_{s',a'}q^{\overP^C_t,\pi_t}(s',a'\mid s)c_t(s',a').
    \label{eq:C_unroll_new_c}
\end{align}
Combining \cref{eq:zt_upper_before_unroll_new_c} and \cref{eq:C_unroll_new_c}, we obtain for any real episode $t \in \calT_r$
\begin{align}
    z_t(s,a)&\leq 2\E_t\brk*{c_t(s,a)}+2\E_{s'\sim P(\cdot\mid s,a),a'\sim\pi_t(\cdot\mid s')}\brk*{\E_t\brk*{c_t(s',a')}}\\
    &\qquad+2\E_t\brk*{\sum_{s'}\abs*{\overP^C_t(s'\mid s,a)-P(s'\mid s,a)}\sum_{s'',a''}q^{\overP^C_t,\pi_t}(s'',a''\mid s')c_t(s'',a'')}.
    \label{eq:zt_upper_new_c}
\end{align}

Therefore, by \cref{eq:zt_geq0_new_c} and \cref{eq:zt_upper_new_c},
\begin{align}
    &\sum_{t\in\calT_r}\sum_{s,a}\prn*{q^{\pi_t}(s,a)-q^{\pi}(s,a)}z_t(s,a)\\
    &\leq \sum_{t\in\calT_r}\sum_{s,a}\brk*{q^{\pi_t}(s,a)-q^{\pi}(s,a)}_+z_t(s,a)
    \tag{by \cref{eq:zt_geq0_new_c}}\\
    &\leq 2\underbrace{\sum_{t\in\calT_r}\sum_{s,a}\brk*{q^{\pi_t}(s,a)-q^{\pi}(s,a)}_+\E_t\brk*{c_t(s,a)}}_{\term_1}\\
    &\qquad+2\underbrace{\sum_{t\in\calT_r}\sum_{s,a}\brk*{q^{\pi_t}(s,a)-q^{\pi}(s,a)}_+\E_{s'\sim P(\cdot\mid s,a),a'\sim\pi_t(\cdot\mid s')}\brk*{\E_t\brk*{c_t(s',a')}}}_{\term_2}\\
    &\qquad+2\underbrace{\sum_{t\in\calT_r}\sum_{s,a}\brk*{q^{\pi_t}(s,a)-q^{\pi}(s,a)}_+\E_t\brk*{\sum_{s'}\abs*{\overP^C_t(s'\mid s,a)-P(s'\mid s,a)}\sum_{s'',a''}q^{\overP^C_t,\pi_t}(s'',a''\mid s')c_t(s'',a'')}}_{\term_3}.
\end{align}

Let 
\begin{align}
    G^{\pi_t} \coloneq \min\set*{\sum_{s,a}q^{\pi_t}(s,a)\prn*{\max_{\tilP\in\calP_t} Q^{\tilP, \pi_t}(s,a;\abs{\ell_t - m_t})}^2,\; H\sum_{s,a}\brk*{q^{\pi_t}(s,a) - q^{\pi}(s,a)}_+}. \label{def:proof_G}
\end{align}
We first bound $\term_1$. By \cref{eq:ct_upper_for_term1},  
\begin{align}
    \term_1&=\sum_{t\in\calT_r}\sum_{s,a}\brk*{q^{\pi_t}(s,a)-q^{\pi}(s,a)}_+\E_t\brk*{c_t(s,a)}\\
    &\leq \underbrace{\sum_{t\in\calT_r}\sum_{s,a}\prn*{\overq_t^{\pi_t}(s)-\underq_t^{\pi_t}(s)}\frac{\brk*{q^{\pi_t}(s,a)-q^{\pi}(s,a)}_+}{q_t(s)}\max_{\tilP\in\calP_t}Q^{\tilP,\pi_t}(s,a;\abs{\ell_t-m_t})}_{\term_{1a}}\\
    &\qquad+\underbrace{H\sum_{t\in\calT_r}\sum_{s,a}\frac{\gamma_t}{q_t(s)}\brk*{q^{\pi_t}(s,a)-q^{\pi}(s,a)}_+}_{\term_{1b}}\\
    &\qquad+\underbrace{H\sum_{t\in\calT_r}\sum_{s,a}\brk*{q^{\pi_t}(s,a)-q^{\pi}(s,a)}_+\rho_t(s)^2}_{\term_{1c}}.
    \label{eq:term1_split_new_c}
\end{align}

We first control $\term_{1a}$. Define
\begin{align}
    g_t^{(1)}(s)&\coloneq\sum_a\frac{\brk*{q^{\pi_t}(s,a)-q^{\pi}(s,a)}_+}{q_t(s)}\max_{\tilP\in\calP_t}Q^{\tilP,\pi_t}(s,a;\abs{\ell_t-m_t}) \leq H.
    \label{eq:def_g1_term1}
\end{align}
Thus, by \cref{lem:complicated_lemma},
\begin{align}
    \term_{1a}&=\sum_{t\in\calT_r}\sum_s\prn*{\overq_t^{\pi_t}(s)-\underq_t^{\pi_t}(s)}g_t^{(1)}(s)\\
    &\lesssim \sqrt{HS^2A\iota^2\sum_{t\in\calT_r}\sum_s q^{\pi_t}(s)\prn*{g_t^{(1)}(s)}^2}+H^2S^3A\iota^2.
    \label{eq:term1a_width_bound}
\end{align}
Here, by the Cauchy--Schwarz inequality,
\begin{align}
    &\sum_s q^{\pi_t}(s)\prn*{g_t^{(1)}(s)}^2\\
    &\leq \sum_s q^{\pi_t}(s)\prn*{\sum_a\frac{\brk*{q^{\pi_t}(s,a)-q^{\pi}(s,a)}_+}{q_t(s)}}\prn*{\sum_a\frac{\brk*{q^{\pi_t}(s,a)-q^{\pi}(s,a)}_+}{q_t(s)}\prn*{\max_{\tilP\in\calP_t}Q^{\tilP,\pi_t}(s,a;\abs{\ell_t-m_t})}^2}\\
    &\leq \sum_{s,a}\brk*{q^{\pi_t}(s,a)-q^{\pi}(s,a)}_+\prn*{\max_{\tilP\in\calP_t}Q^{\tilP,\pi_t}(s,a;\abs{\ell_t-m_t})}^2\\
    &\leq H\sum_{s,a}\brk*{q^{\pi_t}(s,a)-q^{\pi}(s,a)}_+\max_{\tilP\in\calP_t}Q^{\tilP,\pi_t}(s,a;\prn{\ell_t-m_t}^2) \tag{by \cref{eq:max_square_to_H}}\\
    &\leq HG^{\pi_t},
    \label{eq:g1_square_to_G}
\end{align}
Therefore,
\begin{align}
    \term_{1a}&\lesssim \sqrt{H^2S^2A\iota^2\sum_{t\in\calT_r}G^{\pi_t}}+H^2S^3A\iota^2.
    \label{eq:term1a_final_new_c}
\end{align}
Next, $\term_{1b}$ can be bounded by
\begin{align}
    \term_{1b}&\leq H\sum_{t\in\calT_r}\sum_s\gamma_t\frac{q^{\pi_t}(s)}{q_t(s)}\leq HS\sum_{t\in\calT_r}\gamma_t\lesssim HS\ln(T).
    \label{eq:term1b_final_new_c}
\end{align}
Finally, $\term_{1c}$ can be bounded by using \cref{lem:bound_rho2},
\begin{align}
    \term_{1c}&\leq H\sum_{t\in\calT_r}\sum_s q^{\pi_t}(s)\rho_t(s)^2 \lesssim H^2S^3A\iota^2.
    \label{eq:term1c_final_new_c}
\end{align}
Combining \cref{eq:term1a_final_new_c,eq:term1b_final_new_c,eq:term1c_final_new_c}, we obtain
\begin{align}
    \term_1&\lesssim \sqrt{H^2S^2A\iota^2\sum_{t\in\calT_r}G^{\pi_t}}+H^2S^3A\iota^2.
    \label{eq:term1_final_new_c}
\end{align}

We next bound $\term_2$. By \cref{eq:ct_upper_for_term1}, we have
\begin{align}
    \term_2&=\sum_{t\in\calT_r}\sum_{s,a}\brk*{q^{\pi_t}(s,a)-q^{\pi}(s,a)}_+\E_{s'\sim P(\cdot\mid s,a),a'\sim\pi_t(\cdot\mid s')}\brk*{\E_t\brk*{c_t(s',a')}}\\
    &\leq \underbrace{\sum_{t\in\calT_r}\sum_{s,a}\brk*{q^{\pi_t}(s,a)-q^{\pi}(s,a)}_+\E_{s'\sim P(\cdot\mid s,a),a'\sim\pi_t(\cdot\mid s')}\brk*{\frac{\overq_t^{\pi_t}(s')-\underq_t^{\pi_t}(s')}{q_t(s')}\max_{\tilP\in\calP_t}Q^{\tilP,\pi_t}(s',a';\abs{\ell_t-m_t})}}_{\term_{2a}}\\
    &\qquad+\underbrace{H\sum_{t\in\calT_r}\sum_{s,a}\brk*{q^{\pi_t}(s,a)-q^{\pi}(s,a)}_+\E_{s'\sim P(\cdot\mid s,a)}\brk*{\frac{\gamma_t}{q_t(s')}}}_{\term_{2b}}\\
    &\qquad+\underbrace{H\sum_{t\in\calT_r}\sum_{s,a}\brk*{q^{\pi_t}(s,a)-q^{\pi}(s,a)}_+\E_{s'\sim P(\cdot\mid s,a)}\brk*{\rho_t(s')^2}}_{\term_{2c}}.
    \label{eq:term2_split_new_c}
\end{align}

We first bound $\term_{2a}$. Define
\begin{align}
    g_t^{(2)}(s')&\coloneq\frac{\sum_{s,a}\brk*{q^{\pi_t}(s,a)-q^{\pi}(s,a)}_+P(s'\mid s,a)}{q_t(s')}\sum_{a'}\pi_t(a'\mid s')\max_{\tilP\in\calP_t}Q^{\tilP,\pi_t}(s',a';\abs{\ell_t-m_t})\\
    &\leq \sum_{a'}\pi_t(a'\mid s')\max_{\tilP\in\calP_t}Q^{\tilP,\pi_t}(s',a';\abs{\ell_t-m_t}) \leq H.\label{eq:g2_range_term2}
    \label{eq:def_g2_term2}
\end{align}
Therefore, by \cref{lem:complicated_lemma},
\begin{align}
    \term_{2a}&=\sum_{t\in\calT_r}\sum_{s'}\prn*{\overq_t^{\pi_t}(s')-\underq_t^{\pi_t}(s')}g_t^{(2)}(s')\\
    &\lesssim \sqrt{HS^2A\iota^2\sum_{t\in\calT_r}\sum_{s'}q^{\pi_t}(s')\prn*{g_t^{(2)}(s')}^2}+H^2S^3A\iota^2.
    \label{eq:term2a_width_bound}
\end{align}
Moreover, by using the Cauchy--Schwarz inequality, we obtain
\begin{align}
    &\sum_{s'}q^{\pi_t}(s')\prn*{g_t^{(2)}(s')}^2\\
    &\leq \sum_{s'}q^{\pi_t}(s')\prn*{\sum_{a'}\pi_t(a'\mid s')\max_{\tilP\in\calP_t}Q^{\tilP,\pi_t}(s',a';\abs{\ell_t-m_t})}^2\\
    &\leq \sum_{s',a'}q^{\pi_t}(s',a')\prn*{\max_{\tilP\in\calP_t}Q^{\tilP,\pi_t}(s',a';\abs{\ell_t-m_t})}^2\\
    &\leq H\sum_{s',a'}q^{\pi_t}(s',a')\max_{\tilP\in\calP_t}Q^{\tilP,\pi_t}(s',a';\prn{\ell_t-m_t}^2),\label{eq:g2_square_bound}
\end{align}
where the last inequality follows from \cref{eq:max_square_to_H}.

Simultaneously, it holds that
\begin{align}
    &\sum_{s'}q^{\pi_t}(s')\prn*{g_t^{(2)}(s')}^2\\
    &\leq H\sum_{s'}q^{\pi_t}(s')\sum_{s,a,a'}P(s'\mid s,a)\pi_t(a'\mid s')\frac{\brk*{q^{\pi_t}(s,a)-q^{\pi}(s,a)}_+}{q_t(s')}\max_{\tilP\in\calP_t}Q^{\tilP,\pi_t}(s',a';\abs{\ell_t-m_t})\\
    &\leq H^2\sum_{s,a,s',a'}\brk*{q^{\pi_t}(s,a)-q^{\pi}(s,a)}_+P(s'\mid s,a)\pi_t(a'\mid s')\\
    &\leq H^2\sum_{s,a}\brk*{q^{\pi_t}(s,a)-q^{\pi}(s,a)}_+.
    \label{eq:g2_square_bound2}
\end{align}
From \cref{eq:g2_square_bound,eq:g2_square_bound2}, we have
\begin{align}
    \sum_{s'}q^{\pi_t}(s')\prn*{g_t^{(2)}(s')}^2\leq HG^{\pi_t}.
\end{align}
Hence,
\begin{align}
    \term_{2a}&\lesssim \sqrt{H^2S^2A\iota^2\sum_{t\in\calT_r}G^{\pi_t}}+H^2S^3A\iota^2.
    \label{eq:term2a_final_new_c}
\end{align}

Next, $\term_{2b}$ can be bounded by 
\begin{align}
    \term_{2b}&\leq H\sum_{t\in\calT_r}\sum_{s'}\gamma_t\frac{\sum_{s,a}\brk*{q^{\pi_t}(s,a)-q^{\pi}(s,a)}_+P(s'\mid s,a)}{q_t(s')}\leq HS\sum_{t\in\calT_r}\gamma_t\lesssim HS\ln(T).
    \label{eq:term2b_final_new_c}
\end{align}
Finally, by the \cref{lem:bound_rho2},
\begin{align}
    \term_{2c}&\leq H\sum_{t\in\calT_r}\sum_{s,a}\brk*{q^{\pi_t}(s,a)-q^{\pi}(s,a)}_+P(s'\mid s,a)\rho_t(s')^2\\
    &\leq \sum_{s'}q^{\pi_t}(s')\rho_t(s')^2\lesssim H^2S^3A\iota^2.
    \label{eq:term2c_final_new_c}
\end{align}
Combining \cref{eq:term2a_final_new_c,eq:term2b_final_new_c,eq:term2c_final_new_c}, we obtain
\begin{align}
    \term_2&\lesssim \sqrt{H^2S^2A\iota^2\sum_{t\in\calT_r}G^{\pi_t}}+H^2S^3A\iota^2.
    \label{eq:term2_final_new_c}
\end{align}

We finally bound $\term_3$.
Here, for any transition $\tilP_t\in\calP_t$, we use
\begin{align}
    \sum_{s,a}\brk*{q^{\pi_t}(s,a)-q^\pi(s,a)}_+ \sum_{s'}P(s'\mid s,a)q^{\tilP_t,\pi_t}(s'',a''\mid s')
    &\leq H\overq_t^{\pi_t}(s'')\pi_t(a''\mid s'').
    \label{eq:term3_splicing_bound}
\end{align}
By the definition of the confidence set, we have
\begin{align}
    &\term_3\\
    &\lesssim \sum_{t\in\calT_r}\sum_{s,a}\brk*{q^{\pi_t}(s,a)-q^{\pi}(s,a)}_+\E_t\brk*{\sum_{s'}\prn*{\sqrt{\frac{P(s'\mid s,a)\iota}{n_t(s,a)}}+\frac{\iota}{n_t(s,a)}}\sum_{s'',a''}q^{\overP^C_t,\pi_t}(s'',a''\mid s')c_t(s'',a'')}
    \tag{by \cref{lem:tildeP-P}}\\
    &\lesssim \sum_{t\in\calT_r}\sum_{s,a,s'}\brk*{q^{\pi_t}(s,a)-q^{\pi}(s,a)}_+\prn*{\alpha P_t(s'\mid s,a)+\frac{\iota}{\alpha n_t(s,a)}}\sum_{s'',a''}\E_t\brk*{q^{\overP^C_t,\pi_t}(s'',a''\mid s')c_t(s'',a'')}
    \tag{for any $\alpha\in(0,1]$}\\
    &\lesssim \alpha\underbrace{\sum_{t\in\calT_r}\sum_{s,a}\brk*{q^{\pi_t}(s,a)-q^{\pi}(s,a)}_+\sum_{s'}P(s'\mid s,a)\E_t\brk*{\sum_{s'',a''}q^{\overP^C_t,\pi_t}(s'',a''\mid s')c_t(s'',a'')}}_{\term_{3a}}\\
    &\qquad+\frac{1}{\alpha}\underbrace{\sum_{t\in\calT_r}\sum_{s,a}\brk*{q^{\pi_t}(s,a)-q^{\pi}(s,a)}_+\frac{\iota}{n_t(s,a)}\E_t\brk*{\sum_{s'}\sum_{s'',a''}q^{\overP^C_t,\pi_t}(s'',a''\mid s')c_t(s'',a'')}}_{\term_{3b}}.
\end{align}

For $\term_{3a}$, using \eqref{eq:term3_splicing_bound} with $\tilP_t=\overP_t^C$,
\begin{align}
    \term_{3a}
    &\leq H\sum_{t\in\calT_r}\sum_{s,a}\overq_t^{\pi_t}(s)\pi_t(a\mid s)\E_t[c_t(s,a)]\\
    &\leq H\underbrace{\sum_{t\in\calT_r}\sum_{s,a}\prn*{\overq_t^{\pi_t}(s)-\underq_t^{\pi_t}(s)}\pi_t(a\mid s)\max_{\tilP\in\calP_t}Q^{\tilP,\pi_t}(s,a;\abs{\ell_t-m_t})}_{\term_{3a,1}}\\
    &\qquad+H^2\underbrace{\sum_{t\in\calT_r}\sum_s\gamma_t}_{\term_{3a,2}}+H^2\underbrace{\sum_{t\in\calT_r}\sum_s \overq_t^{\pi_t}(s)\rho_t(s)^2}_{\term_{3a,3}}. \tag{by \cref{eq:ct_upper_for_term1}}
\end{align}
By \cref{lem:complicated_lemma} and the Cauchy--Schwarz inequality,
\begin{align}
    \term_{3a,1}
    &\lesssim \sqrt{HS^2A\iota^2\sum_{t\in\calT_r}\sum_s q^{\pi_t}(s)\prn*{\sum_a\pi_t(a\mid s)\max_{\tilP\in\calP_t}Q^{\tilP,\pi_t}(s,a;\abs{\ell_t-m_t})}^2}+H^2S^3A\iota^2\\
    &\leq \sqrt{HS^2A\iota^2\sum_{t\in\calT_r}\sum_{s,a}q^{\pi_t}(s,a)\prn*{\max_{\tilP\in\calP_t}Q^{\tilP,\pi_t}(s,a;\abs{\ell_t-m_t})}^2}+H^2S^3A\iota^2\\
    &\leq \sqrt{H^2S^2A\iota^2\sum_{t\in\calT_r}\sum_{s,a}q^{\pi_t}(s,a)\max_{\tilP\in\calP_t}Q^{\tilP,\pi_t}(s,a;\prn{\ell_t-m_t}^2)}+H^2S^3A\iota^2. \tag{by \cref{eq:max_square_to_H}}
\end{align}
Furthermore, by the definition of $\gamma_t$ and \cref{lem:bound_rho2_upper},
\begin{align}
    \term_{3a,2}
    &\lesssim S\log(T),\\
    \term_{3a,3}
    &\lesssim HS^3A\iota^2.
\end{align}
Therefore,
\begin{align}
    \term_{3a}
    &\lesssim \sqrt{H^4S^2A\iota^2\sum_{t\in\calT_r}\sum_{s,a}q^{\pi_t}(s,a)\max_{\tilP\in\calP_t}Q^{\tilP,\pi_t}(s,a;\prn{\ell_t-m_t}^2)}+H^3S^3A\iota^2.
    \label{eq:term3a_maxQ_bound}
\end{align}
For $\term_{3b}$, since $q^{\overP_t^C,\pi_t}(s'',a''\mid s')\leq \pi_t(a''\mid s'')$, $\brk*{q^{\pi_t}(s,a)-q^\pi(s,a)}_+\leq q^{\pi_t}(s,a)$, we have
\begin{align}
    \term_{3b}
    &\leq \sum_{h=0}^{H-1}\sum_{t\in\calT_r}\sum_{(s,a)\in\calS_h\times\calA}\frac{q^{\pi_t}(s,a)\iota}{n_t(s,a)}\sum_{s'\in\calS_{h+1}}\sum_{s'',a''}\pi_t(a''\mid s'')\E_t[c_t(s'',a'')]\\
    &\lesssim HS\sum_{h=0}^{H-1}\abs*{\calS_{h+1}}\sum_{t\in\calT_r}\sum_{(s,a)\in\calS_h\times\calA}\frac{q^{\pi_t}(s,a)\iota}{n_t(s,a)} \tag{by $\E_t[c_t(s'',a'')] \leq 2H$}\\
    &\lesssim HS\sum_{h=0}^{H-1}\abs*{\calS_{h+1}}\abs*{\calS_h}A\iota^2 \tag{by \cref{lem:occup_bound}}\\
    &\lesssim HS^3A\iota^2.
    \label{eq:term3b_count_bound}
\end{align}
Combining \eqref{eq:term3a_maxQ_bound} and \eqref{eq:term3b_count_bound}, we obtain, for any $\alpha\in(0,1]$,
\begin{align}
    \term_3
    &\lesssim \alpha\sqrt{H^4S^2A\iota^2\sum_{t\in\calT_r}\sum_{s,a}q^{\pi_t}(s,a)\max_{\tilP\in\calP_t}Q^{\tilP,\pi_t}(s,a;\prn{\ell_t-m_t}^2)}\\
    &\qquad+\alpha H^3S^3A\iota^2+\frac{HS^3A\iota^2}{\alpha}.
\end{align}
Choosing $\alpha=1/H$ gives
\begin{align}
    \term_3
    &\lesssim \sqrt{H^2S^2A\iota^2\sum_{t\in\calT_r}\sum_{s,a}q^{\pi_t}(s,a)\max_{\tilP\in\calP_t}Q^{\tilP,\pi_t}(s,a;\prn{\ell_t-m_t}^2)} + H^2S^3A\iota^2.
    \label{eq:term3_maxQ_bound}
\end{align}

We also derive a bound in terms of the positive occupancy difference. Recall that $\term_3$ can be upper-bounded by
\begin{align}
\sum_{t\in\calT_r}\sum_{s,a}\brk*{q^{\pi_t}(s,a)-q^{\pi}(s,a)}_+\E_t\brk*{\sum_{s'}\prn*{\sqrt{\frac{P(s'\mid s,a)\iota}{n_t(s,a)}}+\frac{\iota}{n_t(s,a)}}\sum_{s'',a''}q^{\overP^C_t,\pi_t}(s'',a''\mid s')c_t(s'',a'')}.
\end{align}
Since $q^{\overP_t^C,\pi_t}(s'',a''\mid s')\leq \pi_t(a''\mid s'')$, $\sum_{s'\in\calS_{h+1}}\sqrt{P(s'\mid s,a)}\leq \sqrt{|\calS_{h+1}|}$, and $\sum_{s'',a''}\pi_t(a''\mid s'')\E_t[c_t(s'',a'')]\lesssim HS$, we have
\begingroup
\allowdisplaybreaks
\begin{align}
    \term_3
    &\lesssim HS\underbrace{\sum_{h=0}^{H-1}\sum_{t\in\calT_r}\sum_{(s,a)\in\calS_h\times\calA}\brk*{q^{\pi_t}(s,a)-q^\pi(s,a)}_+\sqrt{\frac{|\calS_{h+1}|\iota}{n_t(s,a)}}}_{\term_{3c}}\\
    &\qquad+HS\underbrace{\sum_{h=0}^{H-1}\sum_{t\in\calT_r}\sum_{(s,a)\in\calS_h\times\calA}\brk*{q^{\pi_t}(s,a)-q^\pi(s,a)}_+\frac{|\calS_{h+1}|\iota}{n_t(s,a)}}_{\term_{3d}}.
\end{align}
\endgroup
For $\term_{3c}$, by the Cauchy--Schwarz inequality and \cref{lem:occup_bound},
\begingroup
\allowdisplaybreaks
\begin{align}
    \term_{3c}
    &\leq \sum_{h=0}^{H-1}\sqrt{\sum_{t\in\calT_r}\sum_{(s,a)\in\calS_h\times\calA}\brk*{q^{\pi_t}(s,a)-q^\pi(s,a)}_+}\sqrt{\abs*{\calS_{h+1}}\sum_{t\in\calT_r}\sum_{(s,a)\in\calS_h\times\calA}\frac{q^{\pi_t}(s,a)\iota}{n_t(s,a)}}\\
    &\lesssim \sum_{h=0}^{H-1}\sqrt{\abs*{\calS_h}\abs*{\calS_{h+1}}A\iota^2\sum_{t\in\calT_r}\sum_{(s,a)\in\calS_h\times\calA}\brk*{q^{\pi_t}(s,a)-q^\pi(s,a)}_+}.
\end{align}
\endgroup
For $\term_{3d}$, by \cref{lem:occup_bound},
\begin{align}
    \term_{3d}
    &\leq \sum_{h=0}^{H-1}\abs*{\calS_{h+1}}\sum_{t\in\calT_r}\sum_{(s,a)\in\calS_h\times\calA}\frac{q^{\pi_t}(s,a)\iota}{n_t(s,a)}\\
    &\lesssim \sum_{h=0}^{H-1}\abs*{\calS_{h+1}}\abs*{\calS_h}A\iota^2
    \lesssim S^2A\iota^2.
\end{align}
Therefore,
\begin{align}
    \term_3
    &\lesssim \sum_{h=0}^{H-1}\sqrt{H^2S^2\abs*{\calS_h}\abs*{\calS_{h+1}}A\iota^2\sum_{t\in\calT_r}\sum_{(s,a)\in\calS_h\times\calA}\brk*{q^{\pi_t}(s,a)-q^\pi(s,a)}_+}+HS^3A\iota^2.
    \label{eq:term3_positive_occupancy_layer_bound}
\end{align}
The desired bound follows by combining \cref{eq:high_prob_bias}, the definition of $G^{\pi_t}$ in \cref{def:proof_G}, and the bounds in \cref{eq:term1_final_new_c,eq:term2_final_new_c,eq:term3_maxQ_bound,eq:term3_positive_occupancy_layer_bound}, and then applying Jensen's inequality.
\end{proof}

\begin{lem}[bandit setting]
\label{lem:errorterm_bandit}
    For any comparator policy $\pi$, it holds that
    \begin{align}
        \E\brk*{\sum_s q^{\pi}(s)\errorterm(s)}
        &\lesssim 
        \sqrt{S^2A\iota^2\E\brk*{\Qtrans^{\pi_{1:T}}(m)}} + HS^3A\iota^2. \label{eq:bound_error_1}
    \end{align}
    Also,
    \begin{align}
        &\E\brk*{\sum_s q^{\pi}(s)\errorterm(s)}\\
        &\lesssim 
        \sum_{h = 0}^{H-1}\sqrt{H^2\abs*{S_{h}}\abs*{S_{h + 1}}A\iota^2 \E\brk*{\sum_{t\in\calT_r}\sum_{(s,a) \in \calS_h \times \calA}\brk*{q^{\pi_t}(s,a) - q^{\pi}(s,a)}_{+}}} + H^2S^2A\iota^2. \label{eq:bound_error_2}
    \end{align}
\end{lem}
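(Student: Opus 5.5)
The error term will be handled the same way as the full-information bias in \cref{lem:biasterm_full}, with the loss $\ell_t$ replaced by the prediction $m_t$ and the kernel $\underP^\ell_t$ replaced by $\underP^m_t$.

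\emph{Step 1: rewrite the error term.} For each $t$, apply \cref{lem:pdl_with_L} with $L(s,a)=Q^{\pi_t}(s,a;m_t)-Q^{\underP^m_t,\pi_t}(s,a;m_t)$. This gives
\[
\sum_s q^\pi(s)\errorterm(s)=\sum_t\sum_{s,a}\prn*{q^{\pi_t}(s,a)-q^\pi(s,a)}z_t(s,a),
\]
where
\[
z_t(s,a)=\sum_{s'}\prn*{P(s'\mid s,a)-\underP^m_t(s'\mid s,a)}V^{\underP^m_t,\pi_t}(s';m_t).
\]
On $\calE$, minimality of $\underP^m_t$ gives $z_t\ge0$.

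For virtual episodes, $m_t$ is still nonzero, but the error term there is at most $\calO(H)$ per episode. By \cref{lem:bound_virtual_episode} these episodes contribute only $\calO(H^2SA\ln^2 T)$ in total, which is lower order, so I restrict the sum to $t\in\calT_r$. Off $\calE$ the contribution is $\calO(H^2T\delta)$, which is negligible for $\delta=1/T^2$.

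\emph{Step 2: first bound \cref{eq:bound_error_1}.} Since $z_t\ge0$, the sum is at most $\sum_t V^{\pi_t}(s_0;z_t)$. By \cref{lem:pdl_with_L} this equals $\sum_t\prn*{V^{\pi_t}(s_0;m_t)-V^{\underP^m_t,\pi_t}(s_0;m_t)}$, which can be written as
\[
\sum_t\sum_{s,a}\prn*{q^{\pi_t}(s)-q^{\underP^m_t,\pi_t}(s)}\pi_t(a\mid s)m_t(s,a).
\]
Applying \cref{lem:complicated_lemma_var} with $g_t=m_t\in[0,1]$ (so $G=1$) yields $\sqrt{S^2A\iota^2\Qtrans^{\pi_{1:T}}(m)}+HS^3A\iota^2$. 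Jensen's inequality then moves the expectation inside the square root.

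\emph{Step 3: second bound \cref{eq:bound_error_2}.} Using $z_t\ge0$ again, bound the sum by $\sum_{t}\sum_{s,a}[q^{\pi_t}(s,a)-q^\pi(s,a)]_+z_t(s,a)$. Next apply \cref{lem:concentration_var} with $\overline F=H$, bounding the variance by $H^2$:
\[
z_t(s,a)\lesssim\sqrt{|\calS_{h+1}|H^2\iota/n_t(s,a)}+|\calS_{h+1}|H\iota/n_t(s,a).
\]
Working layer by layer, Cauchy--Schwarz and \cref{lem:occup_bound} turn the first piece into
\[
\sqrt{H^2|\calS_h||\calS_{h+1}|A\iota^2\sum_{t}\sum_{\calS_h\times\calA}[q^{\pi_t}-q^\pi]_+}.
\]
For the second piece, use $[\cdot]_+\le q^{\pi_t}$ and \cref{lem:occup_bound} to get $HS^2A\iota^2$. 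The virtual-episode and bad-event terms are absorbed into $H^2S^2A\iota^2$.

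\emph{Main obstacle.} The key difficulty is the sign: the argument needs $z_t\ge0$, which relies on $\underP^m_t$ minimizing $Q(\cdot;m_t)$ state-action-wise, since this is what lets the difference be dominated by its positive part. Once that holds, the remaining steps are direct applications of the existing lemmas, together with some care over which episodes (real or virtual) each sum runs over.
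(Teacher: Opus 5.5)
Your proposal is correct and follows essentially the same route as the paper: split off the virtual episodes using \cref{lem:bound_virtual_episode}, then rerun the proof of \cref{lem:biasterm_full} with $\ell_t$ replaced by $m_t$ and $\underP^\ell_t$ by $\underP^m_t$. That means the sign $z_t\ge 0$ on $\calE$, then \cref{lem:complicated_lemma_var} for the first bound, and \cref{lem:concentration_var} with layerwise Cauchy--Schwarz and \cref{lem:occup_bound} for the second. The only difference is cosmetic: you bound each virtual episode's contribution by $\calO(H)$ using $z_t\ge 0$ on $\calE$, whereas the paper uses the cruder pathwise bound $2H^2$; both are absorbed into the lower-order terms.
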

\begin{proof}
The proof is identical to that of \cref{lem:biasterm_full}, with the loss sequence $\ell_t$ replaced by the prediction sequence $m_t$. 
The only additional point is that we first separate the contribution of virtual episodes.

\begin{align}
&\sum_s q^{\pi}(s)\errorterm(s) \\
&= \sum_s q^{\pi}(s) \sum_{t=1}^T\sum_{a} \prn*{\pi_t(a\mid s) -  \pi(a\mid s)} \prn*{Q^{\pi_t}(s, a; m_t) - Q^{\underP^m_t, \pi_t}(s, a; m_t)}\\
&\leq \sum_s q^{\pi}(s) \sum_{t\in\calT_r}\sum_{a} \prn*{\pi_t(a\mid s) -  \pi(a\mid s)} \prn*{Q^{\pi_t}(s, a; m_t) - Q^{\underP^m_t, \pi_t}(s, a; m_t)} + 2H^2\abs{\calT_v}\\
&\lesssim \sum_s q^{\pi}(s) \sum_{t\in\calT_r}\sum_{a} \prn*{\pi_t(a\mid s) -  \pi(a\mid s)} \prn*{Q^{\pi_t}(s, a; m_t) - Q^{\underP^m_t, \pi_t}(s, a; m_t)} + H^3SA\ln^2(T),
\end{align}
where we used \cref{lem:bound_virtual_episode}. The resulting virtual-episode contribution is absorbed into the lower-order term. 
It remains to bound the first term in expectation, which follows from \cref{lem:biasterm_full}.

Applying the first bound of \cref{lem:biasterm_full} with $\ell_t$ replaced by $m_t$ gives, on $\calE$,
\begin{align}
    \sum_s q^\pi(s)\errorterm(s)
    &\lesssim
    \sqrt{S^2A\iota^2\Qtrans^{\pi_{1:T}}(m)}
    +HS^3A\iota^2.
\end{align}
Taking expectations, using Jensen's inequality, and combining with the trivial bound $\calO(H^2T)$ on $\bar{\calE}$ with $\delta=\calO(1/T^2)$, we obtain
\begin{align}
    \E\brk*{\sum_s q^{\pi}(s)\errorterm(s)}
    &\lesssim
    \sqrt{S^2A\iota^2\E\brk*{\Qtrans^{\pi_{1:T}}(m)}}
    +HS^3A\iota^2
    +\delta H^2T\\
    &\lesssim
    \sqrt{S^2A\iota^2\E\brk*{\Qtrans^{\pi_{1:T}}(m)}}
    +HS^3A\iota^2,
\end{align}
which proves \cref{eq:bound_error_1}.

Similarly, applying the second bound of \cref{lem:biasterm_full} with $\ell_t$ replaced by $m_t$ gives, on $\calE$,
\begin{align}
    \sum_s q^\pi(s)\errorterm(s)
    &\lesssim \sum_{h=0}^{H-1}\sqrt{H^2\abs*{S_h}\abs*{S_{h+1}}A\iota^2\sum_{t\in\calT_r}\sum_{(s,a)\in\calS_h\times\calA}\brk*{q^{\pi_t}(s,a)-q^\pi(s,a)}_+}\\
    &\qquad+H^2S^2A\iota^2.
\end{align}
Taking expectations, applying Jensen's inequality, and combining with the trivial bound $\calO(H^2T)$ on $\bar{\calE}$ with $\delta=\calO(1/T^2)$, we obtain
\begin{align}
    &\E\brk*{\sum_s q^\pi(s)\errorterm(s)}\\
    &\lesssim \sum_{h=0}^{H-1}\sqrt{H^2\abs*{S_h}\abs*{S_{h+1}}A\iota^2\E\brk*{\sum_{t\in\calT_r}\sum_{(s,a)\in\calS_h\times\calA}\brk*{q^{\pi_t}(s,a)-q^\pi(s,a)}_+}}
    +H^2S^2A\iota^2+\delta H^2T\\
    &\lesssim \sum_{h=0}^{H-1}\sqrt{H^2\abs*{S_h}\abs*{S_{h+1}}A\iota^2\E\brk*{\sum_{t\in\calT_r}\sum_{(s,a)\in\calS_h\times\calA}\brk*{q^{\pi_t}(s,a)-q^\pi(s,a)}_+}}
    +H^2S^2A\iota^2.
\end{align}
This proves \cref{eq:bound_error_2}.
\end{proof}

\begin{lem}[bandit setting]
\label{lem:squared_C_bound}
It holds that
\begin{align}
    \E\brk*{\sum_{t\in\mathcal{T}_r}\sum_{s,a}  q^{\overP^B_t, \pi_t}(s)\eta_t(s, a)\pi_t(a\mid s)^{2}C_t(s,a)^2} \lesssim  H^{\frac32}S^{\frac72}A^{\frac32}\iota^2.
\end{align}
\end{lem}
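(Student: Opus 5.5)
The idea is to spend the smallness of $\eta_t(s,a)/q_t(s)$ in real episodes, and then reduce everything to the two $\rho_t^2$-sums already controlled in \cref{lem:bound_rho2,lem:bound_rho2_upper}. The first crude bound one tries, via \cref{lem:qC_size_bound}, gives $\eta_t\pi_t^2C_t^2\le 6HS\,(\eta_t/q_t)\,\pi_tC_t$. It is useless, because $\sum_t\E_t[\pi_tC_t]$ only produces terms like $\sum_t q^{\pi_t}(s)\rho_t(s)$, which grow like $\sqrt{T}$. So both powers of $C_t$ have to be turned into squared transition bonuses $c_t^2$, which are quadratic in $\rho_t$.

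First I bound the weight. In a real episode $q^{\overP^B_t,\pi_t}(s)\le\overq_t^{\pi_t}(s)\le q_t(s)$ and $\eta_t(s,a)\le q_t(s)/(50\sqrt{H^3S^3A})$, so
\begin{align}
q^{\overP^B_t,\pi_t}(s)\eta_t(s,a)\pi_t(a\mid s)^2C_t(s,a)^2\le\frac{\prn*{q_t(s)\pi_t(a\mid s)C_t(s,a)}^2}{50\sqrt{H^3S^3A}}.
\end{align}
Next I unroll $C_t(s,a)\le 2\sum_{s',a'}q^{\overP^C_t,\pi_t}(s',a'\mid s,a)c_t(s',a')$ and apply Cauchy--Schwarz with $\sum_{s',a'}q^{\overP^C_t,\pi_t}(s',a'\mid s,a)\le H$. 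This gives
\begin{align}
\prn*{q_t(s)\pi_t(a\mid s)C_t(s,a)}^2\le 4H\sum_{s',a'}\prn*{q_t(s)\pi_t(a\mid s)}^2q^{\overP^C_t,\pi_t}(s',a'\mid s,a)\,c_t(s',a')^2 .
\end{align}
I then use the domination inequality \cref{eq:conditional_occupancy_domination} from the proof of \cref{lem:qC_size_bound}, namely $q_t(s)\pi_t(a\mid s)q^{\tilP,\pi_t}(s',a'\mid s,a)\le q_t(s')\pi_t(a'\mid s')$. Applying it once to the product $q_t(s)\pi_t(a\mid s)\,q^{\overP^C_t,\pi_t}(s',a'\mid s,a)$ turns it into $q_t(s')\pi_t(a'\mid s')$. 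The remaining factor $q_t(s)\pi_t(a\mid s)$ is absorbed as follows: the diagonal term $(s',a')=(s,a)$ is already of the right form, and for $h(s')>h(s)$ I note $q_t(s)\pi_t(a\mid s)\le 2$ and pay a factor $SA$ when summing over the outer pair $(s,a)$. Either way I arrive at $\lesssim HSA\sum_{s',a'}\prn*{q_t(s')\pi_t(a'\mid s')c_t(s',a')}^2$.

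Now I take conditional expectations using the definition \cref{eq:def_new_c}. With $(x+y)^2\le 2x^2+2y^2$, $D_{t,h}\le H$, $\rho_t\le1$ and $\E_t[\I_t(s',a')]=q^{\pi_t}(s')\pi_t(a'\mid s')$,
\begin{align}
\E_t\brk*{\prn*{q_t(s')\pi_t(a'\mid s')c_t(s',a')}^2}\lesssim H^2\rho_t(s')^2q^{\pi_t}(s')\pi_t(a'\mid s')+H^2q_t(s')\pi_t(a'\mid s')\rho_t(s')^2 .
\end{align}
The factor $q_t(s')\pi_t(a'\mid s')\le 2$ is used to drop one power in the second term. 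Summing over $a'$ and over $t\in\calT_r$, \cref{lem:bound_rho2,lem:bound_rho2_upper} bound both pieces by $\lesssim H^2\cdot HS^3A\iota^2$ on $\calE$. On $\bar\calE$ the trivial bound from \cref{lem:qC_size_bound} times $\delta=1/T^2$ is negligible. Collecting the factors gives $\frac{H\cdot HSA\cdot H^3S^3A\iota^2}{\sqrt{H^3S^3A}}$.

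The main obstacle is the bookkeeping of the spare factor $q_t(s)\pi_t(a\mid s)$ and the resulting extra $SA$ or $H$ factors. As written, the count overshoots the claimed $H^{3/2}S^{7/2}A^{3/2}\iota^2$ in $H$ and $A$. To recover it, I would avoid the Cauchy--Schwarz $H$ factor and the outer $SA$ factor together. Concretely, I would first sum over all $(s,a)$ in the same layer, using that the mass $q_t(s)\pi_t(a\mid s)q^{\overP^C_t,\pi_t}(s'\mid s,a)$ summed over that layer is at most $\abs{\calS_h}$ times the upper occupancy of $s'$. I would then use the first inequality of \cref{lem:qC_size_bound}, $q_t\pi_tC_t\le 6HS$, to linearize only one copy of $C_t$, and keep the other copy in squared-bonus form as above. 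If the constants still do not match, this is the step I would revisit first.
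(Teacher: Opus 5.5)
Your first step matches the paper: you bound the weight by $q^{\overP^B_t,\pi_t}(s)\eta_t(s,a)\pi_t(a\mid s)^2C_t(s,a)^2\le (q_t(s)\pi_t(a\mid s)C_t(s,a))^2/(50\sqrt{H^3S^3A})$. The next step, however, fails.

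After your weighted Cauchy--Schwarz, the coefficient of $c_t(s',a')^2$ is $(q_t(s)\pi_t(a\mid s))^2\,q^{\overP^C_t,\pi_t}(s',a'\mid s,a)$. The domination inequality \cref{eq:conditional_occupancy_domination} turns only \emph{one} factor $q_t(s)\pi_t(a\mid s)$ into $q_t(s')\pi_t(a'\mid s')$. Bounding the leftover factor by $2$ leaves you with $\sum_{s',a'}q_t(s')\pi_t(a'\mid s')\,c_t(s',a')^2$, not $\sum_{s',a'}(q_t(s')\pi_t(a'\mid s')c_t(s',a'))^2$. There is no inequality $q_t(s)\pi_t(a\mid s)\lesssim q_t(s')\pi_t(a'\mid s')$ to close this gap: the term with $s=s_0$ and $\pi_t(a\mid s_0)\approx 1$ can already have coefficient comparable to $q_t(s')\pi_t(a'\mid s')$ itself.

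With only one power, the importance weight inside $c_t$ is not cancelled. Indeed, $\E_t[q_t(s')\pi_t(a'\mid s')c_t(s',a')^2]$ contains
$\rho_t(s')^2\E_t[\I_t(s',a')D_{t,h(s')}^2]/(q_t(s')\pi_t(a'\mid s'))=\rho_t(s')^2\frac{q^{\pi_t}(s')}{q_t(s')}\E[D_{t,h(s')}^2\mid (s',a')\text{ visited}]$.
This is $\rho_t^2$ without the occupancy weight that \cref{lem:bound_rho2,lem:bound_rho2_upper} require. For a rarely reached state whose relative width stays near one, it is of constant order per episode, so the sum can grow linearly in $T$. Your fallback of linearizing one copy via $q_t\pi_tC_t\le 6HS$ just brings back the $\sum_t\sum_s q^{\pi_t}(s)\rho_t(s)$ term that you yourself identified as $\sqrt{T}$-type.

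The missing idea is to cancel the importance weights \emph{pathwise and before squaring}. Unroll $C_t$ and apply the domination inequality inside the linear sum, as in \cref{eq:qC_bound_middle}:
\[
q_t(s)\pi_t(a\mid s)C_t(s,a)\le 2\sum_{s',a'}\rho_t(s')\I_t(s',a')D_{t,h(s')}+2H\sum_{s',a'}q_t(s')\pi_t(a'\mid s')\rho_t(s')^2 .
\]
This bound has no $1/(q_t\pi_t)$ factors and does not even depend on $(s,a)$. Only then square:
\begin{itemize}
\item The first sum has at most $H$ nonzero terms (one visited pair per layer). Cauchy--Schwarz therefore costs just $H$, and with $D_{t,h}\le H$ the conditional expectation is at most $H^3\sum_{s'}q^{\pi_t}(s')\rho_t(s')^2$.
\item The second sum is at most $2S$, so its square is at most $2S$ times itself.
\end{itemize}
Summing over $(s,a)$ costs a factor $SA$. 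Then \cref{lem:bound_rho2_upper}, together with $H\le S$, gives
\[
\E\Big[\sum_{t\in\calT_r}\sum_{s,a}(q_t(s)\pi_t(a\mid s)C_t(s,a))^2\Big]\lesssim SA(H^3+H^2S)\cdot HS^3A\iota^2\lesssim H^3S^5A^2\iota^2 .
\]
Dividing by $\sqrt{H^3S^3A}$ gives exactly $H^{3/2}S^{7/2}A^{3/2}\iota^2$. Once the order of operations is fixed, no further $H$ or $A$ bookkeeping is needed.
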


\begin{proof}

By the decomposition of $q_t(s)\pi_t(a\mid s)C_t(s,a)$ in \cref{eq:qC_bound_middle}, we have
\begin{align}
    &q_t(s)\pi_t(a\mid s)C_t(s,a)\\
    &\leq 2\sum_{s',a'}\rho_t(s')\I_t(s',a')D_{t,h(s')}+2H\sum_{s',a'}q_t(s')\pi_t(a'\mid s')\rho_t(s')^2. \label{eq:squared_C_decomp}
\end{align}
Thus, we obtain
\begin{align}
    \E_t\brk*{\sum_{s,a}q_t(s)^2\pi_t(a\mid s)^2C_t(s,a)^2}
    &\lesssim SA\E_t\brk*{\prn*{\sum_{s',a'}\rho_t(s')\I_t(s',a')D_{t,h(s')}}^2}\\
    &\quad+H^2SA\prn*{\sum_{s',a'}q_t(s')\pi_t(a'\mid s')\rho_t(s')^2}^2.
    \label{eq:C-square-decomp}
\end{align}
For the first term of \cref{eq:C-square-decomp}, the Cauchy--Schwarz inequality gives
\begin{align}
    \E_t\brk*{\prn*{\sum_{s',a'}\rho_t(s')\I_t(s',a')D_{t,h(s')}}^2}
    &\leq H\E_t\brk*{\sum_{s',a'}\rho_t(s')^2\I_t(s',a')D_{t,h(s')}^2}\\
    &\leq H^3\E_t\brk*{\sum_{s',a'}\rho_t(s')^2\I_t(s',a')}\\
    &=H^3\sum_{s'}q^{\pi_t}(s')\rho_t(s')^2\\
    &\leq H^3\sum_{s',a'}q_t(s')\pi_t(a'\mid s')\rho_t(s')^2.
    \label{eq:first-C-square}
\end{align}
For the second term of \cref{eq:C-square-decomp}, since
\begin{align}
    \sum_{s',a'}q_t(s')\pi_t(a'\mid s')\rho_t(s')^2\leq S,
\end{align}
we have
\begin{align}
    \prn*{\sum_{s',a'}q_t(s')\pi_t(a'\mid s')\rho_t(s')^2}^2
    &\leq
    S\sum_{s',a'}q_t(s')\pi_t(a'\mid s')\rho_t(s')^2.
\end{align}
Hence,
\begin{align}
    \E\brk*{\sum_{t\in\calT_r}\sum_{s,a}q_t(s)^2\pi_t(a\mid s)^2C_t(s,a)^2}
    &\lesssim H^2S^2A\E\brk*{\sum_{t\in\calT_r}\sum_{s',a'}q_t(s')\pi_t(a'\mid s')\rho_t(s')^2}\\ 
    &= H^2S^2A\E\brk*{\sum_{t\in\calT_r}\sum_{s'}q_t(s')\rho_t(s')^2}.
    \label{eq:squared_C_to_rho}
\end{align}

Combining \cref{lem:bound_rho2_upper} with the trivial bound
\begin{align}
    \E\brk*{\sum_{t\in\calT_r}\sum_{s'}q_t(s')\rho_t(s')^2}
    \lesssim HS^3A\iota^2+\delta\cdot \calO(HT) \lesssim HS^3A\iota^2,
\end{align}
and \cref{eq:squared_C_to_rho}, we get
\begin{align}
    \E\brk*{\sum_{t\in\calT_r}\sum_{s,a}q_t(s)^2\pi_t(a\mid s)^2C_t(s,a)^2}
    &\lesssim H^2S^2A\cdot HS^3A\iota^2\\
    &\lesssim H^3S^5A^2\iota^2.
    \label{eq:squared_C_final_bound}
\end{align}
Therefore, 
\begin{align}
    &\E\brk*{\sum_{t\in\mathcal{T}_r}\sum_{s,a} q^{\overP^B_t, \pi_t}(s) \eta_t(s, a)\pi_t(a\mid s)^{2}C_t(s,a)^2} \\
    &= \E\brk*{\sum_{t\in\mathcal{T}_r}\sum_{s,a} q^{\overP^B_t, \pi_t}(s) \frac{\eta_t(s, a)q_t(s)^2\pi_t(a\mid s)^{2}C_t(s,a)^2}{q_t(s)^2}} \\
    &\leq \E\brk*{\sum_{t\in\mathcal{T}_r}\sum_{s,a} q_t(s)^2\pi_t(a\mid s)^{2}C_t(s,a)^2\max_{s,a}\frac{\eta_t(s, a)}{q_t(s)}} \\
    &\leq \E\brk*{\sum_{t\in\mathcal{T}_r}\sum_{s,a} q_t(s)^2\pi_t(a\mid s)^{2}C_t(s,a)^2} \cdot \frac{1}{50\sqrt{H^3S^3A}} \tag{by $t$ is a real episode} \\
    &\lesssim H^3S^5A^2\iota^2 \cdot  \frac{1}{\sqrt{H^3S^3A}}.\tag{by \cref{eq:squared_C_final_bound}}\\
    &= H^{\frac32}S^{\frac72}A^{\frac32}\iota^2.
\end{align}
\end{proof}

\begin{lem}[bandit setting]
\label{lem:log_barrier_bt_bandit}
    It holds that
    \begin{align}
        &\E\brk*{\sum_{t=1}^T V^{\overP^B_t, \pi_t}(s_0;b_t)}\lesssim \ln(T)\sum_{s,a}\sqrt{\E\brk*{\sum_{t\in\mathcal{T}_r} \zeta_t(s,a)}} + H^{\frac32}S^{\frac72}A^{\frac32}\iota^2.
    \end{align}
\end{lem}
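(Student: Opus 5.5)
We write $V^{\overP^B_t,\pi_t}(s_0;b_t)=\sum_s q^{\overP^B_t,\pi_t}(s)b_t(s)$ and split $b_t$ in \cref{eq:bonus_term_bandit} into three pieces. The first is the learning-rate piece over real episodes. The second is the learning-rate piece over virtual episodes. The third is the $C_t^2$ piece. The $C_t^2$ piece is handled immediately by \cref{lem:squared_C_bound}, since it is only present when $Y_t=1$; this gives the $H^{3/2}S^{7/2}A^{3/2}\iota^2$ term.

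For virtual episodes, \cref{eq:bonus_term_upper_virtual} gives $b_t(s)\le \frac{\ind\{s=s_t^\dagger\}}{54Hq_t(s)R_t}$. Combined with $q^{\overP^B_t,\pi_t}(s)\le \overq^{\pi_t}_t(s)\le q_t(s)$ and $R_t>1/(50\sqrt{H^3S^3A})$, each virtual episode contributes $\calO(\sqrt{HS^3A})$. By \cref{lem:bound_virtual_episode} the total is $\calO(H^{3/2}S^{5/2}A^{3/2}\ln^2 T)$, which is lower order.

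For the main piece, in real episodes $6(\eta_{t+1}^{-1}-\eta_t^{-1})\ln T=6\eta_t(s,a)\zeta_t(s,a)/q_t(s)^2$. Using $q^{\overP^B_t,\pi_t}(s)\le q_t(s)$, we bound the contribution by $\sum_{t\in\calT_r}\sum_{s,a}\eta_t(s,a)\zeta_t(s,a)/q_t(s)$. We then insert \cref{lem:learning_rate_policy}, $\eta_t(s,a)\le 2\sqrt{\ln T}/\sqrt{\sum_{\tau\le t,\tau\in\calT_r}\zeta_\tau/q_\tau^2}$, and write $x_t=\zeta_t(s,a)/q_t(s)^2$. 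The summand becomes $q_t(s)\,x_t/\sqrt{\sum_{\tau\le t}x_\tau}$. Cauchy--Schwarz gives
\[
\sqrt{\textstyle\sum_t x_t/\sum_{\tau\le t}x_\tau}\cdot\sqrt{\textstyle\sum_t q_t(s)^2x_t}\lesssim\sqrt{\ln T}\sqrt{\textstyle\sum_t\zeta_t(s,a)}.
\]
The log factor is valid because $x_t$ is polynomially bounded, since $q_t\ge\gamma_t=1/t$ and $\zeta_t\le H^2$. There is a first-term issue when $x_1$ is the first nonzero value; in that case we use the crude $\eta_t\le\eta_1$ bound for the first nonzero term, or note that $\phi$ was bounded by $\eta^{-2}$ in \cref{lem:learning_rate_policy}, which yields an $\calO(1)$ extra term per pair. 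We then take expectations and apply Jensen to get $\ln(T)\sum_{s,a}\sqrt{\E[\sum_{t\in\calT_r}\zeta_t(s,a)]}$.

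The failure event $\bar\calE$ is not an issue here, because the bound above is pathwise and never uses $\calE$; only \cref{lem:squared_C_bound} involved it, and that lemma already handles it. The main obstacle is the ratio-sum step. There one must check that replacing $q^{\overP^B_t,\pi_t}$ by $q_t$ is legitimate, which holds because $\overP^B_t\in\calP_t$. One must also control $\ln(\sum_t x_t/x_{\text{first}})$ despite the $1/q_t^2$ weights; the lower bound $q_t\ge1/t$ makes this $\calO(\ln T)$.
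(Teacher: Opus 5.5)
Your proposal is correct and follows the paper's proof essentially step for step. It uses the same three-way split of $b_t$, \cref{lem:squared_C_bound} for the $C_t^2$ piece, and the per-virtual-episode $\calO(\sqrt{HS^3A})$ bound times the count from \cref{lem:bound_virtual_episode}; for the main piece it uses $q^{\overP^B_t,\pi_t}(s)\le q_t(s)$, then \cref{lem:learning_rate_policy}, Cauchy--Schwarz, the log-ratio sum, and Jensen.

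Your remark about normalizing $\sum_t x_t/\sum_{\tau\le t}x_\tau$ concerns a step the paper simply asserts. However, applying $\eta_t\le\eta_1$ only to the first nonzero term is not enough: a tiny first value followed by geometric growth keeps every ratio near $1/2$ for arbitrarily many rounds. The clean fix is to use $\eta_t\le\eta_1$ on every round, i.e.\ $\eta_t\lesssim(\eta_1^{-2}+\sum_{\tau\le t}\phi_\tau)^{-1/2}$. This gives $\sum_t x_t/(c+\sum_{\tau\le t}x_\tau)\le\ln(1+\sum_t x_t/c)=\calO(\ln T)$ with $c\asymp\eta_1^{-2}\ln T$.
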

\begin{proof}
    We have
    \allowdisplaybreaks
    \begin{align}
        &\E\brk*{\sum_{t=1}^T  V^{\overP^B_t, \pi_t}(s_0;b_t)}\\
        &= \E\brk*{\sum_{t=1}^T\sum_s q^{\overP^B_t,\pi_t}(s) b_t(s)}\\
        &\lesssim \E\brk*{\sum_{t\in\mathcal{T}_r}\sum_{s,a} q^{\overP^B_t,\pi_t}(s) \prn*{\frac{1}{\eta_{t+1}(s,a)}-\frac{1}{\eta_t(s,a)}}\ln(T)} \\
        &\qquad + \E\brk*{\sum_{t\in\mathcal{T}_v}\sum_{s,a} q^{\overP^B_t,\pi_t}(s) \prn*{\frac{1}{\eta_{t+1}(s,a)}-\frac{1}{\eta_t(s,a)}}\ln(T)}\\
        &\qquad + \E\brk*{\sum_{t\in\mathcal{T}_r}\sum_{s,a} q^{\overP^B_t, \pi_t}(s) \eta_t(s, a)\pi_t(a\mid s)^{2}C_t(s,a)^2}\\
        &\lesssim \E\brk*{\sum_{t\in\mathcal{T}_r}\sum_{s,a} q^{\overP^B_t, \pi_t}(s) \frac{\eta_t(s,a)\zeta_t(s,a)}{q_t(s)^2} }
        + \E\brk*{\sum_{t\in\mathcal{T}_v}\sum_{s,a} q^{\overP^B_t, \pi_t}(s) \frac{\ind\{(s^\dagger_t, a^\dagger_t) = (s,a)\}}{\eta_t(s,a)H}} + H^{\frac32}S^{\frac72}A^{\frac32}\iota^2 \tag{by \cref{lem:squared_C_bound}}\\
        &\leq \E\brk*{\sum_{t\in\mathcal{T}_r}\sum_{s,a} \frac{\eta_t(s,a)\zeta_t(s,a)}{q_t(s)}} 
        + \E\brk*{\sum_{t\in\mathcal{T}_v} \frac{q^{\overP^B_t, \pi_t} (s_t^\dagger)}{\eta_t(s_t^\dagger,a_t^\dagger)H}}  +  H^{\frac32}S^{\frac72}A^{\frac32}\iota^2\\
        &\lesssim \E\brk*{\sum_{t\in\mathcal{T}_r}\sum_{s,a}\frac{\eta_t(s,a)\zeta_t(s,a)}{q_t(s)}}
        + \sum_{t\in\mathcal{T}_v} \sqrt{HS^3A}+  H^{\frac32}S^{\frac72}A^{\frac32}\iota^2
        \tag{by $\frac{\eta_t(s_t^\dagger, a_t^\dagger)}{q_t(s)} > \frac{1}{50\sqrt{H^3S^3A}}$ in virtual episodes} \\
        &\lesssim \sqrt{\ln(T)}\E\brk*{\sum_{t\in\mathcal{T}_r}\sum_{s,a} \frac{\frac{\sqrt{\zeta_t(s,a)}}{q_t(s)}\times \sqrt{\zeta_t(s,a)}}{\sqrt{\sum_{\tau\leq t: \tau\in\mathcal{T}_r} \frac{\zeta_\tau(s,a)}{q_\tau(s)^2} }}} + \sqrt{HS^3A}\abs{\mathcal{T}_v}
        + H^{\frac32}S^{\frac72}A^{\frac32}\iota^2\tag{by \cref{lem:learning_rate_policy}}\\
        &\lesssim \sqrt{\ln(T)}\E\brk*{\sum_{s,a}\sqrt{\sum_{t\in\mathcal{T}_r}\frac{\frac{\zeta_t(s,a)}{q_t(s)^2}}{ \sum_{\tau\leq t: \tau\in\mathcal{T}_r} \frac{\zeta_\tau(s,a)}{q_\tau(s)^2}}}\sqrt{\sum_{t\in\mathcal{T}_r} \zeta_t(s,a)}} +  H^{\frac32}S^{\frac72}A^{\frac32}\iota^2 \tag{by the Cauchy–Schwarz inequality and \cref{lem:bound_virtual_episode}}\\
        &\lesssim \ln(T)\E\brk*{\sum_{s,a}\sqrt{\sum_{t\in\calT_{r}} \zeta_t(s,a)}} + H^{\frac32}S^{\frac72}A^{\frac32}\iota^2, 
    \end{align}
    where the last inequality follows from
    \begin{align}
        \sqrt{\sum_{t\in\mathcal{T}_r}\frac{\frac{\zeta_t(s,a)}{q_t(s)^2}}{ \sum_{\tau\leq t:\tau\in\mathcal{T}_r} \frac{\zeta_\tau(s,a)}{q_\tau(s)^2}}} 
        \leq \sqrt{1 + \ln \prn*{ \sum_{\tau\in\mathcal{T}_r} \frac{\zeta_\tau(s,a)}{q_\tau(s)^2}}}
        \lesssim \sqrt{\ln(T)}.
    \end{align}
    Finally, the desired bound follows from Jensen's inequality.
\end{proof}

We now summarize the bounds in the bandit setting proved above and plug them into the regret decomposition.
By the definition of the regret decomposition in \cref{eq:decompose_bandit},
\begin{align}
    &\E\brk*{\sum_s q^{\pi}(s)\sum_{t,a} \prn*{\pi_t(a\mid s) -  \pi(a\mid s) } \prn*{Q^{\pi_t}(s,a;\ell_t) -  B_t(s,a)}}\\
    &= \E\brk*{\sum_s q^{\pi}(s) \cdot \regterm(s)} + \E\brk*{\sum_s q^{\pi}(s) \cdot \biasterm(s)} + \E\brk*{\sum_s q^{\pi}(s) \cdot \errorterm(s)}\\
    &\leq \calO\prn*{H^2S^2A\ln(T)} + \mathbb{E}\brk*{\sum_{t=1}^Tq^{\pi}(s)b_t(s)} + \mathbb{E}\brk*{\frac{1}{H}\sum_{t=1}^T\sum_a q^{\pi}(s)\pi_t(a \mid s) B_t(s,a)}\\
    &\qquad  + \E\brk*{\sum_s q^{\pi}(s) \cdot \biasterm(s)} + \E\brk*{\sum_s q^{\pi}(s) \cdot \errorterm(s)}
\end{align}
where we used \cref{lem:regterm_policy}.
Combining this with \cref{lem:dilated_bonus} and $\delta =\frac{1}{T^2}$, for any comparator policy $\pi$, \cref{alg:OFTRL_unknown} guarantees
\begin{align} 
\Reg_T(\pi)
&\leq \calO\prn*{H^2S^2A\ln(T)} + 3\E\brk*{\sum_{t = 1}^T V^{\overP^B_t, \pi_t}(s_0;b_t)}\\
&\qquad  + \E\brk*{\sum_s q^{\pi}(s) \cdot \biasterm(s)} + \E\brk*{\sum_s q^{\pi}(s) \cdot \errorterm(s)}\\
&\lesssim H^{\frac32}S^{\frac72}A^{\frac32}\iota^2 + \ln(T)\sum_{s,a}\sqrt{\E\brk*{\sum_{t\in\mathcal{T}_r} \zeta_t(s,a)}}\\
&\qquad  + \E\brk*{\sum_s q^{\pi}(s) \cdot \biasterm(s)} + \E\brk*{\sum_s q^{\pi}(s) \cdot \errorterm(s)}, \label{eq:regret_sum_bandit}
\end{align}
where the second line follows from \cref{lem:log_barrier_bt_bandit}.

\subsection{Proof for the adversarial regime}
\begin{thm}[bandit setting]
\label{thm:app_bandit_adv}
For any comparator policy $\pi$, \cref{alg:OFTRL_unknown} guarantees
\begin{align}
    \Reg_T(\pi)
    &\lesssim \sqrt{H^3S^2A\ln^2(T) \min\set*{L(\pi), HT - L(\pi), Q_{\infty}, V_1}}\\
    &\qquad+\sqrt{S^2A\ln^2(T)\E\brk*{\Qtrans^{\pi_{1:T}}(\ell)}} + H^{\frac32}S^{\frac72}A^{\frac32}\ln^2(T).
\end{align}
\end{thm}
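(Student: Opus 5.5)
We start from the bandit regret decomposition \cref{eq:regret_sum_bandit}, which already combines \cref{lem:dilated_bonus}, \cref{lem:regterm_policy} and \cref{lem:log_barrier_bt_bandit}. What remains is to bound, in data-dependent form, three quantities: the stability sum $\ln(T)\sum_{s,a}\sqrt{\E[\sum_{t\in\calT_r}\zeta_t(s,a)]}$, the bias term, and the error term. For the bias term we use the first bound of \cref{lem:biasterm_bandit}, and for the error term the first bound of \cref{lem:errorterm_bandit}.

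\textbf{Step 1: the stability sum.} By Cauchy--Schwarz over the $SA$ pairs,
\[
\sum_{s,a}\sqrt{\E\Bigl[\sum_{t\in\calT_r}\zeta_t(s,a)\Bigr]}\le\sqrt{SA\,\E\Bigl[\sum_{t\in\calT_r}\sum_{s,a}\zeta_t(s,a)\Bigr]}.
\]
Since $\sum_a(\I_t(s,a)-\pi_t(a\mid s)\I_t(s))^2\le 2\I_t(s)$, we get $\sum_{s,a}\zeta_t(s,a)\le 2\sum_{s,a}\I_t(s,a)(L_{t,h(s)}-M_{t,h(s)})^2$. Applying \cref{lem:L-M_to_l-m}, this is at most $H^2\E[\sum_t\sum_{s,a}\I_t(s,a)(\ell_t-m_t)^2]$, which by the equality in \cref{lem:general_predict_result} equals $H^2\E[\sum_t V^{\pi_t}(s_0;(\ell_t-m_t)^2)]$. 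The contribution is therefore $\sqrt{H^2SA\ln^2(T)\,\E[\sum_tV^{\pi_t}(s_0;(\ell_t-m_t)^2)]}$.

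\textbf{Step 2: the bias and error terms.} For the bias term, we apply \cref{lem:max_to_value} to the real-episode sum (real episodes are a subset of all episodes, so the lemma applies). This bounds the bias term by $\sqrt{H^3S^2A\iota^2\,\E[\sum_tV^{\pi_t}(s_0;(\ell_t-m_t)^2)]}+\sqrt{H^3S^5A^2\iota^4}$. The second piece is $\lesssim H^{3/2}S^{5/2}A\iota^2$, which is lower order. For the error term, we use \cref{lem:qtrans_m_to_ell} to replace $\Qtrans^{\pi_{1:T}}(m)$ by $2\Qtrans^{\pi_{1:T}}(\ell)+2H\sum_tV^{\pi_t}(s_0;(\ell_t-m_t)^2)$. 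This yields $\sqrt{S^2A\iota^2\E[\Qtrans^{\pi_{1:T}}(\ell)]}+\sqrt{HS^2A\iota^2\E[\sum_tV^{\pi_t}(s_0;(\ell_t-m_t)^2)]}$. Using $\iota\lesssim\ln T$ and $H\le S$, the dominant prediction-error coefficient is $H^3S^2A\ln^2(T)$. Setting $X:=\E[\sum_tV^{\pi_t}(s_0;(\ell_t-m_t)^2)]$, we obtain
\[
\Reg_T(\pi)\lesssim\sqrt{H^3S^2A\ln^2(T)\,X}+\sqrt{S^2A\ln^2(T)\,\E[\Qtrans^{\pi_{1:T}}(\ell)]}+H^{\frac32}S^{\frac72}A^{\frac32}\ln^2(T).
\]

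\textbf{Step 3: the complexity measures.} We plug in \cref{lem:general_predict_result}, which gives $X\lesssim\min\{L(\pi)+\Reg_T(\pi),\,HT-L(\pi)-\Reg_T(\pi),\,Q_\infty,\,V_1\}+SA$; the additive $SA$ is absorbed as $\sqrt{H^3S^3A^2}\ln T$. The $Q_\infty$ and $V_1$ cases are immediate. For $L(\pi)$, we split $\sqrt{L+\Reg}\le\sqrt L+\sqrt{\Reg}$ and use AM--GM, $c\sqrt{H^3S^2A\ln^2(T)\Reg}\le\tfrac12\Reg+O(H^3S^2A\ln^2T)$, then rearrange. The leftover $H^3S^2A\ln^2T$ is lower order, because $H\le S$ gives $H^3S^2A\le H^{3/2}S^{7/2}A^{3/2}$. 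For $HT-L(\pi)$, the case $\Reg_T<0$ is trivial; otherwise we drop $-\Reg_T$ inside the square root, exactly as in \cref{thm:app_full_adv}.

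\textbf{Main obstacle.} The self-bounding in Step 3 goes through because all regret-dependent terms enter only through square roots. The real difficulty is bookkeeping over virtual episodes and the failure event $\bar\calE$. We must check that the complexity measures are defined on real episodes only, and that the $\delta\,\mathrm{poly}(H)T$ contributions vanish with $\delta=1/T^2$. We must also confirm that the lower-order terms collected along the way ($H^2S^3A\iota^2$ from the bias lemma, $H^{3/2}S^{5/2}A\iota^2$, $H^3SA\ln^2T$ from virtual episodes) are all dominated by $H^{3/2}S^{7/2}A^{3/2}\ln^2(T)$. This holds using $H\le S$; for instance, $H^2S^3A\le H^{3/2}S^{7/2}A^{3/2}$ because $H^{1/2}\le S^{1/2}$.
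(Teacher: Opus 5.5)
Your proposal is correct and follows essentially the same route as the paper. Starting from \cref{eq:regret_sum_bandit} with the first bounds of \cref{lem:biasterm_bandit,lem:errorterm_bandit}, you treat the $\zeta_t$ sum by Cauchy--Schwarz and \cref{lem:L-M_to_l-m}, the bias term by \cref{lem:max_to_value}, and the error term by \cref{lem:qtrans_m_to_ell}, then close with \cref{lem:general_predict_result}, the same AM--GM self-bounding for $L(\pi)$, the same $\Reg_T(\pi)<0$ case split for $HT-L(\pi)$, and the same $H\le S$ absorption of lower-order terms that the paper uses implicitly.
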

\begin{proof}
By \cref{eq:regret_sum_bandit,lem:biasterm_bandit,lem:errorterm_bandit}, for any comparator policy $\pi$, we have
\begin{align}
    \Reg_T(\pi)
    &\lesssim H^{\frac32}S^{\frac72}A^{\frac32}\iota^2+\ln(T)\sum_{s,a}\sqrt{\E\brk*{\sum_{t \in \calT_r}\zeta_t(s,a)}}\\
    &\qquad+\sqrt{H^2S^2A\ln^2(T)\E\brk*{\sum_{t \in \calT_r}\sum_{s,a}q^{\pi_t}(s,a)\max_{\tilP\in\calP_t}Q^{\tilP,\pi_t}(s,a;\prn*{\ell_t-m_t}^2)}}\\
    &\qquad+\sqrt{S^2A\iota^2\E\brk*{\Qtrans^{\pi_{1:T}}(m)}}.
\end{align}
Here, since $\delta=\calO(1/T^2)$, we have $\iota\lesssim\ln(T)$. Hence,
\begin{align}
    \Reg_T(\pi)
    &\lesssim H^{\frac32}S^{\frac72}A^{\frac32}\ln^2(T)+\underbrace{\ln(T)\sum_{s,a}\sqrt{\E\brk*{\sum_{t \in \calT_r}\zeta_t(s,a)}}}_{\term_1}\\
    &\qquad+\underbrace{\sqrt{H^3S^2A\ln^2(T)\E\brk*{\sum_{t \in \calT_r}V^{\pi_t}(s_0;\prn*{\ell_t-m_t}^2)}}}_{\term_2}\\
    &\qquad+\underbrace{\sqrt{S^2A\ln^2(T)\E\brk*{\Qtrans^{\pi_{1:T}}(m)}}}_{\term_3}, \label{eq:bandit_regret_decomp_adv}
\end{align}
where the last inequality uses \cref{lem:max_to_value} on $\calE$ together with the trivial bound $\calO(H^2T)$ on $\bar{\calE}$, whose contribution is absorbed since $\delta=\calO(1/T^2)$.

For $\term_1$, by the definition of $\zeta_t(s,a)$, we have
\begin{align}   
        \term_1
        &= \sum_{s,a}\sqrt{\ln^2(T)\E\brk*{\sum_{t \in \calT_r} (\I_t(s,a) - \pi_t(a\mid s) \I_t(s))^2(L_{t,h(s)} - M_{t,h(s)})^2}}\\
        &\leq \sqrt{SA\ln^2(T)\E\brk*{\sum_{t \in \calT_r}\sum_{s,a} (\I_t(s,a) - \pi_t(a\mid s) \I_t(s))^2(L_{t,h(s)} - M_{t,h(s)})^2}} \tag{by the Cauchy–Schwarz inequality} \\
        &\leq \sqrt{SA\ln^2(T)\E\brk*{\sum_{t \in \calT_r}\sum_{s,a} 2\I_t(s,a)(L_{t,h(s)} - M_{t,h(s)})^2}} \\
        &\lesssim \sqrt{H^2SA\ln^2(T)\E\brk*{\sum_{t \in \calT_r}\sum_{s,a}\I_t(s,a)(\ell_t(s,a) - m_t(s,a))^2}}, \label{eq:common_bound_adv}
\end{align}
where the third line uses 
$\sum_a(\I_t(s,a)-\pi_t(a\mid s)\I_t(s))^2\le 2\I_t(s)$
for each fixed state $s$, and the last inequality follows from \cref{lem:L-M_to_l-m}.

Applying \cref{lem:general_predict_result} to \cref{eq:common_bound_adv}, we obtain, for any policy $\pi$,
\begin{align}
    \term_1
    &\lesssim \sqrt{H^2SA\ln^2(T)\min\set*{L(\pi)  + \Reg_T(\pi), HT - L(\pi)  - \Reg_T(\pi), Q_{\infty}, V_1}}\\
    &\qquad + HSA\ln(T).\label{eq:adv_bandit_term1}
\end{align}

For $\term_2$, \cref{lem:general_predict_result} similarly yields, for any policy $\pi$,
\begin{align}
    \term_2
    &\lesssim \sqrt{H^3S^2A\ln^2(T)\min\set*{L(\pi)  + \Reg_T(\pi), HT - L(\pi)  - \Reg_T(\pi), Q_{\infty}, V_1}}\\
    &\qquad + \sqrt{H^3S^3}A\ln(T).\label{eq:adv_bandit_term2}
\end{align}
For $\term_3$, \cref{lem:qtrans_m_to_ell} yields
\begin{align}
    \term_3
    &\lesssim \sqrt{S^2A\ln^2(T)\E\brk*{\Qtrans^{\pi_{1:T}}(\ell)}}+\sqrt{HS^2A\ln^2(T)\E\brk*{\sum_{t \in \calT_r}V^{\pi_t}(s_0;\prn*{\ell_t-m_t}^2)}}.
\end{align}
Applying \cref{lem:general_predict_result} to the second term, we obtain, for any policy $\pi$,
\begin{align}
    \term_3
    &\lesssim \sqrt{S^2A\ln^2(T)\E\brk*{\Qtrans^{\pi_{1:T}}(\ell)}}\\
    &\qquad +\sqrt{HS^2A\ln^2(T)\min\set*{L(\pi)+\Reg_T(\pi),HT-L(\pi)-\Reg_T(\pi),Q_\infty,V_1}}\\
    &\qquad+\sqrt{HS^3}A\ln(T). \label{eq:adv_bandit_term3}
\end{align}
Combining \cref{eq:adv_bandit_term1,eq:adv_bandit_term2,eq:adv_bandit_term3} with \cref{eq:bandit_regret_decomp_adv}, we obtain
\begin{align}
    \Reg_T(\pi)
    &\lesssim \sqrt{H^3S^2A\ln^2(T)\min\set*{L(\pi)+\Reg_T(\pi),HT-L(\pi)-\Reg_T(\pi),Q_\infty,V_1}}\\
    &\qquad+\sqrt{S^2A\ln^2(T)\E\brk*{\Qtrans^{\pi_{1:T}}(\ell)}}+H^{\frac32}S^{\frac72}A^{\frac32}\ln^2(T). \label{eq:adv_bandit_mid}
\end{align}
Below, we derive the first-order bound. By \cref{eq:adv_bandit_mid}, there exists an absolute constant $c>0$ such that
\begin{align}
    \Reg_T(\pi)
    &\leq c\sqrt{H^3S^2A  \ln^2(T) L(\pi)}+c\sqrt{H^3S^2A  \ln^2(T) \Reg_T(\pi)}\\
    &\qquad + c\sqrt{S^2A\ln^2(T) \E\brk*{\Qtrans^{\pi_{1:T}}(\ell)}} + cH^{\frac32}S^{\frac72}A^{\frac32}\ln^2(T)\\
    &\leq c\sqrt{H^3S^2A  \ln^2(T)  L(\pi)}+\frac{c^2}{2}H^3S^2A  \ln^2(T) + \frac{1}{2} \Reg_T(\pi)\\
    &\qquad + c\sqrt{S^2A\ln^2(T) \E\brk*{\Qtrans^{\pi_{1:T}}(\ell)}} + cH^{\frac32}S^{\frac72}A^{\frac32}\ln^2(T),
\end{align}
where the last inequality follows from the AM--GM inequality.
Thus, we obtain
\begin{align}
    \Reg_T(\pi) 
    &\lesssim \sqrt{H^3S^2A  \ln^2(T)  L(\pi)} + \sqrt{S^2A\ln^2(T) \E\brk*{\Qtrans^{\pi_{1:T}}(\ell)}} + H^{\frac32}S^{\frac72}A^{\frac32}\ln^2(T). \label{eq:first_order_bandit1}
\end{align}
Furthermore, we derive the bound in terms of $HT-L(\pi)$. If
$\Reg_T(\pi)<0$, the desired upper bound is trivial. Otherwise,
$\Reg_T(\pi)\ge 0$ and \cref{eq:adv_bandit_mid} gives
\begin{align}
    \Reg_T(\pi)
    &\lesssim
    \sqrt{H^3S^2A\ln^2(T)\prn*{HT-L(\pi)-\Reg_T(\pi)}}
    + \sqrt{S^2A\ln^2(T)\E\brk*{\Qtrans^{\pi_{1:T}}(\ell)}}\\
    &\qquad + H^{\frac32}S^{\frac72}A^{\frac32}\ln^2(T)\\
    &\leq
    \sqrt{H^3S^2A\ln^2(T)\prn*{HT-L(\pi)}} 
    + \sqrt{S^2A\ln^2(T)\E\brk*{\Qtrans^{\pi_{1:T}}(\ell)}} + H^{\frac32}S^{\frac72}A^{\frac32}\ln^2(T).
    \label{eq:first_order_bandit2}
\end{align}

Combining \cref{eq:adv_bandit_mid,eq:first_order_bandit1,eq:first_order_bandit2} completes the proof.
\end{proof}

\subsection{Proof for the stochastic regime with adversarial corruption}
\begin{thm}[bandit setting]
\label{thm:app_bandit_corruption}
Under the stochastic regime with adversarial corruption, \cref{alg:OFTRL_unknown} with $\delta=\frac{1}{T^2}$ guarantees that, for any policy $\pi$, 
\begin{align}
    \Reg_T(\pi) \lesssim U + \sqrt{U\calC} + H^{\frac32}S^{\frac72}A^{\frac32}\ln^2(T),
\end{align}
where $U = \frac{H^2S^4A\ln^2(T)}{\Delta_{\min}}$.    
\end{thm}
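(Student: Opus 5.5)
Since $\Reg_T(\pi)\le\Reg_T(\pio)$ for every $\pi$, it suffices to bound $\Reg_T(\pio)$. The starting point is \cref{eq:regret_sum_bandit}. We then instantiate each of its three data-dependent terms by the self-bounding-friendly alternatives from \cref{lem:biasterm_bandit,lem:errorterm_bandit}, namely the second bounds there, in terms of positive occupancy differences. This mirrors the proof of \cref{thm:app_full_corruption}.

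\emph{First term.} For the $\zeta_t$ term $\ln(T)\sum_{s,a}\sqrt{\E[\sum_{t\in\calT_r}\zeta_t(s,a)]}$, we use $\E_t[\zeta_t(s,a)]\le H^2\E_t[(\I_t(s,a)-\pi_t(a\mid s)\I_t(s))^2]\lesssim H^2 q^{\pi_t}(s)\pi_t(a\mid s)(1-\pi_t(a\mid s))$.
\begin{itemize}
\item For $a\neq\pist(s)$, this is at most $H^2q^{\pi_t}(s,a)$.
\item For $a=\pist(s)$, the factor $1-\pi_t(\pist(s)\mid s)$ gives $H^2\sum_{b\neq\pist(s)}q^{\pi_t}(s,b)$, so it is dominated by the off-policy actions at the same state.
\end{itemize}
This yields a bound of the form $H\ln(T)\sum_s\sum_{a\neq\pist(s)}\sqrt{\E[\sum_t q^{\pi_t}(s,a)]}$ up to an $A$-independent constant. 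We then apply \cref{lem:general_self_bounding} with $G(s,a)\asymp H\ln(T)$, which gives $\alpha\Reg_T(\pio)$ plus $\sum_{s,a\neq\pist(s)}H^2\ln^2(T)/(\alpha\Delta(s,a))\le H^2SA\ln^2(T)/(\alpha\Delta_{\min})$.

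\emph{Bias and error terms.} Both are bounded by per-layer quantities $\sqrt{G(h)\,\E[\sum_t\sum_{\calS_h\times\calA}[q^{\pi_t}-q^{\pio}]_+]}$, with $G(h)$ equal to $H^2S^2|\calS_h||\calS_{h+1}|A\iota^2$ and $H^2|\calS_h||\calS_{h+1}|A\iota^2$ respectively. We apply \cref{lem:self_bound_layer_positive}. Using $\sqrt{|\calS_h||\calS_{h+1}|}\le(|\calS_h|+|\calS_{h+1}|)/2$ as in \cref{eq:coeff_gap2}, we get $(\sum_h\sqrt{G(h)})^2\lesssim H^2S^4A\ln^2(T)$; this is exactly the origin of $U$.

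\emph{The extra global term.} The remaining term $\sqrt{H^3S^2A\iota^2\E[\sum_t\sum_{s,a}[q^{\pi_t}-q^{\pio}]_+]}$ is handled by \cref{lem:self_bound_occ_positive} with $G=H^3S^2A\iota^2$. This contributes $H^4S^2A\ln^2(T)/(\alpha\Delta_{\min})$. That is bounded by $U$ only when $H^2\le S^2$, which holds since $H\le S$.

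\emph{Conclusion.} Collecting the pieces gives
\[
\Reg_T(\pio)\le c\alpha(\Reg_T(\pio)+\calC)+\calO\!\left(\frac{U}{\alpha}+H^{3/2}S^{7/2}A^{3/2}\ln^2(T)\right).
\]
Choosing $\alpha=\min\{1/(2c),\sqrt{U/\calC}\}$ and rearranging yields $U+\sqrt{U\calC}$ plus the lower-order term. Virtual episodes contribute only $H\abs{\calT_v}$, which is already lower order by \cref{lem:bound_virtual_episode}.

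\emph{Main obstacle.} The delicate step is the $\zeta_t$ term for $a=\pist(s)$. There the reduction to off-policy occupancy must be checked carefully, using $\sum_{a}\pi_t(a\mid s)(1-\pi_t(a\mid s))\le2(1-\pi_t(\pist(s)\mid s))$. The same care is needed to make sure the resulting sums over $(s,a)$ give coefficients no worse than $H^2SA$, which is absorbed into $U$.
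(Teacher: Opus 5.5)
Your proposal is correct and follows essentially the same route as the paper's proof. Both start from \cref{eq:regret_sum_bandit} and use the occupancy-difference versions of the bias and error bounds. Both reduce the $\zeta_t$ term to off-policy occupancies via the factor $1-\pi_t(a\mid s)$, then apply \cref{lem:general_self_bounding,lem:self_bound_occ_positive,lem:self_bound_layer_positive} and tune $\alpha$. Your explicit remark that the $H^4S^2A\ln^2(T)/\Delta_{\min}$ contribution is absorbed into $U$ because $H\le S$ spells out a step the paper only uses implicitly.
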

\begin{proof}
Since $\Reg_T(\pi) \leq \Reg_T(\pio)$ for all $\pi$, it suffices to bound $\Reg_T(\pio)$. 
By \cref{eq:regret_sum_bandit,lem:biasterm_bandit,lem:errorterm_bandit} and $\delta=\calO(1/T^2)$, we have $\iota\lesssim\ln(T)$. Hence, there exists an absolute constant $c>0$ such that
\begin{align}
    \Reg_T(\pio)
    &\leq cH^{\frac32}S^{\frac72}A^{\frac32}\ln^2(T)+\underbrace{c\ln(T)\sum_{s,a}\sqrt{\E\brk*{\sum_{t \in \calT_r}\zeta_t(s,a)}}}_{\term_1}\\
    &\qquad+\underbrace{c\sqrt{H^3S^2A\ln^2(T)\E\brk*{\sum_{t \in \calT_r}\sum_{s,a}\brk*{q^{\pi_t}(s,a)-q^{\pio}(s,a)}_+}}}_{\term_2}\\
    &\qquad+\underbrace{c\sum_{h=0}^{H-1}\sqrt{H^2S^2\abs*{\calS_h}\abs*{\calS_{h+1}}A\ln^2(T)\E\brk*{\sum_{t \in \calT_r}\sum_{(s,a)\in\calS_h\times\calA}\brk*{q^{\pi_t}(s,a)-q^{\pio}(s,a)}_+}}}_{\term_3}. \label{eq:bandit_regret_decomp_sto}
\end{align}
For $\term_1$, by the definition of $\zeta_t(s,a)$, we have for any $\alpha>0$,
\begin{align}
    \term_1
    &= c\sum_{s,a}\sqrt{\ln^2(T)\E\brk*{\sum_{t \in \calT_r}(\I_t(s,a)-\pi_t(a\mid s)\I_t(s))^2(L_{t,h(s)}-M_{t,h(s)})^2}}\\
    &\leq cH\sum_{s,a}\sqrt{\ln^2(T)\E\brk*{\sum_{t \in \calT_r}q^{\pi_t}(s,a)(1-\pi_t(a\mid s))}}\\
    &\leq 2cH\sum_s\sum_{a\neq\pist(s)}\sqrt{\ln^2(T)\E\brk*{\sum_{t \in \calT_r}q^{\pi_t}(s,a)}}\\
    &\leq \alpha\prn*{\Reg_T(\pio)+2\calC}+\sum_s\sum_{a\neq\pist(s)}\frac{c^2H^2\ln^2(T)}{4\alpha\Delta(s,a)}, \label{eq:sto_bandit_term1}
\end{align}
where the last line follows from \cref{lem:general_self_bounding}.

For $\term_2$, \cref{lem:self_bound_occ_positive} gives
\begin{align}
    \term_2
    &\leq \alpha\prn*{\Reg_T(\pio)+4\calC}+\frac{c^2H^4S^2A\ln^2(T)}{4\alpha\Delta_{\min}}. \label{eq:sto_bandit_term2}
\end{align}

For $\term_3$, \cref{lem:self_bound_layer_positive} gives
\begin{align}
    \term_3
    &\leq \alpha\prn*{\Reg_T(\pio)+4\calC}+\frac{c^2\prn*{\sum_{h=0}^{H-1}\sqrt{H^2S^2\abs*{\calS_h}\abs*{\calS_{h+1}}A\ln^2(T)}}^2}{4\alpha\Delta_{\min}}.
\end{align}
Moreover, by the AM--GM inequality,
\begin{align}
    \prn*{\sum_{h=0}^{H-1}\sqrt{H^2S^2\abs*{\calS_h}\abs*{\calS_{h+1}}A\ln^2(T)}}^2
    &\leq \prn*{\sum_{h=0}^{H-1}\frac{1}{2}\prn*{\abs*{\calS_h}+\abs*{\calS_{h+1}}}\sqrt{H^2S^2A\ln^2(T)}}^2\\
    &\leq \prn*{\sqrt{H^2S^4A\ln^2(T)}}^2 = H^2S^4A\ln^2(T).
\end{align}
Thus, we obtain 
\begin{align}
    \term_3
    &\leq \alpha\prn*{\Reg_T(\pio)+4\calC}+\frac{c^2H^2S^4A\ln^2(T)}{4\alpha\Delta_{\min}}. \label{eq:sto_bandit_term3}
\end{align}
Combining \cref{eq:sto_bandit_term1,eq:sto_bandit_term2,eq:sto_bandit_term3} with \cref{eq:bandit_regret_decomp_sto}, we obtain
\begin{align}
    \Reg_T(\pio)
    &\leq \alpha(3\Reg_T(\pio) + 10\calC) + \calO\prn*{\frac{H^2S^4A\ln^2(T)}{4\alpha\Delta_{\min}} + H^{\frac32}S^{\frac72}A^{\frac32}\ln^2(T)}.
\end{align}
Finally, choosing $\alpha = \min\set*{\frac{1}{6}, \sqrt{\frac{U}{\calC}}}$ with $U = \frac{H^2S^4A\ln^2(T)}{\Delta_{\min}}$, we get
\begin{align}
    \Reg_T(\pio) \lesssim U + \sqrt{U\calC} + H^{\frac32}S^{\frac72}A^{\frac32}\ln^2(T).
\end{align}
Therefore, since $\Reg_T(\pi)\leq \Reg_T(\pio)$ for all $\pi$ by the definition of $\pio$, the desired bound follows.
\end{proof}

\end{document}